\theoremstyle{plain}
\newtheorem{theorem}{Theorem}[section]
\newtheorem{lemma}[theorem]{Lemma}
\newtheorem{corollary}[theorem]{Corollary}
\theoremstyle{remark}
\newtheorem{definition}[theorem]{Definition}
\newtheorem{assumption}[theorem]{Assumption}
\DeclareMathOperator*{\argmin}{argmin}
\newcommand{\betahat}{{\hat{\beta}}}
\newcommand{\betatilde}{{\tilde{\beta}}}
\newcommand{\betastar}{\beta^*}
\DeclareMathOperator{\sgn}{sgn}
\DeclareMathOperator{\supp}{supp}
\newcommand{\x}{\mathbf{x}}
\title{Adaptive Lasso, Transfer Lasso, and Beyond:\\
An Asymptotic Perspective}
\author{Masaaki Takada\thanks{Toshiba Corporation} \thanks{The Institute of Statistical Mathematics}\\ \texttt{masaaki1.takada@toshiba.co.jp} \and Hironori Fujisawa\footnotemark[2]\\ \texttt{fujisawa@ism.ac.jp}}
\date{}
\begin{document}

\maketitle

\begin{abstract}
This paper presents a comprehensive exploration of the theoretical properties inherent in the Adaptive Lasso and the Transfer Lasso. The Adaptive Lasso, a well-established method, employs regularization divided by initial estimators and is characterized by asymptotic normality and variable selection consistency. In contrast, the recently proposed Transfer Lasso employs regularization subtracted by initial estimators with the demonstrated capacity to curtail non-asymptotic estimation errors. A pivotal question thus emerges: Given the distinct ways the Adaptive Lasso and the Transfer Lasso employ initial estimators, what benefits or drawbacks does this disparity confer upon each method? This paper conducts a theoretical examination of the asymptotic properties of the Transfer Lasso, thereby elucidating its differentiation from the Adaptive Lasso. Informed by the findings of this analysis, we introduce a novel method, one that amalgamates the strengths and compensates for the weaknesses of both methods. The paper concludes with validations of our theory and comparisons of the methods via simulation experiments.
\end{abstract}
    
\section{Introduction}

We consider an ordinary high-dimensional regression problem.
Let $X = (\x_1, \dots, \x_p) = (x_1^\top, \dots, x_n^\top)^\top \in \mathbb{R}^{n \times p}$ and $y \in \mathbb{R}^n$ be a feature matrix and response vector, respectively.
We suppose a true model is linear with independent and identically distributed (i.i.d.) Gaussian noise, that is,
\begin{align}
    y &= X \betastar + \varepsilon, ~
    \varepsilon_i \overset{i.i.d.}{\sim} \mathcal{N}(0, \sigma^2),
\end{align}
where $\betastar \in \mathbb{R}^p$ is a true regression parameter and $\varepsilon \in \mathbb{R}^n$ is a Gaussian noise.
We presume that $\betastar$ is sparse, and designate the active and inactive parameters as 
$S$ and $S^c$, namely $S := \{j : \betastar_j \neq 0\}$ and $S^c := \{ j : \betastar_j = 0 \}$, respectively.

The \textit{Lasso}~\citep{tibshirani1996regression} is a classical regression method for high-dimensional data, defined by
\begin{align}
    \hat{\beta}_n^\mathcal{L} = \argmin_\beta \left\{
    \frac{1}{n} \| y - X \beta \|_2^2 + \frac{\lambda_n}{n} \sum_j |\beta_j|
    \right\}.
    \label{eq:lasso}
\end{align}
Owing to $\ell_1$ regularization, the solution exhibits sparsity.
We denote $\hat{S}_n^\mathcal{L} := \{ j: \betahat_j^\mathcal{L} \neq 0 \}$.

Numerous theoretical studies have elucidated the strengths and limitations of the Lasso.
According to asymptotic theory, the Lasso estimator is consistent if $\lambda_n = o(n)$ and is $\sqrt{n}$-consistent if $\lambda_n = O(\sqrt{n})$~\citep{fu2000asymptotics}.
However, \citep{zou2006adaptive} demonstrates that the Lasso has \textit{inconsistent} variable selection if $\lambda_n = O(\sqrt{n})$, 
while it does not have $\sqrt{n}$-consistency
if $\lambda_n = o(n)$ and $\lambda_n / \sqrt{n} \to \infty$.
Hence, the Lasso cannot achieve both $\sqrt{n}$-consistency and consistent variable selection simultaneously (see Figure~\ref{fig:lasso_lambda} left).

To improve the asymptotic properties of the Lasso, one of the most well-known methods is the \textit{Adaptive Lasso}~\citep{zou2006adaptive,huang2008adaptive}, which is given by
\begin{align}
    \hat{\beta}_n^\mathcal{A} = \argmin_\beta \left\{
    \frac{1}{n} \| y - X\beta \|_2^2 + \frac{\lambda_n}{n} \sum_j w_j |\beta_j|
    \right\},~
    w_j := \frac{1}{|\betatilde_j|^\gamma},
    \label{eq:alasso}
\end{align}
where $\betatilde$ is an initial estimator of the true parameter $\betastar$ and $\gamma > 0$ is a hyperparameter.
We denote $\hat{S}_n^\mathcal{A} := \{ j: \betahat_j^\mathcal{A} \neq 0 \}$.
If $\betatilde$ is a $\sqrt{n}$-consistent estimator, $\lambda_n = o(\sqrt{n})$, and $\lambda_n n^{(\gamma - 1)/2} \rightarrow \infty$, then the Adaptive Lasso satisfies both $\sqrt{n}$-consistency and consistent variable selection, as well as asymptotic normality (Figure~\ref{fig:lasso_lambda} right).
This is known as the \textit{oracle property} because it behaves as if the true active variables were given in advance.
The Adaptive Lasso assumes the existence of a $\sqrt{n}$-consistent initial estimator and uses it as the weight of the $\ell_1$ regularization.

\begin{figure*}[t]
\centering
\begin{minipage}{0.45\textwidth}
\begin{tikzpicture}[scale=4, font = \small]
    \begin{scope}
    
    \def\sqrtn{0.4}
    \def\n{0.8}
    
    \draw[->] (0,0) -- (0,1) node[above] {$\lambda_n$};
    
    \node[left] at (0, \n) {$n$};
    \node[left] at (0, \sqrtn) {$\sqrt{n}$};
    
    \draw (0, \n) -- (0.05, \n - 0.02) -- (0.05, \sqrtn + 0.02) -- (0, \sqrtn);
    \node[right] at (0.05, {0.5*\n + 0.5*\sqrtn}) {(ii)};
    \node[right] at (0.2, {0.5*\n + 0.5*\sqrtn + 0.1}) {\textcolor{purple}{\ding{56}} sub-$\sqrt{n}$-consistent};
    \node[right] at (0.2, {0.5*\n + 0.5*\sqrtn}) {\textcolor{teal}{\ding{52}} consistent variable selection};
    \node[right] at (0.2, {0.5*\n + 0.5*\sqrtn - 0.1}) {~~~ (under the incoherence condition)};
    \draw (0, \sqrtn - 0.03) -- (0.05, \sqrtn - 0.05) -- (0.05, 0);
    \node[right] at (0.05, {0.5*\sqrtn}) {(i)};
    \node[right] at (0.2, {0.5*\sqrtn + 0.05}) {\textcolor{teal}{\ding{52}} $\sqrt{n}$-consistent};
    \node[right] at (0.2, {0.5*\sqrtn - 0.05}) {\textcolor{purple}{\ding{56}} inconsistent variable selection};
    
    \draw[fill=white] (0, \n) circle (0.02);
    \draw[fill=white] (0, \sqrtn) circle (0.02);
    \fill (0, \sqrtn - 0.03) circle (0.02);
    
    \end{scope}
\end{tikzpicture}
\end{minipage}\hfill
\begin{minipage}{0.45\textwidth}
\begin{tikzpicture}[scale=4, font = \small]
    \begin{scope}
    
    \def\sqrtn{0.4}
    \def\sqrtnone{0.1}
    
    \draw[->] (0,0) -- (0,1) node[above] {$\lambda_n$};
    
    \node[left] at (0, \sqrtnone) {$\sqrt{n / n^\gamma}$};
    \node[left] at (0, \sqrtn) {$\sqrt{n}$};
    
    \draw (0, \sqrtn) -- (0.05, \sqrtn - 0.02) -- (0.05, \sqrtnone + 0.02) -- (0, \sqrtnone);
    \node[right] at (0.1, {0.5*\sqrtn + 0.5*\sqrtnone + 0.05}) {\textcolor{teal}{\ding{52}} $\sqrt{n}$-consistent + normality};
    \node[right] at (0.1, {0.5*\sqrtn + 0.5*\sqrtnone - 0.05}) {\textcolor{teal}{\ding{52}} consistent variable selection};
    
    \draw[fill=white] (0, \sqrtn) circle (0.02);
    \draw[fill=white] (0, \sqrtnone) circle (0.02);
    
    \end{scope}
\end{tikzpicture}

\end{minipage}

\caption{
Phase diagrams with the order of $\lambda_n$ for the Lasso~(left) and the Adaptive Lasso~(right).
The Lasso does not achieve $\sqrt{n}$-consistent and consistent variable selection simultaneously, while the Adaptive Lasso satisfies both.
}
\label{fig:lasso_lambda}
\end{figure*}
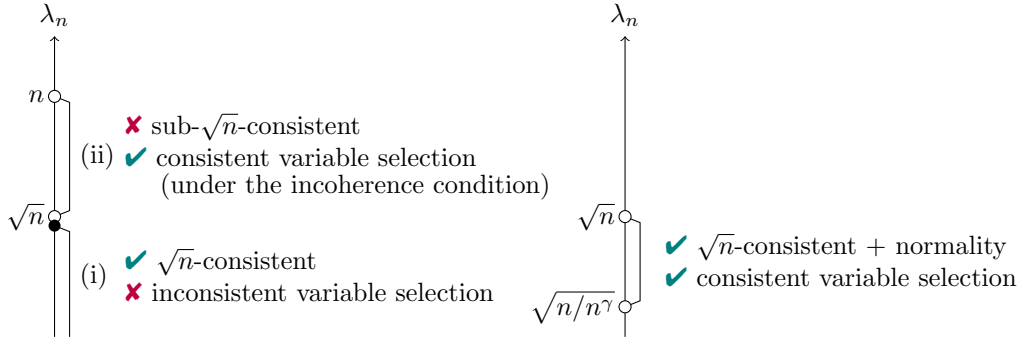

Recently, a different use of an initial estimator has been proposed~\citep{takada2020transfer,bastani2021predicting}, which is given by
\begin{align}
    \hat{\beta}_n^\mathcal{T} = \argmin_\beta \left\{
    \frac{1}{n} \| y - X\beta \|_2^2 + \frac{\lambda_n}{n} \sum_j |\beta_j| + \frac{\eta_n}{n} \sum_j \left| \beta_j - \betatilde_j \right|
    \right\},
    \label{trlasso}
\end{align}
where $\betatilde$ is an initial estimator (``source parameter'' in the field of transfer learning).
We denote $\hat{S}_n^\mathcal{T} := \{ j: \betahat_j^\mathcal{T} \neq 0 \}$.
This method is called \textit{Transfer Lasso}.
The first regularization term in \eqref{trlasso} shrinks the estimator to zero and induces sparsity.
The second regularization term in \eqref{trlasso}, on the other hand, shrinks the estimator to the initial estimator and induces the sparsity of changes from the initial estimator.
The $\ell_1$ regularization of the difference between the initial estimator and the target estimator plays a key role in sparse updating, in which only a small number of parameters are changed from the initial estimator.
Non-asymptotic analysis reveals that a small $\Delta := \betatilde - \betastar$ brings advantageous on its estimation error bounds for the Transfer Lasso over the Lasso~\citep{takada2020transfer}.

The Adaptive Lasso and the Transfer Lasso have similarities and differences.
They are similar in that they both use an initial estimator in $\ell_1$ regularization.
However, the way the initial estimator is used is different: the Adaptive Lasso uses the parameter ``divided'' by the initial estimator in the regularization, whereas Transfer Lasso uses the parameter ``subtracted'' by the initial estimator in the regularization.
In addition, the original motivations are different:
The Adaptive Lasso aims to reduce estimation bias as well as satisfy consistency in variable selection, whereas the Transfer Lasso aims to sparsify both the estimator itself and the change from the initial estimator, leveraging the knowledge of the initial estimator.

These raise major questions:
How do these similarities and differences between Adaptive Lasso and Transfer Lasso affect the theoretical properties and empirical results of each method?
In this paper, we highlight the asymptotic properties of each method and seek to answer the following research questions.
\begin{enumerate}
    \item 
    Does the Transfer Lasso have the same properties as the Adaptive Lasso? 
    Specifically, does the Transfer Lasso have the oracle property that the Adaptive Lasso has?
    \item 
    Does the Transfer Lasso have different properties from the Adaptive Lasso?
    If so, under what conditions of initial estimators, does the Transfer Lasso have an advantage over the Adaptive Lasso, or vice versa?
    \item If these two methods have their specific advantages and disadvantages, are there any ways to compensate for the disadvantages of both and to reconcile their advantages?
    \item How does the asymptotic property of the estimator change as the order of the hyperparameters changes for each method?
\end{enumerate}

Our theoretical analysis led us to the following findings.
\begin{enumerate}
    \item The Transfer Lasso does not have the oracle property in general. This is an unfavorable property compared to the Adaptive Lasso. 
    \item The Transfer Lasso has an advantage in convergence rate if the initial estimator is estimated from sufficiently large data.
    The Adaptive Lasso, in contrast, does not benefit from such an initial estimator. 
    \item We found that a non-trivial integration of the Adaptive Lasso and the Transfer Lasso provides a combination of the benefits of both. The superiority of this integration was shown by asymptotic analysis and empirical simulations.
    \item We comprehensively analyzed the relation between hyperparameters and asymptotic properties and drew phase diagrams representing them.
    Figure~\ref{fig:trlasso-consistent-region} illustrates the phase diagram of the Adaptive Lasso and the Transfer Lasso, and Figure~\ref{fig:adaptrlasso-consistent-selection-region} illustrates the phase diagram of the proposed method.
    These theoretical results were reproduced empirically by numerical simulations in Figure~\ref{fig:empirical-phase-diagram}.
\end{enumerate}

This paper discusses the above research questions in the following organization.
First, we review the asymptotic properties of the Lasso and Adaptive Lasso (Section 2).
Then, we define a setup for our analysis and theoretically analyze the asymptotic properties of the Adaptive Lasso and the Transfer Lasso (Section 3).
This elucidates the advantages and disadvantages of each method.
Furthermore, to compensate for their disadvantages and to reconcile their advantages, we propose a novel method, which effectively integrates both of them (Section 4).
We demonstrate its superiority through theoretical analysis.
We then compare the Adaptive Lasso, the Transfer Lasso, and their integrated method through numerical experiments (Section 5).
Finally, we provide additional discussion and conclusions (Sections 6 and 7).

\subsection*{Notations}

Consider a vector $v \in \mathbb{R}^{p}$. We denote the element-wise absolute vector by $|v|$, with the $j$-th element given by $|v_j|$. The sign vector is represented as ${\rm sgn}(v)$, with its elements being $1$ for $v_j > 0$, $-1$ for $v_j < 0$, and $0$ for $v_j = 0$. The support set of $v$ is denoted as $\supp(v)$ and defined as $\supp(v) := \{ j \in \{ 1, \dots, p \} | v_j \neq 0 \}$. The $\ell_q$-norm of $v$ is expressed as $\| v \|_q = (\sum_{j=1}^p |v_j|^q )^{1/q}$.

For a matrix $M \in \mathbb{R}^{p \times p}$, we use $M \succeq O$ for a positive semi-definite matrix and $M \succ O$ for a positive definite matrix, implying $v^\top M v \geq 0$ for all $v \in \mathbb{R}^p$ and $v^\top M v > 0$ for all non-zero $v \in \mathbb{R}^p$, respectively.

Given a subset $S$ of $\{ 1, \dots, p \}$, we denote its cardinality as $|S|$, and the complement set as $S^c = \{ 1, \dots, p \} \backslash S$. The vector $v_S$ represents $v$ restricted to the index set $S$. The matrix $M_{S_1 S_2}$ denotes the submatrix with row indices in $S_1$ and column indices in $S_2$.

For sequences $a_n$ and $b_n$, we use $a_n = O(b_n)$ to indicate that $|a_n / b_n|$ converges to a finite value, and $a_n = o(b_n)$ to signify $|a_n / b_n|$ converging to zero as $n \to \infty$.

\section{Literature Review}

We review some asymptotic properties for the Lasso and the Adaptive Lasso based on \cite{fu2000asymptotics} and \cite{zou2006adaptive}, and then present other related studies.
All of the proofs in this section are essentially the same as those in \cite{fu2000asymptotics} and \cite{zou2006adaptive}, but for the sake of readability, we provide them in Appendix~\ref{proof-lasso}.

We make the following assumption throughout this paper as in \cite{fu2000asymptotics,zou2006adaptive}.
\begin{assumption}
\label{assumption:psd}
\begin{align}
    C_n := \frac{1}{n} X^\top X \rightarrow C \succ O ~ (n \rightarrow \infty),
\end{align}
\begin{align}
    \frac{1}{n} \max_i \|x_i\|_2^2 \rightarrow 0 ~ (n \rightarrow \infty).
\end{align}
\end{assumption}
 Let $W$ be a random variable of a Gaussian distribution with mean $0$ and covariance $\sigma^2 C$, that is, $W \sim \mathcal{N} (0, \sigma^2 C)$.

\subsection{Asymptotic Properties for the Lasso}

The Lasso is given by \eqref{eq:lasso}.
According to ~\cite{fu2000asymptotics,zou2006adaptive}, several asymptotic properties have been obtained for the Lasso: consistency (Lemma~\ref{lemma:lasso-consistency} and Corollary~\ref{corollary:lasso-consistency}), convergence rate (Lemma~\ref{lemma:lasso-root-n-consistency}, Corollary~\ref{cor-lasso-root-n-consistency}, Lemma~\ref{lemma:lasso-slower-consistency}, and Corollary~\ref{cor-lasso-slower-consistency}), and variable selection consistency (Lemma~\ref{lemma:lasso-variable-selection}).

\begin{lemma}[Theorem 1 in \cite{fu2000asymptotics} and Lemma 1 in \cite{zou2006adaptive}]
\label{lemma:lasso-consistency}
If $\lambda_n/n \rightarrow \lambda_0 \geq 0$, 
then
\begin{align}
    \betahat_n^\mathcal{L} \rightarrow^p 
    \argmin_\beta \left\{ (\beta - \betastar)^\top C (\beta - \betastar) + \lambda_0 \sum_j |\beta_j| \right\}.
\end{align}
\end{lemma}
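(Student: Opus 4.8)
The plan is to recognize this as a standard \emph{argmin-consistency} statement for convex objectives and to argue along the lines of \cite{fu2000asymptotics}. Write the Lasso objective as
\begin{align}
  Z_n(\beta) := \frac{1}{n}\|y - X\beta\|_2^2 + \frac{\lambda_n}{n}\sum_j |\beta_j|,
\end{align}
so that $\betahat_n^\mathcal{L} = \argmin_\beta Z_n(\beta)$, and substitute the true model $y = X\betastar + \varepsilon$ to expand the quadratic term:
\begin{align}
  Z_n(\beta) = (\beta - \betastar)^\top C_n (\beta - \betastar) - \frac{2}{n}\varepsilon^\top X (\beta - \betastar) + \frac{1}{n}\|\varepsilon\|_2^2 + \frac{\lambda_n}{n}\sum_j |\beta_j|.
\end{align}
Since $\frac{1}{n}\|\varepsilon\|_2^2$ does not involve $\beta$, the estimator $\betahat_n^\mathcal{L}$ equivalently minimizes $V_n(\beta) := (\beta - \betastar)^\top C_n (\beta - \betastar) - \frac{2}{n}\varepsilon^\top X(\beta - \betastar) + \frac{\lambda_n}{n}\sum_j |\beta_j|$, which is convex in $\beta$.

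First I would establish pointwise convergence in probability $V_n(\beta) \rightarrow^p V(\beta)$ for each fixed $\beta$, where $V(\beta) := (\beta - \betastar)^\top C (\beta - \betastar) + \lambda_0 \sum_j |\beta_j|$. The quadratic part converges deterministically because $C_n \to C$ by Assumption~\ref{assumption:psd}; the penalty converges because $\lambda_n / n \to \lambda_0$; and the cross term satisfies $\frac{1}{n}\varepsilon^\top X (\beta - \betastar) \rightarrow^p 0$, since $\mathbb{E}\big[\frac{1}{n}X^\top \varepsilon\big] = 0$ and $\frac{1}{n}X^\top\varepsilon$ has covariance $\frac{\sigma^2}{n} C_n \to O$, so that $\frac{1}{n}X^\top\varepsilon \rightarrow^p 0$ and its inner product with the fixed vector $\beta - \betastar$ vanishes.

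Next I would promote this to uniform convergence on compact sets via the convexity lemma: a sequence of convex functions converging pointwise (in probability) to a finite convex limit converges uniformly on every compact set. The limit $V$ is \emph{strictly} convex, its Hessian $2C$ being positive definite by Assumption~\ref{assumption:psd}, so it has a unique minimizer $\beta_0 := \argmin_\beta V(\beta)$ and is coercive. Combining uniform convergence on compacta with the coercivity of $V$ then yields $\betahat_n^\mathcal{L} \rightarrow^p \beta_0$, which is the claim.

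The hard part will be this last step — upgrading convergence of the objectives to convergence of the minimizers — and in particular ruling out that $\betahat_n^\mathcal{L}$ escapes to infinity. The standard device is to note $V_n(\betahat_n^\mathcal{L}) \le V_n(\betastar) = \frac{\lambda_n}{n}\sum_j |\betastar_j| \to \lambda_0 \sum_j |\betastar_j| < \infty$, so that, by coercivity of $V$ and the uniform approximation $V_n \approx V$ on large balls, $\betahat_n^\mathcal{L}$ lies in a fixed compact set with probability tending to one; on that set, uniform convergence together with uniqueness of $\beta_0$ forces the minimizers to converge. Everything else reduces to routine facts about convex functions and the variance computation above.
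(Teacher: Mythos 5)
Your proposal is correct and follows essentially the same route as the paper: expand the quadratic, establish pointwise convergence of the (convex) objectives to $V(\beta)=(\beta-\betastar)^\top C(\beta-\betastar)+\lambda_0\sum_j|\beta_j|$, upgrade to uniform convergence on compacta via the convexity lemma of Pollard, and then pass to convergence of the minimizers. The only (minor) difference is the tightness step: the paper bounds $\|\betahat_n^\mathcal{L}\|_1$ by the $\ell_1$ norm of the unpenalized (OLS) minimizer to get $\betahat_n^\mathcal{L}=O_P(1)$, whereas you compare $V_n(\betahat_n^\mathcal{L})\le V_n(\betastar)$ and invoke coercivity together with convexity — both are standard and valid.
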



\begin{corollary}[Consistency for Lasso]
\label{corollary:lasso-consistency}
If $\lambda_n = o(n)$, then $\betahat_n^\mathcal{L}$ is consistent.
\end{corollary}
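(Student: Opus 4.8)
The plan is to derive this immediately from Lemma~\ref{lemma:lasso-consistency}. First I would note that the hypothesis $\lambda_n = o(n)$ is exactly the statement that $\lambda_n / n \rightarrow 0$, so Lemma~\ref{lemma:lasso-consistency} applies with limiting constant $\lambda_0 = 0$. This reduces the entire problem to identifying the minimizer of the limiting objective in that lemma.

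Next, with $\lambda_0 = 0$ the limiting objective collapses to the pure quadratic form $(\beta - \betastar)^\top C (\beta - \betastar)$. By Assumption~\ref{assumption:psd} we have $C \succ O$, so this form is strictly convex in $\beta$ and attains the value $0$ only at $\beta = \betastar$; hence $\betastar$ is its unique minimizer. Substituting into the conclusion of Lemma~\ref{lemma:lasso-consistency} yields $\betahat_n^\mathcal{L} \rightarrow^p \betastar$, which is precisely consistency.

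I do not expect any genuine obstacle here, since the substantive limiting argument is already contained in the proof of Lemma~\ref{lemma:lasso-consistency}. The only point deserving an explicit word is that the $\argmin$ appearing in that lemma is a well-defined single point when $\lambda_0 = 0$, and this is immediate from the positive definiteness of $C$. So the corollary is essentially a direct specialization of the preceding lemma.
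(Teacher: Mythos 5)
Your proposal is correct and matches the paper's (implicit) argument: the paper states this corollary as an immediate consequence of Lemma~\ref{lemma:lasso-consistency}, obtained by setting $\lambda_0 = 0$ and noting that $\betastar$ is the unique minimizer of $(\beta - \betastar)^\top C (\beta - \betastar)$ since $C \succ O$. Your remark about the $\argmin$ being a well-defined single point is the right thing to check and is exactly what the positive definiteness in Assumption~\ref{assumption:psd} delivers.
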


\begin{lemma}[Theorem 2 in \cite{fu2000asymptotics} and Lemma 2 in \cite{zou2006adaptive}]
\label{lemma:lasso-root-n-consistency}
If $\lambda_n/\sqrt{n} \rightarrow \lambda_0 \geq 0$, then
\begin{align}
    &\sqrt{n} (\betahat_n^\mathcal{L} - \betastar) \\
    \overset{d}{\rightarrow}&
    \argmin_u \left\{
    u^\top C u -2 u^\top W
    + \lambda_0 \sum_j \left( u_j \sgn(\betastar_j) I(\betastar_j \neq 0) + |u_j| I(\betastar_j=0) \right)
    \right\}.\nonumber
\end{align}
\end{lemma}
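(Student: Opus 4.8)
The plan is to use the standard ``argmin of a convex process'' strategy of \cite{fu2000asymptotics} (cf.\ Knight--Fu, Geyer, Pollard): reparametrize around $\betastar$ at scale $1/\sqrt n$, identify the distributional limit of the rescaled objective as a convex random function, and then transfer the convergence to the minimizer using convexity plus uniqueness of the limiting minimizer. Concretely, setting $\beta = \betastar + u/\sqrt n$, the vector $\hat u_n := \sqrt n(\betahat_n^\mathcal{L} - \betastar)$ minimizes over $u \in \mathbb{R}^p$ the function
\[
V_n(u) := \|y - X(\betastar + u/\sqrt n)\|_2^2 - \|\varepsilon\|_2^2 + \lambda_n \sum_j \bigl( |\betastar_j + u_j/\sqrt n| - |\betastar_j| \bigr),
\]
which is $n$ times the Lasso criterion \eqref{eq:lasso} at $\beta = \betastar + u/\sqrt n$ minus a constant independent of $u$ (the subtracted constants only serve to keep $V_n$ finite in the limit). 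Using $y - X\betastar = \varepsilon$ and expanding the quadratic term,
\[
V_n(u) = u^\top C_n u - 2 u^\top\!\left(\tfrac{1}{\sqrt n} X^\top \varepsilon\right) + \lambda_n \sum_j \bigl( |\betastar_j + u_j/\sqrt n| - |\betastar_j| \bigr),
\]
with $C_n = \tfrac1n X^\top X$.

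Next I would take the limit for each fixed $u$. By Assumption~\ref{assumption:psd}, $C_n \to C$, so the quadratic term tends to $u^\top C u$. For the linear term, $\tfrac{1}{\sqrt n} X^\top \varepsilon = \tfrac{1}{\sqrt n}\sum_i x_i^\top \varepsilon_i$ has covariance $\sigma^2 C_n \to \sigma^2 C$, and the Lindeberg condition for this triangular array follows from $\tfrac1n \max_i \|x_i\|_2^2 \to 0$ together with the Gaussianity of $\varepsilon_i$; hence $\tfrac{1}{\sqrt n} X^\top \varepsilon \overset{d}{\to} W$. For the penalty, when $\betastar_j \neq 0$ we have $\sgn(\betastar_j + u_j/\sqrt n) = \sgn(\betastar_j)$ for $n$ large, so the $j$-th term equals $\tfrac{\lambda_n}{\sqrt n} u_j\sgn(\betastar_j) \to \lambda_0 u_j\sgn(\betastar_j)$; when $\betastar_j = 0$ it equals $\tfrac{\lambda_n}{\sqrt n}|u_j| \to \lambda_0|u_j|$. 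Since $V_n(u)$ is an affine function of the single random vector $\tfrac1{\sqrt n}X^\top\varepsilon$ plus deterministic terms, the continuous mapping theorem and Slutsky give, jointly over any finite collection of $u$'s,
\[
V_n \overset{d}{\to} V, \qquad V(u) := u^\top C u - 2 u^\top W + \lambda_0 \sum_j \bigl( u_j\sgn(\betastar_j) I(\betastar_j\neq 0) + |u_j| I(\betastar_j = 0) \bigr),
\]
in the sense of finite-dimensional distributions.

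Finally I would pass to the minimizers. Each $V_n$ is convex in $u$, and $V$ is finite everywhere and, since $C \succ O$, strictly convex, hence has an almost surely unique minimizer $\hat u := \argmin_u V(u)$. The convexity lemma --- finite-dimensional distributional convergence of convex random functions, together with uniqueness of the limiting minimizer, implies convergence in distribution of the minimizers --- then yields $\hat u_n \overset{d}{\to} \hat u$, which is exactly the claim.

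The hard part is this last step: upgrading pointwise-in-$u$ convergence of $V_n$ to a statement about $\argmin V_n$. This needs (i) that pointwise convergence of convex functions automatically improves to locally uniform convergence, and (ii) a ``minimizers stay bounded in probability'' argument, for which strict convexity of the limit $V$ (i.e.\ $C \succ O$) is precisely what is used; both ingredients are standard but must be invoked carefully. The only remaining technical verification is the Lindeberg condition for $\{x_i^\top\varepsilon_i\}$, which is immediate here from the normality of the noise and the second part of Assumption~\ref{assumption:psd}.
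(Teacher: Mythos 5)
Your proposal is correct and follows essentially the same route as the paper's proof in Appendix~\ref{proof:lasso-root-n-consistency}: centering the criterion at $\betastar$, identifying the pointwise distributional limit $V(u)$ of the rescaled convex objective, and invoking the convexity/epi-convergence argument of Geyer to transfer convergence to the minimizers. The only cosmetic difference is that you invoke a Lindeberg condition for $\tfrac{1}{\sqrt n}X^\top\varepsilon$, which is unnecessary here since the noise is exactly Gaussian, so this vector is exactly $\mathcal{N}(0,\sigma^2 C_n)$ and converges to $W$ by $C_n \to C$ alone.
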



\begin{corollary}[$\sqrt{n}$-consistency for Lasso]
\label{cor-lasso-root-n-consistency}
If $\lambda_n = O(\sqrt{n})$, then $\betahat_n^\mathcal{L}$ is $\sqrt{n}$-consistent.
\end{corollary}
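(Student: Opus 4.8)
The plan is to read the corollary off Lemma~\ref{lemma:lasso-root-n-consistency}. By the convention for $O(\cdot)$ fixed in the Notations, $\lambda_n = O(\sqrt{n})$ means $\lambda_n/\sqrt{n} \rightarrow \lambda_0$ for some finite constant $\lambda_0 \geq 0$, so the hypothesis of Lemma~\ref{lemma:lasso-root-n-consistency} is satisfied and $\sqrt{n}(\betahat_n^\mathcal{L} - \betastar)$ converges in distribution to $\hat u := \argmin_u V(u)$, where $V(u) = u^\top C u - 2 u^\top W + \lambda_0 \sum_j \left( u_j \sgn(\betastar_j) I(\betastar_j \neq 0) + |u_j| I(\betastar_j = 0) \right)$. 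It then suffices to check that this limit law is a proper (tight) distribution on $\mathbb{R}^p$, since weak convergence to a tight limit forces $\sqrt{n}(\betahat_n^\mathcal{L} - \betastar) = O_p(1)$, which is exactly $\sqrt{n}$-consistency.

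For the tightness of $\hat u$ I would invoke Assumption~\ref{assumption:psd}: because $C \succ O$ we have $u^\top C u \geq c \|u\|_2^2$ for some $c > 0$, while the penalty term is bounded below by $-\lambda_0 \|u_S\|_1 \geq -\lambda_0 \sqrt{|S|}\,\|u\|_2$ (the $S^c$ part being nonnegative). Hence $V(u) \geq c\|u\|_2^2 - (2\|W\|_2 + \lambda_0\sqrt{|S|})\|u\|_2$, so $V$ is coercive and strictly convex, and its unique minimizer $\hat u$ satisfies $\|\hat u\|_2 \leq (2\|W\|_2 + \lambda_0\sqrt{|S|})/c$ upon comparison with $V(0) = 0$. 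Since $W \sim \mathcal{N}(0, \sigma^2 C)$ is a proper Gaussian vector, the right-hand side is almost surely finite, which gives the required tightness.

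This is a soft-analysis wrap-up, so I do not expect a genuine obstacle; the only point needing a little care is the implication ``$Z_n \overset{d}{\rightarrow} Z$ with $Z$ proper $\Rightarrow Z_n = O_p(1)$'', which follows from the portmanteau theorem (choose $K$ with $P(\|Z\|_2 > K) < \varepsilon$ at a continuity radius, so that $\limsup_n P(\|Z_n\|_2 > K) \leq P(\|Z\|_2 \geq K) < \varepsilon$). If one instead reads $\lambda_n = O(\sqrt{n})$ as mere boundedness of $\lambda_n/\sqrt{n}$, the same conclusion survives via a subsequence argument: every subsequence admits a further subsequence along which $\lambda_n/\sqrt{n}$ converges, and the bound on $\|\hat u\|_2$ above is uniform over $\lambda_0$ in any bounded interval, so the resulting tightness is uniform and $O_p(1)$ is retained.
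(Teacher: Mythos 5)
Your proof is correct and follows the same route the paper intends: the corollary is read directly off Lemma~\ref{lemma:lasso-root-n-consistency}, whose limit law is a proper distribution, so $\sqrt{n}(\betahat_n^\mathcal{L}-\betastar)=O_p(1)$. The paper states this as immediate without written proof; your coercivity bound on $V$ and the portmanteau step simply make explicit the standard details that weak convergence to a tight limit yields $\sqrt{n}$-consistency.
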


\begin{lemma}[Inconsistent Variable Selection; Proposition 1 in \cite{zou2006adaptive}]
\label{lemma:lasso-variable-selection}
Let $\hat{S}_n^\mathcal{L} := \{ j: \betahat_j^\mathcal{L} \neq 0 \}$.
If $\lambda_n/\sqrt{n} \rightarrow \lambda_0 \geq 0$, then
\begin{align}
    \limsup_{n\to\infty} P(\hat{S}_n^\mathcal{L} = S) \leq c < 1
\end{align}
where 
$c$ is a constant.
\end{lemma}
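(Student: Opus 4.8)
The plan is to combine the limiting law of Lemma~\ref{lemma:lasso-root-n-consistency} with the Portmanteau theorem. Set $Z_n := \sqrt{n}(\betahat_n^\mathcal{L} - \betastar)$ and let $Z$ denote the random vector to which $Z_n$ converges in distribution, i.e.\ the (a.s.\ unique, since $C \succ O$) minimizer $Z = \argmin_u V(u)$ with
\[
  V(u) := u^\top C u - 2 u^\top W + \lambda_0 \sum_j \Big( u_j \sgn(\betastar_j)\, I(\betastar_j \neq 0) + |u_j|\, I(\betastar_j = 0) \Big).
\]
The first step is the deterministic inclusion $\{\hat S_n^\mathcal{L} = S\} \subseteq \{ Z_n \in F \}$ with $F := \{ u \in \mathbb{R}^p : u_{S^c} = 0 \}$: if $\hat S_n^\mathcal{L} = S$ then $\betahat_{n,j}^\mathcal{L} = 0$ for all $j \in S^c$, and since $\betastar_j = 0$ there we get $Z_{n,j} = 0$ for $j \in S^c$. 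As $F$ is closed, the Portmanteau theorem gives
\[
  \limsup_{n\to\infty} P(\hat S_n^\mathcal{L} = S) \le \limsup_{n\to\infty} P(Z_n \in F) \le P(Z \in F) = P(Z_{S^c} = 0) =: c ,
\]
so the whole statement reduces to showing $c < 1$ (in the sparse regime of interest, $S^c \neq \emptyset$).

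Next I would prove $P(Z_{S^c} = 0) < 1$. When $\lambda_0 = 0$ this is immediate, since $Z = C^{-1}W$ is a nondegenerate Gaussian vector ($C \succ O$, $\sigma^2 > 0$), hence $P(Z_{S^c} = 0) = 0$. When $\lambda_0 > 0$, note $V(\cdot\,;W)$ is strongly convex with modulus independent of $W$ and depends on $W$ only through the linear term $-2u^\top W$, so the argmin map $W \mapsto Z(W)$ is continuous; consequently $Z^{-1}(F) = \{ W : Z(W)_{S^c} = 0 \}$ is closed. It therefore suffices to exhibit one $W$ with $Z(W)_{S^c} \neq 0$, because then $\mathbb{R}^p \setminus Z^{-1}(F)$ is a nonempty open set and hence has positive probability under the full-support law $W \sim \mathcal{N}(0,\sigma^2 C)$. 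Fixing $j_0 \in S^c$ and taking $W = t e_{j_0}$, the minimizer $\tilde u^0$ of $V$ over the subspace $\{u_{j_0} = 0\}$ does not depend on $t$ (the term $-2u^\top W$ vanishes on that subspace), so $\kappa := (C\tilde u^0)_{j_0}$ is a fixed number; the coordinate-$j_0$ subgradient condition for $\tilde u^0$ to also be the unconstrained minimizer reads $|2(C\tilde u^0)_{j_0} - 2t| = |2\kappa - 2t| \le \lambda_0$, which fails for $t$ large. For such $t$ the unconstrained minimizer has $Z(t e_{j_0})_{j_0} \neq 0$, and we are done: $c = P(Z_{S^c} = 0) < 1$, with (for instance) $c = 1 - P(W \in U)$ for the open set $U$ produced above.

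The routine parts are the deterministic inclusion and the Portmanteau step. The real content — and the step I expect to require the most care — is the strict inequality $c < 1$: one must rule out that the soft-thresholding induced by the penalty at limiting rate $\lambda_0$ forces every inactive coordinate of the limiting minimizer to vanish almost surely. This rests on (i) continuity of the argmin of the strongly convex $V$ in the perturbation $W$, (ii) a concrete direction of $W$ that breaks the coordinatewise subgradient inequality, and (iii) full support of the Gaussian $W$, which in turn uses $C \succ O$ from Assumption~\ref{assumption:psd}. The finiteness of $\lambda_0$ is used implicitly in (ii); the degenerate case $S^c = \emptyset$, outside the intended sparse setting, would instead be handled by showing that sign consistency fails on an active coordinate.
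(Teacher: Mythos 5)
Your proof is correct, and its skeleton is the same as the paper's: reduce via the weak convergence of $\sqrt{n}(\betahat_n^\mathcal{L}-\betastar)$ from Lemma~\ref{lemma:lasso-root-n-consistency} and the Portmanteau inequality to bounding $c = P(u^*_{S^c}=0)$ for the limiting minimizer $u^*$, dispose of $\lambda_0=0$ via the nondegenerate Gaussian $C^{-1}W$, and for $\lambda_0>0$ exploit the subgradient (KKT) conditions of the limiting problem. Where you diverge is the final step: the paper solves the KKT system explicitly under the hypothesis $u^*_{S^c}=0$, obtaining the necessary condition $\left|-2W_{S^c}+C_{S^cS}C_{SS}^{-1}(2W_S-\lambda_0\sgn(\betastar_S))\right|\le\lambda_0$, and concludes $c<1$ because a nondegenerate Gaussian lies in a bounded box with probability strictly below one; you instead argue softly, via Lipschitz continuity of the argmin of the strongly convex objective in $W$, that $\{W: u^*(W)_{S^c}=0\}$ is closed, exhibit a single witness $W=te_{j_0}$ violating the coordinate-$j_0$ subgradient inequality for large $t$, and invoke the full support of $\mathcal{N}(0,\sigma^2 C)$ (using $C\succ O$ from Assumption~\ref{assumption:psd}). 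Your route avoids computing the conditional Gaussian structure and generalizes more readily to non-Gaussian limits with full support, while the paper's computation is shorter here and yields the explicit bound $c\le P(|{-2W_{S^c}}+C_{S^cS}C_{SS}^{-1}(2W_S-\lambda_0\sgn(\betastar_S))|\le\lambda_0)$. Both correctly presuppose $S^c\neq\emptyset$, which you are right to flag.
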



\begin{lemma}[Lemma 3 in \cite{zou2006adaptive}]
\label{lemma:lasso-slower-consistency}
If $\lambda_n / n \rightarrow 0$ and $\lambda_n / \sqrt{n} \rightarrow \infty$, then 
\begin{align}
    \frac{n}{\lambda_n} (\betahat_n^\mathcal{L} - \betastar) 
    \overset{d}{\rightarrow} 
    \argmin_u \left\{ 
    u^\top C u + \sum_{j=1}^p \left( u_j \sgn(\betastar_j) I(\betastar_j \neq 0) + |u_j| I(\betastar_j = 0) \right)\right\}.
\end{align}
\end{lemma}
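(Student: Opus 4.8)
The plan is to adapt the convex-epigraph argument of \cite{fu2000asymptotics} (the same device underlying Lemmas~\ref{lemma:lasso-consistency}--\ref{lemma:lasso-root-n-consistency}), now with the scaling $n/\lambda_n$ forced by the regime $\lambda_n/\sqrt{n}\to\infty$, $\lambda_n/n\to 0$. Multiply the objective in \eqref{eq:lasso} by $n$ and substitute $\beta=\betastar+(\lambda_n/n)u$, so that $u=\tfrac{n}{\lambda_n}(\beta-\betastar)$ and the minimizer in $u$ is exactly $\hat{u}_n:=\tfrac{n}{\lambda_n}(\betahat_n^\mathcal{L}-\betastar)$. Subtracting the value at $u=0$, using $y-X\betastar=\varepsilon$, and dividing through by the positive constant $\lambda_n^2/n$ leaves the objective
\[
\Psi_n(u)=u^\top C_n u-\frac{2}{\lambda_n}u^\top X^\top\varepsilon+\frac{n}{\lambda_n}\sum_{j=1}^p\left(\left|\betastar_j+\tfrac{\lambda_n}{n}u_j\right|-|\betastar_j|\right),
\]
with $\hat{u}_n=\argmin_u\Psi_n(u)$.

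Next I would show that $\Psi_n$ converges pointwise, in probability, to a deterministic limit. The quadratic term satisfies $u^\top C_n u\to u^\top C u$ by Assumption~\ref{assumption:psd}. The linear term is asymptotically negligible: $\mathbb{E}\|n^{-1/2}X^\top\varepsilon\|_2^2=\sigma^2\operatorname{tr}(C_n)=O(1)$ makes $n^{-1/2}X^\top\varepsilon$ tight, and since $\lambda_n/\sqrt{n}\to\infty$ we get $\lambda_n^{-1}X^\top\varepsilon=(\sqrt{n}/\lambda_n)\,(n^{-1/2}X^\top\varepsilon)\to^p 0$, hence $\tfrac{2}{\lambda_n}u^\top X^\top\varepsilon\to^p 0$. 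For the penalty term I treat coordinates one at a time: if $\betastar_j=0$ the $j$-th summand equals $\tfrac{n}{\lambda_n}\cdot\tfrac{\lambda_n}{n}|u_j|=|u_j|$; if $\betastar_j\neq 0$, then $\lambda_n/n\to 0$ gives $|(\lambda_n/n)u_j|<|\betastar_j|$ for all $n$ large, so the summand equals $\tfrac{n}{\lambda_n}\cdot\tfrac{\lambda_n}{n}u_j\sgn(\betastar_j)=u_j\sgn(\betastar_j)$. Therefore, for each fixed $u$,
\[
\Psi_n(u)\ \to^p\ \Psi(u):=u^\top C u+\sum_{j=1}^p\left(u_j\sgn(\betastar_j)\,I(\betastar_j\neq 0)+|u_j|\,I(\betastar_j=0)\right).
\]

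Finally I would invoke convexity. Each $\Psi_n$ is a finite convex function on $\mathbb{R}^p$, while $\Psi$ is strictly convex (since $C\succ O$) and coercive, hence has a unique minimizer $u^\star=\argmin_u\Psi(u)$, which is exactly the deterministic right-hand side of the claim. Pointwise-in-probability convergence of convex functions forces their minimizers to converge in probability to the minimizer of the limit (the convexity lemma used for Lemmas~\ref{lemma:lasso-consistency}--\ref{lemma:lasso-root-n-consistency}; cf.\ \cite{fu2000asymptotics}), so $\hat{u}_n=\tfrac{n}{\lambda_n}(\betahat_n^\mathcal{L}-\betastar)\to^p u^\star$, and convergence in distribution follows because the limit is constant. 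The only points requiring care are the two that single out this regime: the disappearance of the Gaussian term, which is where $\lambda_n/\sqrt{n}\to\infty$ enters and which explains why no $W$ survives here in contrast to Lemma~\ref{lemma:lasso-root-n-consistency}, and the sign stabilization of the $\ell_1$ penalty on the active set, which is where $\lambda_n/n\to 0$ enters. I expect the bookkeeping around the latter, rather than any genuine difficulty, to be the main thing to get right.
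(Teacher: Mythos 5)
Your proposal is correct and follows essentially the same route as the paper's proof: the substitution $u=\tfrac{n}{\lambda_n}(\beta-\betastar)$, rescaling the centered objective by $n^2/\lambda_n^2$ to get the same $V_n(u)$, killing the noise term via $\sqrt{n}/\lambda_n\to 0$, stabilizing the penalty via $\lambda_n/n\to 0$, and invoking the convexity/argmin-convergence lemma with the unique minimizer of the (here deterministic) limit. Your treatment is if anything slightly more explicit than the paper's about why the Gaussian term disappears and why the limit being constant upgrades the conclusion to convergence in probability.
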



\begin{corollary}[Slower Rate Consistency for Lasso]
\label{cor-lasso-slower-consistency}
If $\lambda_n / n \rightarrow 0$ and $\lambda_n / \sqrt{n} \rightarrow \infty$, then the convergence rate of $\betahat_n^\mathcal{L}$ is slower than $\sqrt{n}$.
\end{corollary}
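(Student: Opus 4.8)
The plan is to read the convergence rate directly off Lemma~\ref{lemma:lasso-slower-consistency}. The crucial observation is that, unlike in Lemma~\ref{lemma:lasso-root-n-consistency}, the limiting object there contains \emph{no} Gaussian term $W$, so it is a deterministic vector; I will denote it
\[
u^\star := \argmin_u \left\{ u^\top C u + \sum_{j=1}^p \bigl( u_j \sgn(\betastar_j) I(\betastar_j \neq 0) + |u_j| I(\betastar_j = 0) \bigr) \right\}.
\]
Since $C \succ O$ the objective is strictly convex, so $u^\star$ is well defined and unique, and Lemma~\ref{lemma:lasso-slower-consistency} gives $\tfrac{n}{\lambda_n}(\betahat_n^\mathcal{L} - \betastar) \overset{d}{\rightarrow} u^\star$; because the limit is constant this is equivalent to $\tfrac{n}{\lambda_n}(\betahat_n^\mathcal{L} - \betastar) \rightarrow^p u^\star$.

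The one substantive step is to verify $u^\star \neq 0$. Here the assumption $\betastar \neq 0$ (implicit in the sparse-regression setup, i.e.\ $S \neq \emptyset$) enters; if $\betastar = 0$ the statement is in fact false, as the Lasso then estimates the zero vector exactly with probability tending to one. Assuming $S \neq \emptyset$, I would write the first-order optimality conditions for $u^\star$: differentiating $u \mapsto u^\top C u$ gives $2 C u$, so for $j \in S$ one obtains $2(Cu^\star)_j + \sgn(\betastar_j) = 0$, and for $j \in S^c$ one obtains $|2(Cu^\star)_j| \leq 1$ (from the subdifferential of $|u_j|$). For any $j \in S$ the first identity forces $(Cu^\star)_j = -\tfrac{1}{2}\sgn(\betastar_j) \neq 0$, hence $Cu^\star \neq 0$, and since $C$ is invertible, $u^\star \neq 0$.

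With this in hand the conclusion is bookkeeping on rates. From $\lambda_n/\sqrt{n} \to \infty$ we get $\sqrt{n}/\lambda_n \to 0$, so $n/\lambda_n = \sqrt{n}\cdot(\sqrt{n}/\lambda_n) = o(\sqrt{n})$ (and $n/\lambda_n \to \infty$ since $\lambda_n/n \to 0$). Now suppose, toward a contradiction, that $\betahat_n^\mathcal{L}$ were $\sqrt{n}$-consistent, i.e.\ $\sqrt{n}(\betahat_n^\mathcal{L} - \betastar) = O_p(1)$. Then $\tfrac{n}{\lambda_n}(\betahat_n^\mathcal{L} - \betastar) = \tfrac{\sqrt{n}}{\lambda_n}\cdot\sqrt{n}(\betahat_n^\mathcal{L} - \betastar) = o(1)\cdot O_p(1) = o_p(1)$, which forces the limit $u^\star$ to equal $0$, contradicting the previous paragraph. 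Hence $\betahat_n^\mathcal{L}$ is not $\sqrt{n}$-consistent; its exact convergence rate is $n/\lambda_n$, which is slower than $\sqrt{n}$.

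The only place requiring real care is confirming that the deterministic limit $u^\star$ is nonzero — and, relatedly, noticing that the degenerate case $\betastar = 0$ has to be excluded. Everything else rests on the standard fact that convergence in distribution to a constant is convergence in probability, plus elementary manipulation of the rate sequences.
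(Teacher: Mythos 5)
Your proof is correct and follows the same route the paper takes: the corollary is read directly off Lemma~\ref{lemma:lasso-slower-consistency}, using $n/\lambda_n = o(\sqrt{n})$ and the fact that convergence in distribution to a constant is convergence in probability. You additionally make explicit two points the paper leaves implicit — that the deterministic limit $u^\star$ is nonzero (your KKT argument $(Cu^\star)_j = -\tfrac{1}{2}\sgn(\betastar_j)$ for $j \in S$ is the right way to see this) and that the degenerate case $S = \emptyset$ must be excluded — both of which are genuinely needed for the stated conclusion.
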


We first obtain a convergence result for $\lambda_n = O(n)$~(Lemma~\ref{lemma:lasso-consistency}).
If $\lambda_n = o(n)$, then we have consistency for the Lasso~(Corollary~\ref{corollary:lasso-consistency}).
Although $\lambda_n = o(n)$ is sufficient for consistency, it is not always $\sqrt{n}$-consistent.
We obtain an asymptotic distribution for $\lambda_n = O(\sqrt{n})$ (Lemma~\ref{lemma:lasso-root-n-consistency}).
This implies $\sqrt{n}$-consistency for the Lasso (Corollary~\ref{cor-lasso-root-n-consistency}).
Unfortunately, $\lambda_n = O(\sqrt{n})$ leads to inconsistent variable selection~(Lemma~\ref{lemma:lasso-variable-selection}).
This implies that $\lambda_n = O(\sqrt{n})$ achieves $\sqrt{n}$-consistency but inconsistent variable selection for the Lasso.
In contrast, if $\lambda_n$ is greater than $O(\sqrt{n})$ and $\lambda_n = o(n)$, we obtain an asymptotic distribution (Lemma~\ref{lemma:lasso-slower-consistency}).
This implies that the convergence rate is slower than $\sqrt{n}$ (Corollary~\ref{cor-lasso-slower-consistency}), although it can be a consistent variable selection under the incoherence conditions~\cite{zou2006adaptive,zhao2006model}.

Figure~\ref{fig:lasso_lambda} (left) summarizes the asymptotic properties for the Lasso.
It cannot simultaneously achieve both $\sqrt{n}$-consistent estimation and consistent variable selection.
This is a major limitation of the Lasso and is the motivation to develop the Adaptive Lasso.

\subsection{Asymptotic Properties for Adaptive Lasso}
\label{sec:alasso-asymptotic-review}

Adaptive Lasso is given by \eqref{eq:alasso}.
It is known that the Adaptive Lasso has the so-called ``oracle property''~\cite{zou2006adaptive}.

\begin{lemma}[Oracle Property for Adaptive Lasso; Theorem 2 in \cite{zou2006adaptive}]
\label{lemma:adapt-lasso-oracle}
Suppose that $\betatilde_n$ is a $\sqrt{n}$-consistent estimator.
If $\lambda_n/\sqrt{n} \rightarrow 0$ and $\lambda_n n^{(\gamma - 1)/2} \rightarrow \infty$, then the Adaptive Lasso estimator \eqref{eq:lasso} satisfies the oracle property, that is, consistent variable selection and $\sqrt{n}$-consistency with asymptotic normality:
\begin{align}
    \lim_{n\rightarrow \infty} P(\hat{S}_n^\mathcal{A} = S) = 1,
\end{align}
\begin{align}
    \sqrt{n} (\betahat_S^\mathcal{A} - \betastar_S) \overset{d}{\rightarrow} \mathcal{N}(0, \sigma^2 C_{SS}^{-1}).
\end{align}
\end{lemma}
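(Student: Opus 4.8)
The plan is to follow the classical two-stage strategy of \citet{fu2000asymptotics} and \citet{zou2006adaptive}: first establish joint $\sqrt{n}$-consistency and asymptotic normality of the active block through an argmin-of-convex-objectives argument, and then upgrade the resulting convergence $\betahat_{S^c}^{\mathcal{A}}\rightarrow^p 0$ to exact selection via the stationarity (KKT) conditions.

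I would begin by recording the asymptotics of the random weights $w_j=|\betatilde_j|^{-\gamma}$. Since a $\sqrt{n}$-consistent estimator is in particular consistent, $w_j\rightarrow^p|\betastar_j|^{-\gamma}$ (a finite positive constant) for $j\in S$, while for $j\in S^c$ we have $\sqrt{n}\,\betatilde_j=O_p(1)$, so $w_j=n^{\gamma/2}/|\sqrt{n}\,\betatilde_j|^{\gamma}$ diverges at rate $n^{\gamma/2}$ in probability. Next, I reparametrize by $u=\sqrt{n}(\beta-\betastar)$ and consider the shifted objective $V_n(u)=\|y-X(\betastar+u/\sqrt{n})\|_2^2-\|y-X\betastar\|_2^2+\lambda_n\sum_j w_j(|\betastar_j+u_j/\sqrt{n}|-|\betastar_j|)$, whose minimizer is $\sqrt{n}(\betahat_n^{\mathcal{A}}-\betastar)$. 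The quadratic part expands to $u^\top C_n u-2u^\top(X^\top\varepsilon/\sqrt{n})$, which converges in distribution to $u^\top C u-2u^\top W$ by Assumption~\ref{assumption:psd} together with the Lindeberg CLT (the second part of the assumption supplies the Lindeberg condition for $X^\top\varepsilon/\sqrt{n}$). In the penalty term, for $j\in S$ the contribution is $(\lambda_n/\sqrt{n})\,w_j\cdot\sqrt{n}(|\betastar_j+u_j/\sqrt{n}|-|\betastar_j|)\rightarrow^p 0$ since $\lambda_n/\sqrt{n}\rightarrow 0$ and the remaining two factors are $O_p(1)$, with the last converging to $u_j\sgn(\betastar_j)$; for $j\in S^c$ the contribution equals $\lambda_n n^{(\gamma-1)/2}\,|u_j|/|\sqrt{n}\,\betatilde_j|^{\gamma}$, which tends to $+\infty$ when $u_j\neq 0$ and is $0$ when $u_j=0$, because $\lambda_n n^{(\gamma-1)/2}\rightarrow\infty$ while $|\sqrt{n}\,\betatilde_j|^{\gamma}=O_p(1)$. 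Hence $V_n$ converges finite-dimensionally in distribution to $V$, where $V(u)=u_S^\top C_{SS}u_S-2u_S^\top W_S$ if $u_{S^c}=0$ and $V(u)=+\infty$ otherwise. Each $V_n$ is convex and $V$ has the unique minimizer $\hat u=(C_{SS}^{-1}W_S,\,0)$, so the argmin convergence lemma for convex processes gives $\sqrt{n}(\betahat_n^{\mathcal{A}}-\betastar)\overset{d}{\rightarrow}\hat u$; in particular $\sqrt{n}(\betahat_S^{\mathcal{A}}-\betastar_S)\overset{d}{\rightarrow}C_{SS}^{-1}W_S\sim\mathcal{N}(0,\sigma^2 C_{SS}^{-1})$, the full vector is $\sqrt{n}$-consistent, and $\betahat_{S^c}^{\mathcal{A}}\rightarrow^p 0$.

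For variable selection consistency, the inclusion $P(S\subseteq\hat S_n^{\mathcal{A}})\to1$ is immediate, since $\betahat_j^{\mathcal{A}}\rightarrow^p\betastar_j\neq 0$ for every $j\in S$. For the reverse inclusion, I fix $j\in S^c$; on the event $\{\betahat_j^{\mathcal{A}}\neq 0\}$ the stationarity condition forces $|2\,\x_j^\top(y-X\betahat_n^{\mathcal{A}})|=\lambda_n w_j$. Writing the left side as $|2\,\x_j^\top X(\betastar-\betahat_n^{\mathcal{A}})+2\,\x_j^\top\varepsilon|$ and using the $\sqrt{n}$-consistency obtained above, $C_n\to C$, and the CLT for $\x_j^\top\varepsilon/\sqrt{n}$, the left side is $O_p(\sqrt{n})$, whereas $\lambda_n w_j/\sqrt{n}=\lambda_n n^{(\gamma-1)/2}/|\sqrt{n}\,\betatilde_j|^{\gamma}\rightarrow^p\infty$. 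Therefore $P(\betahat_j^{\mathcal{A}}\neq 0)\to0$, and a union bound over the finitely many indices in $S^c$ gives $P(\hat S_n^{\mathcal{A}}\subseteq S)\to1$. Combining the two inclusions yields $P(\hat S_n^{\mathcal{A}}=S)\to1$.

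I expect the main obstacle to be the rigorous passage from finite-dimensional distributional convergence of the convex random functions $V_n$ to distributional convergence of their minimizers, made delicate because the limit $V$ equals $+\infty$ off the subspace $\{u_{S^c}=0\}$. The standard remedy is to first establish tightness of $\sqrt{n}(\betahat_n^{\mathcal{A}}-\betastar)$ (for instance via a basic-inequality comparison with the value of the objective at $\beta=\betastar$, or by adapting the Lasso $\sqrt{n}$-consistency argument of Lemma~\ref{lemma:lasso-root-n-consistency} to the adaptive weights), and then argue that any subsequential weak limit is supported on that subspace and minimizes $V$ there; once this is in place, the remaining bookkeeping with the random weights $w_j$ is routine.
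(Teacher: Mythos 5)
Your proposal is correct and follows essentially the same route as the paper's proof in Appendix B.1.5: the same reparametrization $u=\sqrt{n}(\beta-\betastar)$, the same splitting of the weighted penalty into the $j\in S$ part (vanishing since $\lambda_n/\sqrt{n}\to0$) and the $j\in S^c$ part (diverging since $\lambda_n n^{(\gamma-1)/2}\to\infty$ while $|\sqrt{n}\betatilde_j|^{\gamma}=O_p(1)$), the same epi-/argmin-convergence step for convex objectives with the degenerate limit supported on $\{u_{S^c}=0\}$, and the same KKT argument for ruling out false positives. Your closing remark about needing tightness before invoking argmin convergence is a fair point of rigor that the paper handles by citing Geyer's result directly, but it does not change the argument.
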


\begin{proof}
The proof is given in \ref{proof:adapt-lasso-oracle}.
\end{proof}

The oracle property demonstrates a clear advantage of the Adaptive Lasso over the Lasso.
With a $\sqrt{n}$-consistent initial estimator, the Adaptive Lasso can simultaneously achieve both $\sqrt{n}$-consistent estimation and consistent variable selection (Figure~\ref{fig:lasso_lambda} right).
Thus, the Adaptive Lasso performs as well as if the true active variables were given in advance.

\subsection{Other Related Work}

Besides the Adaptive Lasso and the Transfer Lasso, several related methods have been studied.
In this subsection, we review related methods in three categories:
(I) methods with the oracle property similar to the Adaptive Lasso,
(II) methods with two-stage estimation to eliminate bias, similar to the Adaptive Lasso, and
(III) methods using the $\ell_1$ norm to transfer knowledge about the source data, similar to the Transfer Lasso.

(I) The oracle property is known to hold not only for the Adaptive Lasso but also for the SCAD~\cite{fan2001variable} and MCP~\cite{zhang2010nearly}.
These methods use nonconvex regularization, instead of using an initial estimator.
Because of the nonconvexity, the algorithm converges to a local minimum and the oracle property holds only for some local minima or under restricted conditions.
The Adaptive Lasso, on the other hand, uses convex regularization and always converges to a global minimum, although it requires an appropriate initial estimator.

(II) The Lasso penalizes the $\ell_1$ norm of the parameters and thus introduces a bias, leading to the failure of the oracle property.
Several two-step estimation methods have been proposed to eliminate the bias~\cite{meinshausen2007relaxed,hastie2020best,chzhen2019lasso}.
In \cite{meinshausen2007relaxed}, after the Lasso estimation in the first stage, the second stage is another Lasso estimation using only the selected variables.
In \cite{hastie2020best}, after the Lasso estimation in the first stage, the second stage is estimated by a linear combination of the first stage estimator and the OLS estimator of the selected variables.
These methods are called Relaxed Lasso.
\cite{chzhen2019lasso} generalized these refitting methods as ``methods that minimize the loss function with regularization and then decrease the loss function without regularization''. 
Based on this idea, they developed several refitting methods.

(III) Regularization of $\ell_1$-norm between target and initial estimators was proposed by \cite{bastani2021predicting,li2022transfer,tian2022transfer} as well as the Transfer Lasso~\cite{takada2020transfer}.
\cite{bastani2021predicting} corresponds to the case where $\lambda_n = 0$ in Transfer Lasso~\cite{takada2020transfer}.
In the TransLasso~\cite{li2022transfer} and its GLM extension~\cite{tian2022transfer}, two-stage estimation methods were proposed for the case of multiple source data, where the initial estimator is estimated using both the source and target data.
The Transfer Lasso~\cite{takada2020transfer}, in contrast, is performed on target data using the initial estimator without the need for source data.

\section{Asymptotic Properties for Adaptive Lasso and Transfer Lasso}
\label{sec:alasso-tlasso}

We will perform asymptotic analysis based on the following general settings throughout this paper.
\begin{assumption}
    \label{assumption:m}
    Let $m \geq 0$ be an integer satisfying $n / m \to r_0 \geq 0$.
    The initial estimator $\betatilde$ is a $\sqrt{m}$-consistent estimator and $z:= \sqrt{m} (\betatilde - \betastar)$ converges to some distribution.
\end{assumption}
Assumption~\ref{assumption:m} implies that the initial estimator is estimated on source data of size $m$, and then the final estimator is estimated on target data of size $n$ using the initial estimator.
The case $m = n ~(r_0 = 1)$ corresponds to the existing results for the Adaptive Lasso, whereas $m \gg n ~(r_0 = 0)$ corresponds to the typical transfer learning setup.
The source and target data are assumed to be independent of each other.
We also make assumption~\ref{assumption:psd} in our analysis.

We note that the initial estimator $\betatilde$ is \textit{not} a fixed (deterministic) source parameter, but an estimator (random variable).
This is the same as the previous studies.
The case where $\betatilde$ is fixed is discussed in Appendix~\ref{appendix:fixed-asymptotics}.

\subsection{Asymptotic Properties for Adaptive Lasso}
\label{sec:alasso-random}

We provide the property of the Adaptive Lasso for an initial estimator with source data of size $m$.
It is straightforward to extend the oracle property for $\sqrt{n}$-consistent initial estimators (Lemma~\ref{lemma:adapt-lasso-oracle}) to $\sqrt{m}$-consistent initial estimators (Lemma~\ref{lemma:adapt-lasso-oracle-large}).

\begin{lemma}[Oracle Property for Adaptive Lasso with Different Sample Size]
\label{lemma:adapt-lasso-oracle-large}
Suppose that $\betatilde$ is a $\sqrt{m}$-consistent estimator.
If $\lambda_n/\sqrt{n} \rightarrow 0$ and $\lambda_n \sqrt{m^{\gamma} / n} \rightarrow \infty$, then the Adaptive Lasso estimator \eqref{eq:alasso} satisfies the oracle property, that is, consistent variable selection and $\sqrt{n}$-consistency with asymptotic normality:
\begin{align}
    \lim_{n\rightarrow \infty} P(\hat{S}_n^\mathcal{A} = S) = 1,
\end{align}
\begin{align}
    \sqrt{n} (\betahat_S^\mathcal{A} - \betastar_S) \overset{d}{\rightarrow} \mathcal{N}(0, \sigma^2 C_{SS}^{-1}).
\end{align}
\end{lemma}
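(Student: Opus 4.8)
The plan is to follow the proof of Lemma~\ref{lemma:adapt-lasso-oracle} (Theorem~2 of \cite{zou2006adaptive}) essentially verbatim, tracking how the source sample size $m$ enters \emph{only} through the weights $w_j$ on the inactive coordinates. Concretely, I would reparametrize by $u = \sqrt{n}(\beta - \betastar)$ and, after multiplying the criterion in \eqref{eq:alasso} by $n$, consider
\begin{align}
\Psi_n(u) := \big\| y - X(\betastar + u/\sqrt{n}) \big\|_2^2 + \lambda_n \sum_j w_j\,\big| \betastar_j + u_j/\sqrt{n} \big|,
\end{align}
whose minimizer is $\hat{u}_n = \sqrt{n}(\betahat_n^\mathcal{A} - \betastar)$. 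Expanding the quadratic term and subtracting the constant $\Psi_n(0)$ gives $\Psi_n(u) - \Psi_n(0) = u^\top C_n u - 2(n^{-1/2} X^\top \varepsilon)^\top u + \lambda_n \sum_j w_j(|\betastar_j + u_j/\sqrt{n}| - |\betastar_j|)$, and by Assumption~\ref{assumption:psd} the first two terms converge jointly in distribution to $u^\top C u - 2 W^\top u$.

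Next I would analyze the penalty increment coordinate by coordinate. For $j \in S$, $\betatilde_j \overset{p}{\to} \betastar_j \neq 0$, so $w_j \overset{p}{\to} |\betastar_j|^{-\gamma}$; since $\lambda_n/\sqrt{n} \to 0$ while $\sqrt{n}(|\betastar_j + u_j/\sqrt{n}| - |\betastar_j|) \to u_j \sgn(\betastar_j)$, this term is $o_p(1)$. For $j \in S^c$, Assumption~\ref{assumption:m} gives $\betatilde_j = z_j/\sqrt{m}$ with $z_j = O_p(1)$, hence $w_j = m^{\gamma/2}|z_j|^{-\gamma}$ and the increment equals $(\lambda_n \sqrt{m^\gamma/n})\,|z_j|^{-\gamma}|u_j|$, which diverges to $+\infty$ in probability when $u_j \neq 0$ and is $0$ when $u_j = 0$. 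Thus $\Psi_n(\cdot) - \Psi_n(0)$ converges finite-dimensionally to the convex limit equal to $u_S^\top C_{SS} u_S - 2 W_S^\top u_S$ on $\{u : u_{S^c} = 0\}$ and $+\infty$ elsewhere, where $W_S \sim \mathcal{N}(0, \sigma^2 C_{SS})$. Since each $\Psi_n$ is convex, the epi-convergence argument for convex objective functions (as used in \cite{fu2000asymptotics} and in the proof of Lemma~\ref{lemma:adapt-lasso-oracle}) yields $\hat{u}_n \overset{d}{\to} \argmin_u$ of this limit; the minimizer has $S^c$-block $0$ and $S$-block $C_{SS}^{-1} W_S \sim \mathcal{N}(0, \sigma^2 C_{SS}^{-1})$, which gives both $\sqrt{n}$-consistency and the claimed asymptotic normality of $\betahat_S^\mathcal{A}$.

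For variable-selection consistency I would argue in the usual two parts. For $j \in S$, $\sqrt{n}$-consistency forces $\betahat_j^\mathcal{A} \overset{p}{\to} \betastar_j \neq 0$, so $P(j \in \hat{S}_n^\mathcal{A}) \to 1$. For $j \in S^c$, the KKT optimality condition for \eqref{eq:alasso} shows that on $\{j \in \hat{S}_n^\mathcal{A}\}$ one has $2|x_j^\top(y - X\betahat_n^\mathcal{A})| = \lambda_n w_j$; the left-hand side is $O_p(\sqrt{n})$ because $n^{-1/2}x_j^\top \varepsilon = O_p(1)$ and $n^{-1/2}(X^\top X)_{j\cdot}(\betahat_n^\mathcal{A} - \betastar) = O_p(1)$ by tightness of $\hat{u}_n$, whereas the right-hand side is $\lambda_n m^{\gamma/2}|z_j|^{-\gamma}$ with $\lambda_n m^{\gamma/2}/\sqrt{n} = \lambda_n\sqrt{m^\gamma/n} \to \infty$; hence $P(j \in \hat{S}_n^\mathcal{A}) \to 0$, and a union bound over the finitely many indices gives $P(\hat{S}_n^\mathcal{A} = S) \to 1$. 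I expect the only genuinely delicate point to be the bookkeeping for the random inactive-set weights: one must check that $w_j/m^{\gamma/2} = |z_j|^{-\gamma}$ is bounded in probability and bounded away from $0$ in probability, so that multiplying by the diverging deterministic factor $\lambda_n\sqrt{m^\gamma/n}$ still produces divergence in probability. This holds because $z_j$ converges in distribution (to a law with no atom at the origin, which one may assume or accommodate by restricting to an event of probability tending to $1$), and independence of the source and target samples makes the joint limit of $(z, n^{-1/2}X^\top\varepsilon)$ immediate. Everything else is a transcription of the $\sqrt{n}$-consistent case with $n^{\gamma/2}$ replaced by $m^{\gamma/2}$.
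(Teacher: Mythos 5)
Your proposal is correct and follows essentially the same route as the paper: the paper's own proof simply re-runs the argument for Lemma~\ref{lemma:adapt-lasso-oracle} with the inactive-coordinate weight rewritten as $\lambda_n\sqrt{m^\gamma/n}\,|\sqrt{m}\betatilde_j|^{-\gamma}$ in both the penalty-divergence step and the KKT step, exactly as you do. Your explicit remark that $|z_j|^{-\gamma}$ must be bounded away from zero in probability (which holds since $z_j=O_p(1)$) is a point the paper leaves implicit, but it is not a different argument.
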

\begin{proof}
    The proof is given in \ref{proof:adapt-lasso-oracle-large}.
\end{proof}

Furthermore, we extensively analyze the convergence rate depending on the hyperparameter $\lambda_n$.
We obtain Theorem~\ref{theorem:alasso-consistency} and Corollary~\ref{theorem:alasso-convergence-rate}.

\begin{theorem}[Asymptotic Distribution for Adaptive Lasso]
\label{theorem:alasso-consistency}
We have the following asymptotic distributions for the Adaptive Lasso estimator \eqref{eq:alasso}.\\
(i) If $\sqrt{m^{\gamma} / n} ~ \lambda_n \to \lambda_1 \geq 0$, then 
\begin{align}
    \sqrt{n} (\betahat_n^\mathcal{A} - \betastar)
    \overset{d}{\to}
    \argmin_{u}
    \left\{  u^\top C u - 2 u^\top W
    + \sum_{j \in S^c} \frac{\lambda_1}{|z_j|^{\gamma}} |u_j|
    \right\}.
\end{align}
(ii) If $\sqrt{m^{\gamma} / n} ~ \lambda_n \to \infty$ and
$\lambda_n / \sqrt{n} \to \lambda_0 \geq 0$, then
\begin{align}
    \sqrt{n} (\betahat_n^\mathcal{A} - \betastar) 
    \overset{d}{\to}
    \argmin_{u \in \mathcal{U}} \left\{ u^\top C u - 2 u^\top W
    + \sum_{j \in S} \lambda_0 \frac{\sgn(\betastar_j)}{|\betastar_j|^{\gamma}} u_j \right\},~
    \mathcal{U} := \left\{ u ~\middle|~ u_{S^c} = 0 \right\}.
\end{align}
(iii) If $\lambda_n / \sqrt{n} \to \infty$ and $\lambda_n / n \to 0$, then
\begin{align}
    \frac{n}{\lambda_n} (\betahat_n^\mathcal{A} - \betastar) \overset{d}{\to}
    \argmin_{u \in \mathcal{U}} \left\{ u^\top C u + \sum_{j \in S} \frac{\sgn(\betastar_j)}{|\betastar_j|^{\gamma}} u_j \right\},~~~
    \mathcal{U} := \left\{ u ~\middle|~ u_{S^c} = 0 \right\}.
\end{align}
\end{theorem}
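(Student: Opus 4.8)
The plan is to follow the classical argument of Knight and Fu / Zou: reparametrize the objective around $\betastar$ at the appropriate rate, express the penalized loss as a convex process in the local variable, identify the epi-graph/pointwise limit of that process, and then invoke convexity (the argmin convergence theorem of Geyer, or Kim--Pollard) to transfer convergence of minimizers. In all three regimes the quadratic part $\frac1n\|y - X\beta\|_2^2$ contributes the same shape: writing $\beta = \betastar + u/a_n$ with $a_n$ the claimed rate, the data-dependent part becomes $\frac1n\| \varepsilon - X u/a_n\|_2^2$, whose non-constant portion is $\frac{1}{a_n^2} u^\top C_n u - \frac{2}{a_n}\frac{1}{n} u^\top X^\top\varepsilon$; under Assumption~\ref{assumption:psd}, $C_n \to C$ and $\frac{1}{\sqrt n} X^\top \varepsilon \overset{d}{\to} W \sim \mathcal N(0,\sigma^2 C)$, so after multiplying through by $a_n^2/n \cdot$ (constant) one obtains the $u^\top C u - 2u^\top W$ term in cases (i) and (ii), and the pure $u^\top C u$ term in case (iii) where $a_n = n/\lambda_n$ makes the stochastic linear term asymptotically negligible (it is $O_p(\sqrt n \cdot \lambda_n/n) = o_p(1)$ relative to the scaling). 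The work is entirely in the penalty term $\frac{\lambda_n}{n}\sum_j w_j|\beta_j|$ with $w_j = |\betatilde_j|^{-\gamma}$.

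For the penalty I would split the index set into $S$ and $S^c$. For $j \in S$: $\betatilde_j \to^p \betastar_j \neq 0$, so $w_j \to^p |\betastar_j|^{-\gamma}$, and $|\betastar_j + u_j/a_n| - |\betastar_j| = \sgn(\betastar_j) u_j/a_n + o(1/a_n)$ since $\betastar_j$ is bounded away from $0$; multiplying by $\lambda_n/n$ and the scaling $a_n^2/n$-type normalizer (i.e. $\sqrt n$ in (i)/(ii), $n/\lambda_n$ in (iii)) gives a contribution of order $\lambda_n a_n /n \cdot \sum_{j\in S}\frac{\sgn(\betastar_j)}{|\betastar_j|^\gamma}u_j$. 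In case (i) with $a_n = \sqrt n$ this is $\lambda_n/\sqrt n \to 0$ (recall case (i)'s hypothesis $\sqrt{m^\gamma/n}\,\lambda_n \to \lambda_1$ forces $\lambda_n/\sqrt n \to 0$ because $m^\gamma \to \infty$), so the $S$-part of the penalty vanishes; in case (ii), $\lambda_n/\sqrt n \to \lambda_0$, so it survives with coefficient $\lambda_0$; in case (iii), $\frac{\lambda_n}{n}\cdot\frac{n}{\lambda_n} = 1$, so it survives with coefficient $1$. For $j \in S^c$: here $\betastar_j = 0$ and $\betatilde_j = z_j/\sqrt m + 0$, so $w_j = |z_j|^{-\gamma} m^{\gamma/2}$, and $|\beta_j| = |u_j|/a_n$; the penalty contribution is $\frac{\lambda_n}{n} m^{\gamma/2}|z_j|^{-\gamma}\cdot\frac{|u_j|}{a_n}\cdot a_n = \frac{\lambda_n m^{\gamma/2}}{n}|z_j|^{-\gamma}|u_j| = \sqrt{m^\gamma/n}\,\lambda_n\cdot\frac{1}{\sqrt n}$-normalized... more precisely the normalized coefficient is $\lambda_n\sqrt{m^\gamma/n}\cdot|z_j|^{-\gamma}$, which in case (i) tends to $\lambda_1|z_j|^{-\gamma}$ (finite, giving the stated penalty over $S^c$), and in cases (ii), (iii) tends to $+\infty$, which forces $u_{S^c} = 0$ in the limiting minimizer — this is precisely the constraint set $\mathcal U$. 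The treatment of the $S^c$ block when the coefficient diverges is the one subtlety: one argues that along any sequence with $u_{S^c}$ bounded away from $0$ the objective diverges, so the limiting argmin is supported on $\mathcal U$; this requires the joint convergence $(C_n, \frac{1}{\sqrt n}X^\top\varepsilon, z) \to (C, W, z)$ and that $z$ has no mass at coordinates equal to $0$ on $S^c$ in a problematic way (here $|z_j|^{-\gamma}$ is a.s. finite and positive since $z$ is continuously distributed), and an epi-convergence argument rather than mere pointwise convergence.

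The main obstacle, and the step I would spend the most care on, is making the divergence argument in cases (ii) and (iii) rigorous: pointwise convergence of the convex objective functions only gives pointwise limits that are $+\infty$ off $\mathcal U$, and one must invoke the fact that for convex functions pointwise convergence on a dense set implies uniform convergence on compacta of the finite part, together with a tightness/coercivity argument to confine the minimizer to a compact subset of $\mathcal U$ — the standard device here is Lemma on epi-convergence of convex functions (e.g. as used in Zou's proof of Theorem 2). A secondary technical point is the joint weak convergence of $(\frac{1}{\sqrt n}X^\top \varepsilon, z)$: since the source and target data are independent (stated in the setup), $z$ and $W$ are independent, and the limiting argmin in case (i) is a functional of the pair $(W, z)$; I would state this independence explicitly and use it when describing the limit law. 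Everything else — the Taylor expansion of $|\cdot|$ away from and at $0$, the computation of the quadratic form's limit, the bookkeeping of which power of $n$ multiplies which term — is routine once the scaling $a_n$ is pinned down in each regime.
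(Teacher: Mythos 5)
Your proposal follows essentially the same route as the paper's proof (which derives this theorem as the $\eta_n=0$ special case of Theorem~\ref{theorem:adaptrlasso-consistency}): local reparametrization $u=a_n(\beta-\betastar)$ at rate $\sqrt{n}$ in (i)--(ii) and $n/\lambda_n$ in (iii), pointwise limits of the normalized convex objective with the penalty split over $S$ and $S^c$ (where $w_j=m^{\gamma/2}|z_j|^{-\gamma}$ produces the coefficient $\sqrt{m^\gamma/n}\,\lambda_n$ that either converges to $\lambda_1$ or diverges and forces $u_{S^c}=0$), and the Geyer-type argmin-convergence theorem for convex processes; your remarks on epi-convergence and on the independence of $W$ and $z$ are, if anything, more careful than the paper's. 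One trivial slip: in case (iii) the normalized stochastic linear term has coefficient $O_p(\sqrt{n}/\lambda_n)$, not $O_p(\sqrt{n}\,\lambda_n/n)=O_p(\lambda_n/\sqrt{n})$ as written --- your stated conclusion that it is $o_p(1)$ is nonetheless correct since $\lambda_n/\sqrt{n}\to\infty$.
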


\begin{proof}
    This is a special case of Theorem~\ref{theorem:adaptrlasso-consistency} and the proof is the same as \ref{proof:adaptrlasso-consistency}.
\end{proof}

\begin{corollary}[Convergence Rate for Adaptive Lasso]
\label{theorem:alasso-convergence-rate}
We have the following convergence rates for the Adaptive Lasso estimator \eqref{eq:alasso}.
\begin{enumerate}[label={$(\roman*)$}]
  \item  If $\sqrt{m^{\gamma} / n} ~ \lambda_n \to \lambda_1 \geq 0$, then the convergence rate is $\sqrt{n}$.
  \item If $\sqrt{m^{\gamma} / n} ~ \lambda_n \to \infty$ and $\lambda_n / \sqrt{n} \to \lambda_0 \geq 0$, then convergence rate is $\sqrt{n}$.
  \item If $\lambda_n / \sqrt{n} \to \infty$ and $\lambda_n / n \to 0$, then the convergence rate is $n / \lambda_n$, which is slower than $\sqrt{n}$.
\end{enumerate}
\end{corollary}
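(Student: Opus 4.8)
The plan is to read off all three rates directly from the three asymptotic distributions established in Theorem~\ref{theorem:alasso-consistency}; the only real content beyond that theorem is to verify that each limiting object is a genuine random vector---finite, and in the two stochastic cases not a point mass at $0$---so that the stated normalization is the \emph{exact} order of $\betahat_n^{\mathcal A}-\betastar$ and not merely an upper bound on it.

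For (i) I would invoke Theorem~\ref{theorem:alasso-consistency}(i): $\sqrt n(\betahat_n^{\mathcal A}-\betastar)\overset d\to V_1$, where $V_1$ minimizes $u^\top C u - 2u^\top W + \sum_{j\in S^c}(\lambda_1/|z_j|^{\gamma})|u_j|$. Since $C\succ O$ by Assumption~\ref{assumption:psd}, this objective is strictly convex and coercive, so $V_1$ is almost surely well defined and finite, giving $\betahat_n^{\mathcal A}-\betastar = O_p(n^{-1/2})$. To see the rate is no better, I would note that, conditionally on $z$, the map $W\mapsto V_1$ is continuous and non-constant (for $\lambda_1=0$ it is literally $W\mapsto C^{-1}W$, i.e.\ $V_1\sim\mathcal N(0,\sigma^2 C^{-1})$), while $W\sim\mathcal N(0,\sigma^2 C)$ has a density on $\mathbb R^p$; hence $P(V_1\neq 0)>0$ and $\sqrt n(\betahat_n^{\mathcal A}-\betastar)$ cannot converge to $0$. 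So the rate is exactly $\sqrt n$. Part (ii) is handled identically, replacing the feasible set by $\mathcal U=\{u:u_{S^c}=0\}$ and using $C_{SS}\succ O$ for strict convexity on that subspace (here I assume, as throughout, that $S\neq\emptyset$); the limit is again a non-constant function of $W_S$, so the rate is $\sqrt n$.

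For (iii) I would use Theorem~\ref{theorem:alasso-consistency}(iii): $(n/\lambda_n)(\betahat_n^{\mathcal A}-\betastar)\overset d\to V_3$, the minimizer over $\mathcal U$ of the \emph{deterministic} objective $u^\top C u + \sum_{j\in S}\sgn(\betastar_j)|\betastar_j|^{-\gamma}u_j$. Strict convexity and coercivity on $\mathcal U$ make $V_3$ a single constant vector, which I would identify from the first-order condition as $(V_3)_S = -\tfrac12 C_{SS}^{-1}b$ with $b_j=\sgn(\betastar_j)|\betastar_j|^{-\gamma}$ for $j\in S$ and $(V_3)_{S^c}=0$; since $b\neq 0$ and $C_{SS}$ is nonsingular, $V_3\neq 0$. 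Convergence in distribution to a constant is convergence in probability, so $(n/\lambda_n)(\betahat_n^{\mathcal A}-\betastar)\overset p\to V_3\neq 0$ and the rate is $n/\lambda_n$. Finally, from $\lambda_n/\sqrt n\to\infty$ I obtain $(n/\lambda_n)/\sqrt n = \sqrt n/\lambda_n\to 0$, so $n/\lambda_n=o(\sqrt n)$: the rate is strictly slower than $\sqrt n$, as claimed.

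The only step needing care---the ``main obstacle''---is the non-degeneracy argument in (i) and (ii): one must exclude the possibility that the soft-thresholding-type $\argmin$ equals $0$ with probability one. I would dispatch this with the density/continuity observation above (the push-forward of an absolutely continuous law under a continuous non-constant map is not concentrated at a point). Everything else is routine given Theorem~\ref{theorem:alasso-consistency} and the positive-definiteness supplied by Assumption~\ref{assumption:psd}.
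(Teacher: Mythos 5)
Your proposal is correct and follows essentially the same route as the paper, which treats this corollary as an immediate consequence of Theorem~\ref{theorem:alasso-consistency} and supplies no separate argument: you read each rate off the corresponding asymptotic distribution and note that $n/\lambda_n = o(\sqrt n)$ in case (iii). The explicit non-degeneracy checks (that the limits in (i)--(ii) are not point masses at $0$, and that $V_3\neq 0$ in (iii)) are left implicit in the paper but are verified correctly here.
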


Lemma~\ref{lemma:adapt-lasso-oracle-large} shows that the oracle property still holds for $\sqrt{m}$-consistent estimators, $\lambda_n/\sqrt{n} \rightarrow 0$, and $\lambda_n \sqrt{m^{\gamma} / n} \rightarrow \infty$.
In addition, Theorem~\ref{theorem:alasso-consistency} and Corollary~\ref{theorem:alasso-convergence-rate} show that the convergence rate of the Adaptive Lasso estimator is equal to $\sqrt{n}$ in the case (i) and (ii) and is less than $\sqrt{n}$ in the case (iii).
The condition of (ii) in Theorem~\ref{theorem:alasso-consistency} includes the condition of the oracle property in Lemma~\ref{lemma:adapt-lasso-oracle-large}.
Figure~\ref{fig:trlasso-consistent-region}~(left) illustrates each hyperparameter region in Theorem~\ref{theorem:alasso-consistency} and Corollary~\ref{theorem:alasso-convergence-rate}.

These results imply both an advantage and a disadvantage.
The advantage is that the initial estimator does not necessarily require $\sqrt{n}$-consistent.
The Adaptive Lasso has the oracle property even when the source data is small compared to the target data ($m \lesssim n$) and the initial estimator is less than $\sqrt{n}$-consistent.
The disadvantage of the Adaptive Lasso, however, is that it does not take full advantage even when the sample size of the source data is very large ($m \gg n$).
This is because the convergence rate is equal to $\sqrt{n} ~ (\ll \sqrt{m})$.

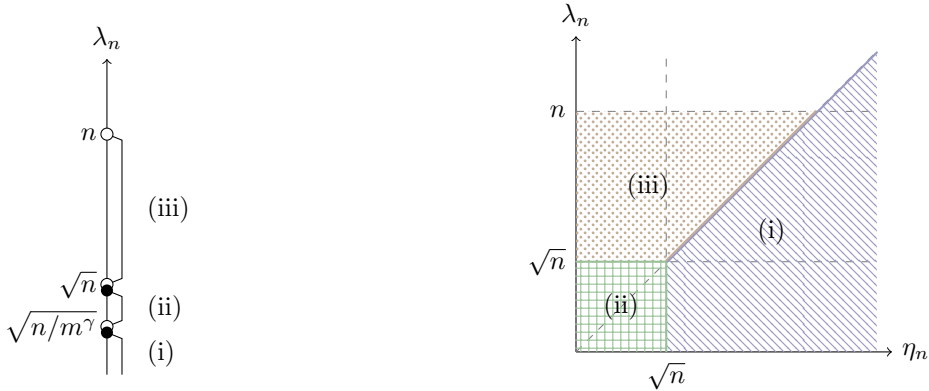
\begin{figure*}[t]
\centering
\begin{minipage}{0.4\textwidth}
\begin{tikzpicture}[scale=4, font = \small]
    \begin{scope}
    
    \def\sqrtn{0.3}
    \def\sqrtnone{0.16}
    \def\sqrtntwo{0.48}
    \def\n{0.8}
    
    \draw[->] (0,0) -- (0,1.05) node[above] {$\lambda_n$};
    
    \node[left] at (0, \n) {$n$};
    \node[left] at (0, \sqrtnone) {$\sqrt{n / m^\gamma}$};
    \node[left] at (0, \sqrtn) {$\sqrt{n}$};
    
    \draw (0, \n) -- (0.05, \n - 0.02) -- (0.05, \sqrtn + 0.02) -- (0, \sqrtn);
    \node[right] at (0.1, {0.5*\n + 0.5*\sqrtn}) {(iii)};
    
    \draw (0, \sqrtn - 0.02) -- (0.05, \sqrtn - 0.02 - 0.02) -- (0.05, \sqrtnone + 0.02) -- (0, \sqrtnone);
    \node[right] at (0.1, {0.5*\sqrtn + 0.5*\sqrtnone - 0.01}) {(ii)};
    
    \draw (0, \sqrtnone-0.02) -- (0.05, \sqrtnone-0.02 - 0.02) -- (0.05, 0);
    \node[right] at (0.1, {0.5*\sqrtnone - 0.01}) {(i)};
    
    \draw[fill=white] (0, \n) circle (0.02);
    \draw[fill=white] (0, \sqrtn) circle (0.02);
    \fill (0, \sqrtn-0.02) circle (0.02);
    \draw[fill=white] (0, \sqrtnone) circle (0.02);
    \fill (0, \sqrtnone-0.02) circle (0.02);
    
    \end{scope}
\end{tikzpicture}
\end{minipage}\hfill
\begin{minipage}{0.45\textwidth}
\begin{tikzpicture}[scale=4, font = \small]
    \begin{scope}
    
    \def\sqrtn{0.3}
    \def\n{0.8}

    \draw[->] (0,0) -- (1.05,0) node[right] {$\eta_n$};
    \draw[->] (0,0) -- (0,1.05) node[above] {$\lambda_n$};

    \draw[dashed, color=gray] (\sqrtn,0) -- (\sqrtn,1);

    \draw[dashed, color=gray] (0,\sqrtn) -- (1,\sqrtn);
    \draw[dashed, color=gray] (0,\n) -- (1,\n);

    \node[below] at (\sqrtn, 0) {$\sqrt{n}$};

    \node[left] at (0, \sqrtn) {$\sqrt{n}$};
    \node[left] at (0, \n) {$n$};

    \draw[dashed, color=gray] (0, 0) -- (1, 1);

    \fill[pattern=north west lines, pattern color=blue!40!black!40!white] (\sqrtn, 0) -- (1, 0) -- (1, 1) -- (\sqrtn, \sqrtn) -- cycle;
    \node at ({\sqrtn + 0.5*(1-\sqrtn)}, {0.4}) {(i)};

    \fill[pattern=grid, pattern color=green!40!black!40!white] (0, 0) -- (\sqrtn, 0) -- (\sqrtn, \sqrtn) -- (0, \sqrtn) -- cycle;
    \node at ({0.5*\sqrtn}, {0.5*\sqrtn}) {(ii)};

    \fill[pattern=crosshatch dots, pattern color=orange!40!black!40!white] (0, \sqrtn) -- (\sqrtn, \sqrtn) -- (\n, \n) -- (0, \n) -- cycle;
    \node at ({0.3*\n}, {0.5*\n + 0.5*\sqrtn}) {(iii)};

    \draw[color=blue!40!black!40!white, thick] ({\sqrtn + 0.002}, {\sqrtn - 0.002}) -- ({1 + 0.002}, {1 - 0.002});
    \draw[color=orange!40!black!40!white, thick] ({\sqrtn - 0.002}, {\sqrtn + 0.002}) -- ({\n - 0.002}, {\n + 0.002});
    \draw[color=green!40!black!40!white, thick] (\sqrtn, 0) -- (\sqrtn, \sqrtn);
    \draw[color=green!40!black!40!white, thick] (0, \sqrtn) -- (\sqrtn, \sqrtn);
                    
    \end{scope}
\end{tikzpicture}

\end{minipage}

\caption{
Phase diagrams with $\lambda_n$ for the Adaptive Lasso in Lemma~\ref{lemma:adapt-lasso-oracle-large}--Theorem~\ref{theorem:alasso-convergence-rate}~(left) and $\lambda_n$ and $\eta_n$ for the Transfer Lasso in Theorem~\ref{theorem:trlasso-consistency}--Theorem~\ref{theorem:trlasso-invariant-selection-consistency}~(right).
The Adaptive Lasso has $\sqrt{n}$-consistency in (i) and (ii) and active variable selection consistency in (ii), but the convergence rate in (iii) is slower than $\sqrt{n}$.
The Transfer Lasso has convergence rates of $\sqrt{m}$, $\sqrt{n}$, and $n/\lambda_n (< \sqrt{n})$ for (i), (ii), and (iii) respectively.
It has invariant variable selection consistency in (i) but does not have active variable selection consistency in (i) and (ii).
}
\label{fig:trlasso-consistent-region}
\end{figure*}

\subsection{Asymptotic Properties for Transfer Lasso}
\label{sec:trlasso-random}

Now we consider the asymptotic properties of the Transfer Lasso.
The Transfer Lasso has two hyperparameters, $\lambda_n$ and $\eta_n$, and various asymptotic properties appear depending on their values.
We first obtain several asymptotic distributions in Theorem~\ref{theorem:trlasso-consistency} and convergence rate in Corollary~\ref{corollary:trlasso-convergence}.
The illustration of the division of cases is shown in Figure~\ref{fig:trlasso-consistent-region}.

\begin{theorem}[Asymptotic distribution for Transfer Lasso]
\label{theorem:trlasso-consistency}
We have the following asymptotic distributions for the Transfer Lasso estimator \eqref{trlasso}.\\
(i) ~ If $\eta_n / \sqrt{n} \to \infty$ and $\lambda_n / \eta_n \to \rho_0$ with $0 \leq \rho_0 < 1$, then
\begin{align}
    \sqrt{m} (\betahat_n^\mathcal{T} - \betastar) \overset{d}{\to} z.
\end{align}
(ii) ~ If $\lambda_n / \sqrt{n} \to \lambda_0 \geq 0$ and $\eta_n / \sqrt{n} \to \eta_0 \geq 0$, then
\begin{align}
    &\sqrt{n} \left( \betahat_n^\mathcal{T} - \betastar \right) \\
    \overset{d}{\to}&
    \argmin_u \left\{ u^\top C u - 2 u^\top W
    + \lambda_0 \left(\sum_{j \in S} u_j \sgn(\betastar_j) + \sum_{j \in S^c} \left| u_j \right| \right) 
    + \eta_0 \sum_{j = 1}^p \left| u_j- \sqrt{r_0}z_j \right|
    \right\}. \nonumber
\end{align}
(iii) ~ If $\lambda_n / \sqrt{n} \to \infty$, $\lambda_n / n \to 0$, and 
$\eta_n / \lambda_n \to \rho_0' \geq 0$, 
then
\begin{align}
    \frac{n}{\lambda_n} (\betahat_n^\mathcal{T} - \betastar) 
    \overset{d}{\to}
    \argmin_u \left\{ u^\top C u + \sum_{j \in S} \left( u_j \sgn(\betastar_j) + \rho_0' \left| u_j \right| \right) + \sum_{j \in S^c} (1 + \rho_0') \left| u_j \right| \right\}.
\end{align}
\end{theorem}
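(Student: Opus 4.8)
The plan is to prove all three parts by the epi-convergence (``argmin'') technique for convex stochastic processes that underlies the proofs of \cite{fu2000asymptotics,zou2006adaptive} (see Appendix~\ref{proof-lasso}). Writing $Q_n(\beta) := \tfrac1n\|y - X\beta\|_2^2 + \tfrac{\lambda_n}{n}\sum_j|\beta_j| + \tfrac{\eta_n}{n}\sum_j|\beta_j - \betatilde_j|$ for the Transfer Lasso objective and using $y = X\betastar + \varepsilon$, I would, in each of the three regimes, introduce a local coordinate that rescales $\betahat_n^\mathcal{T}$ by the conjectured rate, form the recentered and renormalized objective $V_n$ whose minimizer is exactly that rescaled estimator, establish finite-dimensional convergence $V_n \overset{d}{\to} V$ to a convex limit with an almost surely unique minimizer, and conclude $\argmin V_n \overset{d}{\to} \argmin V$ by the convexity lemma (pointwise convergence of convex functions forces convergence of the minimizers once the limit has a unique minimizer). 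The probabilistic inputs are the same throughout: Assumption~\ref{assumption:psd} with the Lindeberg CLT gives $C_n \to C$ and $\tfrac1{\sqrt n}X^\top\varepsilon \overset{d}{\to} W$, and the independence of source and target data in Assumption~\ref{assumption:m} upgrades this to the joint convergence $\big(\tfrac1{\sqrt n}X^\top\varepsilon,\, z\big) \overset{d}{\to} (W, z)$ with $W$ and $z$ independent; every limit below then follows by the continuous mapping theorem together with the elementary expansions $|\betastar_j + t| - |\betastar_j| = t\sgn(\betastar_j)$ for $j\in S$ (valid for small $t$) and $|t| - 0 = |t|$ for $j\in S^c$.

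For part (ii) I would take $u = \sqrt n(\beta - \betastar)$ and $V_n(u) = n\{Q_n(\betastar + u/\sqrt n) - Q_n(\betastar)\}$: the squared-error part tends to $u^\top C u - 2u^\top W$, the $\lambda_n$-penalty to $\lambda_0\big(\sum_{j\in S}u_j\sgn(\betastar_j) + \sum_{j\in S^c}|u_j|\big)$, and, since $\betatilde_j - \betastar_j = z_j/\sqrt m$ and $\sqrt{n/m}\to\sqrt{r_0}$, the $\eta_n$-penalty to $\eta_0\sum_j|u_j - \sqrt{r_0}\,z_j|$ up to a term free of $u$; the limit is strictly convex because $C \succ O$, so it has an a.s.\ unique minimizer and the claim follows.

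Parts (i) and (iii) use the same scheme with different scalings. For (iii) I would take $u = \tfrac{n}{\lambda_n}(\beta - \betastar)$ and $V_n(u) = \tfrac{n^2}{\lambda_n^2}\{Q_n(\betastar + \tfrac{\lambda_n}{n}u) - Q_n(\betastar)\}$: the noise term $-\tfrac{2\sqrt n}{\lambda_n}u^\top(\tfrac1{\sqrt n}X^\top\varepsilon)$ is $o_p(1)$ because $\lambda_n/\sqrt n\to\infty$, the $\lambda_n$-penalty contributes $\sum_{j\in S}u_j\sgn(\betastar_j) + \sum_{j\in S^c}|u_j|$ (using $\lambda_n/n\to0$ to keep each $j\in S$ on its linear branch), and, because $\lambda_n/n$ dominates $1/\sqrt m$, i.e.\ $\tfrac{\lambda_n\sqrt m}{n} = \tfrac{\lambda_n}{\sqrt n}\sqrt{m/n}\to\infty$ under the hypotheses of (iii) and Assumption~\ref{assumption:m}, the transfer target in $u$-coordinates vanishes and the $\eta_n$-penalty collapses to $\rho_0'\sum_j|u_j|$ up to a $u$-free term, giving the stated strictly convex limit. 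For (i) I would take $v = \sqrt m(\beta - \betatilde)$ (so that $\beta - \betastar = (z + v)/\sqrt m$) and $V_n(v) = \tfrac{\sqrt m}{\eta_n}\,n\{Q_n(\betatilde + v/\sqrt m) - Q_n(\betatilde)\}$: every quadratic and noise contribution carries a factor $\tfrac{\sqrt n}{\eta_n}\to0$ times a bounded quantity and drops out, the $\eta_n$-penalty contributes exactly $\|v\|_1$, and the $\lambda_n$-penalty contributes $\tfrac{\lambda_n}{\eta_n}\sqrt m\{\|\betatilde + v/\sqrt m\|_1 - \|\betatilde\|_1\}\to\rho_0\,g(v)$ with $g(v) = \sum_{j\in S}v_j\sgn(\betastar_j) + \sum_{j\in S^c}(|z_j+v_j| - |z_j|)$; the limit $V(v) = \|v\|_1 + \rho_0\,g(v)$ satisfies $V(v)\ge(1-\rho_0)\|v\|_1$ with equality only at $v=0$, so $\argmin V = \{0\}$, hence $\sqrt m(\betahat_n^\mathcal{T} - \betatilde)\overset{p}{\to}0$ and combining with $\sqrt m(\betatilde - \betastar)\overset{d}{\to}z$ via Slutsky gives $\sqrt m(\betahat_n^\mathcal{T} - \betastar)\overset{d}{\to}z$. (As a cross-check, the whole statement should also follow from the more general Theorem~\ref{theorem:adaptrlasso-consistency} by specializing its adaptive weights to be trivial.)

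I expect the main obstacle to be the interaction of the two $\ell_1$ penalties with the randomness of $\betatilde$, concentrated in two places. First, in part (i) the limit $V$ is only piecewise linear rather than strictly convex, so uniqueness of its minimizer cannot be read off from $C \succ O$ and must instead come from the coercivity bound $V(v)\ge(1-\rho_0)\|v\|_1$; this is exactly where the hypothesis $\rho_0 < 1$ enters (for $\rho_0\ge1$ the shrinkage toward $0$ can offset the shrinkage toward $\betatilde$ and the estimator need not collapse onto $\betatilde$), and pinning the constant down uses the elementary inequalities $|v_j\sgn(\betastar_j)| = |v_j|$ and $|z_j+v_j| - |z_j|\ge -|v_j|$. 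Second, in part (iii) one must determine which of the attractors $0$ and $\betatilde$ lies closer at the $n/\lambda_n$ scale, i.e.\ verify $\lambda_n/n \gg 1/\sqrt m$ in every case of Assumption~\ref{assumption:m}, so that $|\betahat^\mathcal{T}_j - \betatilde_j|$ may be replaced by $|\betahat^\mathcal{T}_j|$ up to a $u$-free constant inside the minimization. Once these normalizations are fixed, the remainder is routine bookkeeping of absolute-value expansions together with the convexity lemma, exactly as in Appendix~\ref{proof-lasso}.
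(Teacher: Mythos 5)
Your proposal is correct, and for parts (ii) and (iii) it follows essentially the same route as the paper: the same scalings $u=\sqrt n(\beta-\betastar)$ and $u=\tfrac{n}{\lambda_n}(\beta-\betastar)$, the same recentered objectives $V_n$, the same limit computations (including the key check $\tfrac{n}{\lambda_n\sqrt m}\to 0$ that collapses the transfer penalty in (iii)), and the same appeal to the convexity/argmin-continuity lemma. The only genuine difference is in part (i): the paper keeps the centering at $\betastar$, scales by $\sqrt m$, normalizes by $n\sqrt m/\eta_n$, and shows directly that the limiting criterion $\sum_{j\in S}\bigl(\rho_0 u_j\sgn(\betastar_j)+|u_j-z_j|\bigr)+\sum_{j\in S^c}\bigl(\rho_0|u_j|+|u_j-z_j|\bigr)$ has unique minimizer $z$, whereas you recenter at $\betatilde$, obtain the limit $\|v\|_1+\rho_0\,g(v)$, and conclude $\argmin=\{0\}$ from the coercivity bound $V(v)\ge(1-\rho_0)\|v\|_1$ before finishing with Slutsky. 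The two arguments are equivalent in content --- both hinge on $\rho_0<1$ in exactly the same way --- but your version makes the role of that hypothesis more transparent and isolates the randomness of $z$ into a single Slutsky step rather than into the location of the minimizer; the paper's version has the mild advantage of producing the limiting objective in the same $u$-coordinates used later for the variable-selection theorems.
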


\begin{proof}
    The proof is given in \ref{proof:trlasso-consistency}
\end{proof}

\begin{corollary}[Convergence Rate for Transfer Lasso]
\label{corollary:trlasso-convergence}
We have the following convergence rates for the Transfer Lasso estimator \eqref{trlasso}.
\begin{enumerate}[label={$(\roman*)$}]
  \item If $\eta_n / \sqrt{n} \to \infty$ and $\lambda_n / \eta_n \to \rho_0$ with $0 \leq \rho_0 < 1$, then the convergence rate is $\sqrt{m}$.
  \item  If $\lambda_n / \sqrt{n} \to \lambda_0 \geq 0$ and $\eta_n / \sqrt{n} \to \eta_0 \geq 0$, then the convergence rate is $\sqrt{n}$.
  \item If $\lambda_n / \sqrt{n} \to \infty$, $\lambda_n /n \to 0$, and $\eta_n / \lambda_n \to \rho_0'$ with $0 \leq \rho_0' < 1$, then the convergence rate is $n / \lambda_n$, which is slower than $\sqrt{n}$.
  On the other hand, if $\rho_0' \geq 1$, then the convergence rate is faster than $n / \lambda_n$.
\end{enumerate}
\end{corollary}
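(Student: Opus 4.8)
The plan is to read each convergence rate directly off the corresponding asymptotic distribution in Theorem~\ref{theorem:trlasso-consistency}, using the elementary principle that weak convergence of a rescaled sequence pins down its stochastic order: if $a_n(\betahat_n^\mathcal{T} - \betastar) \overset{d}{\to} V$ for a random vector $V$ that is finite almost surely, then weak convergence implies tightness, hence $\betahat_n^\mathcal{T} - \betastar = O_p(1/a_n)$; moreover, if $V$ is not almost surely zero the rate cannot be sharpened (otherwise $a_n(\betahat_n^\mathcal{T} - \betastar) \overset{p}{\to} 0$, contradicting $V \ne 0$), whereas if $V \equiv 0$ the rate is strictly faster than $a_n$. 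The corollary then amounts to checking, in each regime, which of these cases applies.

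First I would dispatch (i) and (ii). For (i), Theorem~\ref{theorem:trlasso-consistency}(i) gives $\sqrt{m}(\betahat_n^\mathcal{T} - \betastar) \overset{d}{\to} z$, and $z$ is by Assumption~\ref{assumption:m} the limit of the rescaled error of the genuinely $\sqrt{m}$-consistent initial estimator, hence nondegenerate, so the rate is exactly $\sqrt{m}$. For (ii), Theorem~\ref{theorem:trlasso-consistency}(ii) gives $\sqrt{n}(\betahat_n^\mathcal{T} - \betastar) \overset{d}{\to} V$ where $V$ is the argmin of a strictly convex, coercive objective --- coercivity coming from the quadratic term $u^\top C u$ with $C \succ O$ (Assumption~\ref{assumption:psd}) dominating the piecewise-linear penalties --- hence $V$ is a.s. finite, and $V$ depends nontrivially on the Gaussian vector $W$ through the term $-2u^\top W$, so $V$ is nondegenerate and the rate is exactly $\sqrt{n}$.

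The substantive case is (iii). Here Theorem~\ref{theorem:trlasso-consistency}(iii) gives $\tfrac{n}{\lambda_n}(\betahat_n^\mathcal{T} - \betastar) \overset{d}{\to} \argmin_u g(u)$ with $g(u) = u^\top C u + \sum_{j \in S}\bigl(u_j \sgn(\betastar_j) + \rho_0'|u_j|\bigr) + \sum_{j \in S^c}(1+\rho_0')|u_j|$, a deterministic strictly convex function. When $\rho_0' < 1$ I would exhibit an explicit descent direction at the origin: along $d$ with $d_j = -\sgn(\betastar_j)$ on $S$ and $d_j = 0$ on $S^c$ one has $g(td) = t^2\, d^\top C d + t|S|(\rho_0' - 1) < 0 = g(0)$ for small $t > 0$ (assuming $S \ne \emptyset$), so $\argmin g \ne 0$ and the rate is exactly $n/\lambda_n$; since $\lambda_n/\sqrt{n} \to \infty$ gives $(n/\lambda_n)/\sqrt{n} = \sqrt{n}/\lambda_n \to 0$, this rate is slower than $\sqrt{n}$. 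When $\rho_0' \ge 1$, instead, for every $u \ne 0$ one has $u^\top C u > 0$, $\sum_{j\in S}(u_j\sgn(\betastar_j)+\rho_0'|u_j|) \ge \sum_{j\in S}(\rho_0'-1)|u_j| \ge 0$, and $\sum_{j\in S^c}(1+\rho_0')|u_j|\ge 0$, so $g(u) > 0 = g(0)$ and $\argmin g = \{0\}$; hence $\tfrac{n}{\lambda_n}(\betahat_n^\mathcal{T} - \betastar) \overset{p}{\to} 0$, i.e.\ $\betahat_n^\mathcal{T} - \betastar = o_p(\lambda_n/n)$, and the rate is strictly faster than $n/\lambda_n$.

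The only real obstacle is the bookkeeping in case (iii): distinguishing the kink behaviour of the limiting objective $g$ at the origin according to whether $\rho_0'$ lies below or above $1$, and being careful that the ``exact rate'' claims are justified by \emph{nondegeneracy} of the limit (resp.\ the ``faster rate'' claim by its \emph{degeneracy}) and not merely by the $O_p$ upper bound. Everything else is an immediate consequence of Theorem~\ref{theorem:trlasso-consistency} together with the tightness principle stated above.
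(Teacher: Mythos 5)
Your proposal is correct and follows essentially the same route as the paper: the corollary is read directly off the asymptotic distributions of Theorem~\ref{theorem:trlasso-consistency}, with the only nontrivial point being that the limiting argmin in case (iii) is nonzero when $\rho_0' < 1$ and collapses to $0$ when $\rho_0' \geq 1$ (the paper notes the latter at the end of its proof of Theorem~\ref{theorem:trlasso-consistency}, Case III). Your explicit descent-direction computation and the tightness/nondegeneracy bookkeeping merely make precise what the paper leaves implicit.
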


Theorem~\ref{theorem:trlasso-consistency} and Corollary~\ref{corollary:trlasso-convergence} show that the Transfer Lasso estimators achieve a convergence rate of $\sqrt{m}$ in the case (i).
This is beneficial when source data is large ($m \gg n$) and
is an advantage for the Transfer Lasso over the Adaptive Lasso.

Next, we provide the results of variable selection consistency.
We first define two types of variable selection consistency.

\begin{definition}[Active Variable Selection Consistency]
    We say that an estimator exhibits \textit{consistent active variable selection} when it estimates the true active variable to be nonzero and the true inactive variable to be zero, that is,
    \begin{align}
        P(\hat{S}_n = S) \to 1.
    \end{align}
    Conversely, we say that an estimator is an \textit{inconsistent active variable selection} when this is not the case, that is,
    \begin{align}
        \limsup_{n \to \infty} P(\hat{S}_n = S) \leq c < 1.
    \end{align} 
\end{definition}

\begin{definition}[Invariant Variable Selection Consistency]
    We say that an estimator exhibits \textit{consistent invariant variable selection} when the true active variable remains invariant from the initial estimator, that is,
    \begin{align}
        P(\betahat_S^\mathcal{T} = \betatilde_S) \to 1.
    \end{align}
\end{definition}

The concepts of ``active'' and ``invariant'' variable selection consistency are distinct yet interconnected.
``Active'' variable selection consistency aligns with conventional variable selection consistency, propelled by the estimator's sparsity.
This property ensures the correct identification of non-zero variables.
In contrast, ``invariant'' variable selection consistency is a unique feature of estimators like the Transfer Lasso, driven by the sparsity of the change from the initial estimator. 
This property ensures that the estimation of the true active variables remains unchanged and inherits the accuracy of the initial estimator.
This could be an advantage when the initial estimator is sufficiently accurate.
When both active and invariant variable selection consistencies are satisfied, the estimator effectively zeroes out the true inactive elements while the true active elements align with the initial estimator's values.
Consequently, sparsity is attained both in the estimator and in its change.

We present results on active/invariant variable selection consistency for the Transfer Lasso in Theorems~\ref{theorem:trlasso-active-selection-consistency}, \ref{theorem:trlasso-varying-selection-consistency}, and \ref{theorem:trlasso-invariant-selection-consistency}.
We assume that the initial estimator $\betatilde$ may not exhibit consistent active variable selection in our variable selection analysis.

\begin{theorem}[Inconsistent Active Variable Selection for Transfer Lasso]
\label{theorem:trlasso-active-selection-consistency}
Suppose that $\betatilde$ is inconsistent with active variable selection.
For the cases (i) and (ii) in Theorem~\ref{theorem:trlasso-consistency}, the Transfer Lasso estimator \eqref{trlasso} yields inconsistent active variable selection, that is,
\begin{align}
    \limsup_{n \to \infty} P(\hat{S}_n^\mathcal{T} = S) \leq c < 1,
\end{align}
where $c$ is a constant.
\end{theorem}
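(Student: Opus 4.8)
The plan is to bound $P(\hat S_n^{\mathcal{T}}=S)$ by $P(\betahat_{S^c}^{\mathcal{T}}=0)$ — the event $\hat S_n^{\mathcal{T}}=S$ forces $\betahat_j^{\mathcal{T}}=0$ for every $j\in S^c$ — and then to show in each of the two regimes that $\limsup_n P(\betahat_{S^c}^{\mathcal{T}}=0)<1$. (Here I assume $S^c\neq\emptyset$; otherwise the claim is vacuous.) The two cases need different devices: case (i) is handled through the stationarity conditions of \eqref{trlasso} at a fixed sample size, and case (ii) through the limiting argmin of Theorem~\ref{theorem:trlasso-consistency}(ii) together with Portmanteau.

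For case (i), I would fix $j\in S^c$ and work on the event $\{\betahat_j^{\mathcal{T}}=0,\ \betatilde_j\neq 0\}$. Writing $r_n:=y-X\betahat_n^{\mathcal{T}}$, the subgradient condition of \eqref{trlasso} in coordinate $j$ gives $\tfrac{2}{n}x_j^\top r_n\in\tfrac{\lambda_n}{n}[-1,1]-\tfrac{\eta_n}{n}\sgn(\betatilde_j)$, hence $\big|\tfrac{2}{n}x_j^\top r_n\big|\ge\tfrac{\eta_n-\lambda_n}{n}$. On the other hand $r_n=\varepsilon-X(\betahat_n^{\mathcal{T}}-\betastar)$, so $\tfrac{2}{n}x_j^\top r_n=\tfrac{2}{n}x_j^\top\varepsilon-2(C_n)_{j\cdot}(\betahat_n^{\mathcal{T}}-\betastar)$; by Assumption~\ref{assumption:psd} the first term is $O_p(n^{-1/2})$, and by the $\sqrt m$-consistency of $\betahat_n^{\mathcal{T}}$ (Corollary~\ref{corollary:trlasso-convergence}(i)) together with $n/m\to r_0<\infty$ the second term is $O_p(m^{-1/2})=O_p(n^{-1/2})$, so $\tfrac{2}{n}x_j^\top r_n=O_p(n^{-1/2})$. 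Since $\eta_n/\sqrt n\to\infty$ and $\lambda_n/\eta_n\to\rho_0<1$, we have $\tfrac{\eta_n-\lambda_n}{n}=(1-\lambda_n/\eta_n)\tfrac{\eta_n}{n}$, which is of order strictly larger than $n^{-1/2}$; therefore $P(\betahat_j^{\mathcal{T}}=0,\ \betatilde_j\neq 0)\to 0$. Finally, $\{\betahat_{S^c}^{\mathcal{T}}=0\}\subseteq\{\betatilde_{S^c}=0\}\cup\bigcup_{j\in S^c}\{\betahat_j^{\mathcal{T}}=0,\ \betatilde_j\neq 0\}$, and since $\betatilde$ is $\sqrt m$-consistent we have $P(\betatilde_j\neq 0)\to 1$ for $j\in S$, whence $\limsup_n P(\betatilde_{S^c}=0)=\limsup_n P(\supp(\betatilde)=S)\le c<1$ by hypothesis; combining the two displays gives $\limsup_n P(\hat S_n^{\mathcal{T}}=S)\le c<1$.

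For case (ii), by Theorem~\ref{theorem:trlasso-consistency}(ii) we have $\sqrt n(\betahat_n^{\mathcal{T}}-\betastar)\overset{d}{\to}u^*:=\argmin_u V(u)$ with $V$ the displayed limit objective. Since $\betastar_{S^c}=0$ and $\{x:x_{S^c}=0\}$ is closed, Portmanteau yields $\limsup_n P(\hat S_n^{\mathcal{T}}=S)\le\limsup_n P(\betahat_{S^c}^{\mathcal{T}}=0)=\limsup_n P\big(\sqrt n(\betahat_n^{\mathcal{T}}-\betastar)_{S^c}=0\big)\le P(u^*_{S^c}=0)$, so it suffices to prove $P(u^*_{S^c}=0)<1$. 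On $\{u^*_{S^c}=0\}$, $u^*_S$ minimizes the restriction of $V$ to $\{u_{S^c}=0\}$, an objective whose only random ingredients are $W_S$ and $z$ (the $W_{S^c}$-terms and the constants $\sqrt{r_0}|z_j|$, $j\in S^c$, drop out), so $u^*_S=g(W_S,z)$ for a fixed measurable $g$, finite almost surely by strong convexity. The subgradient condition of $V$ at $u^*$ in any fixed $j_0\in S^c$ then forces $\big|2W_{j_0}-2(C_{S^cS}\,g(W_S,z))_{j_0}\big|\le\lambda_0+\eta_0$, i.e.\ $W_{j_0}$ lies in an interval of length at most $\lambda_0+\eta_0$ with $(W_S,z)$-measurable centre. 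Because $C\succ O$, conditionally on $W_S$ the variable $W_{j_0}$ is Gaussian, independent of $z$, with variance $\sigma^2$ times the strictly positive $j_0$-th diagonal entry of the Schur complement $C_{S^cS^c}-C_{S^cS}C_{SS}^{-1}C_{SS^c}$; hence the conditional probability of the interval event is bounded by some $\rho<1$ independent of $(W_S,z)$ (with $\rho=0$ when $\lambda_0=\eta_0=0$). Taking expectations gives $P(u^*_{S^c}=0)\le\rho<1$, so $\limsup_n P(\hat S_n^{\mathcal{T}}=S)\le\rho$.

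The main obstacle is the circular dependence in case (ii): the limiting argmin $u^*$ is a function of the whole Gaussian vector $W$, so the ``inactive pull'' $2W_{j_0}-2(Cu^*)_{j_0}$ cannot naively be treated as free of $W_{S^c}$. The observation that unblocks this is that on $\{u^*_{S^c}=0\}$ the active block $u^*_S$ solves a reduced program depending only on $W_S$ (and the independent source-side randomness $z$); conditioning on $W_S$ then decouples the problem and reduces it to one-dimensional Gaussian anti-concentration, with strict positivity of the conditional variance following from $C\succ O$ alone. Secondary care is needed with the direction of Portmanteau (closed sets, $\limsup$), with the sign bookkeeping in the subgradients of \eqref{trlasso} (the kink of $|\beta_j-\betatilde_j|$ sits at $\betatilde_j$, not at $0$), and — in case (i) — with justifying $\tfrac{1}{n}x_j^\top r_n=O_p(n^{-1/2})$ from the convergence rate of $\betahat_n^{\mathcal{T}}$ and $n/m\to r_0<\infty$.
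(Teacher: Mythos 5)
Your proposal is correct, and although it lands in the same place as the paper's proof, the route differs in both halves. For case (i) the paper goes through the limit law: it invokes Theorem~\ref{theorem:trlasso-consistency}(i), bounds $\limsup_n P(\hat S_n^\mathcal{T}=S)$ by $P(z_{S^c}=0)$, and identifies that with the assumed selection-inconsistency of $\betatilde$; you instead stay at finite $n$ and use the subgradient inequality $|\tfrac{2}{n}\x_j^\top r_n|\ge(\eta_n-\lambda_n)/n\gg n^{-1/2}$ to show that $\{\betahat_j^\mathcal{T}=0\}$ forces $\{\betatilde_j=0\}$ up to vanishing probability, so the constant $c$ is inherited directly from $P(\supp(\betatilde)=S)$. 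Your version is slightly more robust, since the paper's passage from the finite-sample hypothesis on $\betatilde$ to $P(z_{S^c}=0)\le c$ is a weak-convergence statement about the closed set $\{x_{S^c}=0\}$ pointing in the inconvenient direction, which your finite-sample reduction sidesteps. For case (ii) both proofs run the KKT conditions of the limiting objective, but the paper partitions the coordinates into random index sets, solves for $u^*$ on one block, and then asserts that the residual inequalities hold with probability less than one because $W$ is Gaussian and $z$ follows some distribution; your conditioning device --- on $\{u^*_{S^c}=0\}$ the active block is a measurable function of $(W_S,z)$ alone, so conditionally on $(W_S,z)$ the stationarity constraint pins $W_{j_0}$ into an interval of length $\lambda_0+\eta_0$ whose conditional variance is bounded below by the Schur complement of $C$ --- supplies the uniform anti-concentration bound $\rho<1$ that the paper's sketch leaves implicit, and dispenses with the case analysis over random supports. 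The only caveats are minor and you already flag them: the claim needs $S^c\neq\emptyset$, and in case (i) one should note that $\rho_0<1$ guarantees $\eta_n>\lambda_n$ for all large $n$ so the lower bound is nonvacuous.
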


\begin{proof}
    The proof is given in \ref{proof:trlasso-active-selection-consistency}.
\end{proof}

\begin{theorem}[Consistent Invariant Variable Selection for Transfer Lasso]
\label{theorem:trlasso-varying-selection-consistency}
Suppose that $\betatilde$ is inconsistent with active variable selection.
For the case (i) in Theorem~\ref{theorem:trlasso-consistency}, the Transfer Lasso estimator \eqref{trlasso} yields consistent invariant variable selection, that is,
\begin{align}
    P(\betahat_S^\mathcal{T} = \betatilde_S) \to 1.
\end{align}
\end{theorem}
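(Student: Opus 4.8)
The plan is to argue directly from the coordinatewise subgradient (stationarity) conditions for the Transfer Lasso, combined with the $\sqrt m$-consistency of $\betahat_n^\mathcal{T}$ already established in Theorem~\ref{theorem:trlasso-consistency}(i). Since $C_n \to C \succ O$, for all large $n$ the quadratic term in \eqref{trlasso} is strictly convex, so $\betahat_n^\mathcal{T}$ is the unique minimizer, characterized by the existence, for each coordinate $j$, of $s_j \in \partial|\beta_j|$ and $t_j \in \partial|\beta_j - \betatilde_j|$ such that
\begin{align*}
    \frac{2}{n}\, \x_j^\top (y - X\beta) = \frac{\lambda_n}{n}\, s_j + \frac{\eta_n}{n}\, t_j
\end{align*}
at $\beta = \betahat_n^\mathcal{T}$. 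The goal is to show that, for every $j \in S$, the value $t_j$ forced by this identity satisfies $|t_j| < 1$ with probability tending to one; because $\partial|\betahat_j^\mathcal{T} - \betatilde_j|$ meets the open interval $(-1,1)$ only when $\betahat_j^\mathcal{T} = \betatilde_j$, this forces $\betahat_j^\mathcal{T} = \betatilde_j$, and intersecting over the finitely many $j \in S$ yields $P(\betahat_S^\mathcal{T} = \betatilde_S) \to 1$.

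First I would extract from Theorem~\ref{theorem:trlasso-consistency}(i) that $\betahat_n^\mathcal{T} \to^p \betastar$ and $\betahat_n^\mathcal{T} - \betastar = O_p(m^{-1/2})$. Consistency gives $\sgn(\betahat_j^\mathcal{T}) = \sgn(\betastar_j)$ with probability tending to one for $j \in S$ (as $\betastar_j \neq 0$), so on that event $s_j = \sgn(\betastar_j)$ is uniquely determined and the stationarity identity rearranges to
\begin{align*}
    t_j = \frac{2}{\eta_n}\, \x_j^\top (y - X\betahat_n^\mathcal{T}) - \frac{\lambda_n}{\eta_n}\, \sgn(\betastar_j).
\end{align*}
By the hypothesis $\lambda_n/\eta_n \to \rho_0$ with $\rho_0 \in [0,1)$, the second term tends to the constant $\rho_0\, \sgn(\betastar_j)$, of modulus $\rho_0 < 1$.

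The main step is to show the first term is $o_p(1)$. Writing $y - X\betahat_n^\mathcal{T} = \varepsilon - X(\betahat_n^\mathcal{T} - \betastar)$ splits it into $\x_j^\top \varepsilon / \eta_n$ and $-\x_j^\top X(\betahat_n^\mathcal{T} - \betastar)/\eta_n$. For the noise part, $\x_j^\top \varepsilon$ is Gaussian with variance $\sigma^2 \|\x_j\|_2^2 = \sigma^2 n (C_n)_{jj} = O(n)$, so $\x_j^\top \varepsilon = O_p(\sqrt n)$ and hence $\x_j^\top \varepsilon / \eta_n = o_p(1)$ because $\eta_n/\sqrt n \to \infty$. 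For the bias part, $\|\x_j^\top X\|_2 = O(n)$ since $\frac{1}{n} X^\top X \to C$, hence $\x_j^\top X(\betahat_n^\mathcal{T} - \betastar) = O_p(n/\sqrt m)$, and dividing by $\eta_n$ gives a term of order $(\sqrt n / \eta_n)\, \sqrt{n/m}$, which is $o_p(1)$ because $\sqrt n/\eta_n \to 0$ while $\sqrt{n/m} \to \sqrt{r_0}$ stays bounded. Therefore $t_j \to^p -\rho_0\, \sgn(\betastar_j)$, so $P(|t_j| < 1) \to 1$ for each $j \in S$, which completes the proof.

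The only place that needs care is the bias estimate, where all three ingredients are used together --- the $\sqrt m$-consistency rate of $\betahat_n^\mathcal{T}$ from Theorem~\ref{theorem:trlasso-consistency}(i), the hypothesis $\eta_n/\sqrt n \to \infty$, and the boundedness of $\sqrt{n/m}$ guaranteed by $n/m \to r_0$ --- to conclude $(\sqrt n / \eta_n)\,\sqrt{n/m} \to 0$; dropping any one of the three breaks the bound. Everything else is routine bookkeeping. I would also note that the standing assumption that $\betatilde$ is inconsistent for active variable selection is not actually invoked in this argument; it is retained only to parallel the framing of Theorem~\ref{theorem:trlasso-active-selection-consistency}.
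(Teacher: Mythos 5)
Your proof is correct and follows essentially the same route as the paper's: both arguments rest on the coordinatewise KKT conditions for \eqref{trlasso} together with the $\sqrt{m}$-rate from Theorem~\ref{theorem:trlasso-consistency}(i), and both conclude that the subgradient of $|\beta_j-\betatilde_j|$ at the optimum cannot have unit modulus for $j\in S$, forcing $\betahat_j^\mathcal{T}=\betatilde_j$. Your rearrangement isolating $t_j\to^p-\rho_0\sgn(\betastar_j)$ is a slightly more explicit rendering of the paper's terser statement that the $\eta_n/\sqrt{n}$ term dominates, since it makes visible that $\rho_0<1$ (not merely $\eta_n/\sqrt{n}\to\infty$) is the ingredient that closes the argument; your observation that the inconsistency assumption on $\betatilde$ is never invoked also matches the paper's proof.
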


\begin{proof}
    The proof is given in \ref{proof:trlasso-varying-selection-consistency}.
\end{proof}

\begin{theorem}[Inconsistent Invariant Variable Selection for Transfer Lasso]
\label{theorem:trlasso-invariant-selection-consistency}
Suppose that $\betatilde$ is inconsistent with active variable selection.
For the case (ii) in Theorem~\ref{theorem:trlasso-consistency}, the Transfer Lasso estimators \eqref{trlasso} yield inconsistent invariant variable selection, that is,
\begin{align}
    \limsup_{n \to \infty} P(\betahat_S^\mathcal{T} = \betatilde_S) \leq c < 1.
\end{align}
where $c$ is a constant.
\end{theorem}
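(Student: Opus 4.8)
The plan is to transfer the question to the limiting random optimization problem produced by case~(ii) of Theorem~\ref{theorem:trlasso-consistency} and to show that, in that limit, the event that $S$ is selected invariantly has probability bounded away from $1$; the portmanteau theorem then carries this bound back to $\betahat_S^\mathcal{T}=\betatilde_S$ at finite $n$. Concretely, write $u^\star:=\argmin_u V(u)$ for the objective $V$ appearing in Theorem~\ref{theorem:trlasso-consistency}(ii); since $C\succ O$, $V$ is strictly convex and $u^\star$ is almost surely unique. Arguing as in the proof of that theorem, the argmin convergence is driven jointly by $\tfrac{1}{\sqrt n}X^\top\varepsilon\overset{d}{\to}W$ and $z_n:=\sqrt m(\betatilde-\betastar)\overset{d}{\to}z$, which are independent because the source and target data are, so $\big(\sqrt n(\betahat_n^\mathcal{T}-\betastar),\ \sqrt{n/m}\,z_n\big)\overset{d}{\to}(u^\star,\sqrt{r_0}z)$ jointly. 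Hence $\sqrt n(\betahat_S^\mathcal{T}-\betatilde_S)=\sqrt n(\betahat_n^\mathcal{T}-\betastar)_S-\sqrt{n/m}\,z_{n,S}\overset{d}{\to}u^\star_S-\sqrt{r_0}z_S$, and since $\{0\}$ is closed, the portmanteau theorem gives $\limsup_n P(\betahat_S^\mathcal{T}=\betatilde_S)\le P(u^\star_S=\sqrt{r_0}z_S)$. It therefore suffices to bound $P(u^\star_S=\sqrt{r_0}z_S)$ by a constant strictly below $1$.

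The next step exploits the block structure of $V$. On $\{u^\star_S=\sqrt{r_0}z_S\}$, strict convexity forces $u^\star=(\sqrt{r_0}z_S,\,\tilde u_{S^c})$ with $\tilde u_{S^c}:=\argmin_v V(\sqrt{r_0}z_S,v)$, and $u^\star$ must then satisfy the global subgradient (KKT) conditions of $V$; reading these off on the coordinates $j\in S$, where $\eta_0|u_j-\sqrt{r_0}z_j|$ contributes the subdifferential $\eta_0[-1,1]$ at $u_j=\sqrt{r_0}z_j$, yields for every $j\in S$
\[
W_j\in\Big[\tfrac{c_j-\eta_0}{2},\ \tfrac{c_j+\eta_0}{2}\Big],\qquad c_j:=2\sqrt{r_0}\,(C_{SS}z_S)_j+2\,(C_{SS^c}\tilde u_{S^c})_j+\lambda_0\,\sgn(\betastar_j).
\]
The crucial observation, which I expect to be the main obstacle to make airtight, is that once $u_S$ is frozen at $\sqrt{r_0}z_S$ the variable $W_S$ disappears from $\argmin_v V(\sqrt{r_0}z_S,v)$---it enters only through the additive constant $-2\sqrt{r_0}z_S^\top W_S$---so $\tilde u_{S^c}$, and therefore the box $B(z,W_{S^c}):=\prod_{j\in S}[\tfrac{c_j-\eta_0}{2},\tfrac{c_j+\eta_0}{2}]$, is a measurable function of $(z,W_{S^c})$ alone and is independent of $W_S$. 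Consequently $\{u^\star_S=\sqrt{r_0}z_S\}\subseteq\{W_S\in B(z,W_{S^c})\}$, where $B$ is a box all of whose side lengths equal $\eta_0$.

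Finally I would invoke a Gaussian small-box estimate. Conditionally on $(z,W_{S^c})$, and using $z\perp W$, the vector $W_S$ is Gaussian with fixed positive-definite conditional covariance $\Sigma_0:=\sigma^2\big(C_{SS}-C_{SS^c}C_{S^cS^c}^{-1}C_{S^cS}\big)$ (the Schur complement of $\sigma^2 C\succ O$), while $B(z,W_{S^c})$ is a translate of the fixed, bounded box $[-\eta_0/2,\eta_0/2]^{|S|}$ (bounded since $\eta_0<\infty$ in case~(ii)). Hence $P\big(W_S\in B\mid z,W_{S^c}\big)\le c$ with $c:=\sup_{a\in\mathbb{R}^{|S|}}P\big(\mathcal{N}(0,\Sigma_0)\in a+[-\eta_0/2,\eta_0/2]^{|S|}\big)$, and $c<1$ because $a\mapsto P(\mathcal{N}(0,\Sigma_0)\in a+[-\eta_0/2,\eta_0/2]^{|S|})$ is continuous and tends to $0$ as $\|a\|\to\infty$ (so the supremum is attained), and at its maximizer it is $<1$ since the complement of a bounded box in $\mathbb{R}^{|S|}$ has positive Lebesgue measure while $\mathcal{N}(0,\Sigma_0)$ has an everywhere-positive density (if $\eta_0=0$ the box degenerates to a point and $c=0$). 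Taking expectations over $(z,W_{S^c})$ gives $P(u^\star_S=\sqrt{r_0}z_S)\le c<1$, and combining with the portmanteau bound above completes the proof. I note that this argument does not actually use the assumed inconsistency of $\betatilde$ (stated for parallelism with Theorems~\ref{theorem:trlasso-active-selection-consistency} and \ref{theorem:trlasso-varying-selection-consistency}), and that it tacitly assumes $S\neq\emptyset$, without which the statement is vacuous.
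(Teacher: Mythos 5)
Your proposal is correct, and its skeleton matches the paper's: both pass via joint weak convergence and the portmanteau theorem to the bound $\limsup_n P(\betahat_S^\mathcal{T}=\betatilde_S)\le P(u^*_S=\sqrt{r_0}z_S)$, and both then read off the KKT conditions of the limiting objective $V$ at a point with $u_S=\sqrt{r_0}z_S$. Where you genuinely diverge is in how the final bound $P(u^*_S=\sqrt{r_0}z_S)\le c<1$ is established. The paper partitions $S^c$ according to the support/invariance pattern of $u^*_{S^c}$, solves the stationarity equation for $u^*_{S_1^c}$ when that block is nonempty, substitutes back, and then asserts that the resulting system of inequalities in $(W,z,u^*)$ holds with probability less than one ``because $W$ follows a Gaussian distribution and $z$ follows some distribution'' --- an assertion that is delicate precisely because the quantities appearing in those inequalities are themselves functions of $W$. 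Your argument replaces this case analysis with a cleaner conditioning step: you observe that once $u_S$ is frozen at $\sqrt{r_0}z_S$, the partial minimizer $\tilde u_{S^c}$ depends only on $(z,W_{S^c})$, so the KKT constraint on the coordinates in $S$ confines $W_S$ to a box of side length $\eta_0$ that is measurable with respect to $(z,W_{S^c})$ and hence, conditionally, is a fixed translate; a uniform Gaussian small-box estimate with the Schur-complement covariance then gives a constant $c<1$ that does not depend on the (random) support pattern. This buys rigor exactly where the paper is most hand-wavy, handles the $\eta_0=0$ degenerate case for free, and correctly notes that the hypothesis of inconsistent active variable selection for $\betatilde$ is not actually needed here. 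The one point worth making fully explicit in a final write-up is the joint convergence $\bigl(\sqrt{n}(\betahat_n^\mathcal{T}-\betastar),\,\sqrt{n/m}\,z_n\bigr)\overset{d}{\to}(u^*,\sqrt{r_0}z)$, which requires the argmin continuous-mapping step of Theorem~\ref{theorem:trlasso-consistency}(ii) to be carried out jointly with $z_n$; this is routine given the independence of source and target data, and the paper relies on the same fact implicitly.
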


\begin{proof}
    The proof is given in \ref{proof:trlasso-invariant-selection-consistency}.
\end{proof}

Theorems~\ref{theorem:trlasso-active-selection-consistency}, \ref{theorem:trlasso-varying-selection-consistency}, and \ref{theorem:trlasso-invariant-selection-consistency} unveil the benefits and drawbacks of the Transfer Lasso.
Theorem~\ref{theorem:trlasso-active-selection-consistency} implies that the $\sqrt{m}$-consistent region (i) does not hold active variable selection consistency.
The $\sqrt{n}$-consistent region (ii) does not hold as well.
This is a disadvantage for the Transfer Lasso.
On the other hand, Theorem~\ref{theorem:trlasso-varying-selection-consistency} indicates that the Transfer Lasso in the case (i) has a property of consistent invariant variable selection, which the Adaptive Lasso does not have.
Theorem~\ref{theorem:trlasso-invariant-selection-consistency} implies that the estimators are inconsistent in terms of invariant variable selection in the case (ii).

As shown in Figure~\ref{fig:trlasso-consistent-region}, the Transfer Lasso cannot simultaneously achieve $\sqrt{m}$-consistency and consistent active/invariant variable selection in the regions (i), (ii), and (iii).
This is why we explore a new methodology in the next section.
We note that in regions other than (i), (ii), and (iii) (e.g., boundary regions), the asymptotic property is unclear.
Appendix \ref{sec:additional-tlasso} contains additional results for boundary regions.
At the very least, the above results imply that $\sqrt{m}$-consistency and consistent active/invariant variable selection are incompatible in most regions for the Transfer Lasso.

\section{Beyond Adaptive Lasso and Transfer Lasso}

\label{sec:adaptrlasso}

The Adaptive Lasso and the Transfer Lasso have their advantages and disadvantages, as seen in the previous section.
The Adaptive Lasso achieves both $\sqrt{n}$-consistency and consistent variable selection for $m \leq n$, but its convergence rate is $\sqrt{n} (\ll \sqrt{m})$ for $m \gg n$.
The Transfer Lasso, on the other hand, achieves a convergence rate of $\sqrt{m}$ for $m \gg n$, but it results in inconsistent variable selection.
Are there any ways to combine their benefits and compensate for their drawbacks?

\subsection{Adaptive Transfer Lasso: A Non-Trivial Integration}

To exploit their benefits and compensate for their drawbacks, we integrate the ideas of the Adaptive Lasso and the Transfer Lasso.
We propose a novel method using the initial estimator $\betatilde$ as
\begin{align}
    \label{adaptrlasso}
    \hat{\beta}_n^\# = 
    \argmin_\beta 
    &\left\{ 
    \frac{1}{n} \| y - X\beta \|_2^2 + \frac{\lambda_n}{n} \sum_j v_j |\beta_j| + \frac{\eta_n}{n} \sum_j w_j \left| \beta_j - \betatilde_j \right|
    \right\}, \\
    &~~~~~v_j := \frac{1}{|\betatilde_j|^{\gamma_1}}, ~ w_j := |\betatilde_j|^{\gamma_2},
\end{align}
where
$\gamma_1 \geq 0$ and $\gamma_2 \geq 0$ are new hyperparameters.
We denote $\hat{S}_n^\# := \{ j: \betahat_j^\# \neq 0 \}$.
The weight $v_j = 1 / |\betatilde_j|^{\gamma_1}$ is the same as that of the Adaptive Lasso, whereas the term $w_j = |\betatilde_j|^{\gamma_2}$ is a new non-trivial part.
Because $w_j \to 0$ as $\betatilde_j \to 0$, the effect of transfer learning from the initial estimator disappears for inactive parameters.
We call this method \textit{Adaptive Transfer Lasso} because it is a generalization of the Adaptive Lasso and the Transfer Lasso.
Indeed, if $\eta_n = 0$, then it reduces to the Adaptive Lasso, and if $\gamma_1 = \gamma_2 = 0$, then it reduces to the Transfer Lasso.

\subsection{Asymptotic Properties for Adaptive Transfer Lasso}

We present the asymptotic properties of the Adaptive Transfer Lasso.
The assumptions are the same as for the Adaptive Lasso and the Transfer Lasso.
To derive the asymptotic distribution and convergence rate, we need a more detailed case analysis than before.
The illustration of the division of cases is shown in Figure~\ref{fig:adaptrlasso-consistent-selection-region}.

\begin{theorem}[Asymptotic Distribution for Adaptive Transfer Lasso]
\label{theorem:adaptrlasso-consistency}
We have the following asymptotic distributions for the Adaptive Transfer Lasso estimator \eqref{adaptrlasso}.\\
(i) If $\eta_n / \sqrt{n m^{\gamma_2}} \to \infty$ and $\eta_n / \sqrt{m^{\gamma_1 + \gamma_2}} \lambda_n \to \infty$, then
\begin{align}
    \sqrt{m} (\betahat_n^\# - \betastar) \overset{d}{\to} z.
\end{align}
(ii) If $\sqrt{m^{\gamma_1} / n} ~ \lambda_n \to \infty$,
$\eta_n / \sqrt{n} \to \infty$, $\eta_n / \lambda_n \to \infty$, $\eta_n / \sqrt{m^{\gamma_1 + \gamma_2}}\lambda_n \to 0$, and $\sqrt{m^{\gamma_1}}\lambda_n / \eta_n \to \rho_0 \geq 0$, then
\begin{align}
    \sqrt{m} (\betahat_n^\# - \betastar) \overset{d}{\to} 
    \begin{dcases}
        0 & \text{for}~ j \in S^c,\\
        z_j & \text{for}~ j \in S.
    \end{dcases}
\end{align}
(iii) If $\sqrt{m^{\gamma_1} / n} ~ \lambda_n \to \lambda_1 \geq 0$ and $\eta_n / \sqrt{n} \to \eta_0 \geq 0$, then 
\begin{align}
    &\sqrt{n} (\betahat_n^\# - \betastar) \\
    \overset{d}{\to}&
    \argmin_{u}
    \left\{  u^\top C u - 2 u^\top W
    + \sum_{j \in S^c} \frac{\lambda_1}{|z_j|^{\gamma_1}} |u_j|
    + \sum_{j \in S} \eta_0 \left| \betastar_j \right|^{\gamma_2} \left| u_j - \sqrt{r_0} z_j \right| \right\}.
\end{align}
(iv) If $\sqrt{m^{\gamma_1} / n} ~ \lambda_n \to \lambda_1 \geq 0$, $\eta_n / \sqrt{n} \to \infty$, and $\eta_n / \sqrt{n m^{\gamma_2}} \to \eta_1 \geq 0$, then
\begin{align}
    &\sqrt{n} (\betahat_n^\# - \betastar) \\
    \overset{d}{\to}&
    \argmin_{u \in \mathcal{U}} \left\{ u^\top C u - 2 u^\top W 
    + \sum_{j \in S^c} \left( \frac{\lambda_1}{|z_j|^{\gamma_1}} \left| u_j \right| + \eta_1 |z_j|^{\gamma_2} |u_j - \sqrt{r_0} z_j| \right) \right\},\\
    &\mathcal{U} := \left\{ u ~\middle|~ u_S = r_0 z_S \right\}.
\end{align}
(v) If $\sqrt{m^{\gamma_1} / n} ~ \lambda_n \to \infty$,
$\lambda_n / \sqrt{n} \to \lambda_0 \geq 0$, and $\eta_n / \sqrt{n} \to \eta_0 \geq 0$, then
\begin{align}
    &\sqrt{n} (\betahat_n^\# - \betastar) \\
    \overset{d}{\to}&
    \argmin_{u \in \mathcal{U}} \left\{ u^\top C u - 2 u^\top W
    + \sum_{j \in S} \left( \lambda_0 \frac{\sgn(\betastar_j)}{|\betastar_j|^{\gamma_1}} u_j + \eta_0 \left| \betastar_j \right|^{\gamma_2} \left| u_j - \sqrt{r_0} z_j \right| \right) \right\},\\
    &\mathcal{U} := \left\{ u ~\middle|~ u_{S^c} = 0 \right\}.
\end{align}
(vi) If $\lambda_n / \sqrt{n} \to \infty$, $\lambda_n / n \to 0$, and $\lambda_n / \eta_n \to \infty$, then
\begin{align}
    \frac{n}{\lambda_n} (\betahat_n^\# - \betastar) \overset{d}{\to}
    \argmin_{u \in \mathcal{U}} \left\{ u^\top C u + \sum_{j \in S} \frac{\sgn(\betastar_j)}{|\betastar_j|^{\gamma_1}} u_j \right\},~~~
    \mathcal{U} := \left\{ u ~\middle|~ u_{S^c} = 0 \right\}.
\end{align}
\end{theorem}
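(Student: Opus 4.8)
The plan is to run, in each of the six regimes, the ``argmin of a convex process'' argument already used for the Lasso and the Adaptive Lasso in \cite{fu2000asymptotics,zou2006adaptive}. Since the objective in \eqref{adaptrlasso} is convex in $\beta$, it suffices to rescale it around $\betastar$ at the conjectured rate, identify the limit of the resulting convex random function $V_n$, and invoke the standard fact that if $V_n$ is convex and converges, in the finite-dimensional sense, to a convex lower-semicontinuous $V$ (allowed to take the value $+\infty$ on part of $\mathbb R^p$) with an almost surely unique minimiser, then $\argmin V_n$ converges in distribution to $\argmin V$. When the limit is degenerate --- cases (i), (ii) and (vi) --- I would instead argue directly that the rescaled objective evaluated at distance $\delta$ from the target exceeds its value at the target with probability tending to one, which pins the estimator.

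First I would fix the normalising rate: $a_n=\sqrt m$ in (i)--(ii), $a_n=\sqrt n$ in (iii)--(v), and $a_n=n/\lambda_n$ in (vi); for (i)--(ii) one may equivalently re-centre at $\betatilde$ instead of $\betastar$, so that $\hat u_n=\sqrt m(\hat\beta_n^{\#}-\betatilde)$ and the claimed limits follow by adding back $z=\sqrt m(\betatilde-\betastar)$. With $\Phi_n$ the objective in \eqref{adaptrlasso}, set $V_n(u):=a_n^2\bigl(\Phi_n(\betastar+u/a_n)-\Phi_n(\betastar)\bigr)$, a convex function minimised at $\hat u_n:=a_n(\hat\beta_n^{\#}-\betastar)$. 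Expanding the quadratic loss gives $u^\top C_n u-2(a_n/\sqrt n)\,u^\top(X^\top\varepsilon/\sqrt n)$; by Assumption~\ref{assumption:psd}, $C_n\to C\succ O$, and the Lindeberg condition $n^{-1}\max_i\|x_i\|_2^2\to0$ yields $X^\top\varepsilon/\sqrt n\overset{d}{\to}W\sim\mathcal N(0,\sigma^2 C)$. Hence this part tends to $u^\top Cu-2u^\top W$ when $a_n=\sqrt n$, to $u^\top Cu$ when $a_n=n/\lambda_n$ (since $a_n/\sqrt n\to0$), and is dominated by the penalty when $a_n=\sqrt m$.

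Next I would take the coordinatewise limits of the two penalty terms, separating $j\in S$ from $j\in S^c$. For $j\in S$, $\betatilde_j\overset{p}{\to}\betastar_j\neq0$, so $v_j\to|\betastar_j|^{-\gamma_1}$ and $w_j\to|\betastar_j|^{\gamma_2}$; a first-order expansion of $|\cdot|$ around $\betastar_j$, respectively around $\betastar_j-\betatilde_j=-z_j/\sqrt m$ (which in $u$-coordinates equals $-\sqrt{r_0}z_j/a_n+o(1/a_n)$ because $n/m\to r_0$), linearises the first penalty to $\lambda_0\,\sgn(\betastar_j)|\betastar_j|^{-\gamma_1}u_j$ and turns the second into $\eta_0|\betastar_j|^{\gamma_2}|u_j-\sqrt{r_0}z_j|$ up to an additive constant. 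For $j\in S^c$, $\betatilde_j=z_j/\sqrt m$, so $v_j=m^{\gamma_1/2}/|z_j|^{\gamma_1}$ and $w_j=|z_j|^{\gamma_2}/m^{\gamma_2/2}$; after multiplying by $a_n^2$ the first penalty contributes a term of order $\sqrt{m^{\gamma_1}/n}\,\lambda_n\,|u_j|/|z_j|^{\gamma_1}$ and the second one of order $(\eta_n/\sqrt{n m^{\gamma_2}})\,|z_j|^{\gamma_2}\,|u_j-\sqrt{r_0}z_j|$. The six sets of hyperparameter conditions are exactly those under which each such coefficient either tends to a finite constant ($\lambda_1,\eta_0,\eta_1,\rho_0$), giving a soft limiting penalty, or diverges, in which case it forces the corresponding block of $u$ onto its anchor ($0$ for the first penalty, $\sqrt{r_0}z_j$ or $0$ for the second) --- this is what produces the feasible sets $\mathcal U$ in (iv)--(vi) and the degenerate limits in (i)--(ii). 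Collecting these limits reproduces the six displayed expressions; since $C\succ O$ each limiting objective is strictly convex on its feasible set, so its minimiser is almost surely unique and the convexity lemma applies, while (i), (ii), (vi) follow from the concentration bound using the diverging coefficients.

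The main obstacle is the bookkeeping across the six regimes: one must check that, under each set of inequalities, every cross-term and every sub-dominant penalty contribution is $o_p(1)$ uniformly on compact sets, and that the three competing scales --- the loss scale, the first penalty's scale (carrying $m^{\gamma_1/2}$) and the second penalty's scale (carrying $m^{\gamma_2/2}$) --- are ordered as the conditions dictate. A related subtlety is showing that a coordinate with a diverging weight is pinned to its anchor at the \emph{faster} rate encoded in $\mathcal U$ (e.g.\ $\hat\beta^{\#}_{S^c}=o_p(1/\sqrt n)$ in (v), not merely $O_p(1/\sqrt n)$), which is most easily obtained from the subgradient/KKT conditions of \eqref{adaptrlasso} or by re-running the argument at that faster rate. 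Finally, Theorem~\ref{theorem:alasso-consistency} (take $\eta_n=0$) and Theorem~\ref{theorem:trlasso-consistency} (take $\gamma_1=\gamma_2=0$) drop out of the same computation as special cases.
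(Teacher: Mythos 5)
Your proposal follows essentially the same route as the paper's proof: rescale the convex objective around $\betastar$ at the rate $\sqrt m$, $\sqrt n$, or $n/\lambda_n$, take coordinatewise limits of the two penalty terms separating $j\in S$ (where $v_j\to|\betastar_j|^{-\gamma_1}$, $w_j\to|\betastar_j|^{\gamma_2}$) from $j\in S^c$ (where the coefficients carry the scales $\sqrt{m^{\gamma_1}/n}\,\lambda_n$ and $\eta_n/\sqrt{nm^{\gamma_2}}$), and invoke the Geyer--Pollard argmin-convergence lemma for convex processes, with diverging coefficients producing the constraint sets $\mathcal U$ and the degenerate limits. The only differences are cosmetic (re-centring at $\betatilde$ in (i)--(ii), and a direct concentration argument where the paper simply lets the limit function take the value $+\infty$ off the feasible set), so the proposal is correct and matches the paper's argument.
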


\begin{proof}
    The proof is given in \ref{proof:adaptrlasso-consistency}
\end{proof}

\begin{corollary}[Convergence Rate for Adaptive Transfer Lasso]
\label{theorem:adaptrlasso-convergence-rate}
We have the following convergence rates for the Adaptive Transfer Lasso estimator  \eqref{adaptrlasso}.
\begin{enumerate}[label={$(\roman*)$}]
  \item $\eta_n / \sqrt{n m^{\gamma_2}} \to \infty$ and $\eta_n / \sqrt{m^{\gamma_1 + \gamma_2}} \lambda_n \to \infty$, then the convergence rate is $\sqrt{m}$.
  \item $\sqrt{m^{\gamma_1} / n} ~ \lambda_n \to \infty$, $\eta_n / \sqrt{n} \to \infty$, $\eta_n / \lambda_n \to \infty$, and $\eta_n / \sqrt{m^{\gamma_1 + \gamma_2}}\lambda_n \to 0$, then then the convergence rate is $\sqrt{m}$.
  \item  If $\sqrt{m^{\gamma_1} / n} ~ \lambda_n \to \lambda_1 \geq 0$ and $\eta_n / \sqrt{n} \to \eta_0 \geq 0$, then the convergence rate is $\sqrt{n}$.
  \item If $\sqrt{m^{\gamma_1} / n} ~ \lambda_n \to \lambda_1 \geq 0$, $\eta_n / \sqrt{n} \to \infty$, and $\eta_n / \sqrt{n m^{\gamma_2}} \to \eta_1 \geq 0$, then convergence rate is $\sqrt{n}$.
  \item If $\sqrt{m^{\gamma_1} / n} ~ \lambda_n \to \infty$, $\lambda_n / \sqrt{n} \to \lambda_0 \geq 0$, and $\eta_n / \sqrt{n} \to \eta_0 \geq 0$, then convergence rate is $\sqrt{n}$.
  \item If $\lambda_n / \sqrt{n} \to \infty$, $\lambda_n / n \to 0$, and $\lambda_n / \eta_n \to \infty$, then the convergence rate is $n / \lambda_n$, which is slower than $\sqrt{n}$.
\end{enumerate}
\end{corollary}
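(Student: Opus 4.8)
The plan is to read each rate off the corresponding case of Theorem~\ref{theorem:adaptrlasso-consistency} and then verify that the exhibited weak limit is nondegenerate. For a normalizing sequence $a_n \to \infty$, the assertion ``$\betahat_n^\#$ has convergence rate $a_n$'' means $a_n(\betahat_n^\# - \betastar) = O_p(1)$ while $a_n(\betahat_n^\# - \betastar)$ is not $o_p(1)$; equivalently, $a_n(\betahat_n^\# - \betastar)$ converges in distribution to a limit not concentrated at the origin. The boundedness half is automatic in every regime, since Theorem~\ref{theorem:adaptrlasso-consistency} already produces such a weak limit with $a_n$ equal to $\sqrt{m}$ in parts (i)--(ii), $\sqrt{n}$ in parts (iii)--(v), and $n/\lambda_n$ in part (vi). So the work is entirely in checking that these limits are genuinely nonzero.

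In parts (i) and (ii) the limit is $z$ (respectively $z_j$ on $S$ and $0$ on $S^c$), which is nondegenerate because, by Assumption~\ref{assumption:m}, $z$ is the limiting law of the $\sqrt{m}$-consistent estimator $\betatilde$; the $S^c$-coordinates in (ii) happen to converge faster, but the vector rate is governed by the $S$-coordinates and equals $\sqrt{m}$. In parts (iii), (iv), (v) the limit is the almost surely unique minimizer of a strictly convex random objective of the form $u^\top C u - 2u^\top W$ plus a convex piecewise-linear penalty in $u$; it is well-defined and finite because $C \succ O$ (Assumption~\ref{assumption:psd}), and it is not identically zero: optimality at $u = 0$ would force $2W$ into the subdifferential of that penalty at $0$, a fixed bounded set, whereas $W \sim \mathcal{N}(0, \sigma^2 C)$ has unbounded support (its covariance is positive definite), so with positive probability the minimizer differs from $0$ (and when $r_0 > 0$ the attraction of the terms $|u_j - \sqrt{r_0} z_j|$ is a further cause). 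Hence the rate in (iii)--(v) is exactly $\sqrt{n}$. In part (vi) the limit is deterministic, namely the minimizer of $u^\top C u + \sum_{j \in S} \frac{\sgn(\betastar_j)}{|\betastar_j|^{\gamma_1}} u_j$ over $\{ u : u_{S^c} = 0 \}$, which equals $-\frac{1}{2} C_{SS}^{-1} b$ on $S$ with $b_j := \sgn(\betastar_j)/|\betastar_j|^{\gamma_1} \neq 0$; this is a nonzero constant vector, so $n/\lambda_n$ is the rate, and $n/\lambda_n = o(\sqrt{n})$ because $(n/\lambda_n)/\sqrt{n} = \sqrt{n}/\lambda_n \to 0$ from $\lambda_n/\sqrt{n} \to \infty$, which is why this rate is slower than $\sqrt{n}$.

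The one genuinely non-routine point is the nondegeneracy claim for the argmin limits in (iii)--(v): one must rule out that the Gaussian term and the (sometimes shrink-to-zero) penalty terms conspire to pin the minimizer at the origin. The subgradient argument above disposes of this, since an unbounded Gaussian vector is almost surely not contained in the fixed compact subdifferential of a piecewise-linear penalty at $0$. Everything else in the corollary is a direct transcription of the six regimes of Theorem~\ref{theorem:adaptrlasso-consistency}, so no further calculation is required.
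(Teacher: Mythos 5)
Your proposal is correct and follows essentially the same route as the paper, which states this corollary as a direct reading-off of the six regimes of Theorem~\ref{theorem:adaptrlasso-consistency} without a separate proof. Your added subgradient check that the weak limits in (iii)--(v) are not concentrated at the origin (and the explicit computation $u_S = -\tfrac{1}{2}C_{SS}^{-1}b$ in (vi)) supplies the nondegeneracy verification the paper leaves implicit, and is consistent with its KKT-based arguments elsewhere.
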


\begin{figure*}[t]
\centering
\begin{tikzpicture}[scale=3.3, font = \footnotesize]
    \begin{scope}
    
    \def\sqrtn{0.3}
    \def\sqrtnone{0.16}
    \def\sqrtntwo{0.48}
    \def\n{0.8}

    \draw[->] (0,0) -- (1.05,0) node[right] {$\eta_n$};
    \draw[->] (0,0) -- (0,1.05) node[above] {$\lambda_n$};

    \draw[dashed, color=gray] (\sqrtn,0) -- (\sqrtn,1);
    \draw[dashed, color=gray] (\sqrtntwo,0) -- (\sqrtntwo,1);

    \draw[dashed, color=gray] (0,\sqrtn) -- (1,\sqrtn);
    \draw[dashed, color=gray] (0,\sqrtnone) -- (1,\sqrtnone);
    \draw[dashed, color=gray] (0,\n) -- (1,\n);

    \node[below] at (\sqrtn-0.02, 0) {$\sqrt{n}$};
    \node[below] at (\sqrtntwo+0.04, 0) {$\sqrt{n m^{\gamma_2}}$};

    \node[left] at (0, \sqrtn) {$\sqrt{n}$};
    \node[left] at (0, \sqrtnone) {$\sqrt{n / m^{\gamma_1}}$};
    \node[left] at (0, \n) {$n$};

    \draw[dashed, color=gray] (0, 0) -- (1, 1);
    \draw[dashed, color=gray] (0, 0) -- (1, \sqrtnone/\sqrtntwo);

        \fill[pattern=north east lines, pattern color=blue!40!black!40!white] (\sqrtntwo, 0) -- (1, 0) -- (1, \sqrtnone/\sqrtntwo) -- (\sqrtntwo, \sqrtnone) -- cycle;
    \node at ({\sqrtntwo + 0.5*(1-\sqrtntwo)}, {0.4*\sqrtnone/\sqrtntwo}) {(i)};

    \fill[pattern=north west lines, pattern color=blue!40!black!40!white] (\sqrtn, \sqrtnone) -- (\sqrtntwo, \sqrtnone) -- (1, \sqrtnone/\sqrtntwo) -- (1, 1) -- (\sqrtn, \sqrtn) -- cycle;
    \node at ({\sqrtn + 0.5*(1-\sqrtn)}, {0.5*\sqrtnone + 0.5*(1-\sqrtnone/\sqrtntwo)}) {(ii)};

    \fill[pattern=grid, pattern color=green!40!black!40!white] (0, 0) -- (\sqrtn, 0) -- (\sqrtn, \sqrtnone) -- (0, \sqrtnone) -- cycle;
    \node at ({0.5*\sqrtn}, {0.5*\sqrtnone}) {(iii)};

    \fill[pattern=horizontal lines, pattern color=green!40!black!40!white] (\sqrtn, 0) -- (\sqrtntwo, 0) -- (\sqrtntwo, \sqrtnone) -- (\sqrtn, \sqrtnone) -- cycle;
    \node at ({\sqrtn + 0.5*(\sqrtntwo-\sqrtn)}, {0.5*\sqrtnone}) {(iv)};

    \fill[pattern=vertical lines, pattern color=green!40!black!40!white] (0, \sqrtnone) -- (\sqrtn, \sqrtnone) -- (\sqrtn, \sqrtn) -- (0, \sqrtn) -- cycle;
    \node at ({0.5*\sqrtn}, {0.5*\sqrtnone + 0.5*\sqrtn}) {(v)};

    \fill[pattern=crosshatch dots, pattern color=orange!40!black!40!white] (0, \sqrtn) -- (\sqrtn, \sqrtn) -- (\n, \n) -- (0, \n) -- cycle;
    \node at ({0.3*\n}, {0.5*\n + 0.5*\sqrtn}) {(vi)};
    
    \end{scope}
\end{tikzpicture}
\par
\begin{minipage}{0.5\textwidth}
\begin{tikzpicture}[scale=3.3, font = \footnotesize]
    \begin{scope}
    
    \def\sqrtn{0.3}
    \def\sqrtnone{0.16}
    \def\sqrtntwo{0.48}
    \def\n{0.8}

    \draw[->] (0,0) -- (1.05,0) node[right] {$\eta_n$};
    \draw[->] (0,0) -- (0,1.05) node[above] {$\lambda_n$};

    \draw[dashed, color=gray] (\sqrtn,0) -- (\sqrtn,1);
    \draw[dashed, color=gray] (\sqrtntwo,0) -- (\sqrtntwo,1);

    \draw[dashed, color=gray] (0,\sqrtn) -- (1,\sqrtn);
    \draw[dashed, color=gray] (0,\sqrtnone) -- (1,\sqrtnone);
    \draw[dashed, color=gray] (0,\n) -- (1,\n);

    \node[below] at (\sqrtn-0.02, 0) {$\sqrt{n}$};
    \node[below] at (\sqrtntwo+0.04, 0) {$\sqrt{n m^{\gamma_2}}$};

    \node[left] at (0, \sqrtn) {$\sqrt{n}$};
    \node[left] at (0, \sqrtnone) {$\sqrt{n / m^{\gamma_1}}$};
    \node[left] at (0, \n) {$n$};

    \draw[dashed, color=gray] (0, 0) -- (1, 1);
    \draw[dashed, color=gray] (0, 0) -- (1, \sqrtnone/\sqrtntwo);

    \fill[pattern=north west lines, pattern color=blue!40!black!40!white] (\sqrtn, \sqrtnone) -- (\sqrtntwo, \sqrtnone) -- (1, \sqrtnone/\sqrtntwo) -- (1, 1) -- (\sqrtn, \sqrtn) -- cycle;
    \node at ({\sqrtn + 0.5*(1-\sqrtn)}, {0.5*\sqrtnone + 0.5*(1-\sqrtnone/\sqrtntwo)}) {(ii)};

    \fill[pattern=vertical lines, pattern color=green!40!black!40!white] (0, \sqrtnone) -- (\sqrtn, \sqrtnone) -- (\sqrtn, \sqrtn) -- (0, \sqrtn) -- cycle;
    \node at ({0.5*\sqrtn}, {0.5*\sqrtnone + 0.5*\sqrtn}) {(v)};

    \fill[pattern=crosshatch dots, pattern color=orange!40!black!40!white] (0, \sqrtn) -- (\sqrtn, \sqrtn) -- (\n, \n) -- (0, \n) -- cycle;
    \node at ({0.3*\n}, {0.5*\n + 0.5*\sqrtn}) {(vi)};
    
    \end{scope}
\end{tikzpicture}
\end{minipage}\hfill
\begin{minipage}{0.5\textwidth}
\begin{tikzpicture}[scale=3.3, font = \footnotesize]
    \begin{scope}
    
    \def\sqrtn{0.3}
    \def\sqrtnone{0.16}
    \def\sqrtntwo{0.48}
    \def\n{0.8}

    \draw[->] (0,0) -- (1.05,0) node[right] {$\eta_n$};
    \draw[->] (0,0) -- (0,1.05) node[above] {$\lambda_n$};

    \draw[dashed, color=gray] (\sqrtn,0) -- (\sqrtn,1);
    \draw[dashed, color=gray] (\sqrtntwo,0) -- (\sqrtntwo,1);

    \draw[dashed, color=gray] (0,\sqrtn) -- (1,\sqrtn);
    \draw[dashed, color=gray] (0,\sqrtnone) -- (1,\sqrtnone);
    \draw[dashed, color=gray] (0,\n) -- (1,\n);

    \node[below] at (\sqrtn-0.02, 0) {$\sqrt{n}$};
    \node[below] at (\sqrtntwo+0.04, 0) {$\sqrt{n m^{\gamma_2}}$};

    \node[left] at (0, \sqrtn) {$\sqrt{n}$};
    \node[left] at (0, \sqrtnone) {$\sqrt{n / m^{\gamma_1}}$};
    \node[left] at (0, \n) {$n$};

    \draw[dashed, color=gray] (0, 0) -- (1, 1);
    \draw[dashed, color=gray] (0, 0) -- (1, \sqrtnone/\sqrtntwo);

    \fill[pattern=north east lines, pattern color=blue!40!black!40!white] (\sqrtntwo, 0) -- (1, 0) -- (1, \sqrtnone/\sqrtntwo) -- (\sqrtntwo, \sqrtnone) -- cycle;
    \node at ({\sqrtntwo + 0.5*(1-\sqrtntwo)}, {0.4*\sqrtnone/\sqrtntwo}) {(i)};

    \fill[pattern=north west lines, pattern color=blue!40!black!40!white] (\sqrtn, \sqrtnone) -- (\sqrtntwo, \sqrtnone) -- (1, \sqrtnone/\sqrtntwo) -- (1, 1) -- (\sqrtn, \sqrtn) -- cycle;
    \node at ({\sqrtn + 0.5*(1-\sqrtn)}, {0.5*\sqrtnone + 0.5*(1-\sqrtnone/\sqrtntwo)}) {(ii)};

    \fill[pattern=horizontal lines, pattern color=green!40!black!40!white] (\sqrtn, 0) -- (\sqrtntwo, 0) -- (\sqrtntwo, \sqrtnone) -- (\sqrtn, \sqrtnone) -- cycle;
    \node at ({\sqrtn + 0.5*(\sqrtntwo-\sqrtn)}, {0.5*\sqrtnone}) {(iv)};
    
    \end{scope}
\end{tikzpicture}
\end{minipage}

\caption{
Phase diagrams of convergence rate (top) and active/invariant variable selection (bottom left/right) with $\lambda_n$ and $\eta_n$ for the Adaptive Transfer Lasso
in Theorems~\ref{theorem:adaptrlasso-consistency}, \ref{theorem:adaptrlasso-active-selection-consistency}, \ref{theorem:adaptrlasso-varying-selection-consistency}, and Corollary~\ref{theorem:adaptrlasso-convergence-rate}.
They are $\sqrt{m}$-consistent in (i - ii), $\sqrt{n}$-consistent in (iii - v), and sub-$\sqrt{n}$-consistent in (vi).
They yield consistent active variable selection in (ii), (v), and (vi) (left), while consistent invariant variable selection in (i), (ii), and (iv) (right).
Estimators in (ii) satisfy $\sqrt{m}$-consistency and active/invariant variable selection consistency.
}
\label{fig:adaptrlasso-consistent-selection-region}
\end{figure*}
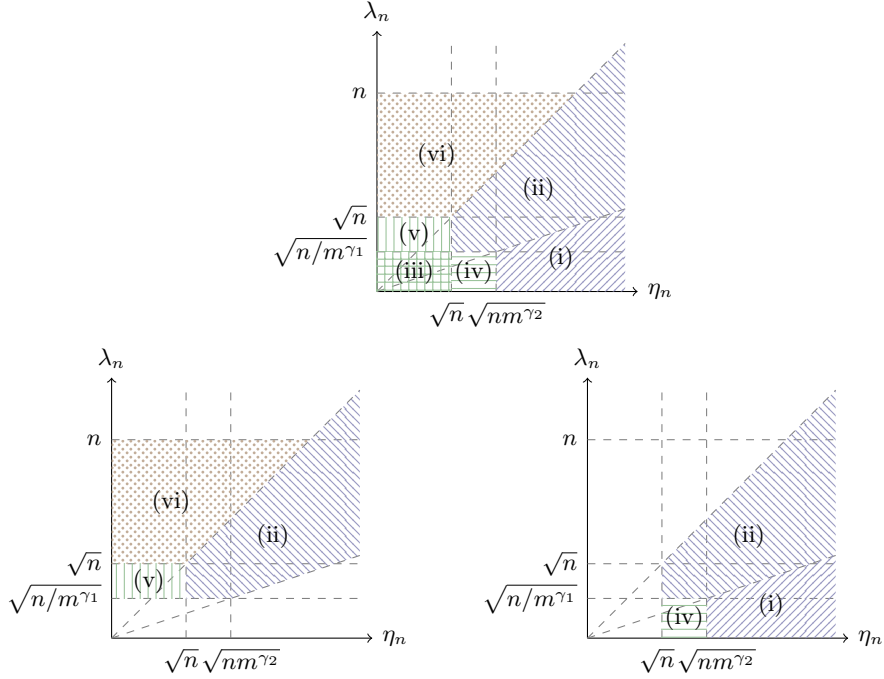

Theorem~\ref{theorem:adaptrlasso-consistency} and Corollary~\ref{theorem:adaptrlasso-convergence-rate} show that the Adaptive Transfer Lasso achieves a convergence rate of $\sqrt{m}$ in the case (i) and (ii).
This property is inherited from the Transfer Lasso.
The asymptotic distribution for (i) is the same as the initial estimator.
On the other hand, the asymptotic distribution for (ii) is remarkable.
The distribution is the same as the initial estimator for the active variables, whereas is zero for the inactive variables.
This implies that inactive parameters shrink to zero quickly.

We also provide the results of active/invariant variable selection consistency for the Adaptive Transfer Lasso.

\begin{theorem}[Consistent Active Variable Selection for Adaptive Transfer Lasso]
\label{theorem:adaptrlasso-active-selection-consistency}
For the cases (ii), (v), and (vi) in Theorem~\ref{theorem:adaptrlasso-consistency}, 
the Adaptive Transfer Lasso yields consistent active variable selection, that is,
\begin{align}
    P(\hat{S}_n^\# = S) \to 1.
\end{align}
\end{theorem}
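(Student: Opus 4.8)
The plan is to prove $P(\hat{S}_n^\#=S)\to 1$ through the two inclusions $S\subseteq\hat{S}_n^\#$ and $\hat{S}_n^\#\subseteq S$, each with probability tending to one. The inclusion $S\subseteq\hat{S}_n^\#$ is immediate: in every regime (ii), (v), (vi), Theorem~\ref{theorem:adaptrlasso-consistency} exhibits a convergent rescaling of $\betahat_n^\#-\betastar$, hence $\betahat_n^\#\overset{p}{\to}\betastar$ and $\betahat_j^\#\overset{p}{\to}\betastar_j\neq 0$ for each $j\in S$, so $P(j\in\hat{S}_n^\#)\to 1$; as $|S|$ is finite, $P(S\subseteq\hat{S}_n^\#)\to 1$.

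For the substantive inclusion $\hat{S}_n^\#\subseteq S$ I would use the oracle-restricted estimator together with a subgradient check. Let $\betahat^\dagger$ be the (unique, once $C_n\succ O$) minimizer of the objective in \eqref{adaptrlasso} over $\{\beta:\beta_{S^c}=0\}$. By convexity it suffices to show that $\betahat^\dagger$ satisfies the global KKT optimality conditions with probability tending to one, for then $\betahat_n^\#=\betahat^\dagger$ and $\hat{S}_n^\#\subseteq S$. The conditions for $j\in S$ hold by construction, so the binding one is that, for each $j\in S^c$, the subgradient $s_j$ of the $v_j$-weighted $\ell_1$ term at $\betahat_j^\dagger=0$ lies in $[-1,1]$. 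Since $\betatilde_j\neq 0$ a.s., the transfer penalty is differentiable at $0$ with derivative $\frac{\eta_n}{n}w_j\sgn(\betatilde_j)$, and using $y-X\betahat^\dagger=\varepsilon-X_S(\betahat_S^\dagger-\betastar_S)$ the requirement reads
\begin{align*}
\frac{n}{\lambda_n v_j}\left|\frac{2}{n}\x_j^\top\varepsilon-2(C_n)_{jS}\bigl(\betahat_S^\dagger-\betastar_S\bigr)+\frac{\eta_n}{n}w_j\sgn(\betatilde_j)\right|\le 1 .
\end{align*}

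To verify this I would invoke Assumption~\ref{assumption:m}: for $j\in S^c$, $\sqrt{m}(\betatilde_j-\betastar_j)\overset{d}{\to}z_j$ gives $v_j=m^{\gamma_1/2}|z_j|^{-\gamma_1}(1+o_p(1))$ and $w_j=m^{-\gamma_2/2}|z_j|^{\gamma_2}(1+o_p(1))$; with $\tfrac1n\x_j^\top\varepsilon=O_p(n^{-1/2})$ and the convergence rate $a_n$ of $\betahat_S^\dagger$ (which is $\sqrt{m}$, $\sqrt{n}$, $n/\lambda_n$ in (ii), (v), (vi), established as in Theorem~\ref{theorem:adaptrlasso-consistency}), the left-hand side is, up to the $O_p(1)$ factor $|z_j|^{\gamma_1}$, at most
\begin{align*}
\frac{\sqrt{n}}{\lambda_n\sqrt{m^{\gamma_1}}}+\frac{n}{\lambda_n\sqrt{m^{\gamma_1}}\,a_n}+\frac{\eta_n}{\lambda_n\sqrt{m^{\gamma_1+\gamma_2}}} .
\end{align*}
A union bound over $j\in S^c$ reduces the claim to showing each term is $o(1)$ in the respective regime. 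The first vanishes exactly by the hypothesis $\sqrt{m^{\gamma_1}/n}\,\lambda_n\to\infty$ of (ii) and (v) (and by $\lambda_n/\sqrt{n}\to\infty$ in (vi)); the third follows from $\eta_n/(\lambda_n\sqrt{m^{\gamma_1+\gamma_2}})\to 0$, a stated hypothesis in (ii), which also holds under $\eta_n=O(\sqrt{n})$ in (v) and $\eta_n=o(\lambda_n)$ in (vi).

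The main obstacle is the middle term $n/(\lambda_n\sqrt{m^{\gamma_1}}\,a_n)$, which carries the estimation error of the restricted fit. In (ii) and (v) one has $a_n^{-1}=O(n^{-1/2})$, so this term is dominated by the first and nothing more is needed (in (ii) one also uses $n=O(m)$ from Assumption~\ref{assumption:m} to pass from $a_n=\sqrt{m}$). Regime (vi) is delicate: there $a_n=n/\lambda_n$ is slower than $\sqrt{n}$, the term equals $m^{-\gamma_1/2}$, and the estimation error no longer vanishes automatically; here the crude bound $\betahat_S^\dagger-\betastar_S=O_p(\lambda_n/n)$ makes this contribution $O_p\!\left(m^{-\gamma_1/2}|z_j|^{\gamma_1}\right)=o_p(1)$, the adaptive weight $v_j$ (of order $m^{\gamma_1/2}$, blowing up as $m\to\infty$) being exactly what rescues the bound. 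Tracking the leading term $\betahat_S^\dagger-\betastar_S=-\frac{\lambda_n}{2n}C_{SS}^{-1}d_S(1+o_p(1))$ with $d_j:=\sgn(\betastar_j)|\betastar_j|^{-\gamma_1}$, one sees that with $\gamma_1=0$ the contribution would instead be $|C_{jS}C_{SS}^{-1}\sgn(\betastar_S)|$, i.e.\ the Lasso irrepresentable condition in the slow-rate regime, so it is the adaptive weight that lets (vi) go through under Assumption~\ref{assumption:psd} alone. Checking that $\betahat^\dagger$ attains the quoted rates in each regime is routine and parallels the proof of Theorem~\ref{theorem:adaptrlasso-consistency}.
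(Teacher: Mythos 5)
Your proof is correct and rests on the same analytic core as the paper's: at each inactive coordinate $j\in S^c$ one compares the adaptive penalty level $\lambda_n v_j/n \asymp \lambda_n\sqrt{m^{\gamma_1}}/(n|z_j|^{\gamma_1})$ against the noise term $O_p(n^{-1/2})/n^{0}$, the estimation-error term, and the transfer-penalty term $\eta_n w_j/n$, and the three ratios you isolate are exactly the quantities the paper's proof sends to zero. The packaging differs: the paper argues by contradiction directly on the actual estimator (if $\betahat_j^\#\neq 0$ for $j\in S^c$, the KKT stationarity equation must hold, but the $\sqrt{m^{\gamma_1}/n}\,\lambda_n$ term diverges while the rest stay bounded), whereas you run a primal--dual witness argument through the oracle-restricted minimizer $\betahat^\dagger$ and verify strict dual feasibility on $S^c$. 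The witness route costs you an extra (deferred but routine) step — establishing the rate of $\betahat^\dagger$ — but buys a cleaner logical structure and, more importantly, forces you to confront the middle term $n/(\lambda_n\sqrt{m^{\gamma_1}}\,a_n)$ honestly in regime (vi): your observation that it equals $m^{-\gamma_1/2}$ there, so that the claim genuinely needs $m^{\gamma_1}\to\infty$ (and degenerates to the Lasso irrepresentable condition when $\gamma_1=0$), is a point the paper's proof passes over when it asserts that its condition $\sqrt{m^{\gamma_1}}\lambda_n/\sqrt{n}\gg \sqrt{n}/l$ ``reduces to'' $\sqrt{m^{\gamma_1}/n}\,\lambda_n\to\infty$ for $l=n/\lambda_n$. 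That refinement is worth keeping.
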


\begin{proof}
    The proof is given in \ref{proof:adaptrlasso-active-selection-consistency}.
\end{proof}

\begin{theorem}[Consistent Invariant Variable Selection for Adaptive Transfer Lasso]
\label{theorem:adaptrlasso-varying-selection-consistency}
For the cases (i), (ii), and (iv) in Theorem~\ref{theorem:adaptrlasso-consistency}, 
the Adaptive Transfer Lasso yields consistent invariant variable selection, that is,
\begin{align}
    P(\betahat_S = \betatilde_S) \to 1.
\end{align}
\end{theorem}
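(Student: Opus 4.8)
The plan is to argue directly from the subgradient (KKT) optimality conditions of the convex program \eqref{adaptrlasso}, feeding in the convergence rates already established in Theorem~\ref{theorem:adaptrlasso-consistency} and Corollary~\ref{theorem:adaptrlasso-convergence-rate}. The mechanism is simple: at a coordinate $j \in S$ the event $\{\betahat_j^\# \neq \betatilde_j\}$ forces the subgradient of $|\beta_j - \betatilde_j|$ at the optimum to be $\pm 1$, and that becomes impossible once the $\eta_n$-weighted penalty term dominates both the gradient of the quadratic loss and the $\lambda_n$-weighted penalty term.

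First I would collect facts that hold with probability tending to $1$. Since $C_n \to C \succ O$ (Assumption~\ref{assumption:psd}), the quadratic loss is strictly convex for all large $n$, so $\betahat_n^\#$ is the unique minimizer. Since $\betatilde$ is $\sqrt m$-consistent and $\betastar_j \neq 0$ for $j \in S$, for each such $j$ one has $\betatilde_j \to^p \betastar_j$, hence $\sgn(\betatilde_j) = \sgn(\betastar_j)$, $v_j \to^p |\betastar_j|^{-\gamma_1}$, and $w_j \to^p |\betastar_j|^{\gamma_2} > 0$; in particular $w_j$ is bounded below by a positive constant with probability tending to $1$. I would also use $\tfrac1n\x_j^\top\varepsilon = O_p(n^{-1/2})$ (its variance is $\sigma^2(C_n)_{jj}/n$) and, from Corollary~\ref{theorem:adaptrlasso-convergence-rate}, $\betahat_n^\# - \betastar = O_p(n^{-1/2})$ in each regime (i), (ii), (iv) — it is in fact $O_p(m^{-1/2})$ in (i)--(ii), and $m^{-1/2} = O(n^{-1/2})$ because $n/m \to r_0$. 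Writing $y - X\betahat_n^\# = \varepsilon - X(\betahat_n^\# - \betastar)$ and $\tfrac1n\x_j^\top X = (C_n)_{j\cdot}$, these yield, for $j \in S$,
\[
r_{n,j} := \Big| \tfrac{2}{n}\x_j^\top(y - X\betahat_n^\#) \Big| + \tfrac{\lambda_n}{n}v_j = O_p\!\big(\max(n^{-1/2},\,\lambda_n/n)\big).
\]

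Next, fixing $j \in S$, I would invoke stationarity: there exist $\hat s_j \in \partial|\betahat_j^\#|$ and $\hat t_j \in \partial|\betahat_j^\# - \betatilde_j|$ with $-\tfrac2n\x_j^\top(y - X\betahat_n^\#) + \tfrac{\lambda_n}{n}v_j\hat s_j + \tfrac{\eta_n}{n}w_j\hat t_j = 0$. On $\{\betahat_j^\# \neq \betatilde_j\}$ we have $|\hat t_j| = 1$, so together with $|\hat s_j| \le 1$ the identity forces $\tfrac{\eta_n}{n}w_j \le r_{n,j}$. Hence, fixing any constant $0 < c < |\betastar_j|^{\gamma_2}$,
\[
P\big(\betahat_j^\# \neq \betatilde_j\big) \le P\big(w_j < c\big) + P\big(r_{n,j} / (\eta_n/n) \ge c\big),
\]
and both terms tend to $0$ provided $\eta_n/n$ dominates $\max(n^{-1/2},\lambda_n/n)$, i.e. $\eta_n/\sqrt n \to \infty$ and $\eta_n/\lambda_n \to \infty$. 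A union bound over the finitely many $j \in S$ then gives $P(\betahat_S^\# = \betatilde_S) \to 1$.

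The remaining step is to verify $\eta_n/\sqrt n \to \infty$ and $\eta_n/\lambda_n \to \infty$ in each of the three regimes, using $m^{\gamma_1}, m^{\gamma_2}, m^{\gamma_1 + \gamma_2} \ge 1$: in case (i), $\eta_n \gg \sqrt{n m^{\gamma_2}} \ge \sqrt n$ and $\eta_n \gg \sqrt{m^{\gamma_1+\gamma_2}}\lambda_n \ge \lambda_n$; in case (ii), both $\eta_n/\sqrt n \to \infty$ and $\eta_n/\lambda_n \to \infty$ are hypotheses; in case (iv), $\eta_n/\sqrt n \to \infty$ is a hypothesis while $\sqrt{m^{\gamma_1}/n}\,\lambda_n \to \lambda_1$ forces $\lambda_n = O(\sqrt{n/m^{\gamma_1}}) = O(\sqrt n)$, so $\lambda_n/\eta_n \to 0$. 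I expect the main obstacle to be organizational rather than analytic: keeping straight which of the several competing order conditions of Theorem~\ref{theorem:adaptrlasso-consistency} is binding in each regime, and being careful that the rate input $\betahat_n^\# - \betastar = O_p(n^{-1/2})$ — not merely consistency $\betahat_n^\# \to^p \betastar$ — is genuinely available, since without a rate the bound on $r_{n,j}$ is too crude whenever $\eta_n = o(n)$.
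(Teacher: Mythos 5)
Your proposal is correct and follows essentially the same route as the paper's proof: both argue from the KKT/stationarity conditions at a coordinate $j \in S$, observe that on the event $\{\betahat_j^\# \neq \betatilde_j\}$ the subgradient of the $\eta_n$-weighted term has modulus one and is therefore dominated once $\eta_n/\sqrt{n} \to \infty$ and $\eta_n/\lambda_n \to \infty$, and then check these order conditions in regimes (i), (ii), and (iv) exactly as you do. Your unified treatment via $|\hat s_j| \le 1$ (rather than the paper's split into $\betahat_j \neq 0$ and $\betahat_j = 0$) and your explicit note that the $O_p(n^{-1/2})$ rate, not mere consistency, is what controls the loss-gradient term are minor streamlinings of the same argument.
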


\begin{proof}
    The proof is given in \ref{proof:adaptrlasso-varying-selection-consistency}.
\end{proof}

Theorems~\ref{theorem:adaptrlasso-active-selection-consistency} and \ref{theorem:adaptrlasso-varying-selection-consistency} imply that both active/invariant variable selection consistency hold in the case (ii).
Hence, we have the following corollary.

\begin{corollary}[Oracle Region for Adaptive Transfer Lasso]
\label{cor:adaptrlasso-oracle-region}
For the case (ii) in Theorem~\ref{theorem:adaptrlasso-consistency},
the Adaptive Transfer Lasso estimator satisfies
\begin{itemize}
\item $\sqrt{m}$-consistent: $\sqrt{m} (\betahat_n^\# - \betastar)$ converges to some distributions,
\item consistent active variable selection: $\hat{S}_n^\# = S$ with probability tending to $1$, 
\item consistent invariant variable selection: $\betahat_S = \betatilde_S$ with probability tending to $1$.
\end{itemize}
\end{corollary}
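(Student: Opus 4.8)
The plan is to obtain the corollary directly, as a bookkeeping consequence of the three preceding results specialized to the hyperparameter regime of case~(ii) of Theorem~\ref{theorem:adaptrlasso-consistency}; no new analytic input is needed. In detail: the $\sqrt{m}$-consistency assertion is exactly the conclusion of Theorem~\ref{theorem:adaptrlasso-consistency}(ii), which states that $\sqrt{m}(\betahat_n^\# - \betastar)$ converges in distribution (to $z_j$ for $j\in S$ and to $0$ for $j\in S^c$) and is therefore in particular $O_p(1)$; the consistent active variable selection assertion $P(\hat{S}_n^\# = S)\to 1$ is the case~(ii) statement of Theorem~\ref{theorem:adaptrlasso-active-selection-consistency}; and the consistent invariant variable selection assertion $P(\betahat_S = \betatilde_S)\to 1$ is the case~(ii) statement of Theorem~\ref{theorem:adaptrlasso-varying-selection-consistency}. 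So, granting those three theorems, the corollary follows by citing all three under the common hypothesis that the hyperparameters lie in region~(ii).

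The one point genuinely worth verifying is that region~(ii) is non-vacuous, so that the corollary is not an empty statement; this amounts to exhibiting sequences $(\lambda_n,\eta_n)$ satisfying simultaneously $\sqrt{m^{\gamma_1}/n}\,\lambda_n\to\infty$, $\eta_n/\sqrt{n}\to\infty$, $\eta_n/\lambda_n\to\infty$, $\eta_n/(\sqrt{m^{\gamma_1+\gamma_2}}\lambda_n)\to 0$, and $\sqrt{m^{\gamma_1}}\,\lambda_n/\eta_n\to\rho_0\ge 0$. I would simply write one family down: fix $\rho_0>0$ and any $\ell_n\to\infty$ with $\ell_n=o(\sqrt{m^{\gamma_2}})$, and set
\[
\eta_n=\sqrt{n}\,\ell_n,\qquad \lambda_n=\frac{\rho_0\,\eta_n}{\sqrt{m^{\gamma_1}}}.
\]
Then $\sqrt{m^{\gamma_1}}\lambda_n/\eta_n=\rho_0$ by construction, $\eta_n/\sqrt{n}=\ell_n\to\infty$, $\sqrt{m^{\gamma_1}/n}\,\lambda_n=\rho_0\ell_n\to\infty$, $\eta_n/\lambda_n=\sqrt{m^{\gamma_1}}/\rho_0\to\infty$ since $m\to\infty$, and $\eta_n/(\sqrt{m^{\gamma_1+\gamma_2}}\lambda_n)=1/(\rho_0\sqrt{m^{\gamma_2}})\to 0$; note in passing that the last two displayed conditions of region~(ii) are jointly satisfiable only when $m^{\gamma_2}\to\infty$, i.e.\ $\gamma_2>0$, which is precisely what makes the non-trivial weight $w_j=|\betatilde_j|^{\gamma_2}$ indispensable here. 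Hence region~(ii) has non-empty interior and the corollary has content.

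Finally, it is worth recording where the real work sits---in the three cited theorems, not in the corollary. Were I to prove those from scratch, for the distributional claim of case~(ii) I would recenter and rescale by $\sqrt{m}$, form the convex objective $V_n(u)$ in the variable $u=\sqrt{m}(\beta-\betastar)$, show $V_n$ converges pointwise to a limit whose unique minimizer is $u_{S^c}=0$, $u_S=z_S$, and then invoke the argmin continuous-mapping theorem for convex processes; the two selection statements would then follow by translating the KKT stationarity (subgradient) conditions at $\betahat_n^\#$ into events and bounding their probabilities with this distributional limit. The main obstacle in that program---again, not in the present corollary---is the $\sqrt{m}$ rather than $\sqrt{n}$ scaling: after the faster rescaling the Gaussian score term is washed out of $V_n$, so the limiting minimizer is pinned down purely by the competition between the adaptive term on $S^c$ (whose rescaled weight behaves like $\sqrt{m^{\gamma_1}/n}\,\lambda_n\to\infty$ once $v_j=|\betatilde_j|^{-\gamma_1}\asymp m^{\gamma_1/2}$ on $S^c$ is accounted for, forcing $\betahat_{S^c}=0$) and the transfer term on $S$ (whose rescaled weight behaves like $\eta_n/\sqrt{n}\to\infty$ with $w_j=|\betatilde_j|^{\gamma_2}\to|\betastar_j|^{\gamma_2}>0$, forcing $\betahat_S=\betatilde_S$), while all the cross terms vanish; one has to verify that these random, $\betatilde$-dependent weights scale at exactly the advertised rates on the two index sets and that the stated hyperparameter conditions are exactly what balances them. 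Given the three theorems, the corollary itself is immediate.
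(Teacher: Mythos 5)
Your proof is correct and matches the paper's own (implicit) argument: the corollary is obtained exactly by intersecting the conclusions of Theorem~\ref{theorem:adaptrlasso-consistency}(ii), Theorem~\ref{theorem:adaptrlasso-active-selection-consistency}, and Theorem~\ref{theorem:adaptrlasso-varying-selection-consistency}, all of which apply in region (ii). Your additional check that region (ii) is non-empty (and your observation that the conditions $\eta_n/\lambda_n \to \infty$, $\eta_n/(\sqrt{m^{\gamma_1+\gamma_2}}\lambda_n) \to 0$, and $\sqrt{m^{\gamma_1}}\lambda_n/\eta_n \to \rho_0 < \infty$ jointly force $\gamma_2 > 0$) is a worthwhile supplement the paper does not spell out.
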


Corollary~\ref{cor:adaptrlasso-oracle-region} shows that the Adaptive Transfer Lasso incorporates the advantages of both the Adaptive Lasso and the Transfer Lasso.
The hyperparameters $\gamma_1$ and $\gamma_2$ play a crucial role in this property.
If $\gamma_1 = \gamma_2 = 0$, then the region (ii) disappears and it reduces to the Transfer Lasso.
If either $\gamma_1$ or $\gamma_2$ is positive, then the region (ii) appears and it holds $\sqrt{m}$-consistency and active/invariant variable selection consistency.
Both $\gamma_1$ and $\gamma_2$ contribute to expanding the region (ii).
One possible advantage of using $\gamma_2 > 0$ compared to $\gamma_1 > 0$ is that it is stable since there is no division by zero even when the initial estimator is sparse and the values are exactly zero.

Figure~\ref{fig:adaptrlasso-consistent-selection-region} are the phase diagrams that demonstrate the relation between hyperparameters $(\lambda_n, \eta_n)$ and their asymptotic properties for the Adaptive Transfer Lasso.
We see that the region (ii) is the intersection of the part with $\sqrt{m}$-consistency and the part with active/invariant variable selection consistency.
Such a region exists neither in the Adaptive Lasso nor in the Transfer Lasso.

\section{Empirical Results}

We first empirically validate the theoretical properties.
We then compare the performance of various methods through extensive simulations.
Appendix~\ref{sec:additional-experiments} provides additional experimental results.
The codes are available at \url{https://github.com/tkdmah/trlasso}.

\subsection{Empirical Validation of Theory}

In this subsection, we empirically validate the theoretical results for the Transfer Lasso and the Adaptive Transfer Lasso.

We first evaluated the $\ell_2$ norm of the estimation error with respect to sample size.
Theoretically, the convergence rate is $\sqrt{m}$, $\sqrt{n}$, and so on, depending on the hyperparameters.
Assuming the convergence rate is $l(n)$, we have $E[\log \| \betahat - \betastar \|_2] = \text{const.} - \log l(n) $, since $l(n) \| \betahat - \betastar \|_2$ converges to some distribution.
Therefore, by drawing a graph with $E[\log \| \betahat - \betastar \|_2]$ on the vertical axis and $\log n$ on the horizontal axis, the convergence rate can be empirically calculated from its slope.
Assuming $m = n^2$, the slope is $-1/2$ when $\sqrt{n}$-consistent, and $-1$ when $\sqrt{m}$-consistent.

We generated data by $y_i = x_i^\top \betastar + \varepsilon_i ~ (i = 1, \dots, n)$
where $x_i (\in \mathbb{R}^{10}) \overset{i.i.d.}{\sim} \mathcal{N}(0, \Sigma)$, $\Sigma_{jk} = 0.5^{|j - k|}$, 
$\varepsilon_i \overset{i.i.d.}{\sim} \mathcal{N}(0, \sigma^2)$, $\sigma = 1$, and
$\betastar = [3, 1.5, 0, 0, 2, 0, 0, \dots, 0]^\top (\in \mathbb{R}^{10})$ (as in \citep{zou2006adaptive}).
We generated source data of size $m$ and target data of size $n$ with $m = n^2$ and $n = 20, 50, 100, 200, 500, 1000, 2000, 5000$.
The initial estimators were obtained by the ordinary least squares using source data.
The hyperparameters for each method were determined as follows according to Figures~\ref{fig:lasso_lambda}, \ref{fig:trlasso-consistent-region}, and \ref{fig:adaptrlasso-consistent-selection-region}.
\begin{itemize}
    \item Lasso: $\lambda_n = n^{1/4}$ (i) and $\lambda_n = n^{3/4}$ (ii).
    \item Adaptive Lasso: $\gamma = 1$.
    $\lambda_n = n^{-1}$ (i), 
    $n^{1/4}$ (ii), and 
    $n^{3/4}$ (iii).
    \item Transfer Lasso: $(\lambda_n, \eta_n) = (n^{1/2}, n^{3/4})$ (i), 
    $(n^{1/4}, n^{1/4})$ (ii), and 
    $(n^{3/4}, n^{1/2})$ (iii).
    \item Adaptive Transfer Lasso: $\gamma_1 = \gamma_2 = 1$. 
    $(\lambda_n, \eta_n) = (n^{-1/2}, n^2)$ (i), 
    $(n^{1/2}, n^{3/2})$ (ii), 
    $(n ^{-1}, n^{1/4})$ (iii), 
    $(n^{-1}, n)$ (iv), 
    $(1, n^{1/4})$ (v), and
    $(n^{3/4}, n^{1/2})$ (vi).
\end{itemize}
We performed each experiment ten times and evaluated their averages and standard errors.

\begin{figure*}[t]
  \centering
  \includegraphics[width=12cm]{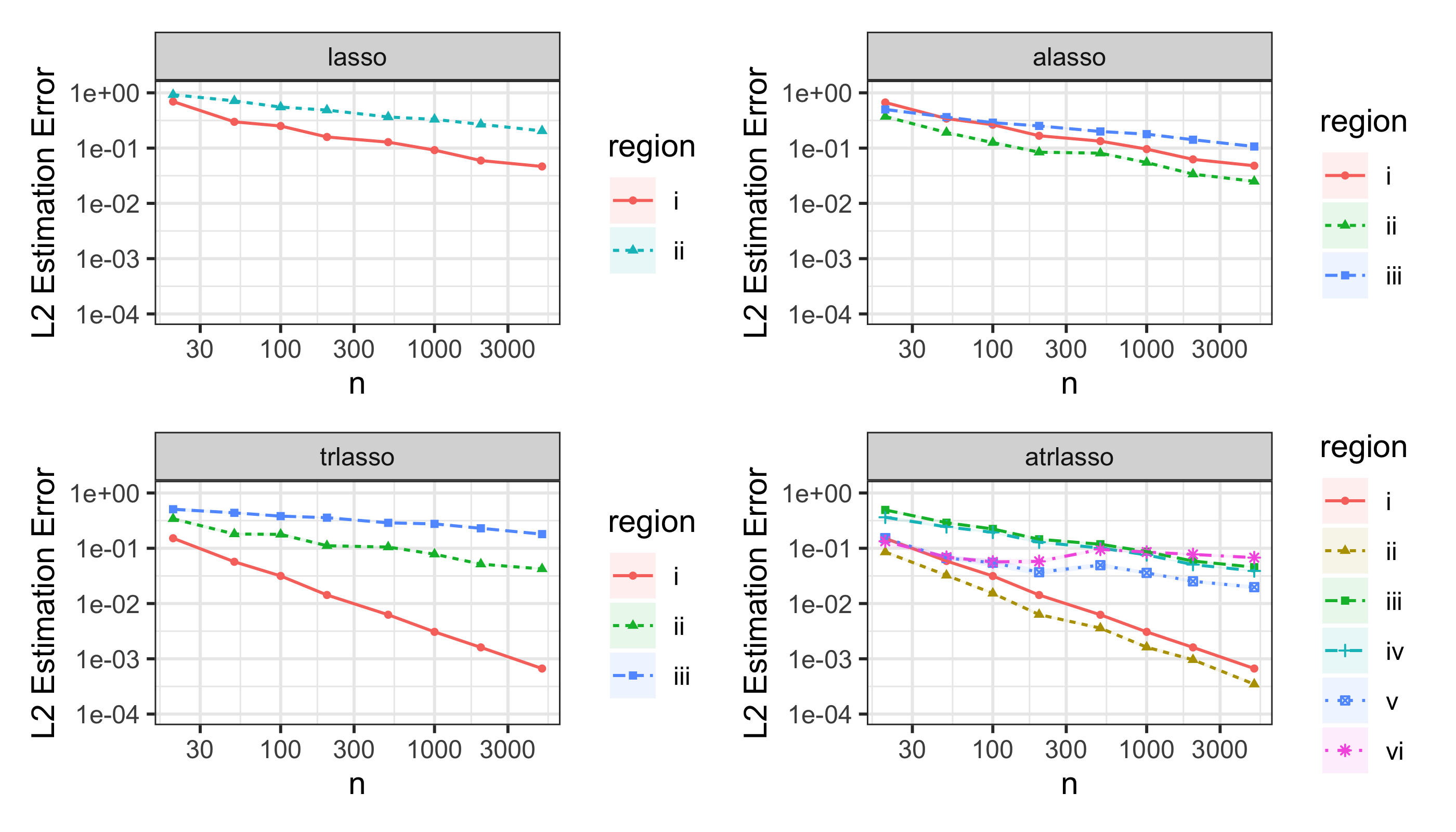}
  \caption{$\ell_2$ estimation errors for the Lasso (top left), the Adaptive Lasso (top right), the Transfer Lasso (bottom left), and the Adaptive Transfer Lasso (bottom right) with respect to sample size. The convergence rates of the Transfer Lasso in the region (i) and the Adaptive Transfer Lasso in the region (i) and (ii) are $\sqrt{m}$ (the slopes are $-1$), whereas the others are $\sqrt{n}$ or less (the slope are $-1/2$ or greater).}
  \label{fig:empirical-convergence}
\end{figure*}

Figure~\ref{fig:empirical-convergence} shows the $\ell_2$ estimation errors for the Lasso, the Adaptive Lasso, the Transfer Lasso, and the Adaptive Transfer Lasso with respect to sample size.
The slopes of the Transfer Lasso in the region (i) and the Adaptive Transfer Lasso in the region (i) and (ii) are $-1$, indicating that the convergence rate was $n = \sqrt{m}$.
For the other methods or regions, the slopes are $-0.5$ or greater, which confirms that the convergence rate is $\sqrt{n}$ or less.
These results were fully consistent with Theorems~\ref{theorem:trlasso-consistency}, \ref{theorem:adaptrlasso-consistency}, and \ref{theorem:adaptrlasso-active-selection-consistency}.

We can observe two potential advantages of Adaptive Transfer Lasso.
First, although the convergence rate (for $n \geq 500$) is $\sqrt{n}$ in regions (v) and (vi), the estimation error is on the line of the convergence rate $\sqrt{m}$ for $n < 500$.
In other words, even in regions where the convergence rate is $\sqrt{n}$, the estimation error can be reduced when the sample size is small.
Second, the estimation error is consistently smaller in the region (ii) than in the region (i), although the convergence rates are comparable between the two regions.
This might be because the estimator in (i) is more likely to be perfectly matched to the initial estimator, whereas the estimator in (ii) is more likely to be matched to the initial estimator for active variables, but not for the inactive variables, and is more likely to be zero.

Having found that the convergence rate can be empirically evaluated accurately, we next empirically drew phase diagrams for the Transfer Lasso and the Adaptive Transfer Lasso as in Figure~\ref{fig:adaptrlasso-consistent-selection-region}.
The experimental setup was the same as in the previous subsection and $m = n^2$.
The hyperparameters $\lambda_n$ and $\eta_n$ were set to $n^\delta$ with $\delta = -2, -1.75, -1.5, \dots, 1.75, 2$, respectively.
The convergence rates were calculated from the slopes of the $\ell_2$ errors for $n=1000$ and $n=5000$.
We plotted the exponential parts of $n$ in the convergence rates, taking the value $1$ if $\sqrt{m}$-consistent and $0.5$ if $\sqrt{n}$-consistent.
Active variable selection consistency was evaluated as the ratio of correctly estimated zeros/non-zeros among all variables for $n = 5000$.
Invariant variable selection consistency was evaluated by the ratio of variables that did not change from the initial estimator among the active variables for $n = 5000$.

\begin{figure*}[t]
    \centering
    \begin{tikzpicture}
        \node[anchor=south west,inner sep=0] (image) at (0,0) {\includegraphics[width=6cm]{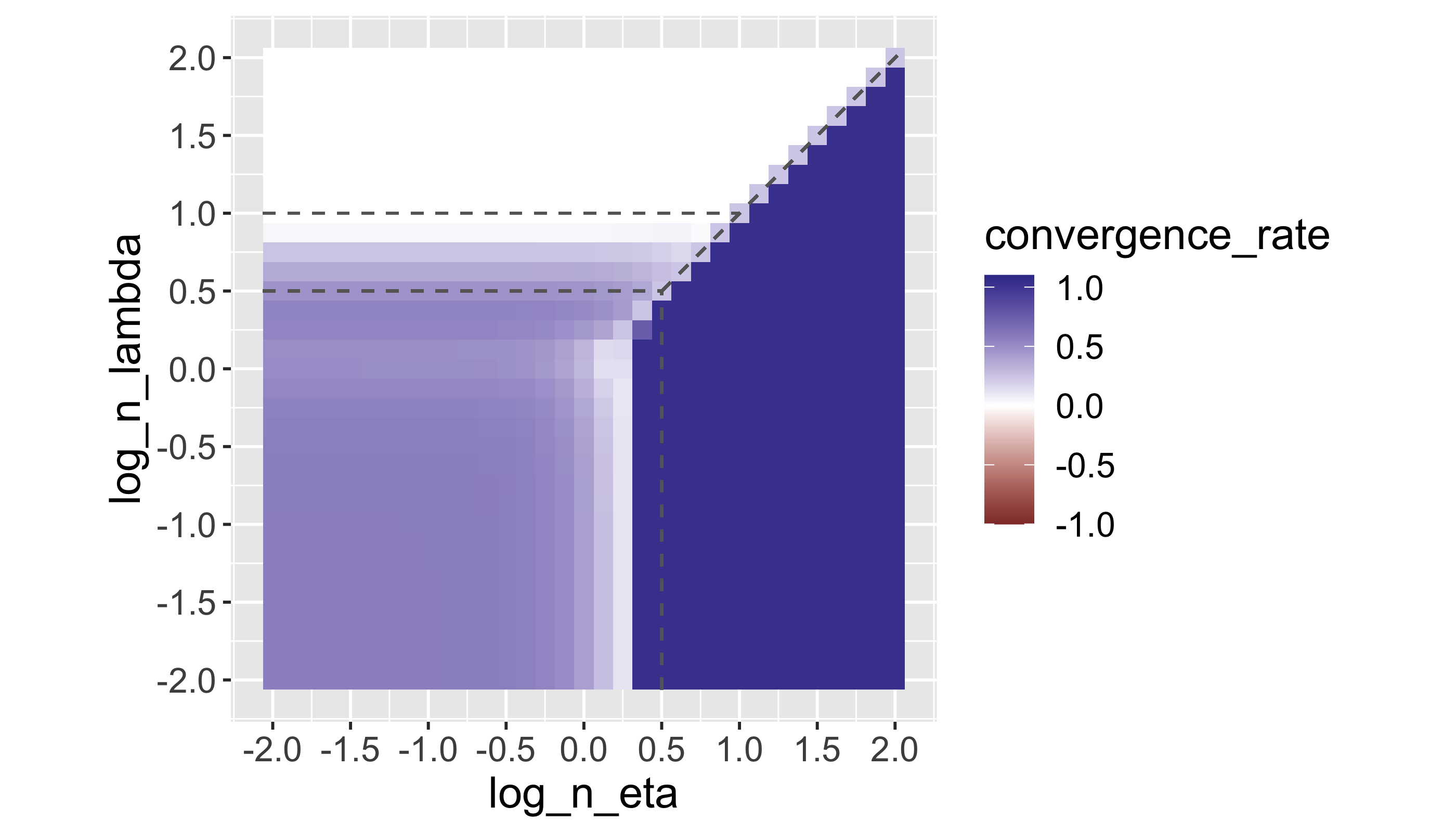}};
        \begin{scope}[x={(image.south east)},y={(image.north west)}]
            \node[text=gray] at (0.53, 0.38) {(i)};
            \node at (0.3, 0.38) {(ii)};
            \node at (0.3, 0.69) {(iii)};
        \end{scope}
    \end{tikzpicture}
    \begin{tikzpicture}
        \node[anchor=south west,inner sep=0] (image) at (0,0) {\includegraphics[width=6cm]{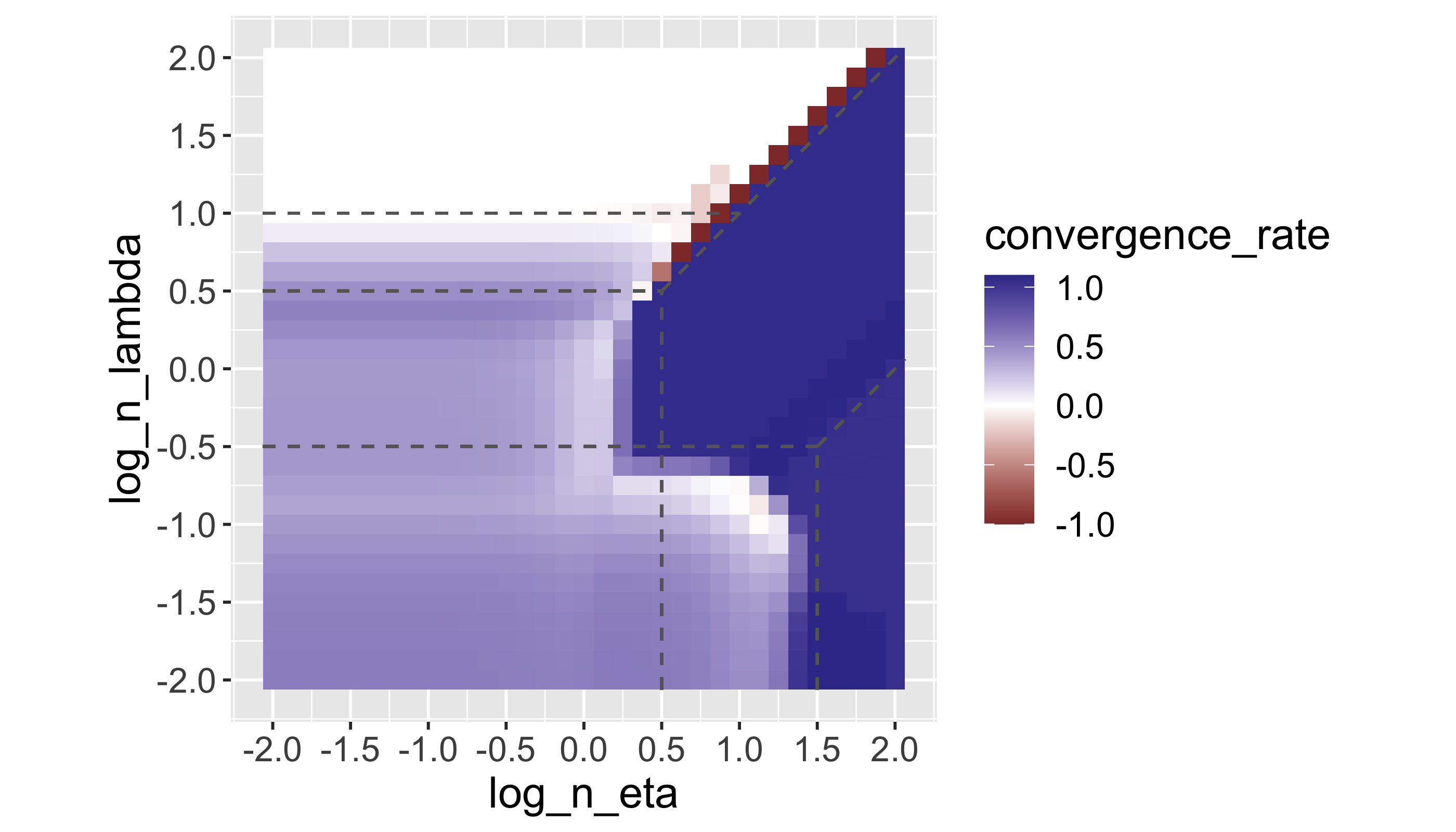}};
        \begin{scope}[x={(image.south east)},y={(image.north west)}]
            \node[text=gray] at (0.59, 0.3) {(i)};
            \node[text=gray] at (0.51, 0.54) {(ii)};
            \node at (0.3, 0.3) {(iii)};
            \node at (0.5, 0.3) {(iv)};
            \node at (0.3, 0.54) {(v)};
            \node at (0.3, 0.69) {(vi)};
        \end{scope}
    \end{tikzpicture}
    \begin{tikzpicture}
        \node[anchor=south west,inner sep=0] (image) at (0,0) {\includegraphics[width=6cm]{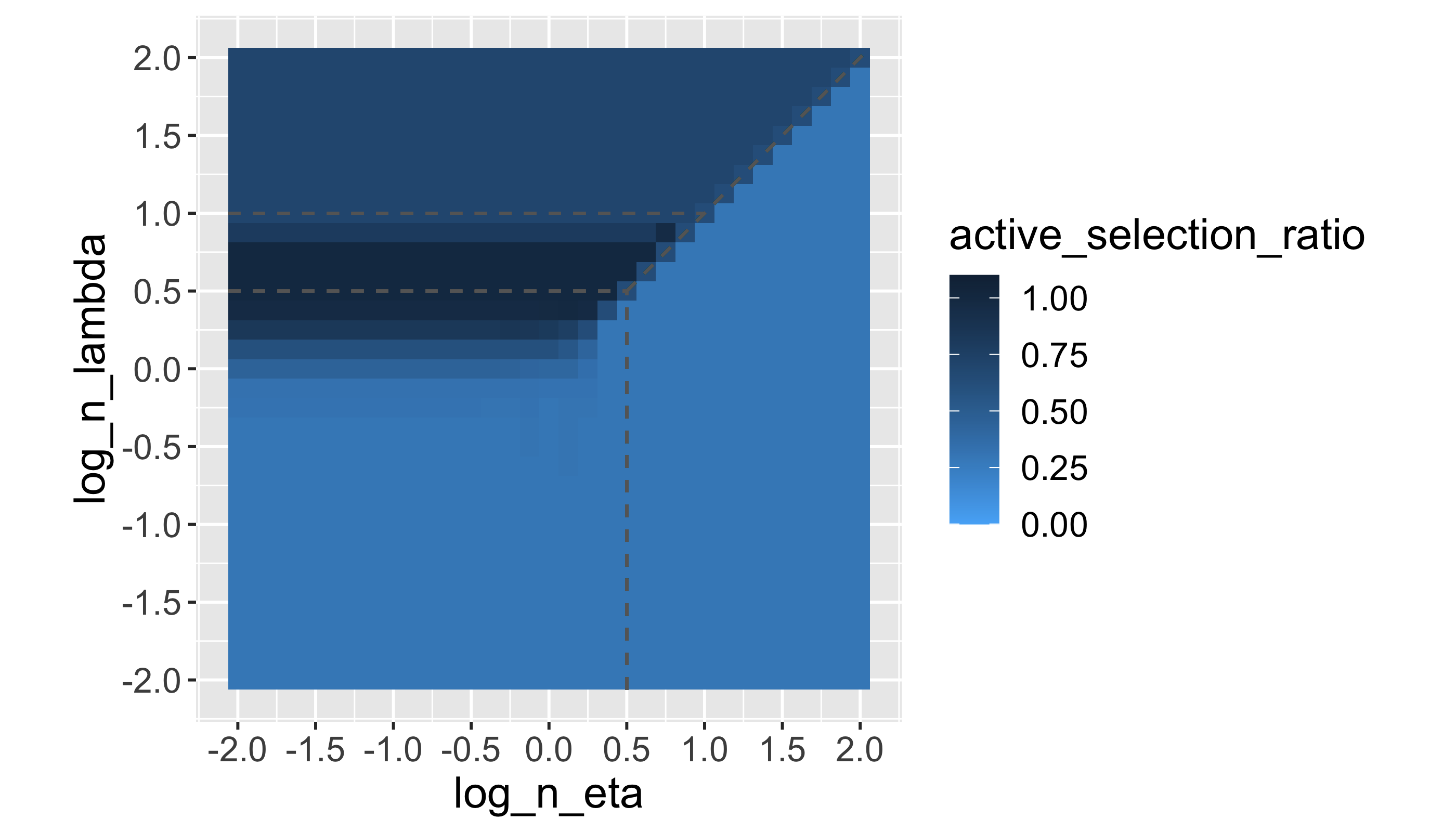}};
        \begin{scope}[x={(image.south east)},y={(image.north west)}]
            \node at (0.51, 0.38) {(i)};
            \node at (0.29, 0.38) {(ii)};
            \node[text=gray] at (0.29, 0.69) {(iii)};
        \end{scope}
    \end{tikzpicture}
    \begin{tikzpicture}
        \node[anchor=south west,inner sep=0] (image) at (0,0) {\includegraphics[width=6cm]{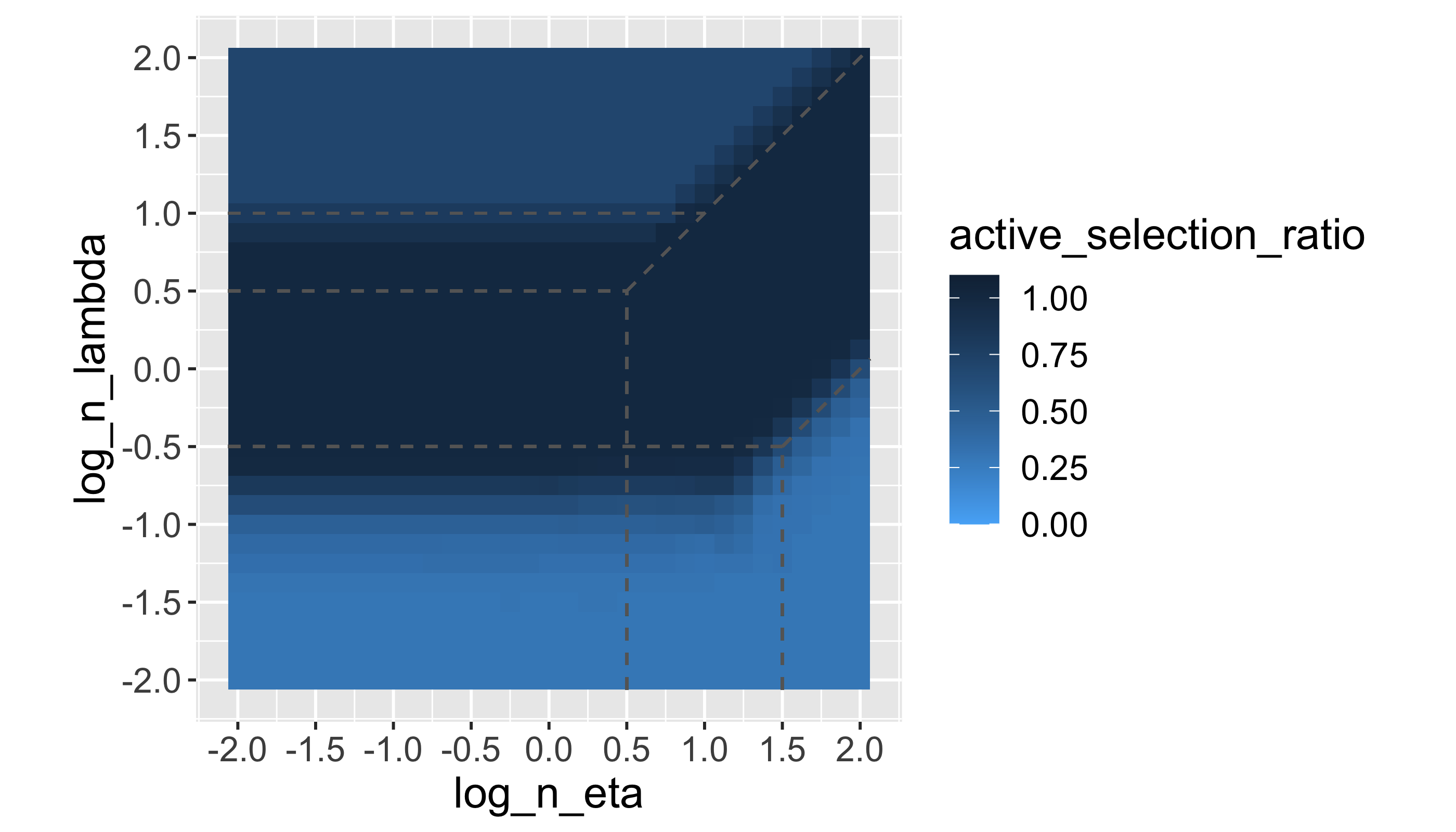}};
        \begin{scope}[x={(image.south east)},y={(image.north west)}]
            \node at (0.57, 0.3) {(i)};
            \node[text=gray] at (0.49, 0.54) {(ii)};
            \node at (0.29, 0.3) {(iii)};
            \node at (0.48, 0.3) {(iv)};
            \node[text=gray] at (0.29, 0.54) {(v)};
            \node[text=gray] at (0.29, 0.69) {(vi)};
        \end{scope}
    \end{tikzpicture}
    \begin{tikzpicture}
        \node[anchor=south west,inner sep=0] (image) at (0,0) {\includegraphics[width=6cm]{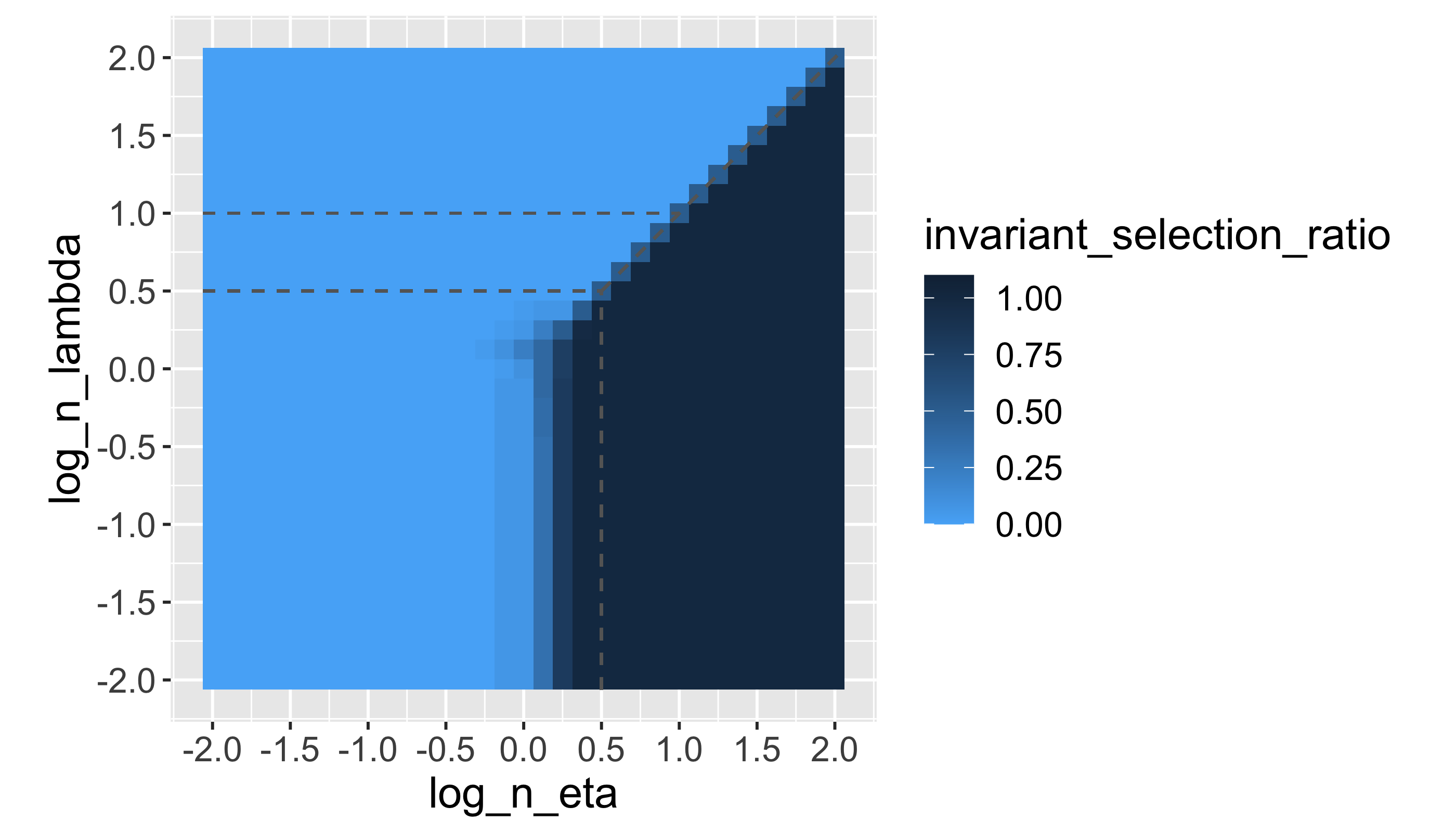}};
        \begin{scope}[x={(image.south east)},y={(image.north west)}]
            \node[text=gray] at (0.5, 0.38) {(i)};
            \node at (0.28, 0.38) {(ii)};
            \node at (0.28, 0.69) {(iii)};
        \end{scope}
    \end{tikzpicture}
    \begin{tikzpicture}
        \node[anchor=south west,inner sep=0] (image) at (0,0) {\includegraphics[width=6cm]{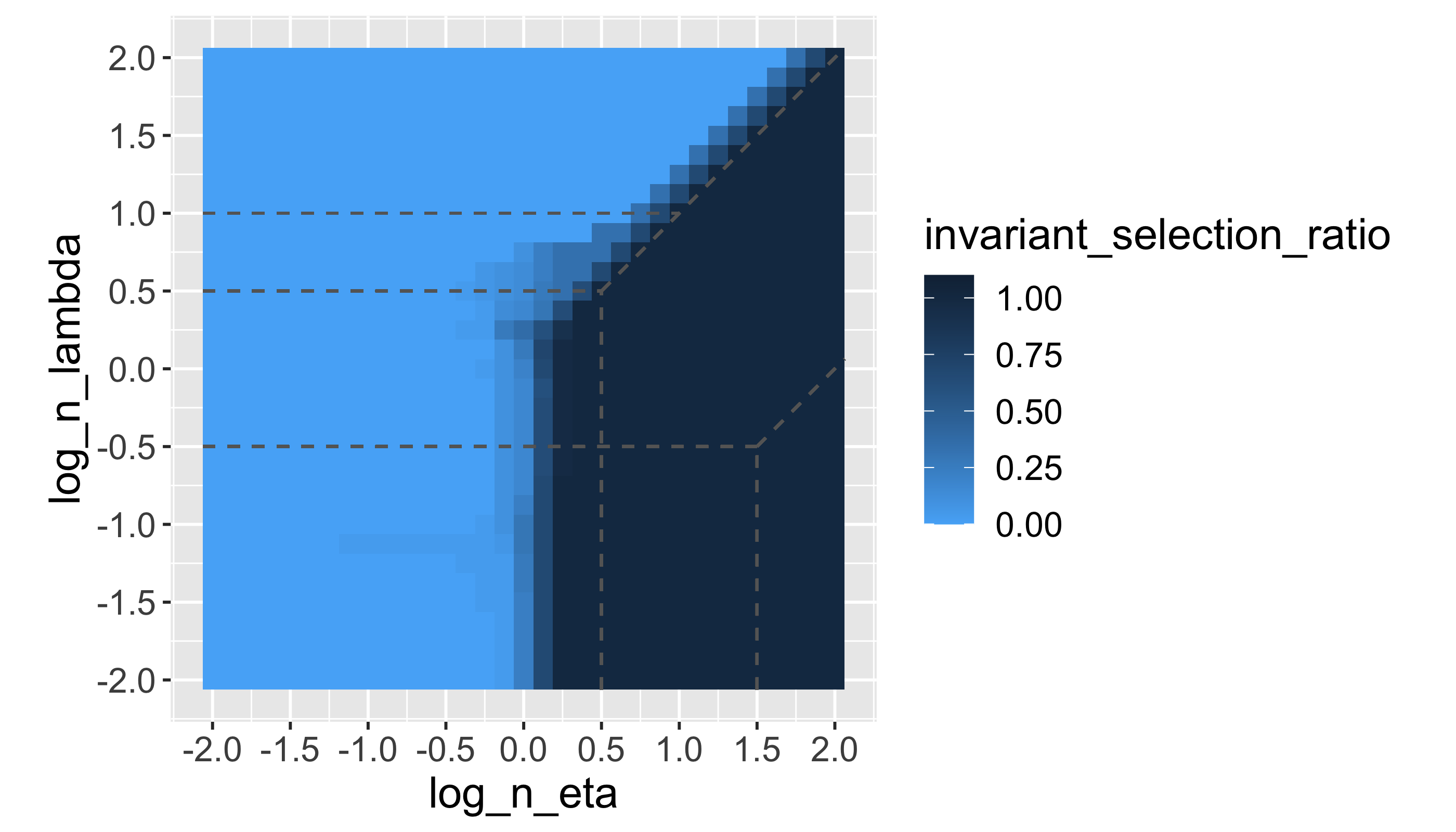}};
        \begin{scope}[x={(image.south east)},y={(image.north west)}]
            \node[text=gray] at (0.55, 0.3) {(i)};
            \node[text=gray] at (0.48, 0.54) {(ii)};
            \node at (0.28, 0.3) {(iii)};
            \node[text=gray] at (0.47, 0.3) {(iv)};
            \node at (0.28, 0.54) {(v)};
            \node at (0.28, 0.69) {(vi)};
        \end{scope}
    \end{tikzpicture}
  \caption{$\log$--$\log$ phase diagrams of convergence rate (top), active variable selection ratio (middle), and invariant variable selection ratio (bottom) for the Transfer Lasso (left) and the Adaptive Transfer Lasso (right). These empirical results confirm the theoretical results of Figure~\ref{fig:trlasso-consistent-region} and \ref{fig:adaptrlasso-consistent-selection-region}.}
  \label{fig:empirical-phase-diagram}
\end{figure*}

Figure~\ref{fig:empirical-phase-diagram} illustrates the empirical phase diagrams of $\log$--$\log$ scale for the Transfer Lasso and the Adaptive Transfer Lasso.
As Theorems~\ref{theorem:trlasso-consistency}--\ref{theorem:trlasso-invariant-selection-consistency} suggest, the Transfer Lasso achieves both $\sqrt{m}$-consistency and invariant variable selection consistency in the lower right region (i), but does not have active variable selection consistency.
Other regions also do not satisfy these properties simultaneously.
On the other hand, in the Adaptive Transfer Lasso, the upper right region (ii) satisfies the properties of $\sqrt{m}$-consistency and active/invariant variable selection consistency.
The empirical convergence rates and active/invariant variable selection ratios well reproduce Theorems~\ref{theorem:adaptrlasso-consistency}, \ref{theorem:adaptrlasso-active-selection-consistency}, and \ref{theorem:adaptrlasso-varying-selection-consistency} in other regions as well.
These empirical results confirm the theoretical results (Theorems~\ref{theorem:trlasso-consistency}--\ref{theorem:adaptrlasso-varying-selection-consistency}, Figures~\ref{fig:lasso_lambda}--\ref{fig:adaptrlasso-consistent-selection-region}).

\subsection{Empirical Comparison of Methods}

\label{sec:sim-1}

In this subsection, we compare the methods in various experimental settings based on hyperparameter determination by cross-validation.
The experimental settings include various source/target data sample sizes, number of dimensions, signal-to-noise ratios, and initial estimators.
We mainly considered two cases: one with a large amount of source data and the other with the same amount of source data as the target data.

First, we suppose that we have a large amount of source data and its sample size is $m = 10000$.
The simulation setting follows the previous subsections.
We used 
$\sigma = 1, 3, 6, 10$;
$p=10, 20, 50, 100$; and
$n = 10, 20, 50, 100, 200, \dots, 5000, 10000$.

Initial estimators were obtained by the Lasso because the number of dimensions $p$ can be greater than sample size $n$ in this experiment.
We compared other initial estimators including Ridge, Ridgeless~\cite{belkin2020two,hastie2022surprises}, and Lassoless~\cite{mitra2019understanding,li2021minimum} in Appendix~\ref{sec:other-initial-estimators}.
The search spaces were $\gamma = 0.5, 1, 2$ for the Adaptive Lasso; $\alpha := \lambda_n / (\lambda_n + \eta_n) = 0.75, 0.5, 0.25$ for the Transfer Lasso; and $(\gamma_1, \gamma_2) = (0.5, 0.5), (1, 1), (2, 2)$, and $\alpha := \lambda_n / (\lambda_n + \eta_n) = 0.75, 0.5, 0.25$ for the Adaptive Transfer Lasso.
The hyperparameter $\lambda_n$ was determined by 10-fold cross validation with $\lambda_{\min} / \lambda_{\max} = 10^{-6}$ where $\lambda_{\max}$ is automatically determined by Theorem~4 in \cite{takada2020transfer}.
If $|\betatilde_j| \leq 10^{-3}$, then we set $|\betatilde_j| = 10^{-3}$ to avoid division by zero.

We evaluated the performance by two metrics:
$\ell_2$ norm for estimation error and F1 score for variable selection.
The F1 score is a harmonic average of precision and recall, where precision = (the number of correct selected variables) / (the number of selected variables) and recall = (the number of correct selected variables) / (the number of true active variables).
We used the F1 score because it allows us to evaluate the performance of variable selection even when there is an imbalance between the number of active and inactive variables.
We also evaluated other metrics in Appendix~\ref{sec:other-metrics}.
They included RMSE for prediction evaluation and sensitivity, specificity, positive predictive value, and the number of active variables for feature selection evaluation.

\begin{figure*}[t]
  \centering
  \includegraphics[width=10cm]{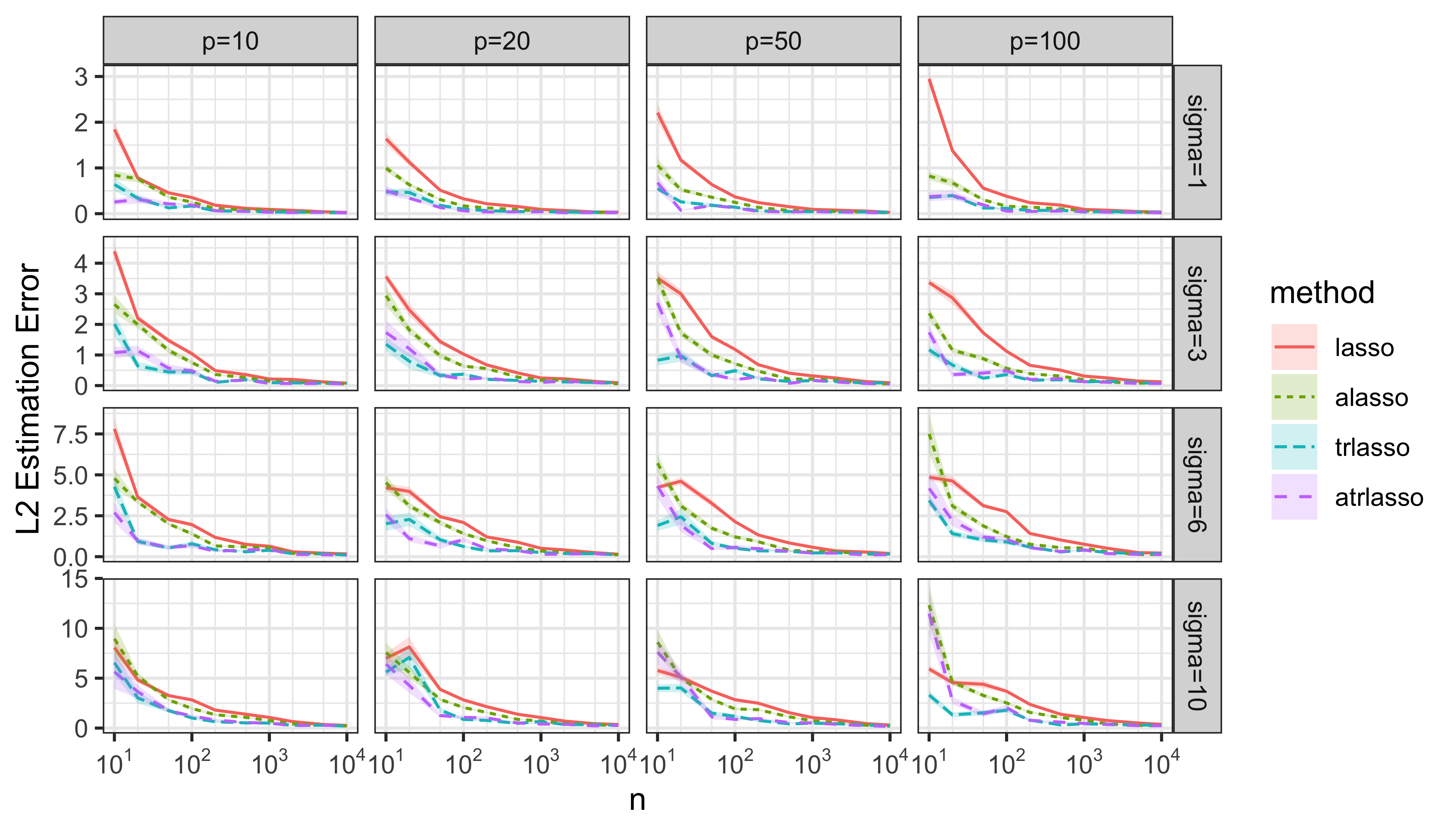}
  x\includegraphics[width=10cm]{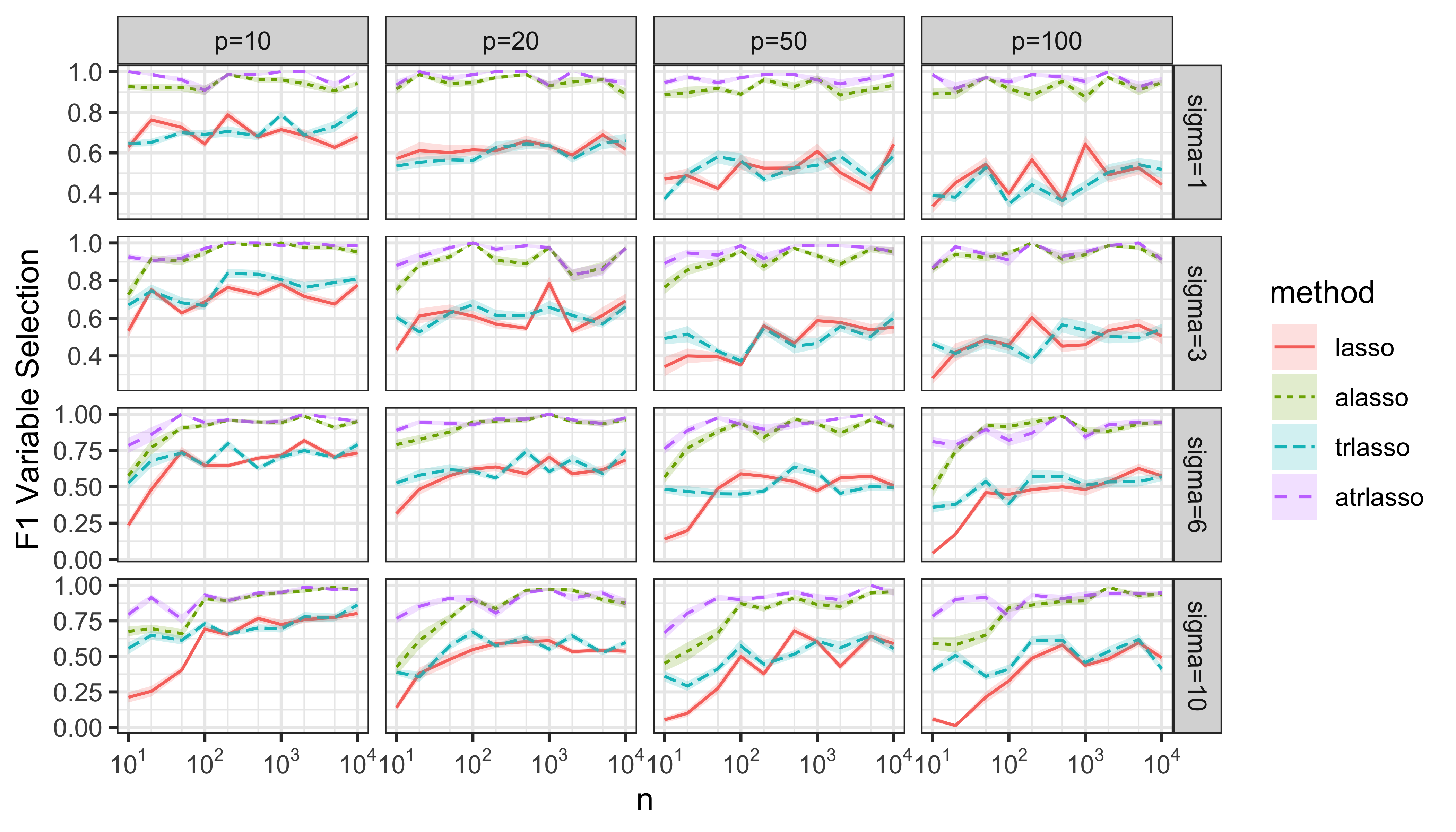}
  \caption{$\ell_2$ estimation errors (top) and variable selection F1-score (bottom) for a large amount of source data.}
  \label{fig:large-estimation}
\end{figure*}

The results are shown in Figure~\ref{fig:large-estimation}.
In terms of estimation errors, the Transfer Lasso and the Adaptive Transfer Lasso outperformed the other methods.
The Adaptive Lasso was superior to the Lasso, but it was inferior to the Transfer Lasso and the Adaptive Transfer Lasso.
In terms of variable selection, the Adaptive Lasso and the Adaptive Transfer Lasso outperformed the others, and the Adaptive Transfer Lasso was slightly superior to the Adaptive Lasso.
These results imply the superiority of the Adaptive Transfer Lasso with initial estimators using large amounts of source data.
The Adaptive Lasso, however, does not fully utilize the initial estimators in this setting.

\begin{figure*}[t]
  \centering
  \includegraphics[width=10cm]{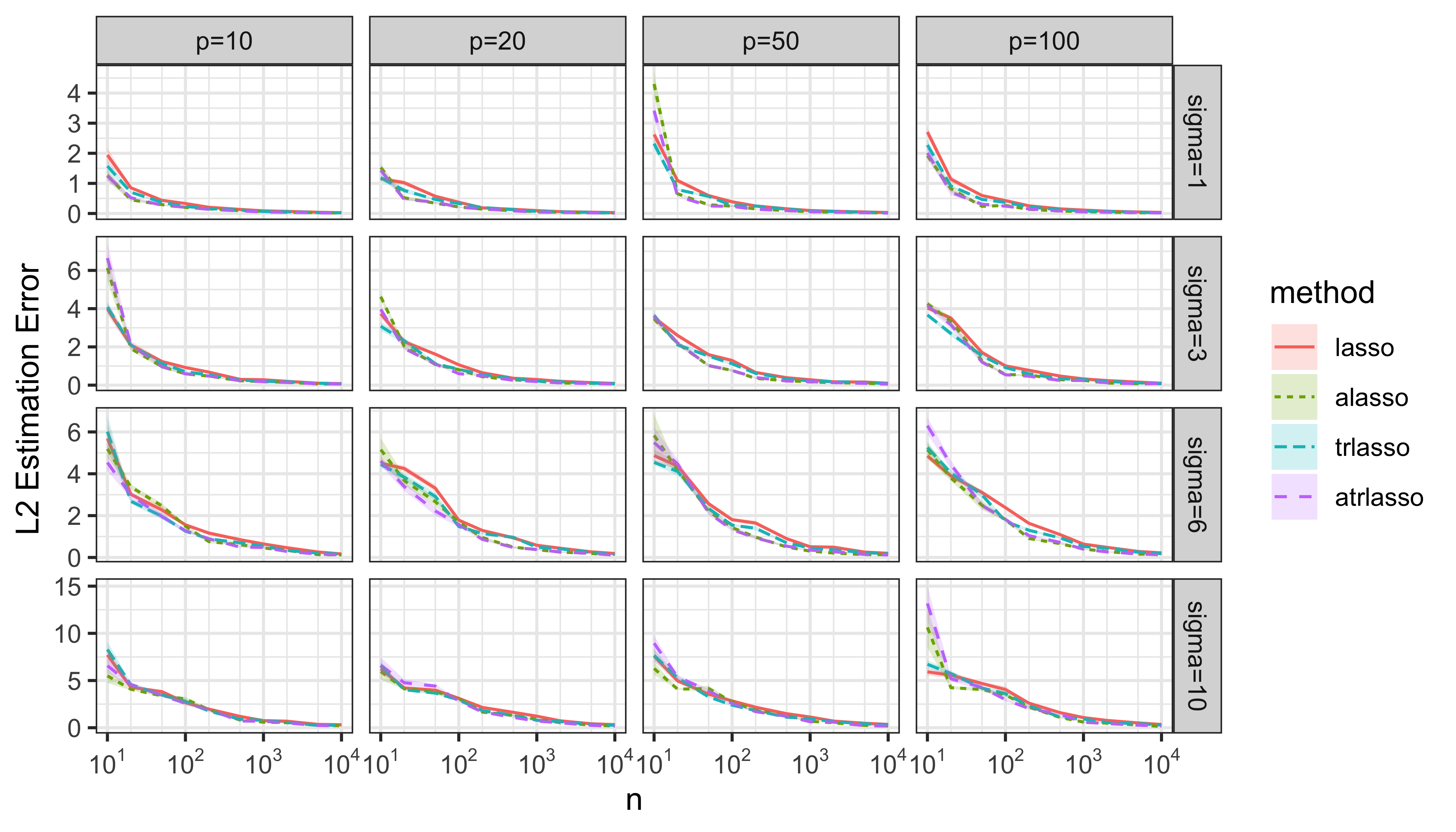}
  \includegraphics[width=10cm]{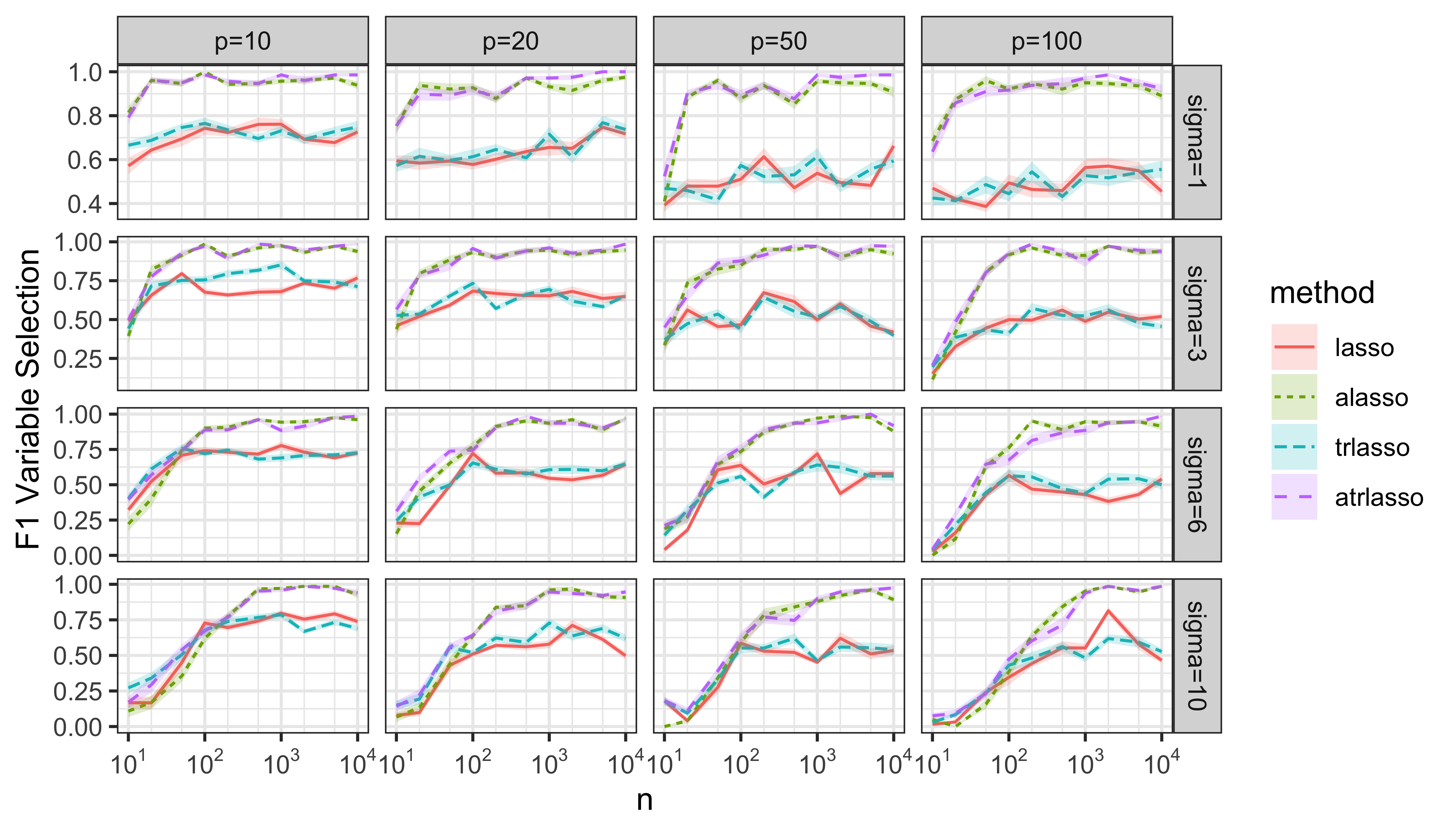}
  \caption{$\ell_2$ estimation errors (top) and variable selection F1-scores (bottom) for a medium amount of source data.}
  \label{fig:medium-estimation}
\end{figure*}

Next, we supposed that we have a medium amount of source data and its sample size is the same as the target data.
We used the same data generation process, comparison methods, and performance measurements as above.

The results are shown in Figure~\ref{fig:medium-estimation}.
All methods were comparable in terms of estimation errors, but in terms of variable selection, the Adaptive Lasso and the Adaptive Transfer Lasso were superior to those of the others.
The Adaptive Lasso and the Adaptive Transfer Lasso had similar performances for both estimation error and variable selection.
This is consistent with our theoretical analyses.

\section{Discussion}

We discuss additional comparisons among methods from two perspectives; regularization contours and prior distributions.
We also discuss future work.

\subsection{Regularization Contours}

The regularization contours help to intuitively capture the strength and pattern of regularization.
Figure~\ref{fig:contours} shows their contours with an initial estimator with a small value $\betatilde_1 = 0.5$ and a large value $\betatilde_2 = 2$.

The contours of the Adaptive Lasso are pointed at the coordinate axes (where some elements are zero) and are especially sharp where the initial estimator is small.
The contours of the Transfer Lasso are pointed at the points where some elements are zero or equal to the initial estimator, but they are not so sharp.
The contours for the Adaptive Transfer Lasso are pointed where some elements are zero or equal to the initial estimator, and the sharpness varies depending on the hyperparameters.
These observations indicate that the Adaptive Transfer Lasso flexibly changes the strength of regularization depending on the initial estimator.

\begin{figure*}[t]
  \centering
  \includegraphics[width=6cm]{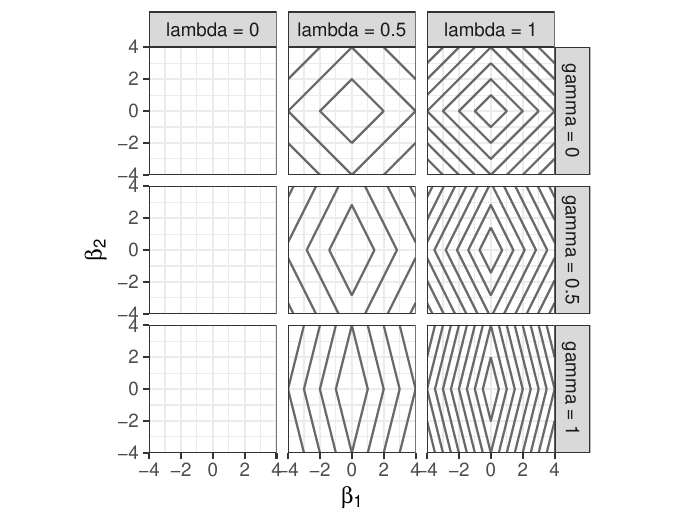}
  \includegraphics[width=6cm]{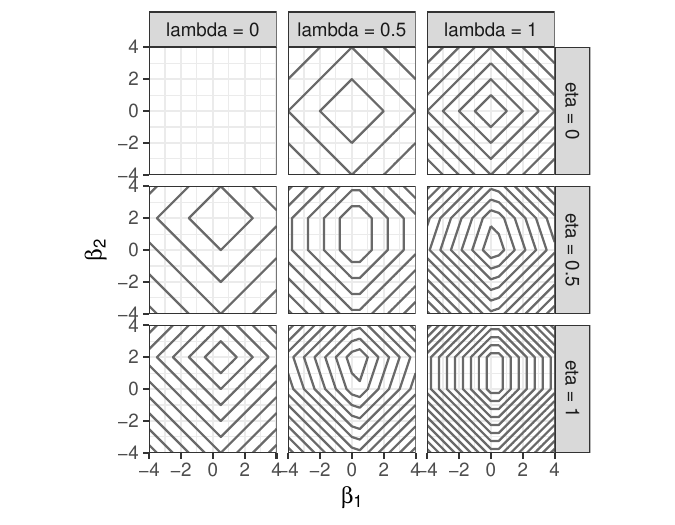}
  \includegraphics[width=6cm]{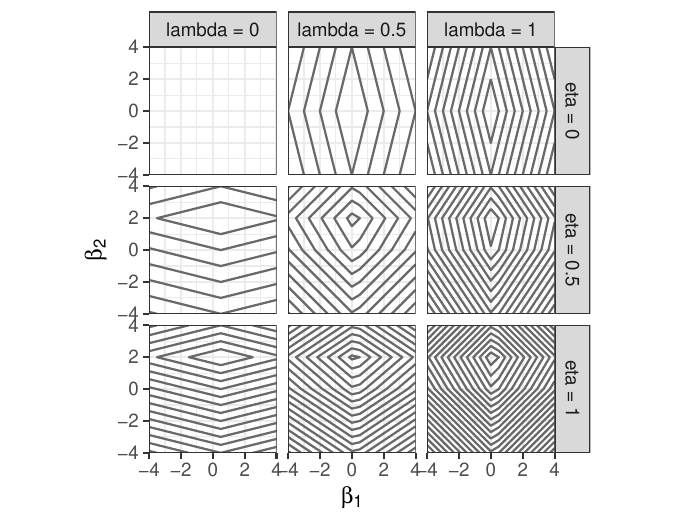}
  \caption{Regularization contours for the Adaptive Lasso~(top left), the Transfer Lasso~(top right), and the Adaptive Transfer Lasso~(bottom) with initial estimators $\betatilde = [0.5, 2]^\top$. Hyperparameters are $\lambda_n = 0, 0.5, 1$ (from left to right); $\eta_n = 0, 0.5, 1$ (from top to bottom); and $\gamma_1 = \gamma_2 = 1$ for the Adaptive Transfer Lasso.}
  \label{fig:contours}
\end{figure*}

\subsection{Prior Distribution}

In Bayesian perspectives, the Lasso regularization~\eqref{eq:lasso} can be seen as a negative log-likelihood of Laplace prior,
\begin{align}
    \lambda |\beta_j| = - \log P(\beta_j; \lambda) + const., ~
    P (z; \lambda) 
    := \frac{\lambda}{2} \exp \left( - \lambda |z| \right).
\end{align}
A similar view is possible for the Adaptive Lasso, the Transfer Lasso, and the Adaptive Transfer Lasso.
Most generally, the prior distribution of the Adaptive Transfer Lasso is given by
\begin{align}
    &\lambda v_j |\beta_j| + \eta w_j |\beta_j - \betatilde_j|
    = - \log P( \beta_j; \lambda, \eta, v_j, w_j, \betatilde_j ) + const.,\\
    &P( \beta_j; \lambda, \eta, v_j, w_j, \betatilde_j )
    := \frac{1}{Z} \exp \left( - \lambda v_j |\beta_j| - \eta w_j |\beta_j - \betatilde_j| \right),\\
    &Z := \frac{2 \lambda v_j}{\lambda^2 v_j^2 - \eta^2 w_j^2} \exp \left( - \eta w_j |\betatilde_j| \right) - \frac{2 \eta w_j}{\lambda^2 v_j^2 - \eta^2 w_j^2} \exp \left( - \lambda v_j |\betatilde_j| \right).
\end{align}

\begin{figure*}[t]
  \centering
  \includegraphics[width=6cm]{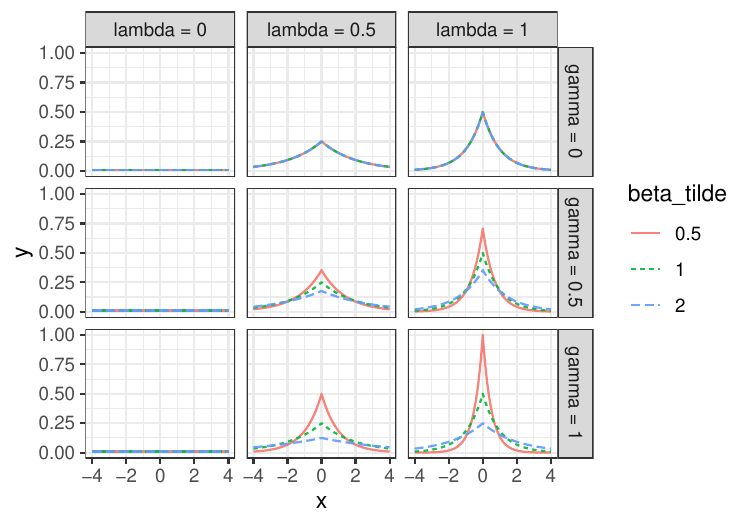}
  \includegraphics[width=6cm]{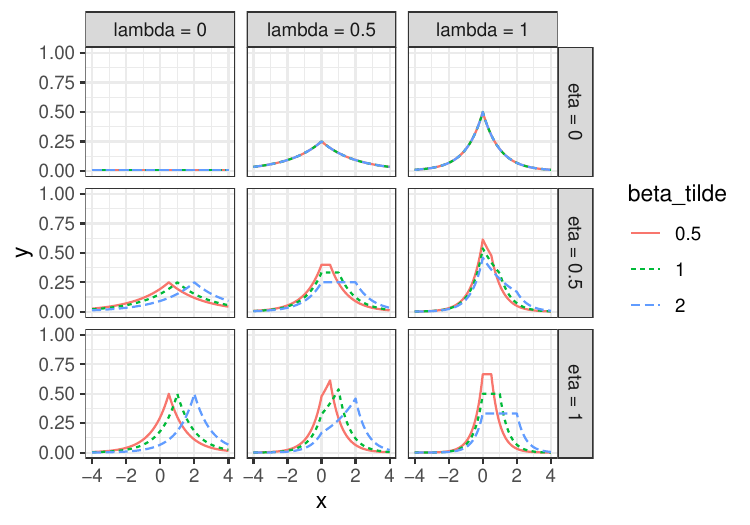}
  \includegraphics[width=6cm]{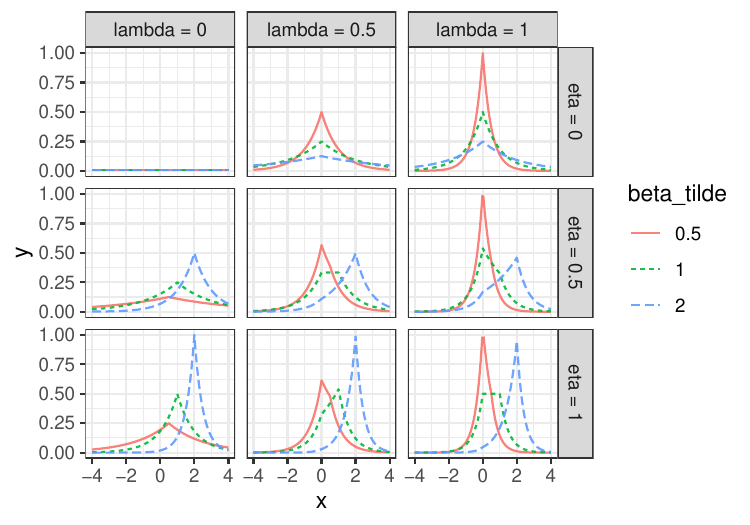}
  \caption{Prior distributions for the Adaptive Lasso (top left), the Transfer Lasso (top right), and the Adaptive Transfer Lasso (bottom) with various initial estimators. Hyperparameters are $\gamma_1 = \gamma_2 = 1$ for the Adaptive Transfer Lasso.}
  \label{fig:priors}
\end{figure*}

The prior distributions for the Adaptive Lasso, the Transfer Lasso, and the Adaptive Transfer Lasso are shown in Figures~\ref{fig:priors}.
The prior distributions for the Adaptive Lasso are all sharp at zero, and the distributions become steeper as the initial estimator decreases.
This means that the Adaptive Lasso controls the variance of the prior distribution based on how close to zero the initial estimator is.
The prior distribution for the Transfer Lasso is sharp at two points: zero and the initial estimator.
When the initial estimator is small, it is nearly the same as that for the Lasso, but when the initial estimator is large, the sharpness changes depending on the magnitudes of $\lambda$ and $\eta$.
The prior distribution for the Adaptive Transfer Lasso is somewhat different from that of the Transfer Lasso.
The prior distribution tends to peak at zero when the initial estimator is close to zero, whereas the prior distribution tends to peak at that initial estimator when the initial estimator is far from zero.
This suggests that the Adaptive Transfer Lasso can make full use of the information from the initial estimator and achieves accurate active and varying variable selection.

\subsection{Future Work}

In our asymptotic analysis, we considered the case where $p$ is fixed and $n$ is infinitely divergent.
The oracle property of Adaptive Lasso~\cite{zou2006adaptive} can be extended to the case for $p \gg n$ by high-dimensional asymptotic theory~\cite{huang2008adaptive}, under different kinds of assumptions.
As future research, it would be interesting to see whether this can be extended to the Transfer Lasso and the Adaptive Transfer Lasso.

In addition, we assumed that the initial estimator is consistent in our asymptotic analysis.
When the initial estimator is incorrectly specified, performance deteriorates significantly for the Adaptive Lasso, but not so much for the Transfer Lasso.
It would be interesting to theoretically verify this property.

\section{Conclusion}

The Adaptive Lasso and the Transfer Lasso are similar but have their advantages and disadvantages from the viewpoint of an asymptotic perspective.
We proposed the Adaptive Transfer Lasso, which has advantages over the Adaptive Lasso and the Transfer Lasso.
We confirmed it in numerical simulations.

\section*{Acknowledgment}
The research is the collaborative work of Toshiba Corporation and The Institute of Statistical Mathematics, based on funding from Toshiba Corporation.

\bibliographystyle{plain}
\bibliography{main}

\begin{thebibliography}{10}

\bibitem{bastani2021predicting}
Hamsa Bastani.
\newblock Predicting with proxies: Transfer learning in high dimension.
\newblock {\em Management Science}, 67(5):2964--2984, 2021.

\bibitem{belkin2020two}
Mikhail Belkin, Daniel Hsu, and Ji~Xu.
\newblock Two models of double descent for weak features.
\newblock {\em SIAM Journal on Mathematics of Data Science}, 2(4):1167--1180,
  2020.

\bibitem{chzhen2019lasso}
Evgenii Chzhen, Mohamed Hebiri, and Joseph Salmon.
\newblock On lasso refitting strategies.
\newblock {\em Bernoulli}, 25(4A):3175--3200, 2019.

\bibitem{fan2001variable}
Jianqing Fan and Runze Li.
\newblock Variable selection via nonconcave penalized likelihood and its oracle
  properties.
\newblock {\em Journal of the American statistical Association},
  96(456):1348--1360, 2001.

\bibitem{fu2000asymptotics}
Wenjiang Fu and Keith Knight.
\newblock Asymptotics for lasso-type estimators.
\newblock {\em The Annals of statistics}, 28(5):1356--1378, 2000.

\bibitem{geyer1994asymptotics}
Charles~J Geyer.
\newblock On the asymptotics of constrained m-estimation.
\newblock {\em The Annals of statistics}, pages 1993--2010, 1994.

\bibitem{geyer1996asymptotics}
Charles~J Geyer.
\newblock On the asymptotics of convex stochastic optimization.
\newblock {\em Unpublished manuscript}, 37, 1996.

\bibitem{hastie2022surprises}
Trevor Hastie, Andrea Montanari, Saharon Rosset, and Ryan~J Tibshirani.
\newblock Surprises in high-dimensional ridgeless least squares interpolation.
\newblock {\em The Annals of Statistics}, 50(2):949--986, 2022.

\bibitem{hastie2020best}
Trevor Hastie, Robert Tibshirani, and Ryan Tibshirani.
\newblock Best subset, forward stepwise or lasso? analysis and recommendations
  based on extensive comparisons.
\newblock {\em Statistical Science}, 35(4):579--592, 2020.

\bibitem{huang2008adaptive}
Jian Huang, Shuangge Ma, and Cun-Hui Zhang.
\newblock Adaptive lasso for sparse high-dimensional regression models.
\newblock {\em Statistica Sinica}, pages 1603--1618, 2008.

\bibitem{li2022transfer}
Sai Li, T~Tony Cai, Hongzhe Li, et~al.
\newblock Transfer learning for high-dimensional linear regression: Prediction,
  estimation and minimax optimality.
\newblock {\em Journal of the Royal Statistical Society Series B},
  84(1):149--173, 2022.

\bibitem{li2021minimum}
Yue Li and Yuting Wei.
\newblock Minimum $\ell_1$-norm interpolators: Precise asymptotics and multiple
  descent.
\newblock {\em arXiv preprint arXiv:2110.09502}, 2021.

\bibitem{meinshausen2007relaxed}
Nicolai Meinshausen.
\newblock Relaxed lasso.
\newblock {\em Computational Statistics \& Data Analysis}, 52(1):374--393,
  2007.

\bibitem{mitra2019understanding}
Partha~P Mitra.
\newblock Understanding overfitting peaks in generalization error: Analytical
  risk curves for $\ell_2 $ and $\ell_1$ penalized interpolation.
\newblock {\em arXiv preprint arXiv:1906.03667}, 2019.

\bibitem{pollard1991asymptotics}
David Pollard.
\newblock Asymptotics for least absolute deviation regression estimators.
\newblock {\em Econometric Theory}, 7(2):186--199, 1991.

\bibitem{takada2020transfer}
Masaaki Takada and Hironori Fujisawa.
\newblock Transfer learning via $\ell_1$ regularization.
\newblock {\em Advances in Neural Information Processing Systems},
  33:14266--14277, 2020.

\bibitem{tian2022transfer}
Ye~Tian and Yang Feng.
\newblock Transfer learning under high-dimensional generalized linear models.
\newblock {\em Journal of the American Statistical Association}, pages 1--14,
  2022.

\bibitem{tibshirani1996regression}
Robert Tibshirani.
\newblock Regression shrinkage and selection via the lasso.
\newblock {\em Journal of the Royal Statistical Society: Series B
  (Methodological)}, 58(1):267--288, 1996.

\bibitem{zhang2010nearly}
C-H ZHANG.
\newblock Nearly unbiased variable selection under minimax concave penalty.
\newblock {\em The Annals of Statistics}, 38:894--942, 2010.

\bibitem{zhao2006model}
Peng Zhao and Bin Yu.
\newblock On model selection consistency of lasso.
\newblock {\em The Journal of Machine Learning Research}, 7:2541--2563, 2006.

\bibitem{zou2006adaptive}
Hui Zou.
\newblock The adaptive lasso and its oracle properties.
\newblock {\em Journal of the American statistical association},
  101(476):1418--1429, 2006.

\end{thebibliography}

\newpage

\begin{appendix}

\section*{Appendices}

\section{Additional Asymptotic Properties}
\label{appendix:fixed-asymptotics}

In this section, we describe additional theoretical results that are not described in the main text.
In Sections \ref{sec:alasso-deterministic} and \ref{sec:trlasso-deterministic}, we describe the asymptotic properties of Adaptive Lasso and Transfer Lasso, respectively, when the source parameters are deterministic (fixed).
In Section \ref{sec:additional-tlasso}, we discuss additional results for Transfer Lasso when the hyperparameters are at boundary values.

\subsection{Adaptive Lasso with deterministic Source Parameter}
\label{sec:alasso-deterministic}

To avoid zero division, we define Adaptive Lasso with a deterministic source parameter $\betatilde$ as
\begin{align}
\label{alasso-deteministic}
    \hat{\beta}_n^\mathcal{A} = \argmin_{\beta: \beta_j = 0 \text{~for~} \betatilde_j = 0 } \left\{
    Z_n^\mathcal{A}(\beta; \betatilde, \lambda_n, \gamma) := \frac{1}{n} \| y - X\beta \|_2^2 + \frac{\lambda_n}{n} \sum_{j: \betatilde_j \neq 0} w_j |\beta_j|
    \right\},
\end{align}
and $w_j := 1 / |\betatilde_j|^\gamma$ for $\betatilde_j \neq 0$.
By the definition \eqref{alasso-deteministic} and Lemma~\ref{lemma:adapt-lasso-oracle}, we can easily obtain Lemma~\ref{lemma:adapt-lasso-consistency-deterministic} and Corollary~\ref{lemma:adapt-lasso-oracle-deterministic}.

\begin{lemma}[$\sqrt{n}$-consistency for the Adaptive Lasso with Deterministic Source Parameter]
\label{lemma:adapt-lasso-consistency-deterministic}
Suppose that $\tilde{S} = S$.
If $\lambda_n/\sqrt{n} \rightarrow \lambda_0 \geq 0$, then
\begin{align}
    &\sqrt{n} (\betahat_{Sn}^\mathcal{A} - \betastar_S) \\
    \overset{d}{\rightarrow}&
    \argmin_u \left\{
    u_S^\top C_{SS} u_S -2 u_S^\top W_S
    + \lambda_0 \sum_{j: \betatilde_j \neq 0} w_j \left( u_j \sgn(\betastar_j) I(\betastar_j \neq 0) + |u_j| I(\betastar_j=0) \right)
    \right\},\nonumber
\end{align}
where $W \sim \mathcal{N}(0, \sigma^2 C)$, and $\betahat_{S^cn}^\mathcal{A} = 0$.
\end{lemma}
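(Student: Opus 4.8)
The plan is to follow the classical argument of \cite{fu2000asymptotics,zou2006adaptive} for the Lasso's $\sqrt{n}$-limiting distribution, adapted to the fact that here the minimization is explicitly restricted to the coordinate subspace $\{ \beta : \beta_j = 0 \text{ for } \betatilde_j = 0\}$, which under the hypothesis $\tilde S = S$ is exactly $\{\beta : \beta_{S^c} = 0\}$. First I would reparametrize: write $\beta = \betastar + u/\sqrt{n}$ and define the centered objective $V_n(u) := n \big( Z_n^\mathcal{A}(\betastar + u/\sqrt{n}) - Z_n^\mathcal{A}(\betastar) \big)$, which has the same minimizer (in $u$) as $\sqrt{n}(\betahat_n^\mathcal{A} - \betastar)$, subject to the constraint $u_{S^c} = 0$ since $\betastar_{S^c} = 0$ and the feasible set forces those coordinates to stay at $0$. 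Expanding the quadratic data-fit term gives $\frac{1}{n}\|y - X\beta\|_2^2 - \frac{1}{n}\|y - X\betastar\|_2^2 = \frac{1}{n} u^\top X^\top X u / n \cdot n \cdots$ — more precisely $u^\top C_n u - \frac{2}{\sqrt n} u^\top X^\top \varepsilon$; by Assumption~\ref{assumption:psd}, $C_n \to C$ and $\frac{1}{\sqrt n} X^\top \varepsilon \overset{d}{\to} W \sim \mathcal N(0,\sigma^2 C)$ (Lindeberg CLT, using the $\frac1n\max_i\|x_i\|_2^2 \to 0$ condition), so the data-fit part converges to $u^\top C u - 2 u^\top W$.

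Next I would handle the penalty term $\frac{\lambda_n}{n}\sum_{j:\betatilde_j\neq 0} w_j |\beta_j|$. Since $\tilde S = S$, the sum runs over $j \in S$, and there $w_j = 1/|\betatilde_j|^\gamma$ is a \emph{fixed constant} (the source parameter is deterministic). The rescaled increment is $\frac{\lambda_n}{n}\sum_{j\in S} w_j \big(|\betastar_j + u_j/\sqrt n| - |\betastar_j|\big) \cdot n = \lambda_n \sum_{j\in S} w_j \big(|\betastar_j + u_j/\sqrt n| - |\betastar_j|\big)$. Multiplying and dividing by $\sqrt n$, this equals $\frac{\lambda_n}{\sqrt n} \sum_{j\in S} w_j \cdot \sqrt n\big(|\betastar_j + u_j/\sqrt n| - |\betastar_j|\big)$, and since every $j\in S$ has $\betastar_j \neq 0$, the absolute value is differentiable there, so $\sqrt n\big(|\betastar_j + u_j/\sqrt n| - |\betastar_j|\big) \to u_j \sgn(\betastar_j)$ pointwise. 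With $\lambda_n/\sqrt n \to \lambda_0$, the penalty term converges to $\lambda_0 \sum_{j\in S} w_j \, u_j \sgn(\betastar_j)$, which is exactly the stated limit once one notes that $I(\betastar_j \neq 0) = 1$ for $j \in S$ and the $I(\betastar_j = 0)$ terms vanish on the feasible set. So $V_n(u) \overset{d}{\to} V(u) := u^\top C u - 2 u^\top W + \lambda_0 \sum_{j: \betatilde_j\neq 0} w_j u_j \sgn(\betastar_j)$ for each fixed $u$ with $u_{S^c}=0$ (finite-dimensional convergence in $\mathbb{R}^{|S|}$, embedding everything in $\mathbb{R}^p$ via the constraint).

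To pass from convergence of the objectives to convergence of the argmin, I would invoke the standard epi-convergence / convexity argument (Geyer, or Knight--Fu): $V_n$ is convex in $u$, the limit $V$ is convex with a unique minimizer because $C_{SS} \succ O$ (a strictly convex quadratic plus a linear term on the subspace), and pointwise convergence of convex functions to a function with a unique minimizer implies convergence in distribution of the minimizers. This gives $\sqrt n(\betahat_{Sn}^\mathcal{A} - \betastar_S) \overset{d}{\to} \argmin_{u} V(u)$ with the minimization over the constrained set, and $\betahat_{S^c n}^\mathcal{A} = 0$ holds exactly (not just asymptotically) by the definition \eqref{alasso-deteministic}. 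The main obstacle is mostly bookkeeping: being careful that the constraint set in \eqref{alasso-deteministic} is fixed and deterministic (so no probabilistic subtlety about which coordinates are free), and making sure the argmin-continuous-mapping step is applied correctly — but since the feasible set is a fixed linear subspace and the limit objective is strictly convex on it, this is routine. The only genuinely delicate point is verifying the CLT for $\frac{1}{\sqrt n}X^\top\varepsilon$ under Assumption~\ref{assumption:psd}, but that is already established in the cited references and reused verbatim.
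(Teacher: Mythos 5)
Your proposal is correct and follows essentially the same route as the paper: the paper obtains this lemma by noting that the constrained definition \eqref{alasso-deteministic} with $\tilde{S}=S$ fixes $\betahat_{S^c n}^{\mathcal{A}}=0$ exactly, and then applying the standard Knight--Fu/Zou reparametrization $u=\sqrt{n}(\beta-\betastar)$ with the Geyer epi-convergence argument on the subspace $u_{S^c}=0$, exactly as you do. Your observations that the weights $w_j$ are deterministic constants here and that the $I(\betastar_j=0)$ terms are vacuous under $\tilde{S}=S$ are the right bookkeeping points.
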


\begin{corollary}[Oracle Property for Adaptive Lasso with Deterministic Source Parameter]
\label{lemma:adapt-lasso-oracle-deterministic}
Suppose that $\tilde{S} = S$.
If $\lambda_n = o(\sqrt{n})$, then the Adaptive Lasso estimator satisfies the oracle property.
\end{corollary}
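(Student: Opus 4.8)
The plan is to deduce Corollary~\ref{lemma:adapt-lasso-oracle-deterministic} directly from Lemma~\ref{lemma:adapt-lasso-consistency-deterministic} by specializing the rate hyperparameter and then examining the resulting limiting objective function. First I would set $\lambda_n = o(\sqrt{n})$, which forces $\lambda_0 = 0$ in the hypothesis $\lambda_n / \sqrt{n} \to \lambda_0$ of Lemma~\ref{lemma:adapt-lasso-consistency-deterministic}. With $\tilde S = S$ the constraint $\beta_j = 0$ for $\betatilde_j = 0$ in the definition \eqref{alasso-deteministic} becomes exactly $\beta_{S^c} = 0$, so the estimator has $\betahat_{S^c n}^{\mathcal A} = 0$ identically, which already yields one half of variable selection consistency (no false positives, in fact deterministically). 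For the active block, substituting $\lambda_0 = 0$ collapses the limiting minimization in Lemma~\ref{lemma:adapt-lasso-consistency-deterministic} to
\begin{align}
\sqrt{n}(\betahat_{Sn}^{\mathcal A} - \betastar_S) \overset{d}{\to} \argmin_{u_S} \left\{ u_S^\top C_{SS} u_S - 2 u_S^\top W_S \right\},
\end{align}
whose unique minimizer is $C_{SS}^{-1} W_S$. Since $W \sim \mathcal N(0, \sigma^2 C)$ gives $W_S \sim \mathcal N(0, \sigma^2 C_{SS})$, this limit is $\mathcal N(0, \sigma^2 C_{SS}^{-1})$, establishing both $\sqrt n$-consistency and asymptotic normality with the oracle covariance.

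The remaining piece is to argue consistent variable selection on the active set, i.e. $P(\betahat_{jn}^{\mathcal A} \neq 0) \to 1$ for each $j \in S$. This follows from the $\sqrt n$-consistency just obtained: since $\sqrt n(\betahat_{jn}^{\mathcal A} - \betastar_j)$ is asymptotically tight (converges in distribution) and $\betastar_j \neq 0$ for $j \in S$, we have $\betahat_{jn}^{\mathcal A} \to^p \betastar_j \neq 0$, hence $P(\betahat_{jn}^{\mathcal A} = 0) \to 0$. Combining with the deterministic zeros on $S^c$ gives $P(\hat S_n^{\mathcal A} = S) \to 1$. Collecting these statements — deterministic zeros off $S$, high-probability nonzeros on $S$, and the $\mathcal N(0, \sigma^2 C_{SS}^{-1})$ limit on $S$ — is precisely the oracle property as stated in Lemma~\ref{lemma:adapt-lasso-oracle-large}.

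I do not anticipate a genuine obstacle here, since the corollary is essentially a corollary: the only subtlety is making sure the $\argmin$ in the degenerate ($\lambda_0 = 0$) case is still well-defined and singleton, which holds because $C_{SS} \succ O$ by Assumption~\ref{assumption:psd} (the restriction of a positive definite matrix to a principal submatrix is positive definite), so the quadratic $u_S^\top C_{SS} u_S - 2 u_S^\top W_S$ is strictly convex with a unique minimizer. If one wanted to be careful about the continuity of $\argmin$ under the limit — i.e. that the weak limit of the $\argmin$ equals the $\argmin$ of the limit objective — this is already handled inside the proof of Lemma~\ref{lemma:adapt-lasso-consistency-deterministic} (via the standard epi-convergence / Slutsky argument of \cite{fu2000asymptotics}), so here I would simply invoke that lemma as a black box and only do the trivial algebra of setting $\lambda_0 = 0$. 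The one-line writeup is therefore: apply Lemma~\ref{lemma:adapt-lasso-consistency-deterministic} with $\lambda_0 = 0$, identify the limit as $\mathcal N(0, \sigma^2 C_{SS}^{-1})$, and note the support recovery follows from $\sqrt n$-consistency together with the constraint $\betahat_{S^c n}^{\mathcal A} = 0$.
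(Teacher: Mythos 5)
Your proposal is correct and follows exactly the route the paper intends: the paper gives no separate proof of this corollary, stating only that it follows easily from the definition \eqref{alasso-deteministic} and the preceding lemmas, and your argument (set $\lambda_0=0$ in Lemma~\ref{lemma:adapt-lasso-consistency-deterministic}, identify the limit $C_{SS}^{-1}W_S\sim\mathcal{N}(0,\sigma^2 C_{SS}^{-1})$, note $\betahat_{S^c n}^{\mathcal A}=0$ by construction and $P(\betahat_{jn}^{\mathcal A}\neq 0)\to 1$ on $S$ by consistency) is precisely the filling-in of that omitted detail. No gaps.
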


Based on Lemma~\ref{lemma:adapt-lasso-consistency-deterministic} and Corollary~\ref{lemma:adapt-lasso-oracle-deterministic}, the Adaptive Lasso can satisfy the oracle property even for deterministic source parameters.
However, the source parameter must satisfy exact support recovery, which is very restrictive.

\subsection{Transfer Lasso with deterministic Source Parameter}
\label{sec:trlasso-deterministic}

Now we consider the asymptotic properties of Transfer Lasso with deterministic source parameter.
Let $T := \{ j: \betatilde_j \neq \betastar_j \}$.
We can identify the range of its estimator~(Theorem~\ref{trlasso-variable-selection}) for a general condition.
The inequality in the probability is not necessarily tight.

\begin{theorem}[Estimation Range for Transfer Lasso]
\label{trlasso-variable-selection}
If $(\lambda_n + \eta_n) / \sqrt{n} \rightarrow \infty$ and $(\lambda_n - \eta_n) / \sqrt{n} \rightarrow 0$, then the transfer lasso estimates satisfy
\begin{align}
    \lim_{n\rightarrow \infty} P\left( \min \left\{ 0, \betatilde_j \right\} \leq \betahat_j \leq \max \left\{ 0, \betatilde_j \right\} \right) = 1.
\end{align}
\end{theorem}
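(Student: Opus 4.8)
The plan is to reduce the claim to a per-coordinate statement and to exploit the piecewise-linear geometry of the penalty in \eqref{trlasso}. Fix $j$, write $g_j(t):=\lambda_n|t|+\eta_n|t-\betatilde_j|$ for the penalty acting on the $j$-th coordinate, and let $I_j:=[\min\{0,\betatilde_j\},\max\{0,\betatilde_j\}]$ be the closed interval with endpoints $0$ and $\betatilde_j$. The key observation I would build on is that $g_j$ is piecewise linear with slope of modulus $\lambda_n+\eta_n$ on $\mathbb{R}\setminus I_j$ and slope $\lambda_n-\eta_n$ on $I_j$; consequently, projecting a point that lies outside $I_j$ back onto $I_j$ decreases $g_j$ by exactly $(\lambda_n+\eta_n)$ times the distance travelled. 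So the whole task becomes: show that, with probability tending to one, no coordinate of $\betahat_n^\mathcal{T}$ can afford to sit outside its interval, because the resulting penalty saving would exceed any possible increase in the residual sum of squares.

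To make this precise I would set $\betahat:=\betahat_n^\mathcal{T}$, let $\betahat^\circ$ denote the coordinatewise projection of $\betahat$ onto the box $B:=\prod_j I_j$, and put $\delta:=\betahat-\betahat^\circ$. Summing the per-coordinate identity gives $\sum_j g_j(\betahat_j)-\sum_j g_j(\betahat^\circ_j)=(\lambda_n+\eta_n)\|\delta\|_1$. Since $\betahat$ is a global minimizer of the objective in \eqref{trlasso} and $\betahat^\circ$ is an admissible competitor, the least-squares term must absorb the difference, so after expanding $\|y-X\betahat\|_2^2$ around $\betahat^\circ$ and dropping the nonnegative $\|X\delta\|_2^2$ one arrives at
\[
(\lambda_n+\eta_n)\,\|\delta\|_1\;\le\;2\,\delta^\top X^\top(y-X\betahat^\circ)\;\le\;2\,\|\delta\|_1\,\bigl\|X^\top(y-X\betahat^\circ)\bigr\|_\infty .
\]
Therefore $\delta=0$ whenever $\lambda_n+\eta_n>2\|X^\top(y-X\betahat^\circ)\|_\infty$, and it remains to prove $\|X^\top(y-X\betahat^\circ)\|_\infty=o_p(\lambda_n+\eta_n)$.

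For that I would split $y-X\betahat^\circ=\varepsilon+X(\betastar-\betahat^\circ)$. The noise contribution is immediate: each entry of $X^\top\varepsilon$ is $\mathcal N(0,\sigma^2\|\x_j\|_2^2)$ with $\|\x_j\|_2^2/n\to C_{jj}$ by Assumption~\ref{assumption:psd}, so $\|X^\top\varepsilon\|_\infty=O_p(\sqrt n)$, which is $o_p(\lambda_n+\eta_n)$ precisely because $(\lambda_n+\eta_n)/\sqrt n\to\infty$. The drift contribution is $X^\top X(\betastar-\betahat^\circ)=nC_n(\betastar-\betahat^\circ)$, so it is controlled once one knows $\|\betastar-\betahat^\circ\|_2=O_p(n^{-1/2})$; since the projection onto $B$ is $1$-Lipschitz, this follows from $\sqrt n$-consistency of $\betahat$. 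Here the second hypothesis is what I would lean on: because the within-box slope $\lambda_n-\eta_n$ is $o(\sqrt n)$, a routine basic-inequality argument — comparing the objective value at $\betahat$ with that at $\betastar$ and invoking the asymptotic strong convexity supplied by $C_n\to C\succ O$ from Assumption~\ref{assumption:psd} — gives $\sqrt n(\betahat-\betastar)=O_p(1)$. Then $\|X^\top X(\betastar-\betahat^\circ)\|_\infty\le n\|C_n\|_{\mathrm{op}}\|\betastar-\betahat^\circ\|_2=O_p(\sqrt n)$, and combining the two pieces yields $\|X^\top(y-X\betahat^\circ)\|_\infty=o_p(\lambda_n+\eta_n)$, hence $P(\delta=0)\to1$, i.e. $\betahat\in B$ with probability tending to one, which is the conclusion.

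The hard part will be this last ingredient, the $\sqrt n$-localization of $\betahat$ that tames the drift term: the crude bound $\|\betastar-\betahat^\circ\|_2=O_p(1)$ coming just from boundedness of $B$ only gives $\|X^\top(y-X\betahat^\circ)\|_\infty=O_p(n)$, which is too weak under $(\lambda_n+\eta_n)/\sqrt n\to\infty$ alone, so the argument must genuinely use $(\lambda_n-\eta_n)/\sqrt n\to0$ together with Assumption~\ref{assumption:psd}. An alternative route I would keep in reserve is to skip the projection and argue directly from subgradient stationarity for \eqref{trlasso}: on the event $\{\betahat_j>\max\{0,\betatilde_j\}\}$ one has $2\,\x_j^\top(y-X\betahat)=\lambda_n+\eta_n$ exactly, and $=-(\lambda_n+\eta_n)$ on $\{\betahat_j<\min\{0,\betatilde_j\}\}$, so it suffices to show $\max_j|\x_j^\top(y-X\betahat)|=o_p(\lambda_n+\eta_n)$ — the same $\sqrt n$-consistency input, but without having to carry $\betahat^\circ$ through the estimates.
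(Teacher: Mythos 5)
Your primary route is genuinely different from the paper's. The paper argues coordinatewise from the stationarity (KKT) conditions at the minimizer: if $\betahat_j$ lies strictly outside $I_j$ (and off both kinks), then $\sgn(\betahat_j)=\sgn(\betahat_j-\betatilde_j)$, so the two subgradient terms add to $\pm(\lambda_n+\eta_n)/\sqrt n$, which is claimed to contradict the fact that $2\bigl(\tfrac1n\x_j^\top X\bigr)\sqrt n(\betastar-\betahat)+\tfrac{2}{\sqrt n}\x_j^\top\varepsilon$ converges in distribution. That is exactly the ``alternative route'' you keep in reserve. Your main argument --- the exact identity $\sum_j g_j(\betahat_j)-\sum_j g_j(\betahat^\circ_j)=(\lambda_n+\eta_n)\|\delta\|_1$ for the projection onto the box $B$, combined with the basic inequality --- is a cleaner global decomposition that avoids case-splitting on kinks and makes explicit the quantity both proofs secretly need, namely $\|X^\top(y-X\betahat^\circ)\|_\infty=o_p(\lambda_n+\eta_n)$. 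The paper needs the same input (it implicitly assumes $\sqrt n(\betahat-\betastar)=O_p(1)$ so that the non-penalty terms in the KKT display stay bounded) but never proves it.

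The gap is in your justification of that input, and it is not cosmetic. The $1$-Lipschitz property of $\Pi_B$ bounds $\|\betahat^\circ-\Pi_B(\betastar)\|_2$ by $\|\betahat-\betastar\|_2$, not $\|\betahat^\circ-\betastar\|_2$; these coincide only if $\betastar\in B$, which is not a hypothesis of Theorem~\ref{trlasso-variable-selection}. The same issue defeats your ``routine basic-inequality'' claim of $\sqrt n$-consistency: comparing objectives at $\betahat$ and $\betastar$, the penalty increment $g_j(\betastar_j)-g_j(\betahat_j)$ is controlled by the within-box slope $|\lambda_n-\eta_n|=o(\sqrt n)$ only when the segment from $\betahat_j$ to $\betastar_j$ meets $I_j$ favourably; in general you only get the Lipschitz constant $\lambda_n+\eta_n\gg\sqrt n$, yielding $\|\betahat-\betastar\|_2=O_p((\lambda_n+\eta_n)/n)$, which is too weak. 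Indeed, without $\betastar\in B$ the statement itself fails: take $p=1$, $\betastar=1$, $\betatilde=1/2$, $\lambda_n=\eta_n=n^{0.6}$, so that $(\lambda_n+\eta_n)/\sqrt n\to\infty$ and $(\lambda_n-\eta_n)/\sqrt n\to0$; the penalty is $n^{-0.4}(|\beta|+|\beta-1/2|)\to0$ uniformly on compacts, hence $\betahat\overset{p}{\to}1\notin[0,1/2]$ and $P(\betahat\in I)\to0$. So no argument can close this step under the stated hypotheses alone; one needs a condition forcing $\min\{0,\betatilde_j\}\le\betastar_j\le\max\{0,\betatilde_j\}$ for all $j$ (as is implied by the sign conditions assumed in Theorems~\ref{trlasso-distribution-sign-consistency} and \ref{trlasso-oracle}). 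Under that extra condition your projection argument does go through verbatim --- and it is then arguably more complete than the paper's own proof, which leaves the identical localization unaddressed.
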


\begin{proof}
The proof is given in \ref{proof:trlasso-variable-selection}.
\end{proof}

To obtain an asymptotic distribution and consistent variable selection, we need to impose the condition that $\sgn(\betatilde_j - \betastar_j) = \sgn(\betastar_j)$ for $\forall j \in S \cap T$.
This requires ``over-estimation'' for the initial estimator.
We obtain Theorem~\ref{trlasso-distribution-sign-consistency} under this condition.

\begin{theorem}[Asymptotic distribution and active/varying variable consistency with deterministic source parameters]
\label{trlasso-distribution-sign-consistency}
Suppose that $\sgn(\betatilde_j - \betastar_j) = \sgn(\betastar_j)$ for $\forall j \in S \cap T$.
If $(\lambda_n + \eta_n) / \sqrt{n} \rightarrow \infty$ and $(\lambda_n - \eta_n) / \sqrt{n} \rightarrow 0$,
then the Transfer Lasso estimator satisfies
\begin{align}
    \sqrt{n}(\betahat - \betastar) \overset{d}{\rightarrow}
    \argmin_{u\in \mathcal{U}} & \left\{ -2u^\top W + u^\top C u \right\},
    \label{eq-trlasso-distribution}
\end{align}
\begin{align}
    W \sim \mathcal{N} (0, \sigma^2 C), ~
    \mathcal{U} := \left\{ u\in \mathbb{R}^p \left|
    \begin{array}{l}
        u_j=0 ~\text{for}~ \forall j \in S^c \cap T^c\\
        u_j \betatilde_j \geq 0 ~\text{for} ~\forall j \in S^c \cap T\\
        u_j \betastar_j \leq 0 ~\text{for}~ \forall j \in S \cap T^c
    \end{array}
    \right. \right\},
    \label{eq-w}
\end{align}
and
\begin{align}
    \begin{cases}
        \lim_{n\rightarrow \infty} P \left(0 < \betahat_j < \betatilde_j ~\text{or}~ \betatilde_j < \betahat_j < 0 \right) = 1 ~ & \text{for} ~ \forall j \in (S \cap T)\\
        \lim_{n\rightarrow \infty} P \left(0 < \betahat_j \leq \betatilde_j ~\text{or}~ \betatilde_j \leq \betahat_j < 0 \right) = 1 ~ & \text{for} ~ \forall j \in (S \cap T^c)\\
        \lim_{n\rightarrow \infty} P \left(0 \leq \betahat_j < \betatilde_j ~\text{or}~ \betatilde_j < \betahat_j \leq 0 \right) = 1 ~ & \text{for} ~ \forall j \in (S^c \cap T)\\
        \lim_{n\rightarrow \infty} P \left( \betahat_j = 0 \right) = 1 ~ & \text{for} ~ \forall j \in (S^c \cap T^c).
    \end{cases}
\end{align}
\end{theorem}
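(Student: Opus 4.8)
The plan is to reparametrize around $\betastar$ and run the argmin-convergence argument for convex criteria, exactly as in the proofs of Lemma~\ref{lemma:lasso-root-n-consistency} and Theorem~\ref{theorem:trlasso-consistency} (cf.\ \cite{fu2000asymptotics}). Writing $\beta = \betastar + u/\sqrt{n}$ and multiplying the criterion in \eqref{trlasso} by $n$, the point $u = \sqrt{n}(\betahat - \betastar)$ minimizes the convex function
\begin{align}
    V_n(u) := &~ u^\top C_n u - 2 u^\top \frac{X^\top \varepsilon}{\sqrt{n}}
    + \lambda_n \sum_j \left( \bigl|\betastar_j + \tfrac{u_j}{\sqrt{n}}\bigr| - |\betastar_j| \right) \nonumber \\
    &+ \eta_n \sum_j \left( \bigl|\betastar_j + \tfrac{u_j}{\sqrt{n}} - \betatilde_j\bigr| - |\betastar_j - \betatilde_j| \right).
\end{align}
By Assumption~\ref{assumption:psd}, $C_n \to C \succ O$ and $X^\top\varepsilon/\sqrt{n} \overset{d}{\to} W \sim \mathcal{N}(0,\sigma^2 C)$, so the first two terms converge to $u^\top C u - 2u^\top W$.

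The core of the proof is to identify the limit of the two penalty sums for fixed $u$. Because $\betastar_j \neq 0$ on $S$, $\betatilde_j \neq 0$ on $S^c \cap T$, and $\betatilde_j \neq \betastar_j$ on $T$, for every fixed $u$ and all large $n$ (uniformly on compacta) the point $\betastar_j + u_j/\sqrt{n}$ lies in the interior of the affine piece of each absolute value it meets, so the $j$-th increment is affine in $u_j$; reading it off class by class: for $j \in S \cap T$ it equals $(\lambda_n - \eta_n)\sgn(\betastar_j)u_j/\sqrt{n} \to 0$, where the over-estimation hypothesis $\sgn(\betatilde_j - \betastar_j) = \sgn(\betastar_j)$ is what makes the two signs cancel; for $j \in S \cap T^c$ it equals $\lambda_n\sgn(\betastar_j)u_j/\sqrt{n} + \eta_n|u_j|/\sqrt{n}$, which $\to 0$ if $u_j\betastar_j \le 0$ and $\to +\infty$ (for $u_j \neq 0$) otherwise; for $j \in S^c \cap T$ it equals $\lambda_n|u_j|/\sqrt{n} - \eta_n\sgn(\betatilde_j)u_j/\sqrt{n}$, which $\to 0$ if $u_j\betatilde_j \ge 0$ and $\to +\infty$ otherwise; and for $j \in S^c \cap T^c$ it equals $(\lambda_n + \eta_n)|u_j|/\sqrt{n}$, which $\to 0$ if $u_j = 0$ and $\to +\infty$ otherwise; here I use both $(\lambda_n - \eta_n)/\sqrt{n} \to 0$ and $(\lambda_n + \eta_n)/\sqrt{n} \to \infty$. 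Hence $V_n(u) \overset{d}{\to} -2u^\top W + u^\top C u$ for $u \in \mathcal{U}$ and $V_n(u) \overset{p}{\to} +\infty$ for $u \notin \mathcal{U}$, with $\mathcal{U}$ the polyhedron \eqref{eq-w}. The limit is a proper convex function with an always-unique minimizer since $C \succ O$; invoking the convexity lemma (locally uniform convergence of convex functions) together with the coercivity of $u \mapsto u^\top C u$ — which also yields uniform tightness of $\sqrt{n}(\betahat - \betastar)$ — gives \eqref{eq-trlasso-distribution}; in particular $\betahat \overset{p}{\to} \betastar$ and $\sqrt{n}(\betahat - \betastar) = O_p(1)$.

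For the range/selection assertions I would split the one-sided inequalities into two kinds. Those whose endpoint is a fixed number strictly separated from $\betastar_j$ are immediate from $\betahat \overset{p}{\to} \betastar$: both endpoints for $j \in S \cap T$ (the over-estimation hypothesis places $\betastar_j$ strictly between $0$ and $\betatilde_j$), the nonvanishing of $\betahat_j$ for $j \in S \cap T^c$, and the fact that $\betahat_j$ stays on the $0$-side of $\betatilde_j$ for $j \in S^c \cap T$; these are precisely the strict endpoints. The remaining inequalities — $\betahat_j$ not past $\betatilde_j$ for $j \in S \cap T^c$, $\betahat_j$ not past $0$ for $j \in S^c \cap T$, and $\betahat_j = 0$ for $j \in S^c \cap T^c$ — follow from stationarity: $0 \in -\tfrac{2}{n}X_j^\top(y - X\betahat) + \tfrac{\lambda_n}{n}\partial|\betahat_j| + \tfrac{\eta_n}{n}\partial|\betahat_j - \betatilde_j|$. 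If $\betahat_j$ were strictly past the claimed boundary it would sit strictly on one side of both kinks $0$ and $\betatilde_j$ (again using $\sgn(\betatilde_j - \betastar_j) = \sgn(\betastar_j)$ on $S \cap T$ and $\betatilde_j = \betastar_j$ on $S \cap T^c$), so the two subgradients would collapse to a common value $\pm 1$ and the condition would force $\tfrac{2}{\sqrt{n}}X_j^\top(y - X\betahat) = \pm(\lambda_n + \eta_n)/\sqrt{n}$; but $X_j^\top(y - X\betahat)/\sqrt{n} = X_j^\top\varepsilon/\sqrt{n} - (X_j^\top X/n)\sqrt{n}(\betahat - \betastar) = O_p(1)$ by the first part while $(\lambda_n + \eta_n)/\sqrt{n} \to \infty$, so each such event has probability $\to 0$. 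No contradiction is available at the boundary point itself (there the relevant subgradient is an interval), which is why those endpoints are the closed ones; assembling the two kinds over the four index classes gives the four displayed limits.

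The main obstacle is the argmin-convergence step: passing rigorously from pointwise (in distribution) convergence of the convex processes $V_n$ — whose limit is $+\infty$ off the polyhedron $\mathcal{U}$, so that the hard constraints in \eqref{eq-w} surface only in the limit — to convergence in distribution of the minimizers, together with uniform tightness of $\sqrt{n}(\betahat - \betastar)$. This is routine for convex criteria (epi-convergence / the convexity lemma, as in \cite{fu2000asymptotics}), but one must check that the diverging penalty contributions dominate the $O_p(1)$ quadratic-plus-linear part locally uniformly, and that the ``for $n$ large, in the correct affine piece'' reduction is uniform over compacta. A minor additional point is that strict convexity of the least-squares term — needed for uniqueness of $\betahat$ and for using stationarity as a sufficient optimality condition — holds only once $X^\top X \succ O$, i.e.\ for $n$ large, which Assumption~\ref{assumption:psd} guarantees.
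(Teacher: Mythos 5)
Your proposal is correct and follows essentially the same route as the paper: the same reparametrization $u=\sqrt{n}(\beta-\betastar)$ with the four-class limit of the penalty increments (using $(\lambda_n-\eta_n)/\sqrt{n}\to 0$ and $(\lambda_n+\eta_n)/\sqrt{n}\to\infty$ and the over-estimation sign condition to rule out the $-\infty$ case), convex argmin convergence to the constrained quadratic on $\mathcal{U}$, and then the KKT/subgradient-divergence argument versus the $O_p(1)$ data term for the closed endpoints combined with consistency for the strict ones. The only difference is presentational: the paper outsources the closed-endpoint range argument to its Theorem~\ref{trlasso-variable-selection}, whereas you re-derive it inline.
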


\begin{proof}
The proof is given in \ref{proof:trlasso-distribution-sign-consistency}.
\end{proof}

Theorem~\ref{trlasso-distribution-sign-consistency} shows that the asymptotic distribution in \eqref{eq-trlasso-distribution} and \eqref{eq-w} is a truncated Gaussian-mixture distribution.
Each distribution in mixtures must satisfy 
1)~Gaussian distribution for $j \in S \cap T$;
2)~truncated Gaussian distribution truncated at zero, or delta distribution at zero for $j \in S^c \cap T$ and $j \in S \cap T^c$; and 
3)~delta distribution at zero for $j \in S^c \cap T^c$.

In addition, Theorem~\ref{trlasso-distribution-sign-consistency} indicates that the true active variables ($j \in S$) and the true varying variables ($j \in T$) can be recovered asymptotically if the source parameters of both active and varying variables have the same sign as the true variables and have larger absolute values than the true variables.

Asymptotic normality (instead of a truncated normal mixture distribution) can be obtained under a more restrictive condition~(Theorem~\ref{trlasso-oracle}).

\begin{theorem}[Oracle property (asymptotic normality and active/varying variable selection consistency)]
\label{trlasso-oracle}
Suppose that $S=T$, and $\sgn(\betatilde_j - \betastar_j) = \sgn(\betastar_j)$ for $\forall j \in S(=T)$.
If $(\lambda_n + \eta_n) / \sqrt{n} \rightarrow \infty$ and $(\lambda_n - \eta_n) / \sqrt{n} \rightarrow 0$, then the Transfer Lasso estimates satisfy the oracle property, that is,
\begin{align}
    \lim_{n\rightarrow \infty} P(\hat{S}_n = S) 
    = \lim_{n\rightarrow \infty} P(\hat{T}_n = T)
    = 1,
\end{align}
\begin{align}
    \sqrt{n} (\betahat_S - \betastar_S) \overset{d}{\rightarrow} \mathcal{N}(0, \sigma^2 C_{SS}^{-1}).
\end{align}
\end{theorem}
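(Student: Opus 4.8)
The plan is to obtain this statement as a direct specialization of Theorem~\ref{trlasso-distribution-sign-consistency}. The hypotheses imposed here — namely $S=T$ together with $\sgn(\betatilde_j-\betastar_j)=\sgn(\betastar_j)$ for all $j\in S(=T)$, and the hyperparameter conditions $(\lambda_n+\eta_n)/\sqrt{n}\to\infty$, $(\lambda_n-\eta_n)/\sqrt{n}\to 0$ — are precisely a strengthening of those of Theorem~\ref{trlasso-distribution-sign-consistency}, so that result applies verbatim; the remaining work is just to read off what its conclusions become when the four index blocks $S\cap T$, $S\cap T^c$, $S^c\cap T$, $S^c\cap T^c$ degenerate. (If one instead wanted a self-contained argument, one would rerun the epi-convergence / Knight--Fu style analysis of the objective directly with $\mathcal{U}$ already simplified, but invoking the earlier theorem is cleaner.)

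First I would record the collapse of the partition: since $T=S$ we have $S\cap T=S$ and $S^c\cap T^c=S^c$, while $S\cap T^c=\emptyset$ and $S^c\cap T=\emptyset$. Feeding this into the variable-selection conclusions of Theorem~\ref{trlasso-distribution-sign-consistency}: for $j\in S$ we get $P(0<\betahat_j<\betatilde_j$ or $\betatilde_j<\betahat_j<0)\to 1$, hence $\betahat_j\neq 0$ and, because $\betatilde_j\neq 0$, also $\betahat_j\neq\betatilde_j$ with probability tending to one; for $j\in S^c$ we get $P(\betahat_j=0)\to 1$, and since $j\in T^c$ forces $\betatilde_j=\betastar_j=0$ we have $\betahat_j=\betatilde_j$ there as well. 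Intersecting these finitely many events (recall $p$ is fixed) yields $P(\hat{S}_n=S)\to 1$ and $P(\hat{T}_n=T)\to 1$, which is the first display.

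Next I would specialize the limiting law in \eqref{eq-trlasso-distribution}--\eqref{eq-w}. The two sign-truncation clauses defining $\mathcal{U}$ are indexed by the now-empty sets $S^c\cap T$ and $S\cap T^c$, so $\mathcal{U}$ reduces to $\{u : u_{S^c}=0\}$ with no constraint at all on the $S$-block. Hence the limit of $\sqrt{n}(\betahat-\betastar)$ is the unconstrained minimizer over $u_S$ of $u_S^\top C_{SS} u_S - 2 u_S^\top W_S$, namely $u_S = C_{SS}^{-1} W_S$ (with $u_{S^c}=0$). Since $W\sim\mathcal{N}(0,\sigma^2 C)$ gives $W_S\sim\mathcal{N}(0,\sigma^2 C_{SS})$, the continuous mapping / argmin-continuity step yields $\sqrt{n}(\betahat_S-\betastar_S)\overset{d}{\to}\mathcal{N}(0,\sigma^2 C_{SS}^{-1})$, the claimed asymptotic normality.

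The genuinely hard part has already been discharged inside the proof of Theorem~\ref{trlasso-distribution-sign-consistency} (the convexity/epi-convergence argument pinning the limit of $\sqrt{n}(\betahat-\betastar)$ to the $\argmin$ of the limiting convex objective over the truncated feasible set, and the sign-condition bookkeeping). The only residual subtlety I would take care over is verifying that, under $S=T$, \emph{no} truncation survives on the active coordinates, so that the minimizer of the limiting quadratic over $\mathcal{U}$ is interior in $u_S$ and its law is exactly Gaussian rather than a truncated-Gaussian mixture; once that is confirmed, both displays in the statement follow immediately.
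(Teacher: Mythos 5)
Your proposal is correct, and the asymptotic-normality part coincides with the paper's argument: both observe that under $S=T$ the feasible set $\mathcal{U}$ in Theorem~\ref{trlasso-distribution-sign-consistency} collapses to $\{u : u_{S^c}=0\}$ with no truncation on the active block, so the limit is the unconstrained minimizer $u_S = C_{SS}^{-1}W_S \sim \mathcal{N}(0,\sigma^2 C_{SS}^{-1})$. Where you diverge is the variable-selection part. The paper re-derives $P(\hat{S}_n=S)\to 1$ from scratch (asymptotic normality gives $j\in\supp(\betahat)$ for $j\in S$; a KKT argument with the diverging penalty $(\lambda_n+\eta_n)/\sqrt{n}$ kills the coordinates in $S^c$, using $\betatilde_{S^c}=0$), and then obtains $P(\hat{T}_n=T)\to 1$ via a separate symmetry trick: reparameterizing $\beta'=\beta-\betatilde$, $y'=y-X\betatilde$ and swapping $\lambda_n\leftrightarrow\eta_n$ maps the varying-set problem onto the active-set problem already solved. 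You instead read both selection statements directly off the four per-block limit probabilities already proved inside Theorem~\ref{trlasso-distribution-sign-consistency}: for $j\in S\cap T=S$ the strict inclusion $0<\betahat_j<\betatilde_j$ or $\betatilde_j<\betahat_j<0$ gives $\betahat_j\neq 0$ and $\betahat_j\neq\betatilde_j$ simultaneously, and for $j\in S^c\cap T^c=S^c$ one has $\betahat_j=0=\betatilde_j$; intersecting finitely many such events (valid since $p$ is fixed) yields both displays at once. This is a legitimate and more economical route — it avoids repeating the KKT computation and dispenses with the reparameterization argument entirely — at the cost of leaning on the strict-inequality form of the conclusions of Theorem~\ref{trlasso-distribution-sign-consistency}, which you correctly verify is available here. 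No gap.
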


\begin{proof}
The proof is given in \ref{proof:trlasso-oracle}.    
\end{proof}

Theorem~\ref{trlasso-oracle} is the oracle property for the Transfer Lasso.
The Adaptive Lasso requires $\sqrt{n}$-consistency for the initial estimator.
In contrast, the Transfer Lasso requires variable consistency, sign consistency, and ``over-estimation'' for the source parameters.
This yields $\sqrt{n}$-consistency as well as variable selection consistency for both active and varying variables.

\subsection{Transfer Lasso with Initial Estimator in Boundary Region}
\label{sec:additional-tlasso}

We have an additional result on the asymptotic property for Transfer Lasso with an initial estimator when the hyperparameters are at boundary values.

\begin{theorem}
\label{theorem:tlasso-asymptotic-distribution-boundary}
Suppose that $\betatilde$ is a $\sqrt{m}$-consistent estimator and define $z := \sqrt{m} (\betatilde_m - \betastar)$.
Suppose that $n / m \to 0$. 
If $\lambda_n / \sqrt{n} \to \infty$, $\eta_n / \lambda_n \to 1$, and $(\lambda_n - \eta_n) / \sqrt{n} \to \delta_0$, then
\begin{align}
    \sqrt{n} \left( \betahat_n^\mathcal{T} - \betastar \right)
    \overset{d}{\to}
    \argmin_{u \in \mathcal{U}} \left\{ u_S^\top C_{SS} u_S - 2 u_S^\top W_S - \delta_0 \sum_{j \in S} |u_j| \right\},
\end{align}
\begin{align}
    \mathcal{U} := \left\{ u\in \mathbb{R}^p \left|
        \begin{array}{l}
            \betastar_j u_j \leq 0 ~\text{for}~ \forall j \in S ~\text{and} ~
            u_j = 0 ~\text{for}~ \forall j \in S^c
        \end{array}
    \right. \right\}.
\end{align}
In addition, $\betahat_n^\mathcal{T}$ results in inconsistent active variable selection.
\end{theorem}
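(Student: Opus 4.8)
The plan is to follow the same epi-convergence (argmin) strategy used throughout the paper: write the objective as a function of the rescaled variable $u = \sqrt{n}(\beta - \betastar)$, show that the rescaled objective converges (in distribution, as a stochastic process indexed by $u$) to a limit functional, and then invoke the argmin continuous mapping theorem (as in Lemma~\ref{lemma:lasso-root-n-consistency} and Theorem~\ref{theorem:trlasso-consistency}). Concretely, substituting $\beta = \betastar + u/\sqrt{n}$ into \eqref{trlasso} and subtracting the value at $u=0$, the quadratic-plus-cross term gives $u^\top C_n u - 2 u^\top W_n$ with $W_n \overset{d}{\to} W$; the $\ell_1$ penalty $\frac{\lambda_n}{n}\sum_j(|\betastar_j + u_j/\sqrt{n}| - |\betastar_j|)$ must be expanded carefully. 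For $j \in S$ it equals $\frac{\lambda_n}{\sqrt n}\sgn(\betastar_j)u_j \to \infty \cdot \sgn(\betastar_j) u_j$ unless $u_j$ has the opposite sign, which forces $\betastar_j u_j \le 0$ in the limit set $\mathcal{U}$; for $j \in S^c$ it equals $\frac{\lambda_n}{\sqrt n}|u_j| \to \infty \cdot |u_j|$, forcing $u_j = 0$. Meanwhile the transfer term $\frac{\eta_n}{n}\sum_j(|\betastar_j + u_j/\sqrt n - \betatilde_j| - |\betastar_j - \betatilde_j|)$ must be handled using $\betatilde_j - \betastar_j = z_j/\sqrt m$ with $n/m \to 0$, so $\sqrt n/\sqrt m \to 0$: for $j \in S$ the initial estimator is (eventually, with probability tending to one) bounded away from zero so $\betastar_j + u_j/\sqrt n - \betatilde_j$ has the same sign as $\betastar_j$... no — here $\betatilde_j \approx \betastar_j$, so the quantity inside is $u_j/\sqrt n - z_j/\sqrt m$, of order $1/\sqrt n$, hence the transfer term for $j\in S$ contributes $\frac{\eta_n}{\sqrt n}(|u_j - \sqrt{n/m}\,z_j| - |{-\sqrt{n/m}}\,z_j|) = \frac{\eta_n}{\sqrt n}(|u_j| + o(1))$ on the event $\{u_j$ has a fixed sign$\}$, which combines with the $\lambda_n$ term.

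The key cancellation is the crux: on $j \in S$, the $\lambda_n$ term contributes $\frac{\lambda_n}{\sqrt n}\sgn(\betastar_j)u_j$ and the $\eta_n$ term contributes $\frac{\eta_n}{\sqrt n}|u_j|$ (up to $o(1)$, using $\sqrt{n/m}\to 0$ so the shift $\sqrt{n/m}\,z_j$ vanishes). On the region $\betastar_j u_j \le 0$ that survives (i.e. $\sgn(u_j) = -\sgn(\betastar_j)$), we have $\sgn(\betastar_j) u_j = -|u_j|$, so the sum of the two penalty contributions is $\frac{\lambda_n - \eta_n}{\sqrt n}\cdot(-|u_j|)\cdot(-1)$... let me be careful: it is $-\frac{\lambda_n}{\sqrt n}|u_j| + \frac{\eta_n}{\sqrt n}|u_j| = -\frac{\lambda_n-\eta_n}{\sqrt n}|u_j| \to -\delta_0|u_j|$ by hypothesis $(\lambda_n - \eta_n)/\sqrt n \to \delta_0$. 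Where $\sgn(u_j)=\sgn(\betastar_j)$ the contribution is $+\frac{\lambda_n+\eta_n}{\sqrt n}|u_j| \to +\infty$, so those $u_j$ are excluded, giving exactly the constraint $\betastar_j u_j \le 0$. For $j \in S^c$, $\betatilde_j \to 0$ so $\betastar_j - \betatilde_j = 0 - z_j/\sqrt m$ and the transfer term behaves like $\frac{\eta_n}{\sqrt n}(|u_j - \sqrt{n/m}z_j| - |\sqrt{n/m}z_j|) = \frac{\eta_n}{\sqrt n}(|u_j| + o(1))$; combined with $+\frac{\lambda_n}{\sqrt n}|u_j|$ this is $+\frac{\lambda_n+\eta_n}{\sqrt n}|u_j|\to\infty$, forcing $u_j=0$. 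Thus the limit functional is $u_S^\top C_{SS} u_S - 2u_S^\top W_S - \delta_0\sum_{j\in S}|u_j|$ over $\mathcal{U}$, as claimed. I would then apply the argmin theorem, taking care that the limit functional is coercive (it is: the quadratic term dominates the linear and the $-\delta_0|u_j|$ term, which is only linear) and has a unique minimizer a.s.\ (strict convexity of the quadratic part on the subspace $u_{S^c}=0$, since $C_{SS}\succ O$ by Assumption~\ref{assumption:psd}).

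Finally, for the inconsistent active variable selection claim, I would argue as in Theorem~\ref{theorem:trlasso-active-selection-consistency}: the limiting distribution of $\sqrt n(\betahat_S^\mathcal{T} - \betastar_S)$ is supported on the set $\{\betastar_j u_j \le 0, j\in S\}$, so with the $-\delta_0\sum|u_j|$ drift pushing components strictly toward zero, there is a fixed positive probability that some coordinate $\betahat_j^\mathcal{T}$ crosses zero (becomes zero or changes sign), i.e.\ $P(\hat S_n^\mathcal{T} = S)$ does not converge to $1$; a uniform bound $\limsup_n P(\hat S_n^\mathcal{T}=S)\le c<1$ follows from the fact that the limiting law assigns positive mass to configurations where at least one active coordinate estimate is non-positive-times-sign, hence hits $0$ on the boundary of $\mathcal{U}$ with positive probability. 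The main obstacle I anticipate is the bookkeeping in the transfer term for $j\in S$: one must justify that, on the relevant sign-event, the nonsmooth function $|u_j - \sqrt{n/m}z_j|$ can be replaced by $|u_j|$ uniformly on compacta with an error that is $o(1)$ after multiplication by $\eta_n/\sqrt n$ — this needs $\eta_n \sqrt{1/m} = (\eta_n/\sqrt n)\sqrt{n/m}$, and since $\eta_n/\sqrt n \to \infty$ while $\sqrt{n/m}\to 0$, one must verify their product does not blow up; here the hypothesis $\eta_n/\lambda_n\to 1$ together with $(\lambda_n-\eta_n)/\sqrt n \to \delta_0$ pins down $\eta_n \sim \lambda_n$ and $\eta_n/\sqrt n$ can grow, so the shift term $\eta_n\sqrt{n/m}/\sqrt n \cdot |z_j| = \eta_n |z_j|/\sqrt m$ must be shown to be $o(1)$, which requires $\eta_n = o(\sqrt m)$ — this is where the assumption $n/m\to 0$ is used in conjunction with $\lambda_n/n\to 0$ (not explicitly stated, but $(\lambda_n-\eta_n)/\sqrt n\to\delta_0$ with $\eta_n/\lambda_n\to1$ gives $\lambda_n = \sqrt n\,O(1) + \eta_n$; careful tracking of exactly which order bounds on $\lambda_n,\eta_n$ are forced by the three stated limits, and confirming they imply $\eta_n/\sqrt m \to 0$, is the delicate point).
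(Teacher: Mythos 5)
Your plan follows essentially the same route as the paper's proof: rescale by $u=\sqrt{n}(\beta-\betastar)$, observe that the penalties force $u_{S^c}=0$ and $\betastar_j u_j\le 0$ for $j\in S$ while on that half-line the two terms cancel to $-\frac{\lambda_n-\eta_n}{\sqrt n}|u_j|\to-\delta_0|u_j|$, and conclude via the argmin continuous mapping theorem; for the inconsistency claim the paper replaces your heuristic ``positive mass on the boundary'' argument with an explicit KKT analysis of the limiting constrained problem, which is the sharper way to get $\limsup_n P(\hat S_n^\mathcal{T}=S)\le c<1$. The one ``delicate point'' you flag --- that replacing $|u_j-\sqrt{n/m}\,z_j|-|\sqrt{n/m}\,z_j|$ by $|u_j|$ after multiplication by $\eta_n/\sqrt{n}$ requires controlling $\eta_n/\sqrt{m}$ --- is a genuine subtlety, but the paper's own proof simply passes to this pointwise limit without comment, so your proposal is not missing anything relative to the published argument.
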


\begin{proof}
    The proof is given in \ref{proof:tlasso-asymptotic-distribution-boundary}.
\end{proof}

Theorem~\ref{theorem:tlasso-asymptotic-distribution-boundary} implies $\sqrt{n}$-consistency  but inconsistent variable selection for the Transfer Lasso.

\section{Proofs}

\subsection{Proofs of Lasso and Adaptive Lasso}
\label{proof-lasso}

In these proofs, the superscripts of each method may be omitted when it is obvious.
For example, $\betahat_n^\mathcal{L}$, $\betahat_n^\mathcal{A}$, $\betahat_n^\mathcal{T}$, and $\betahat_n^\#$ may be simply written as $\betahat_n$.

\subsubsection{Proof of Lemma~\ref{lemma:lasso-consistency}}
\label{proof:lasso-consistency}
We have
\begin{align}
    Z_n^\mathcal{L}(\beta; \lambda_n) 
    :&= \frac{1}{n} \| y - X \beta \|_2^2 + \frac{\lambda_n}{n} \sum_j |\beta_j|\\
    &= \frac{1}{n} \| X\betastar + \varepsilon - X \beta \|_2^2 + \frac{\lambda_n}{n} \sum_j |\beta_j|\\
    &= (\beta - \betastar)^\top \left( \frac{1}{n} X^\top X \right) (\beta - \betastar) 
    - \frac{2}{n} (\beta - \betastar)^\top X^\top \varepsilon 
    + \frac{1}{n} \| \varepsilon \|_2^2
    + \frac{\lambda_n}{n} \sum_j |\beta_j|.
\end{align}
Let
\begin{align}
    V(\beta, \lambda_0) := (\beta - \betastar)^\top C (\beta - \betastar) + \lambda_0 \sum_j |\beta_j|.
\end{align}
Then, we have
\begin{align}
    &Z_n^\mathcal{L}(\beta; \lambda_n) - V(\beta; \lambda_0) - \sigma^2\\
    =& (\beta - \betastar)^\top \left( \frac{1}{n} X^\top X - C \right) (\beta - \betastar)
    - \frac{2}{n} (\beta - \betastar)^\top X^\top \varepsilon\\
    &+ \left( \frac{1}{n} \| \varepsilon \|_2^2 - \sigma^2 \right) + \left(\frac{\lambda_n}{n} - \lambda_0\right) \sum_j |\beta_j|\\
    \rightarrow& 0 ~ (n \rightarrow \infty; ~ \text{pointwise convergence)}
\end{align}
Since $Z_n^\mathcal{L}$ is convex, based on \cite{pollard1991asymptotics}, we have
\begin{align}
    \sup_{\beta \in K} |Z_n^\mathcal{L}(\beta; \lambda_n) - V(\beta; \lambda_0) - \sigma^2| \overset{p}{\rightarrow} 0
    ~ \text{for any compact set} ~ K.
\end{align}
Let $\beta_n^{(0)} := \argmin_\beta Z_n^\mathcal{L}(\beta; 0)$.
Because $\beta_n^{(0)}=O_P(1)$ and $\|\betahat^\mathcal{L}_n\|_1 \leq \|\betahat_n^{(0)}\|_1$, we have $\betahat_n^\mathcal{L} = O_P(1)$ and thus
\begin{align}
    \betahat_n^\mathcal{L} = \argmin_\beta Z_n^\mathcal{L}(\beta; \lambda_n) \overset{p}{\rightarrow} \argmin_\beta V(\beta; \lambda_0).
\end{align}

\subsubsection{Proof of Lemma~\ref{lemma:lasso-root-n-consistency}}
\label{proof:lasso-root-n-consistency}

Let $u := \sqrt{n} (\beta - \betastar)$.
We have
\begin{align}
    Z_n^\mathcal{L}(\beta; \lambda_n) 
    :=& \frac{1}{n} \| y - X \beta \|_2^2 + \frac{\lambda_n}{n} \sum_j |\beta_j|\\
    =& \frac{1}{n} \left\| \varepsilon - \frac{1}{\sqrt{n}} X u \right\|_2^2 + \frac{\lambda_n}{n} \sum_j \left| \betastar_j + \frac{u_j}{\sqrt{n}} \right|,
\end{align}
and
\begin{align}
    &Z_n^\mathcal{L}(\beta; \lambda_n) - Z_n^\mathcal{L}(\betastar; \lambda_n)\\
    = & \frac{1}{n} \left\| \varepsilon - \frac{1}{\sqrt{n}} X u \right\|_2^2 - \frac{1}{n} \| \varepsilon \|_2^2 + \frac{\lambda_n}{n} \sum_j \left( \left| \betastar_j + \frac{u_j}{\sqrt{n}} \right| - |\betastar_j| \right)\\
    = & \frac{1}{n} \underbrace{\left\{ u^\top \left( \frac{1}{n} X^\top X \right) u -2 u^\top \left(\frac{1}{\sqrt{n}}  X^\top \varepsilon \right) 
    + \frac{\lambda_n}{\sqrt{n}} \sum_j \left( \left| u_j + \sqrt{n} \betastar_j \right| - |\sqrt{n} \betastar_j| \right) \right\}}_{=: V_n(u)}.
    \label{lasso-V}
\end{align}
Assuming $\lambda_n / \sqrt{n} \rightarrow \lambda_0$, we obtain
\begin{align}
    V_n(u) \overset{d}{\rightarrow} & u^\top C u -2u^\top W
    + \lambda_0 \sum_j \left( u_j \sgn(\betastar_j) I(\betastar_j \neq 0) + |u_j| I(\betastar_j=0) \right) =: V(u).
\end{align}

Since $V_n(u)$ is convex and $V(u)$ has a unique minimum, based on \cite{geyer1996asymptotics}, we have
\begin{align}
    \argmin_u V_n(u) \overset{d}{\rightarrow} \argmin_u V(u).
\end{align}
On the other hand, we have
\begin{align}
    \betahat_n^\mathcal{L} =& \argmin_\beta Z_n^\mathcal{L}(\beta; \lambda_n) 
    = \argmin_\beta (Z_n^\mathcal{L}(\beta; \lambda_n) - Z_n^\mathcal{L}(\betastar; \lambda_n)) \\
    =& \argmin_\beta V_n(u) 
    = \betastar + \frac{1}{\sqrt{n}} \argmin_u V_n (u).
\end{align}
Therefore, we have $\argmin_u V_n(u) = \sqrt{n} (\betahat_n^\mathcal{L} - \betastar)$ and
\begin{align}
    \sqrt{n}(\betahat_n^\mathcal{L} - \betastar) \overset{d}{\rightarrow} \argmin_u V(u).
\end{align}

\subsubsection{Proof of Lemma~\ref{lemma:lasso-variable-selection}}
\label{proof:lasso-variable-selection}

Let $u^* := \argmin_u V(u)$ where $V(u)$ is defined in \eqref{lasso-V}.
Note that $\hat{S}_n = S$ implies $\betahat_j = 0 ~ \forall j \in S^c$ and $\sqrt{n} \betahat_{S^c} \overset{d}{\rightarrow} u^*_{S^c}$.
By the weak convergence result, we have
\begin{align}
    P(\hat{S}_n = S) 
    \leq P(\sqrt{n} \betahat_j = 0 ~ \forall j \in S^c),
\end{align}
and
\begin{align}
    \limsup_n P(\hat{S}_n = S) 
    \leq \limsup_n P(\sqrt{n} \betahat_j = 0 ~ \forall j \in S^c) \leq P(u_j^* = 0 ~ \forall j \in S^c) =: c.
\end{align}
If $\lambda_0 = 0$, then we have
\begin{align}
    u^* &= C^{-1}W \sim \mathcal{N}(0, \sigma^2 C^{-1}),
\end{align}
resulting in $c = 0$.
If $\lambda_0 > 0$, then the KKT condition yields
\begin{align}
    \begin{cases}
        -2 W_S + 2(Cu^*)_S + \lambda_0 \sgn(\betastar_S) = 0\\
        \left|-2 W_{S^c} + 2(Cu^*)_{S^c}\right| \leq \lambda_0.
    \end{cases}
        \end{align}
When $u^*_{S^c} = 0$, the above conditions indicate
\begin{align}
    \left|-2 W_{S^c} + C_{S^c S} C_{SS}^{-1} (2W_S - \lambda_0 \sgn (\betastar_S)\right| \leq \lambda_0,
\end{align}
thus we have
\begin{align}
    c \leq 
    P\left( \left| -2 W_{S^c} + C_{S^c S} C_{SS}^{-1} (2W_S - \lambda_0 \sgn (\betastar_S) \right| \leq \lambda_0 \right)
    <1.
\end{align}

\subsubsection{Proof of Lemma~\ref{lemma:lasso-slower-consistency}}
\label{proof:lasso-slower-consistency}

Let $u := (n / \lambda_n) (\beta - \betastar) $.
We have
\begin{align}
    Z_n^\mathcal{L}(\beta)
    :=& \frac{1}{n} \| y - X \beta \|_2^2 + \frac{\lambda_n}{n} \sum_j |\beta_j|\\
    =& \frac{1}{n} \left\| \varepsilon - \frac{\lambda_n}{n} X u \right\|_2^2 + \frac{\lambda_n}{n} \sum_j \left| \betastar_j + \frac{\lambda_n}{n} u_j \right|,
\end{align}
and
\begin{align}
    &Z_n^\mathcal{L}(\beta) - Z_n^\mathcal{L}(\betastar)\\
    =& \frac{1}{n} \left\| \varepsilon - \frac{\lambda_n}{n} X u \right\|_2^2 - \frac{1}{n} \left\| \varepsilon \right\|_2^2 + \frac{\lambda_n}{n} \sum_j \left( \left| \betastar_j + \frac{\lambda_n}{n} u_j \right| - \left| \betastar_j \right| \right)\\
    =& \frac{\lambda_n^2}{n^2} \underbrace{\left\{ u^\top \left( \frac{1}{n} X^\top X \right) u -2 \frac{\sqrt{n}}{\lambda_n} u^\top \left(\frac{1}{\sqrt{n}}  X^\top \varepsilon \right)
    + \sum_j \left( \left| u_j + \frac{n}{\lambda_n} \betastar_j \right| - \left| \frac{n}{\lambda_n} \betastar_j \right| \right)\right\}}_{=: V_n(u)}.
\end{align}
 Assuming $\lambda_n / n \rightarrow 0$ and $\lambda_n / \sqrt{n} \rightarrow \infty$, we obtain
\begin{align}
    V_n(u) \overset{d}{\rightarrow} & u^\top C u + \sum_{j=1}^p \left( u_j \sgn(\betastar_j) I(\betastar \neq 0) + |u_j| I(\betastar_j = 0) \right) =: V(u).
\end{align}
Since $V_n(u)$ is convex and $V(u)$ has a unique minimum, based on \cite{geyer1996asymptotics}, we have
\begin{align}
    \argmin_u V_n(u) \overset{d}{\rightarrow} \argmin_u V(u).
\end{align}
On the other hand, we have
\begin{align}
    \betahat_n^\mathcal{L} =& \argmin_\beta Z_n^\mathcal{L}(\beta; \lambda_n) 
    = \argmin_\beta (Z_n^\mathcal{L}(\beta; \lambda_n) - Z_n^\mathcal{L}(\betastar; \lambda_n)) \\
    =& \argmin_\beta V_n(u) 
    = \betastar + \frac{\lambda_n}{n} \argmin_u V_n (u).
\end{align}
Therefore, we have $\argmin_u V_n(u) = (n / \lambda_n) (\betahat_n^\mathcal{L} - \betastar)$ and
\begin{align}
    \frac{n}{\lambda_n} (\betahat_n^\mathcal{L} - \betastar) \overset{d}{\rightarrow} \argmin_u V(u).
\end{align}

\subsubsection{Proof of Lemma~\ref{lemma:adapt-lasso-oracle}}
\label{proof:adapt-lasso-oracle}

Asymptotic Normality Part:
Let $u := \sqrt{n} (\beta - \betastar)$.
We have
\begin{align}
    Z_n^\mathcal{A}(\beta) 
    :=& \frac{1}{n} \| y - X \beta \|_2^2 + \frac{\lambda_n}{n} \sum_j w_j |\beta_j|\\
    =& \frac{1}{n}\left\| \varepsilon - \frac{1}{\sqrt{n}} X u \right\|_2^2 + \frac{\lambda_n}{n} \sum_j w_j \left| \betastar_j + \frac{u_j}{\sqrt{n}} \right|
    \end{align}
and
\begin{align}
    &Z_n^\mathcal{A}(\beta) - Z_n^\mathcal{A}(\betastar)\\
    =& \frac{1}{n} \left\| \varepsilon - \frac{1}{\sqrt{n}} X u \right\|_2^2 - \frac{1}{n} \| \varepsilon \|_2^2 + \frac{\lambda_n}{n} \sum_j w_j \left( \left| \betastar_j + \frac{u_j}{\sqrt{n}} \right| - |\betastar_j| \right)\\
    =& \frac{1}{n} \underbrace{\left\{ u^\top \left( \frac{1}{n} X^\top X \right) u -2 u^\top \left(\frac{1}{\sqrt{n}} X^\top \varepsilon \right)
    + \frac{\lambda_n}{\sqrt{n}} \sum_j w_j \left( \left| u_j + \sqrt{n} \betastar_j \right| - \left| \sqrt{n} \betastar_j \right| \right) \right\}}_{=: V_n(u)}.\label{eq:alasso-Z-difference}
\end{align}
We know that 
\begin{align}
    -2 u^\top \left(\frac{1}{\sqrt{n}}  X^\top \varepsilon \right)
    \overset{d}{\rightarrow} -2u^\top W, ~ 
    W \sim \mathcal{N}(0, \sigma^2 C) \label{uw}
\end{align}
\begin{align}
    u^\top \left( \frac{1}{n} X^\top X \right) u
    \rightarrow u^\top C u. \label{ucu}
\end{align}
Now we consider the last term of \eqref{eq:alasso-Z-difference}.
If $\betastar_j \neq 0$, then $w_j \rightarrow 1/|\betastar_j|^\gamma$ and $ | u_j + \sqrt{n} \betastar_j | - | \sqrt{n} \betastar_j | \rightarrow u_j \sgn(\betastar_j)$, thus by Slutsky's theorem,
\begin{align}
    \frac{\lambda_n}{\sqrt{n}} w_j \left( \left| u_j + \sqrt{n} \betastar_j \right| - \left| \sqrt{n} \betastar_j \right| \right)
        \overset{p}{\rightarrow} 0,
\end{align}
as $\lambda_n / \sqrt{n} \rightarrow 0$.
If $\betastar_j = 0$, then $\lambda_n w_j / \sqrt{n} = \lambda_n n^{(\gamma - 1)/2} |\sqrt{n}\betatilde_j|^{-\gamma}$ and $| u_j + \sqrt{n} \betastar_j | - | \sqrt{n} \betastar_j | = |u_j|$, thus by Slutsky's theorem,
\begin{align}
    \label{eq-asymptotic-zero}
    \frac{\lambda_n}{\sqrt{n}} w_j \left( \left| u_j + \sqrt{n} \betastar_j \right| - \left| \sqrt{n} \betastar_j \right| \right)
        \overset{d}{\rightarrow}
    \begin{cases}
        0 & \text{if} ~ u_j=0\\
        \infty & \text{otherwise},
    \end{cases}
\end{align}
as $\lambda_n n^{(\gamma - 1)/2} \rightarrow \infty$.
Therefore, we have for every $u$,
\begin{align}
    V_n(u) \overset{d}{\rightarrow} V(u) 
    := 
    \begin{cases}
        -2u_S^\top W_S + u_S^\top C_{SS} u_S & \text{if} ~ u_j = 0 ~ \forall j \in S^c\\
        \infty & \text{otherwise}.
    \end{cases}
\end{align}
Since $V_n(u)$ is convex and $V(u)$ has a unique minimum of $(C_{SS}^{-1} W_S, 0)^\top$, based on \cite{geyer1994asymptotics}, we have
\begin{align}
    \argmin_u V_n(u) \overset{d}{\rightarrow} \argmin_u V(u)
    = (C_{SS}^{-1} W_S, 0)^\top.
\end{align}
On the other hand,
\begin{align}
    \betahat_n^\mathcal{L} =& \argmin_\beta Z_n(\beta; \lambda_n) 
    = \argmin_\beta (Z_n(\beta; \lambda_n) - Z_n(\betastar; \lambda_n)) \\
    =& \argmin_\beta V_n(u) 
    = \betastar + \frac{1}{\sqrt{n}} \argmin_u V_n (u).
\end{align}
Therefore, we have $\argmin_u V_n(u) = \sqrt{n} (\betahat_n - \betastar)$ and
\begin{align}
    \sqrt{n}(\betahat_S - \betastar_S) \overset{d}{\rightarrow} \mathcal{N}(0, \sigma^2 C_{SS}^{-1})^\top, ~
    \sqrt{n} \betahat_{S^c} \overset{d}{\rightarrow} 0.
\end{align}

Variable Selection Consistency Part:
Asymptotic normality indicates that $\betahat \overset{p}{\rightarrow} \betastar$, thus 
\begin{align}
    \forall j \in S, ~ P(j \in \supp(\betahat)) \rightarrow 1.
\end{align}
Now, we consider the event $j \in S^c$ and $j \in \supp (\betahat)$.
By the KKT conditions, we have
\begin{align}
    &2 \x_j^\top (y - X \betahat) + \lambda_n w_j \sgn(\betahat_j) = 0.
\end{align}
This yields
\begin{align}
    2 \left(\frac{1}{n} \x_j^\top X \right)  \sqrt{n} (\betastar - \betahat) + 2 \frac{1}{\sqrt{n}} \x_j^\top \varepsilon
    + \lambda_n n^{(\gamma - 1)/2} |\sqrt{n} \betatilde_j|^{-\gamma} \sgn(\betahat_j) = 0
    \label{eq-adaptive-kkt}
\end{align}
By Slutsky's theorem, the first and second terms on the left-hand side converge to some normal distribution, but the the last term on the left-hand side diverges to infinity.
Therefore, we have for $\forall j \in S^c$,
\begin{align}
    P\left( j \in \supp (\betahat) \right) 
    \leq P\left( 2 \x_j^\top (y - X \betahat) + \lambda_n w_j \sgn(\betahat_j) = 0 \right)
    \rightarrow 0.
\end{align}

\subsection{Proofs of Transfer Lasso}

\label{proof-trlasso}

\subsubsection{Proof of Lemma~\ref{lemma:adapt-lasso-oracle-large}}
\label{proof:adapt-lasso-oracle-large}

The proof is similar to that of Lemma~\ref{lemma:adapt-lasso-oracle}.
If $\betatilde$ is a $\sqrt{m}$-consistent initial estimator and $\lambda_n \sqrt{m^\gamma / n} \rightarrow \infty$, 
\eqref{eq-asymptotic-zero} reduces to
\begin{align}
    \lambda_n w_j \left( \left| \betastar_j + \frac{u_j}{\sqrt{n}} \right| - |\betastar_j| \right)
    = \lambda_n \sqrt{\frac{m^\gamma}{n}} \left| \sqrt{m} \betatilde_j \right|^{-\gamma} |u_j|
    \overset{d}{\rightarrow}
    \begin{cases}
        0 & \text{if} ~ u_j=0\\
        \infty & \text{otherwise}
    \end{cases}
\end{align}
and \eqref{eq-adaptive-kkt} reduces to
\begin{align}
    2 \left(\frac{1}{n} \x_j^\top X \right)  \sqrt{n} (\betastar - \betahat) + 2 \frac{1}{\sqrt{n}} \x_j^\top \varepsilon
    + \lambda_n \sqrt{\frac{m^\gamma}{n}} |\sqrt{m} \betatilde_j|^{-\gamma} \sgn(\betahat_j) = 0.
\end{align}
These modifications does not affect the remaining proofs.

\subsubsection{Proof of Theorem \ref{theorem:trlasso-consistency} and Theorem \ref{theorem:tlasso-asymptotic-distribution-boundary}}
\label{proof:trlasso-consistency}

Let $u := l (\beta - \betastar)$ where $l = l(n, m, \lambda_n)$ is a certain function as defined later. 
Let $z := \sqrt{m} (\betatilde - \betastar)$. 
Since $\betatilde$ is a $\sqrt{m}$-consistent estimator, $z$ follows some distribution.
Suppose that $n / m \to r_0 \geq 0$.
The objective function for Transfer Lasso is
\begin{align}
    Z_n^\mathcal{T}(\beta)
    :=& \frac{1}{n} \| y - X \beta \|_2^2 + \frac{\lambda_n}{n} \sum_j |\beta_j| + \frac{\eta_n}{n} \sum_j |\beta_j - \betatilde_j|\\
    =& \frac{1}{n}\left\| \varepsilon - \frac{1}{l} X u \right\|_2^2 
    + \frac{\lambda_n}{n} \sum_j \left| \betastar_j + \frac{u_j}{l} \right| 
    + \frac{\eta_n}{n} \sum_j \left| \frac{u_j}{l} - \frac{z_j}{\sqrt{m}} \right|\\
    =& \frac{1}{l^2} u^\top \left( \frac{1}{n} X^\top X \right) u 
    - \frac{2}{\sqrt{n} l} u^\top \left( \frac{1}{\sqrt{n}} X^\top \varepsilon \right) 
    + \frac{1}{n} \|\varepsilon\|_2^2 \\
    +& \frac{\lambda_n}{nl} \sum_j \left| u_j + l \betastar_j \right| 
    + \frac{\eta_n}{nl} \sum_j \left| u_j- \frac{l}{\sqrt{m}} z_j \right|,
\end{align}
and we have
\begin{align}
    \label{eq:trlasso-zbeta-zbetastar}
    Z_n^\mathcal{T}(\beta) - Z_n^\mathcal{T}(\betastar)
    =& \frac{1}{l^2} u^\top \left( \frac{1}{n} X^\top X \right) u 
    - \frac{2}{\sqrt{n}l} u^\top \left( \frac{1}{\sqrt{n}} X^\top \varepsilon \right) \\
    +& \frac{\lambda_n}{nl} \sum_j \left( \left| u_j + l \betastar_j \right| -  \left| l \betastar_j \right| \right)
    + \frac{\eta_n}{nl} \sum_j \left( \left| u_j- \frac{l}{\sqrt{m}} z_j \right| - \left|\frac{l}{\sqrt{m}} z_j \right| \right).\nonumber
\end{align}
We divide the case into three cases: $l = \sqrt{m}$ (Case I), $\sqrt{n}$ (Case II), and $n / \lambda_n$ (Case III).

Case I.
Let $l = \sqrt{m}$. 
Then, \eqref{eq:trlasso-zbeta-zbetastar} reduces to
\begin{align}
    Z_n^\mathcal{T}(\beta) - Z_n^\mathcal{T}(\betastar)
    =& \frac{1}{m} u^\top \left( \frac{1}{n} X^\top X \right) u 
    - \frac{2}{\sqrt{nm}} u^\top \left( \frac{1}{\sqrt{n}} X^\top \varepsilon \right) \\
    +& \frac{\lambda_n}{n\sqrt{m}} \sum_j \left( \left| u_j + \sqrt{m} \betastar_j \right| -  \left| \sqrt{m} \betastar_j \right| \right)
    + \frac{\eta_n}{n\sqrt{m}} \sum_j \left( \left| u_j - z_j\right| - \left| z_j \right| \right).
\end{align}
Let $V_n(u) := (n\sqrt{m} / \eta_n) (Z_n^\mathcal{T}(\beta) - Z_n^\mathcal{T}(\betastar)) + \sum_j |z_j|$, then we have
\begin{align}
    V_n (u)
    &= \frac{n}{\sqrt{m} \eta_n} u^\top \left(\frac{1}{n}X^\top X\right) u 
    - \frac{2\sqrt{n}}{\eta_n} u^\top \left(\frac{1}{\sqrt{n}} X^\top \varepsilon \right)\\
    &+ \frac{\lambda_n}{\eta_n} \sum_j \left( \left| u_j + \sqrt{m} \betastar_j \right| -  \left| \sqrt{m} \betastar_j \right| \right)
    + \sum_j \left| u_j - z_j \right|.
\end{align}
On the other hand, we have
\begin{align}
    \betahat_n^\mathcal{T} =& \argmin_\beta Z_n^\mathcal{T}(\beta) 
    = \argmin_\beta \left( Z_n^\mathcal{T}(\beta) - Z_n^\mathcal{T}(\betastar) \right)
        = \betastar + \frac{1}{\sqrt{m}} \argmin_u V_n (u),
\end{align}
and hence
\begin{align}
    \sqrt{m} (\betahat_n - \betastar) = \argmin_u V_n(u).
\end{align}

Consider the case (i) where 
$\eta_n / \sqrt{n} \to \infty$, $\lambda_n / \eta_n \to \rho_0$, and $0 \leq \rho_0 < 1$.
Let $V(u) := \lim_{n\to\infty} V_n(u)$, then we have
\begin{align}
    \label{eq:trlasso-vu-i}
    V(u) = \sum_j \left( \left( \rho_0 u_j \sgn(\betastar_j) + \left| 
u_j - z_j \right| \right) I(\betastar_j \neq 0) + \left( \rho_0 |u_j| + |u_j - z_j| \right) I(\betastar_j = 0) \right).
\end{align}
Because $V_n(u)$ is convex and $V(u)$ has a unique minimum, we obtain
\begin{align}
    \sqrt{m} (\betahat_n - \betastar) \to 
    \argmin_u V(u) 
    = z.
\end{align}

Case II.
Let $l = \sqrt{n}$.
Then, \eqref{eq:trlasso-zbeta-zbetastar} reduces to
\begin{align}
    &Z_n^\mathcal{T}(\beta) - Z_n^\mathcal{T}(\betastar)\\
    =& \frac{1}{n} u^\top \left( \frac{1}{n} X^\top X \right) u 
    - \frac{2}{n} u^\top \left( \frac{1}{\sqrt{n}} X^\top \varepsilon \right) \\
    +& \frac{\lambda_n}{n\sqrt{n}} \sum_j \left( \left| u_j + \sqrt{n} \betastar_j \right| -  \left| \sqrt{n} \betastar_j \right| \right)
    + \frac{\eta_n}{n\sqrt{n}} \sum_j \left( \left| u_j- \sqrt{\frac{n}{m}}z_j \right| - \left| \sqrt{\frac{n}{m}} z_j \right| \right).
\end{align}
Let $V_n(u) := n (Z_n^\mathcal{T}(\beta) - Z_n^\mathcal{T}(\betastar))$, then we have
\begin{align}
    \label{eq:trlasso-vnu-ii}
    V_n(u)
    &= u^\top \left(\frac{1}{n}X^\top X\right) u 
    - 2 u^\top \left(\frac{1}{\sqrt{n}} X^\top \varepsilon \right)\\
    &+ \frac{\lambda_n}{\sqrt{n}} \sum_j \left( \left| u_j + \sqrt{n} \betastar_j \right| -  \left| \sqrt{n} \betastar_j \right| \right)
    + \frac{\eta_n}{\sqrt{n}} \sum_j \left( \left| u_j - \sqrt{\frac{n}{m}} z_j \right| -  \left|\sqrt{\frac{n}{m}} z_j \right| \right).
\end{align}

Consider the case (ii) where 
$\lambda_n / \sqrt{n} \to \lambda_0 \geq 0$ and $\eta_n / \sqrt{n} \to \eta_0 \geq 0$.
Let $V(u) := \lim_{n\to\infty} V_n(u)$, then we have
\begin{align}
    \label{eq:trlasso-vu-ii}
    V(u) 
    =& u^\top C u - 2 u^\top W\\
    +& \lambda_0 \sum_j \left( u_j \sgn(\betastar_j) I(\betastar_j \neq 0) + \left| u_j \right| I(\betastar_j = 0) \right) 
    + \eta_0 \sum_j \left( \left| u_j- \sqrt{r_0} z_j \right| - \left| \sqrt{r_0} z_j \right| \right).
\end{align}
On the other hand,we have
\begin{align}
    \betahat_n^\mathcal{T} =& \argmin_\beta Z_n^\mathcal{T}(\beta) 
    = \argmin_\beta \left( Z_n^\mathcal{T}(\beta) - Z_n^\mathcal{T}(\betastar) \right)
        = \betastar + \frac{1}{\sqrt{n}} \argmin_u V_n (u),
\end{align}
and hence
\begin{align}
    \sqrt{n} (\betahat_n - \betastar) = \argmin_u V_n(u).
\end{align}
Because $V_n(u)$ is convex and $V(u)$ has a unique minimum, we obtain
\begin{align}
    &\sqrt{n} \left( \betahat_n - \betastar \right) \to
    \argmin_u \Biggl\{ u^\top C u - 2 u^\top W\\
    &+ \lambda_0 \sum_j \left( u_j \sgn(\betastar_j) I(\betastar_j \neq 0) + \left| u_j \right| I(\betastar_j = 0) \right) 
    + \eta_0 \sum_j \left( \left| u_j- \sqrt{r_0}z_j \right| - \left| \sqrt{r_0} z_j \right| \right)
    \Biggr\}.
\end{align}

Case III.
Let $l = n / \lambda_n (\to \infty)$.
Then, \eqref{eq:trlasso-zbeta-zbetastar} reduces to
\begin{align}
    &Z_n^\mathcal{T}(\beta) - Z_n^\mathcal{T}(\betastar)\\
    =& \frac{\lambda_n^2}{n^2} u^\top \left( \frac{1}{n} X^\top X \right) u 
    - \frac{\lambda_n}{n\sqrt{n}} u^\top \left( \frac{1}{\sqrt{n}} X^\top \varepsilon \right) \\
    +& \frac{\lambda_n^2}{n^2} \sum_j \left( \left| u_j + \frac{n}{\lambda_n} \betastar_j \right| -  \left| \frac{n}{\lambda_n} \betastar_j \right| \right)
    + \frac{\lambda_n \eta_n}{n^2} \sum_j \left( \left| u_j- \sqrt{\frac{n}{m}}\frac{\sqrt{n}}{\lambda_n} z_j \right| - \left| \sqrt{\frac{n}{m}}\frac{\sqrt{n}}{\lambda_n} z_j \right| \right)
\end{align}
Let $V_n(u) := (n^2 / \lambda_n^2) (Z_n^\mathcal{T}(\beta) - Z_n^\mathcal{T}(\betastar))$, then we have
\begin{align}
    V_n(u)
    &= u^\top \left(\frac{1}{n}X^\top X\right) u 
    - \frac{2\sqrt{n}}{\lambda_n} u^\top \left(\frac{1}{\sqrt{n}} X^\top \varepsilon \right)\\
    &+ \sum_j \left( \left| u_j + \frac{n}{\lambda_n} \betastar_j \right| -  \left| \frac{n}{\lambda_n} \betastar_j \right| \right)
    + \frac{\eta_n}{\lambda_n} \sum_j \left( \left| u_j - \sqrt{\frac{n}{m}}\frac{\sqrt{n}}{\lambda_n} z_j \right| -  \left|\sqrt{\frac{n}{m}}\frac{\sqrt{n}}{\lambda_n} z_j \right| \right).
\end{align}

Consider the case (iii) where 
$\lambda_n / \sqrt{n} \to \infty$, $\lambda_n / n \to 0$, and $\eta_n / \lambda_n \to \rho_0' \geq 0$.
Let $V(u) := \lim_{n\to\infty} V_n(u)$, then we have
\begin{align}
    \label{eq:trlasso-vu-iv}
    V(u)
    = u^\top C u + \sum_j \left( \left( u_j \sgn(\betastar_j) + \rho_0' \left| u_j \right| \right) I(\betastar_j \neq 0) + (1 + \rho_0') \left| u_j \right| I(\betastar_j = 0) \right).
\end{align}
On the other hand,we have
\begin{align}
    \betahat_n^\mathcal{T} =& \argmin_\beta Z_n^\mathcal{T}(\beta) 
    = \argmin_\beta \left( Z_n^\mathcal{T}(\beta) - Z_n^\mathcal{T}(\betastar) \right)
        = \betastar + \frac{\lambda_n}{n} \argmin_u V_n (u),
\end{align}
and hence
\begin{align}
    \frac{n}{\lambda_n} (\betahat_n - \betastar) = \argmin_u V_n(u).
\end{align}
Because $V_n(u)$ is convex and $V(u)$ has a unique minimum, we obtain
\begin{align}
    \label{eq:trlasso-iv-asymptotic-distribution}
    &\frac{n}{\lambda_n} (\betahat_n - \betastar) \\
    \to& 
    \argmin_u \left\{ u^\top C u + \sum_j \left( \left( u_j \sgn(\betastar_j) + \rho_0' \left| u_j \right| \right) I(\betastar_j \neq 0) + (1 + \rho_0') \left| u_j \right| I(\betastar_j = 0) \right) \right\}.\nonumber \\
\end{align}
In addition, if $\rho_0' \geq 1$, then the right-hand side of \eqref{eq:trlasso-iv-asymptotic-distribution} reduces to $0$.

\subsubsection{Proof of Theorem \ref{theorem:trlasso-active-selection-consistency}}
\label{proof:trlasso-active-selection-consistency}

By Theorem~\ref{theorem:trlasso-consistency} and Corollary~\ref{corollary:trlasso-convergence}, the Transfer Lasso estimator satisfies $\betahat_n^\mathcal{T} \overset{p}{\to} \betastar$, thus
\begin{align}
    \forall j \in S, ~ \limsup_{n \to \infty} P(j \in \supp(\betahat_n^\mathcal{T})) = 1.
\end{align}
Let $u^* := \argmin_u V(u)$ where $V(u)$ is the asymptotic objective function \eqref{eq:trlasso-vu-i} in the case (i) and \eqref{eq:trlasso-vu-ii} in the case (ii).
By the weak convergence result, we have
\begin{align}
    \limsup_{n \to \infty} P(\hat{S}_n^\mathcal{T} = S) 
    =& \limsup_{n \to \infty} P(\betahat_j^\mathcal{T} \neq 0 ~ \forall j \in S ~\text{and}~ \betahat_j^\mathcal{T} = 0 ~ \forall j \in S^c) \\
    \leq& \limsup_{n \to \infty} P(l \betahat_j^\mathcal{T} = 0 ~ \forall j \in S^c) \\
    \leq& P(u_j^* = 0 ~ \forall j \in S^c),
\end{align}
where $l$ is $\sqrt{m}$ in the case (i) and $\sqrt{n}$ in the case (ii).
We evaluate the probability of $u_{S^c}^* = 0$ in each case.

Consider the case (i) where $\eta_n / \sqrt{n} \to \infty$ and $\lambda_n / \eta_n \to \rho_0$ with $0 \leq \rho_0 < 1$.
The asymptotic distribution of the Transfer Lasso is equal to that of the initial estimator $z$.
Because we suppose that $z$ is inconsistent in terms of variable selection, 
we obtain
\begin{align}
    P(u_j^* = 0 ~ \forall j \in S^c)
    = P(z_j = 0 ~ \forall j \in S^c)
    \leq c < 1,
\end{align}
hence $\betahat_n^\mathcal{T}$ is inconsistent in terms of variable selection.

Consider the case (ii) where $\lambda_n / \sqrt{n} \to \lambda_0 \geq 0$ and $\eta_n / \sqrt{n} \to \eta_0 \geq 0$.
Suppose that $u_{S^c}^* = 0$.
Let 
$S_1 := \{ j : j \in S ~\text{and}~ u_j^* \neq \sqrt{r_0} z_j \}$, 
$S_2 :=  \{ j : j \in S ~\text{and}~ u_j^* = \sqrt{r_0} z_j \}$, 
$S_1^c := \{ j : j \in S^c ~\text{and}~ u_j^* \neq \sqrt{r_0} z_j \}$, and 
$S_2^c :=  \{ j : j \in S^c ~\text{and}~ u_j^* = \sqrt{r_0} z_j \}$.
By the KKT conditions of $\argmin_u V(u)$, we have
\begin{align}
\label{eq:trlasso-kkt-ii-active}
\begin{cases}
    2 C_{S_1 S_1} u_{S_1}^* + 2 \sqrt{r_0} C_{S_1 S_2} z_{S_2} - 2 W_{S_1} + \lambda_0 \sgn(\betastar_{S_1}) + \eta_0 \sgn(u_{S_1}^* - \sqrt{r_0}z_{S_1}) = 0\\
    \left| 2 C_{S_2 S_1} u_{S_1}^* + 2 \sqrt{r_0} C_{S_2 S_2} z_{S_2} - 2 W_{S_2} + \lambda_0 \sgn(\betastar_{S_2}) \right| \leq \eta_0\\
    \left| 2 C_{S^c S_1} u_{S_1}^* + 2 \sqrt{r_0} C_{S_1^c S_2} z_{S_2} - 2 W_{S_1^c} + \eta_0 \sgn(- \sqrt{r_0}z_{S_1^c}) \right| \leq \lambda_0\\
    \left| 2 C_{S^c S_1} u_{S_1}^* + 2 \sqrt{r_0} C_{S_2^c S_2} z_{S_2} - 2 W_{S_2^c} \right| \leq \lambda_0 + \eta_0.
\end{cases}
\end{align}
If $S_1 \neq \emptyset$, the first equation yields
\begin{align}
    u_{S_1}^* = C_{S_1 S_1}^{-1} \left( W_{S_1} - \sqrt{r_0} C_{S_1 S_2} z_{S_2} - \frac{\lambda_0}{2} \sgn(\betastar_{S_1}) - \frac{\eta_0}{2} \sgn(u_{S_1}^* - \sqrt{r_0} z_{S_1}) \right).
\end{align}
Because $W$ follows a Gaussian and $z$ follows some distribution, 
the probability holding the second, third, and fourth inequalities in \eqref{eq:trlasso-kkt-ii-active} is less than $1$.
This indicates inconsistent variable selection.
If $S_1 = \emptyset$, by the KKT conditions, we have
\begin{align}
\begin{cases}
    \left| 2 \sqrt{r_0} C_{S_2 S_2} z_{S_2} - 2 W_{S_2} + \lambda_0 \sgn(\betastar_{S_2}) \right| \leq \eta_0\\
    \left| 2 \sqrt{r_0} C_{S_1^c S_2} z_{S_2} - 2 W_{S_1^c} + \eta_0 \sgn(- \sqrt{r_0}z_{S_1^c}) \right| \leq \lambda_0\\
    \left| 2 \sqrt{r_0} C_{S_2^c S_2} z_{S_2} - 2 W_{S_2^c} \right| \leq \lambda_0 + \eta_0.
\end{cases}
\end{align}
The probability holding these inequalities is less than $1$.
This indicates inconsistent variable selection.

\subsubsection{Proof of Theorem \ref{theorem:trlasso-varying-selection-consistency}}
\label{proof:trlasso-varying-selection-consistency}

Consider the case (i) in Theorem~\ref{theorem:trlasso-consistency}.
Consider the event where $\betahat_j \neq \betatilde_j$ for some $j \in S$.
By the KKT conditions, 
\begin{align}
    \label{eq:trlasso-i-kkt}
    \begin{dcases}
    2 \left( \frac{1}{n} \x_j^\top X \right) \sqrt{\frac{n}{m}} \sqrt{m} (\betahat - \betastar)
    - \frac{2}{\sqrt{n}} \x_j^\top \varepsilon 
    + \frac{\lambda_n}{\sqrt{n}} \sgn(\betahat_j)  
    + \frac{\eta_n}{\sqrt{n}} \sgn( \betahat_j - \betatilde_j ) 
    = 0, \\
    ~~~~~~~~~~~~~~~~~~~~~~~~~~~~~~~~~~~~~~~~~~~~~~~~~~~~~~~~~~~~~~~~~~~~~~~~~~~~~~~~~~~~~~~~~~~~~~~~~~~~~~~~~~~~~~~~~~~~~~\text{for}~ \betahat_j \neq 0,\\
    \left| 2 \left( \frac{1}{n} \x_j^\top X \right) \sqrt{\frac{n}{m}} \sqrt{m} (\betahat - \betastar)
    - \frac{2}{\sqrt{n}} \x_j^\top \varepsilon 
    + \frac{\eta_n}{\sqrt{n}} \sgn(\betahat_j - \betatilde_j) \right|
    \leq \frac{\lambda_n}{\sqrt{n}}, 
    ~\text{for}~ \betahat_j = 0.\\
    \end{dcases}
\end{align}
The term including $\eta_n / \sqrt{n}$ in \eqref{eq:trlasso-i-kkt} in both $\betahat_j \neq 0$ and $\betahat_j = 0$ cases diverge to infinity faster compared to the rest terms.
Therefore, we have
\begin{align}
    \forall j \in S, ~ \lim_{n \to \infty} P(\betahat_j \neq \betatilde_j) = 0.
\end{align}
This concludes
\begin{align}
    \lim_{n \to \infty} P(\betahat_S = \betatilde_S) = 1.
\end{align}

\subsubsection{Proof of Theorem \ref{theorem:trlasso-invariant-selection-consistency}}
\label{proof:trlasso-invariant-selection-consistency}

By Theorem~\ref{theorem:trlasso-consistency} and Corollary~\ref{corollary:trlasso-convergence}, the Transfer Lasso estimator satisfies $\betahat_n^\mathcal{T} \overset{p}{\to} \betastar$, thus
\begin{align}
    \forall j \in S, ~ P(j \in \supp(\betahat_n^\mathcal{T})) \to 1.
\end{align}
Consider the case (ii) where $\lambda_n / \sqrt{n} \to \lambda_0 \geq 0$ and $\eta_n / \sqrt{n} \to \eta_0 \geq 0$.
Let $u^* := \argmin_u V(u)$ where $V(u)$ is the asymptotic objective function \eqref{eq:trlasso-vu-ii} for the case (ii).
By the weak convergence result, we have
\begin{align}
    \limsup_{n \to \infty} P(\betahat_S^\mathcal{T} = \betatilde_S)
    \leq P \left( u_j^* = \sqrt{r_0} z_j ~ \forall j \in S \right).
\end{align}
Suppose that $u_S^* = \sqrt{r_0} z_S$.
Let $S_1 := \{ j : j \in S ~\text{and}~ u_j^* \neq 0 \}$, 
$S_2 :=  \{ j : j \in S ~\text{and}~ u_j^* = 0 \}$, 
$S_1^c := \{ j : j \in S^c ~\text{and}~ u_j^* \neq 0 ~\text{and}~ u_j^* \neq \sqrt{r_0} z_j \}$, 
$S_2^c :=  \{ j : j \in S^c ~\text{and}~ u_j^* \neq 0 ~\text{and}~ u_j^* = \sqrt{r_0} z_j \}$,
$S_3^c := \{ j : j \in S^c ~\text{and}~ u_j^* = 0 ~\text{and}~ u_j^* \neq \sqrt{r_0} z_j \}$, and
$S_4^c :=  \{ j : j \in S^c ~\text{and}~ u_j^* = 0 ~\text{and}~ u_j^* = \sqrt{r_0} z_j \}$.
By the KKT conditions of $\argmin_u V(u)$, we have
\begin{align}
\label{eq:trlasso-kkt-ii-invariant}
\begin{cases}
    \left| 2 \sqrt{r_0} C_{S S} z_S + 2 C_{S S^c} u_{S^c}^* - 2 W_{S} + \lambda_0 \sgn(\betastar_{S}) \right| \leq \eta_0\\
    2 \sqrt{r_0} C_{S_1^c S} z_S + 2 C_{S_1^c S^c} u_{S^c}^* - 2 W_{S_1^c} + \lambda_0 \sgn(u_{S_1^c}) + \eta_0 \sgn(u_{S_1^c}^* - \sqrt{r_0}z_{S_1^c}) = 0\\
    \left| 2 \sqrt{r_0} C_{S_2^c S} z_S + 2 C_{S_2^c S^c} u_{S^c}^* - 2 W_{S_2^c} + \lambda_0 \sgn(u_{S_2^c}) \right| \leq \eta_0\\
    \left| 2 \sqrt{r_0} C_{S_3^c S} z_S + 2 C_{S_3^c S^c} u_{S^c}^* - 2 W_{S_3^c} + \eta_0 \sgn(u_{S_3^c}^* - \sqrt{r_0}z_{S_3^c}) \right| \leq \lambda_0\\
    \left| 2 \sqrt{r_0} C_{S_4^c S} z_S + 2 C_{S_4^c S^c} u_{S^c} - 2 W_{S_4^c} \right| \leq \lambda_0 + \eta_0.
\end{cases}
\end{align}
Note that $u_{S_2^c} = \sqrt{r_0} z_{S_2^c}$, $u_{S_2^c} = 0$, and $u_{S_4^c} = \sqrt{r_0} z_{S_4^c} = 0$.
If $S_1 \neq \emptyset$, the second equation yields
\begin{align}
    u_{S_1^c}^* = C_{S_1 S_1}^{-1} \biggl(& W_{S_1^c} - \sqrt{r_0} C_{S_1^c S} z_S - \sqrt{r_0} C_{S_1^c S_2^c} z_{S_2^c} \\
    &- \frac{\lambda_0}{2} \sgn(u_{S_1^c}^*) - \frac{\eta_0}{2} \sgn(u_{S_1^c}^* - \sqrt{r_0} z_{S_1^c}) \biggr).
\end{align}
Hence, we have
\begin{align}
\begin{cases}
    \left| 2 \sqrt{r_0} C_{S S} z_S + 2 C_{S S_1^c} u_{S_1^c}^* + 2 \sqrt{r_0} C_{S S_2^c} z_{S_2^c} - 2 W_{S} + \lambda_0 \sgn(\betastar_{S}) \right| \leq \eta_0\\
    \left| 2 \sqrt{r_0} C_{S_2^c S} z_S + 2 C_{S_2^c S_1^c} u_{S_1^c}^* + 2 \sqrt{r_0} C_{S_2^c S_2^c} z_{S_2^c} - 2 W_{S_2^c} + \lambda_0 \sgn(u_{S_2^c}) \right| \leq \eta_0\\
    \left| 2 \sqrt{r_0} C_{S_3^c S} z_S + 2 C_{S_3^c S_1^c} u_{S_1^c}^* + 2 \sqrt{r_0} C_{S_3^c S_2^c} z_{S_2^c} - 2 W_{S_3^c} + \eta_0 \sgn(u_{S_3^c} - \sqrt{r_0}z_{S_3^c}) \right| \leq \lambda_0\\
    \left| 2 \sqrt{r_0} C_{S_4^c S} z_S + 2 C_{S_4^c S_1^c} u_{S_1^c}^* + 2 \sqrt{r_0} C_{S_4^c S_2^c} z_{S_2^c} - 2 W_{S_4^c} \right| \leq \lambda_0 + \eta_0.
\end{cases}
\end{align}
Because $W$ follows a Gaussian distribution and $u_{S_1^c}^*$ and $z$ follow some distribution, the probability holding these inequalities is less than $1$.
This indicates inconsistent invariant variable selection.
If $S_1 = \emptyset$, by the KKT conditions, we have
\begin{align}
\begin{cases}
    \left| 2 \sqrt{r_0} C_{S S} z_S + 2 \sqrt{r_0} C_{S S_2^c} z_{S_2^c} - 2 W_{S} + \lambda_0 \sgn(\betastar_{S}) \right| \leq \eta_0\\
    \left| 2 \sqrt{r_0} C_{S_2^c S} z_S + 2 \sqrt{r_0} C_{S_2^c S_2^c} z_{S_2^c} - 2 W_{S_2^c} + \lambda_0 \sgn(u_{S_2^c}) \right| \leq \eta_0\\
    \left| 2 \sqrt{r_0} C_{S_3^c S} z_S + 2 \sqrt{r_0} C_{S_3^c S_2^c} z_{S_2^c} - 2 W_{S_3^c} + \eta_0 \sgn(u_{S_3^c} - \sqrt{r_0}z_{S_3^c}) \right| \leq \lambda_0\\
    \left| 2 \sqrt{r_0} C_{S_4^c S} z_S + 2 \sqrt{r_0} C_{S_4^c S_2^c} z_{S_2^c} - 2 W_{S_4^c} \right| \leq \lambda_0 + \eta_0.
\end{cases}
\end{align}
Because $W$ follows a Gaussian distribution and $z$ follows some distribution, the probability holding these inequalities is less than $1$.
This indicates inconsistent variable selection.

\subsection{Proofs of Adaptive Transfer Lasso}

\subsubsection{Proof of Theorem \ref{theorem:adaptrlasso-consistency}}
\label{proof:adaptrlasso-consistency}

Let $u := l (\beta - \betastar)$ where $l = l(n, m, \lambda_n)$ is a certain function as defined later. 
Let $z := \sqrt{m} (\betatilde - \betastar)$. 
Since $\betatilde$ is a $\sqrt{m}$-consistent estimator, $z$ follows some distribution.
We suppose that $n / m \to r_0 \geq 0$.
The objective function for the Adaptive Transfer Lasso is
\begin{align}
    Z_n^\# (\beta)
    :=& \frac{1}{n} \| y - X \beta \|_2^2 + \frac{\lambda_n}{n} \sum_j \frac{|\beta_j|}{|\betatilde_j|^{\gamma_1}} + \frac{\eta_n}{n} \sum_j |\betatilde_j|^{\gamma_2} |\beta_j - \betatilde_j|\\
    =& \frac{1}{n}\left\| \varepsilon - \frac{1}{l} X u \right\|_2^2 
    + \frac{\lambda_n}{n} \sum_j \frac{\left| \betastar_j + \frac{u_j}{l} \right|}{\left| \frac{z_j}{\sqrt{m}} + \betastar_j \right|^{\gamma_1}} 
    + \frac{\eta_n}{n} \sum_j \left| \frac{z_j}{\sqrt{m}} + \betastar_j \right|^{\gamma_2} \left| \frac{u_j}{l} - \frac{z_j}{\sqrt{m}} \right|\\
    =& \frac{1}{l^2} u^\top \left( \frac{1}{n} X^\top X \right) u 
    - \frac{2}{\sqrt{n} l} u^\top \left( \frac{1}{\sqrt{n}} X^\top \varepsilon \right) 
    + \frac{1}{n} \|\varepsilon\|_2^2 \\
    +& \frac{\sqrt{m^{\gamma_1}}\lambda_n}{nl} \sum_j \frac{1}{\left| z_j + \sqrt{m} \betastar_j \right|^{\gamma_1}} \left| u_j + l \betastar_j \right| \\
    +& \frac{\eta_n}{n\sqrt{m^{\gamma_2}}l} \sum_j \left| z_j + \sqrt{m} \betastar_j \right|^{\gamma_2} \left| u_j- \frac{l}{\sqrt{m}} z_j \right|,
\end{align}
and we have
\begin{align}
\label{eq:adaptrlasso-zbeta-zbetastar}
    Z_n^\# (\beta) - Z_n^\# (\betastar)
    =& \frac{1}{l^2} u^\top \left( \frac{1}{n} X^\top X \right) u 
    - \frac{2}{\sqrt{n}l} u^\top \left( \frac{1}{\sqrt{n}} X^\top \varepsilon \right) \\
    +& \frac{\sqrt{m^{\gamma_1}}\lambda_n}{nl} \sum_j \frac{1}{\left| z_j + \sqrt{m} \betastar_j \right|^{\gamma_1}} \left( \left| u_j + l \betastar_j \right| - \left| l \betastar_j \right| \right) \nonumber \\
    +& \frac{\eta_n}{n\sqrt{m^{\gamma_2}}l} \sum_j \left| z_j + \sqrt{m} \betastar_j \right|^{\gamma_2} \left( \left| u_j- \frac{l}{\sqrt{m}} z_j \right| -  \left| \frac{l}{\sqrt{m}} z_j \right| \right). \nonumber
\end{align}

We divide the case into three cases: $l = \sqrt{m}$ (Case I), $l = \sqrt{n}$ (Case II), and $l = n / \lambda_n$ (Case III).

Case I.
Let $l = \sqrt{m}$.
Then, \eqref{eq:adaptrlasso-zbeta-zbetastar} reduces to
\begin{align}
    &Z_n^\# (\beta) - Z_n^\# (\betastar)\\
    =& \frac{1}{m} u^\top \left( \frac{1}{n} X^\top X \right) u 
    - \frac{2}{\sqrt{nm}} u^\top \left( \frac{1}{\sqrt{n}} X^\top \varepsilon \right) \\
    +& \frac{\sqrt{m^{\gamma_1 - 1}}\lambda_n}{n} \sum_j \frac{1}{\left| z_j + \sqrt{m} \betastar_j \right|^{\gamma_1}} \left( \left| u_j + \sqrt{m} \betastar_j \right| - \left| \sqrt{m} \betastar_j \right| \right) \nonumber \\
    +& \frac{\eta_n}{n\sqrt{m^{\gamma_2 + 1}}} \sum_j \left| z_j + \sqrt{m} \betastar_j \right|^{\gamma_2} \left( \left| u_j- z_j \right| -  \left| z_j \right| \right) \nonumber \\
    =& \frac{1}{m} u^\top \left( \frac{1}{n} X^\top X \right) u 
    - \frac{2}{\sqrt{nm}} u^\top \left( \frac{1}{\sqrt{n}} X^\top \varepsilon \right) \\
    +& \frac{\lambda_n}{n\sqrt{m}} \sum_j \left( \frac{\left| u_j + \sqrt{m} \betastar_j \right| - \left| \sqrt{m} \betastar_j \right|}{\left| \betastar_j + \frac{z_j}{\sqrt{m}} \right|^{\gamma_1}} I(\betastar_j \neq 0) + \frac{\sqrt{m^{\gamma_1}} |u_j|}{\left| z_j \right|^{\gamma_1}} I(\betastar_j = 0) \right) \nonumber \\
    +& \frac{\eta_n}{n\sqrt{m}} \sum_j \left( \left| \betastar_j + \frac{z_j}{\sqrt{m}} \right|^{\gamma_2} I(\betastar_j \neq 0) + \frac{\left| z_j \right|^{\gamma_2}}{\sqrt{m^{\gamma_2}}} I(\betastar_j = 0) \right) \left( \left| u_j- z_j \right| -  \left| z_j \right| \right). \nonumber
\end{align}

Consider the case (i) where 
\begin{align}
    \frac{\eta_n}{n \sqrt{m^{\gamma_2+1}}} \gg \frac{1}{\sqrt{nm}}
    ~\text{and}~
    \frac{\eta_n}{n\sqrt{m^{\gamma_2 + 1}}} \gg \frac{\sqrt{m^{\gamma_1 - 1}}\lambda_n}{n},
\end{align}
that is,
\begin{align}
    \frac{\eta_n}{\sqrt{n m^{\gamma_2}}} \to \infty
    ~\text{and}~
        \frac{\sqrt{m^{\gamma_1 + \gamma_2}} \lambda_n}{\eta_n} \to 0.
\end{align}
Let $V_{n}(u) := (n \sqrt{m^{\gamma_2 + 1}} / \eta_n) (Z_n^\#(\beta) - Z_n^\# (\betastar)) + (\text{term irrelevant to}~ \beta)$, then we have
\begin{align}
    V_n(u)
    =&\frac{\sqrt{n m^{\gamma_2}}}{\eta_n} \sqrt{\frac{n}{m}} u^\top \left( \frac{1}{n} X^\top X \right) u 
    - \frac{2\sqrt{n m^{\gamma_2}}}{\eta_n} u^\top \left( \frac{1}{\sqrt{n}} X^\top \varepsilon \right) \\
    +& \frac{\sqrt{m^{\gamma_1 + \gamma_2}} \lambda_n}{\eta_n} \sum_j \left( \frac{\left| u_j + \sqrt{m} \betastar_j \right| - \left| \sqrt{m} \betastar_j \right|}{\left| z_j + \sqrt{m} \betastar_j \right|^{\gamma_1}} I(\betastar_j \neq 0) + \frac{|u_j|}{\left| z_j \right|^{\gamma_1}} I(\betastar_j = 0) \right) \nonumber \\
    +& \sum_j \left( \left| z_j + \sqrt{m} \betastar_j \right|^{\gamma_2} I(\betastar_j \neq 0) + \left| z_j \right|^{\gamma_2} I(\betastar_j = 0) \right) \left| u_j- z_j \right|. \nonumber
\end{align}
Let $V(u) := \lim_{n \to \infty} V_n(u)$, then we have
\begin{align}
    V(u) 
    =& \begin{dcases}
                \sum_j \left| z_j \right|^{\gamma_2} \left| u_j- z_j \right| I(\betastar_j = 0) &\text{for}~ u_S = z_S, \\
        \infty &\text{otherwise}.
    \end{dcases}
\end{align}
Therefore, we obtain
\begin{align}
    \sqrt{m} (\betahat_n^\# - \betastar) = \argmin_u V_n(u) \overset{d}{\to} \argmin_u V(u) = z.
\end{align}

Consider the case (ii) where
\begin{align}
    \frac{\sqrt{m^{\gamma_1 - 1}} \lambda_n}{n} \gg \frac{1}{\sqrt{nm}},
    \frac{\sqrt{m^{\gamma_1 - 1}} \lambda_n}{n} \gg \frac{\eta_n}{n \sqrt{m^{\gamma_2 + 1}}},
\end{align}
\begin{align}
    \frac{\eta_n}{n \sqrt{m}} \gg \frac{1}{\sqrt{nm}},
    \frac{\eta_n}{n \sqrt{m}} \gg \frac{\lambda_n}{n \sqrt{m}},
\end{align}
that is,
\begin{align}
    \frac{\sqrt{m^{\gamma_1}} \lambda_n}{\sqrt{n}} \to \infty,
    \frac{\eta_n}{\sqrt{n}} \to \infty,
    \frac{\eta_n}{\lambda_n} \to \infty,
    \frac{\eta_n}{\sqrt{m^{\gamma_1 + \gamma_2}}\lambda_n} \to 0.
\end{align}
We divide into three cases:
\begin{align}
    &\text{(ii-a)} ~ \frac{\eta_n}{n \sqrt{m}} \gg \frac{\sqrt{m^{\gamma_1 - 1}} \lambda_n}{n}, ~~~
    \text{(ii-b)} ~ \frac{\eta_n}{n \sqrt{m}} \ll \frac{\sqrt{m^{\gamma_1 - 1}} \lambda_n}{n}, \\
    &\text{(ii-c)} ~ \frac{\eta_n}{n \sqrt{m}} \asymp \frac{\sqrt{m^{\gamma_1 - 1}} \lambda_n}{n},
\end{align}
that is,
\begin{align}
    \text{(ii-a)} ~ \frac{\sqrt{m^{\gamma_1}}\lambda_n}{\eta_n} \to 0, ~~~
    \text{(ii-b)} ~ \frac{\sqrt{m^{\gamma_1}}\lambda_n}{\eta_n} \to \infty, ~~~
    \text{(ii-c)} ~ \frac{\sqrt{m^{\gamma_1}}\lambda_n}{\eta_n} \to \rho_0 > 0.
\end{align}

In the case (ii-a), let $V_n(u) := (n / \sqrt{m^{\gamma_1 - 1}} \lambda_n) (Z_n^\#(\beta) - Z_n^\# (\betastar))  + (\text{term irrelevant to}~ \beta)$, then we have
\begin{align}
    V_n(u)
    =& \frac{\sqrt{n}}{\sqrt{m^{\gamma_1}} \lambda_n} \sqrt{\frac{n}{m}} u^\top \left( \frac{1}{n} X^\top X \right) u 
    - \frac{2\sqrt{n}}{\sqrt{m^{\gamma_1}} \lambda_n} u^\top \left( \frac{1}{\sqrt{n}} X^\top \varepsilon \right) \\
    +& \sum_j \left( \frac{\left| u_j + \sqrt{m} \betastar_j \right| - \left| \sqrt{m} \betastar_j \right|}{\left| z_j + \sqrt{m} \betastar_j \right|^{\gamma_1}} I(\betastar_j \neq 0) + \frac{|u_j|}{\left| z_j \right|^{\gamma_1}} I(\betastar_j = 0) \right) \nonumber \\
    +& \frac{\eta_n}{\sqrt{m^{\gamma_1}}\lambda_n} \sum_j \left( \left| \betastar_j + \frac{z_j}{\sqrt{m}} \right|^{\gamma_2} I(\betastar_j \neq 0) + \frac{\left| z_j \right|^{\gamma_2}}{\sqrt{m^{\gamma_2}}} I(\betastar_j = 0) \right) \left| u_j- z_j \right|. \nonumber
\end{align}
Let $V(u) := \lim_{n \to \infty} V_n(u)$, then we have
\begin{align}
    V(u) 
    =& \begin{dcases}
        \sum_j \frac{|u_j|}{|z_j|^{\gamma_1}} I(\betastar_j = 0) &\text{for}~ u_S = z_S, \\
        \infty &\text{otherwise}.
    \end{dcases}
\end{align}
Therefore, we obtain
\begin{align}
    \sqrt{m} (\betahat_n^\# - \betastar) = \argmin_u V_n(u) \overset{d}{\to} \argmin_u V(u) =
    \begin{dcases}
        0 & \text{for}~ j \in S^c,\\
        z_j & \text{for}~ j \in S.
    \end{dcases}
\end{align}

In the case (ii-b), let $V_n(u) := (n \sqrt{m} / \eta_n) (Z_n^\#(\beta) - Z_n^\# (\betastar)) + (\text{term irrelevant to}~ \beta)$,
then we have
\begin{align}
    V_n(u)
    =& \frac{\sqrt{n}}{\eta_n} \sqrt{\frac{n}{m}} u^\top \left( \frac{1}{n} X^\top X \right) u 
    - \frac{2\sqrt{n}}{\eta_n} u^\top \left( \frac{1}{\sqrt{n}} X^\top \varepsilon \right) \\
    +& \frac{\sqrt{m^{\gamma_1}} \lambda_n}{\eta_n} \sum_j \left( \frac{\left| u_j + \sqrt{m} \betastar_j \right| - \left| \sqrt{m} \betastar_j \right|}{\left| z_j + \sqrt{m} \betastar_j \right|^{\gamma_1}} I(\betastar_j \neq 0) + \frac{|u_j|}{\left| z_j \right|^{\gamma_1}} I(\betastar_j = 0) \right) \nonumber \\
    +& \sum_j \left( \left| \betastar_j + \frac{z_j}{\sqrt{m}} \right|^{\gamma_2} I(\betastar_j \neq 0) + \frac{\left| z_j \right|^{\gamma_2}}{\sqrt{m^{\gamma_2}}} I(\betastar_j = 0) \right) \left| u_j- z_j \right|. \nonumber
\end{align}
Let $V(u) := \lim_{n \to \infty} V_n(u)$, then we have
\begin{align}
    V(u) 
    =& \begin{dcases}
        \sum_j |\betastar_j|^{\gamma_2} \left| u_j - z_j \right| I(\betastar_j \neq 0) &\text{for}~ u_{S^c} = 0, \\
        \infty &\text{otherwise}.
    \end{dcases}
\end{align}
Therefore, we obtain
\begin{align}
    \sqrt{m} (\betahat_n^\# - \betastar) = \argmin_u V_n(u) \overset{d}{\to} \argmin_u V(u) = 
    \begin{dcases}
        0 & \text{for}~ j \in S^c,\\
        z_j & \text{for}~ j \in S.
    \end{dcases}
\end{align}

In the case (ii-c), let $V_n(u) := (n \sqrt{m} / \eta_n) (Z_n^\#(\beta) - Z_n^\# (\betastar)) + (\text{term irrelevant to}~ \beta)$,
then we have
\begin{align}
    V_n(u)
    =& \frac{\sqrt{n}}{\eta_n} \sqrt{\frac{n}{m}} u^\top \left( \frac{1}{n} X^\top X \right) u 
    - \frac{2\sqrt{n}}{\eta_n} u^\top \left( \frac{1}{\sqrt{n}} X^\top \varepsilon \right) \\
    +& \frac{\sqrt{m^{\gamma_1}} \lambda_n}{\eta_n} \sum_j \left( \frac{\left| u_j + \sqrt{m} \betastar_j \right| - \left| \sqrt{m} \betastar_j \right|}{\left| z_j + \sqrt{m} \betastar_j \right|^{\gamma_1}} I(\betastar_j \neq 0) + \frac{|u_j|}{\left| z_j \right|^{\gamma_1}} I(\betastar_j = 0) \right) \nonumber \\
    +& \sum_j \left( \left| \betastar_j + \frac{z_j}{\sqrt{m}} \right|^{\gamma_2} I(\betastar_j \neq 0) + \frac{\left| z_j \right|^{\gamma_2}}{\sqrt{m^{\gamma_2}}} I(\betastar_j = 0) \right) \left| u_j- z_j \right|. \nonumber
\end{align}
Let $V(u) := \lim_{n \to \infty} V_n(u)$, then we have
\begin{align}
    V(u) 
    = \sum_j \left( |\betastar_j|^{\gamma_2} \left| u_j - z_j \right| I(\betastar_j \neq 0)
    + \frac{\rho_0 |u_j|}{|z_j|^{\gamma_1}} I(\betastar_j = 0) \right).
\end{align}
Therefore, we obtain
\begin{align}
    \sqrt{m} (\betahat_n^\# - \betastar) = \argmin_u V_n(u) \overset{d}{\to} \argmin_u V(u) = 
    \begin{dcases}
        0 & \text{for}~ j \in S^c,\\
        z_j & \text{for}~ j \in S.
    \end{dcases}
\end{align}

Case II.
Let $l = \sqrt{n}$.
Then, \eqref{eq:adaptrlasso-zbeta-zbetastar} reduces to
\begin{align}
    &Z_n^\# (\beta) - Z_n^\# (\betastar)\\
    =& \frac{1}{n} u^\top \left( \frac{1}{n} X^\top X \right) u 
    - \frac{2}{n} u^\top \left( \frac{1}{\sqrt{n}} X^\top \varepsilon \right) \\
    +& \frac{\sqrt{m^{\gamma_1}}\lambda_n}{n\sqrt{n}} \sum_j \frac{1}{\left| z_j + \sqrt{m} \betastar_j \right|^{\gamma_1}} \left( \left| u_j + \sqrt{n} \betastar_j \right| - \left| \sqrt{n} \betastar_j \right| \right) \nonumber \\
    +& \frac{\eta_n}{n\sqrt{n m^{\gamma_2}}} \sum_j \left| z_j + \sqrt{m} \betastar_j \right|^{\gamma_2} \left( \left| u_j - \sqrt{\frac{n}{m}} z_j \right| -  \left| \sqrt{\frac{n}{m}} z_j \right| \right) \nonumber \\
    =& \frac{1}{n} u^\top \left( \frac{1}{n} X^\top X \right) u 
    - \frac{2}{n} u^\top \left( \frac{1}{\sqrt{n}} X^\top \varepsilon \right) \\
    +& \frac{\lambda_n}{n\sqrt{n}} \sum_j \left( \frac{ \left| u_j + \sqrt{n} \betastar_j \right| - \left| \sqrt{n} \betastar_j \right|}{\left| \betastar_j + \frac{z_j}{\sqrt{m}} \right|^{\gamma_1}} I(\betastar_j \neq 0) + \frac{\sqrt{m^{\gamma_1}} |u_j|}{\left| z_j \right|^{\gamma_1}} I(\betastar_j = 0) \right) \nonumber \\
    +& \frac{\eta_n}{n\sqrt{n}} \sum_j \left( \left| \betastar_j + \frac{z_j}{\sqrt{m}} \right|^{\gamma_2} I(\betastar_j \neq 0) + \frac{\left| z_j \right|^{\gamma_2}}{\sqrt{m^{\gamma_2}}} I(\betastar_j = 0) \right) \left( \left| u_j - \sqrt{\frac{n}{m}} z_j \right| - \left| \sqrt{\frac{n}{m}} z_j \right| \right). \nonumber
\end{align}

Consider the case (iii) where 
\begin{align}
    \frac{1}{n} \gtrsim \frac{\sqrt{m^{\gamma_1}}\lambda_n}{n\sqrt{n}}
    ~\text{and}~
    \frac{1}{n} \gtrsim \frac{\eta_n}{n\sqrt{n}},
\end{align}
that is,
\begin{align}
    \frac{\sqrt{m^{\gamma_1}}\lambda_n}{\sqrt{n}} \to \lambda_1 \geq 0
    ~\text{and}~
    \frac{\eta_n}{\sqrt{n}} \to \eta_0 \geq 0.
\end{align}
Let $V_n(u) := n (Z_n^\#(\beta) - Z_n^\# (\betastar))$, then we have
\begin{align}
    \label{eq:adaptrlasso-vn-n}
    V_n(u)
    =& u^\top \left( \frac{1}{n} X^\top X \right) u 
    - 2 u^\top \left( \frac{1}{\sqrt{n}} X^\top \varepsilon \right) \\
    +& \frac{\lambda_n}{\sqrt{n}} \sum_j \left( \frac{ \left| u_j + \sqrt{n} \betastar_j \right| - \left| \sqrt{n} \betastar_j \right|}{\left| \betastar_j + \frac{z_j}{\sqrt{m}} \right|^{\gamma_1}} I(\betastar_j \neq 0) + \frac{\sqrt{m^{\gamma_1}} |u_j|}{\left| z_j \right|^{\gamma_1}} I(\betastar_j = 0) \right) \nonumber \\
    +& \frac{\eta_n}{\sqrt{n}} \sum_j \left( \left| \betastar_j + \frac{z_j}{\sqrt{m}} \right|^{\gamma_2} I(\betastar_j \neq 0) + \frac{\left| z_j \right|^{\gamma_2}}{\sqrt{m^{\gamma_2}}} I(\betastar_j = 0) \right) \left( \left| u_j - \sqrt{\frac{n}{m}} z_j \right| - \left| \sqrt{\frac{n}{m}} z_j \right| \right). \nonumber
\end{align}
Let $V(u) := \lim_{n \to \infty} V_n(u)$, then we have
\begin{align}
    V(u) =& u^\top C u - 2 u^\top W \\
    +& \sum_j \left( \frac{\lambda_1 | u_j |}{|z_j|^{\gamma_1}} I(\betastar_j = 0)
    + \eta_0 \left| \betastar_j \right|^{\gamma_2} \left( \left| u_j - \sqrt{r_0} z_j \right| - \left| \sqrt{r_0} z_j \right| \right) I(\betastar_j \neq 0) \right).
\end{align}
Therefore, we obtain
\begin{align}
    &\sqrt{n} (\betahat_n^\# - \betastar) = \argmin_u V_n(u) \overset{d}{\to} \argmin_u V(u) \\
    =&\argmin_{u} \left\{
    u^\top C u - 2 u^\top W
    + \sum_j \left( \frac{\lambda_1 | u_j |}{|z_j|^{\gamma_1}} I(\betastar_j = 0)
    + \eta_0 \left| \betastar_j \right|^{\gamma_2} \left| u_j - \sqrt{r_0} z_j \right| I(\betastar_j \neq 0) \right) \right\}.
\end{align}

Consider the case (iv) where 
\begin{align}
    \frac{1}{n} \gtrsim \frac{\sqrt{m^{\gamma_1}}\lambda_n}{n\sqrt{n}}
    ~\text{and}~
    \frac{\eta_n}{n\sqrt{n}} \gg \frac{1}{n} \gtrsim \frac{\eta_n}{n\sqrt{n m^{\gamma_2}}},
\end{align}
that is,
\begin{align}
    \frac{\sqrt{m^{\gamma_1}}\lambda_n}{\sqrt{n}} \to \lambda_1 \geq 0,~
    \frac{\eta_n}{\sqrt{n}} \to \infty,
    ~\text{and}~
    \frac{\eta_n}{\sqrt{n m^{\gamma_2}}} \to \eta_1 \geq 0.
\end{align}
Let $V_n(u) := n (Z_n^\#(\beta) - Z_n^\# (\betastar)) + (\text{term irrelevant to}~ \beta)$
and $V(u) := \lim_{n \to \infty} V_n(u)$.
Then, we have \eqref{eq:adaptrlasso-vn-n} and
\begin{align}
    V(u) = 
    \begin{dcases}
        u^\top C u - 2 u^\top W + \sum_j \left( \frac{\lambda_1}{|z_j|^{\gamma_1}} \left| u_j \right| + \eta_1 |z_j|^{\gamma_2} |u_j - \sqrt{r_0} z_j| \right) I(\betastar_j = 0) & \text{for}~ u_S = r_0 z_S,\\
        \infty & \text{otherwise}.
    \end{dcases}
\end{align}
Therefore, we obtain
\begin{align}
    &\sqrt{n} (\betahat_n^\# - \betastar) = \argmin_u V_n(u) \\
    \overset{d}{\to}& \argmin_u V(u) =
    \argmin_{u \in \mathcal{U}} \left\{ u^\top C u - 2 u^\top W + \sum_j \left( \frac{\lambda_1}{|z_j|^{\gamma_1}} \left| u_j \right| + \eta_1 |z_j|^{\gamma_2} |u_j - \sqrt{r_0} z_j| \right) \right\},
\end{align}
\begin{align}
    \mathcal{U} := \left\{ u ~\middle|~ u_S = r_0 z_S \right\}.
\end{align}

Consider the case (v) where 
\begin{align}
    \frac{\sqrt{m^{\gamma_1}}\lambda_n}{n\sqrt{n}} \gg \frac{1}{n} \gtrsim \frac{\lambda_n}{n\sqrt{n}}
    ~\text{and}~
    \frac{1}{n} \gtrsim \frac{\eta_n}{n\sqrt{n}},
\end{align}
that is,
\begin{align}
    \frac{\sqrt{m^{\gamma_1}}\lambda_n}{\sqrt{n}} \to \infty,
    \frac{\lambda_n}{\sqrt{n}} \to \lambda_0 \geq 0,
    ~\text{and}~
    \frac{\eta_n}{\sqrt{n}} \to \eta_0 \geq 0.
\end{align}
Let $V_n(u) := n (Z_n^\#(\beta) - Z_n^\# (\betastar))$
and $V(u) := \lim_{n \to \infty} V_n(u)$.
Then, we have \eqref{eq:adaptrlasso-vn-n} and
\begin{align}
    V(u) = 
    \begin{dcases}
        u^\top C u - 2 u^\top W + \sum_j \left( \lambda_0 u_j \sgn(\betastar_j) / |\betastar_j|^{\gamma_1} + \eta_0 \left| \betastar_j \right|^{\gamma_2} \left| u_j - \sqrt{r_0} z_j \right| \right) & \text{for}~ u_{S^c} = 0,\\
        \infty & \text{otherwise}.
    \end{dcases}
\end{align}
Therefore, we obtain
\begin{align}
    &\sqrt{n} (\betahat_n^\# - \betastar)  = \argmin_u V_n(u) \\
    \overset{d}{\to}& \argmin_u V(u) =
    \argmin_{u \in \mathcal{U}} \left\{ u^\top C u - 2 u^\top W + \sum_j \left( \frac{\lambda_0 u_j \sgn(\betastar_j)}{|\betastar_j|^{\gamma_1}} + \eta_0 \left| \betastar_j \right|^{\gamma_2} \left| u_j - \sqrt{r_0} z_j \right| \right) \right\},
\end{align}
\begin{align}
    \mathcal{U} := \left\{ u ~\middle|~ u_{S^c} = 0 \right\}.
\end{align}

Case III.
Let $l = n / \lambda_n$.
Suppose that $n / \lambda_n \to \infty$
Then, \eqref{eq:adaptrlasso-zbeta-zbetastar} reduces to
\begin{align}
    &Z_n^\# (\beta) - Z_n^\# (\betastar)\\
    =& \frac{\lambda_n^2}{n^2} u^\top \left( \frac{1}{n} X^\top X \right) u 
    - \frac{2\lambda_n}{n\sqrt{n}} u^\top \left( \frac{1}{\sqrt{n}} X^\top \varepsilon \right) \\
    +& \frac{\sqrt{m^{\gamma_1}}\lambda_n^2}{n^2} \sum_j \frac{1}{\left| z_j + \sqrt{m} \betastar_j \right|^{\gamma_1}} \left( \left| u_j + \frac{n}{\lambda_n} \betastar_j \right| - \left| \frac{n}{\lambda_n} \betastar_j \right| \right) \nonumber \\
    +& \frac{\lambda_n \eta_n}{n^2 \sqrt{m^{\gamma_2}}} \sum_j \left| z_j + \sqrt{m} \betastar_j \right|^{\gamma_2} \left( \left| u_j- \frac{n}{\sqrt{m}\lambda_n} z_j \right| -  \left| \frac{n}{\sqrt{m}\lambda_n} z_j \right| \right) \nonumber \\
    =& \frac{\lambda_n^2}{n^2} u^\top \left( \frac{1}{n} X^\top X \right) u 
    - \frac{2\lambda_n}{n\sqrt{n}} u^\top \left( \frac{1}{\sqrt{n}} X^\top \varepsilon \right) \\
    +& \frac{\lambda_n^2}{n^2} \sum_j \left( \frac{\left| u_j + \frac{n}{\lambda_n} \betastar_j \right| - \left| \frac{n}{\lambda_n} \betastar_j \right|}{\left| \betastar_j + \frac{z_j}{\sqrt{m}} \right|^{\gamma_1}} I(\betastar_j \neq 0) + \frac{\sqrt{m^{\gamma_1}} |u_j|}{\left| z_j \right|^{\gamma_1}} I(\betastar_j = 0) \right) \nonumber \\
    +& \frac{\lambda_n \eta_n}{n^2} \sum_j \left( \left| \betastar_j + \frac{z_j}{\sqrt{m}} \right|^{\gamma_2} I(\betastar_j \neq 0) + \frac{\left| z_j \right|^{\gamma_2}}{\sqrt{m^{\gamma_2}}} I(\betastar_j = 0) \right) \left( \left| u_j - \frac{n}{\sqrt{m}\lambda_n} z_j \right| -  \left| \frac{n}{\sqrt{m}\lambda_n} z_j \right| \right). \nonumber
\end{align}

Consider the case (vi) where
\begin{align}
    \frac{\lambda_n^2}{n^2} \gg \frac{\lambda_n}{n\sqrt{n}}
    ~\text{and}~
    \frac{\lambda_n^2}{n^2} \gg \frac{\lambda_n \eta_n}{n^2},
\end{align}
that is,
\begin{align}
    \frac{\lambda_n}{\sqrt{n}} \to \infty,~
    \frac{\lambda_n}{n} \to 0,
    ~\text{and}~
    \frac{\lambda_n}{\eta_n} \to \infty.
\end{align}
Let $V_n(u) := (n^2 / \lambda_n^2) (Z_n^\#(\beta) - Z_n^\# (\betastar))$, then we have
\begin{align}
    V_n(u)
    =& u^\top \left( \frac{1}{n} X^\top X \right) u 
    - \frac{2\sqrt{n}}{\lambda_n} u^\top \left( \frac{1}{\sqrt{n}} X^\top \varepsilon \right) \\
    +& \sum_j \left( \frac{\left| u_j + \frac{n}{\lambda_n} \betastar_j \right| - \left| \frac{n}{\lambda_n} \betastar_j \right|}{\left| \betastar_j + \frac{z_j}{\sqrt{m}} \right|^{\gamma_1}} I(\betastar_j \neq 0) + \frac{\sqrt{m^{\gamma_1}} |u_j|}{\left| z_j \right|^{\gamma_1}} I(\betastar_j = 0) \right) \nonumber \\
    +& \frac{\eta_n}{\lambda_n} \sum_j \left( \left| \betastar_j + \frac{z_j}{\sqrt{m}} \right|^{\gamma_2} I(\betastar_j \neq 0) + \frac{\left| z_j \right|^{\gamma_2}}{\sqrt{m^{\gamma_2}}} I(\betastar_j = 0) \right) \left( \left| u_j - \frac{n}{\sqrt{m}\lambda_n} z_j \right| -  \left| \frac{n}{\sqrt{m}\lambda_n} z_j \right| \right). \nonumber
\end{align}
Let $V(u) := \lim_{n \to \infty} V_n(u)$, then we have
\begin{align}
    V(u) = 
    \begin{dcases}
        u^\top C u + \sum_j \frac{u_j \sgn(\betastar_j) }{|\betastar_j|^{\gamma_1}} I(\betastar_j \neq 0) & \text{for}~ u_{S^c} = 0,\\
        \infty & \text{otherwise}.
    \end{dcases}
\end{align}
Therefore, we obtain
\begin{align}
    \frac{n}{\lambda_n} (\betahat_n^\# - \betastar)  = \argmin_u V_n(u) \overset{d}{\to} \argmin_u V(u) =
    \argmin_{u \in \mathcal{U}} \left\{ u^\top C u + \sum_j \frac{\sgn(\betastar_j)}{|\betastar_j|^{\gamma_1}} u_j \right\},
\end{align}
\begin{align}
    \mathcal{U} := \left\{ u ~\middle|~ u_{S^c} = 0 \right\}.
\end{align}

\subsubsection{Proof of Theorem \ref{theorem:adaptrlasso-active-selection-consistency}}
\label{proof:adaptrlasso-active-selection-consistency}

Suppose that $\betatilde$ is a $\sqrt{m}$-consistent estimator.
Suppose that $\betahat$ is $l$-consistent and let $\hat{u} := l (\betahat - \betastar)$ where $l = l(n, m, \lambda_n)$ is a certain function as defined later.
Then, we have
\begin{align}
    \forall j \in S, ~ P(j \in \supp(\betahat)) \to 1.
\end{align}
Consider the event $\betastar_j \in S^c$ and $\betahat_j \neq 0$.
By the KKT conditions, for $\betastar_j = 0$ and $\betahat_j \neq \betatilde_j$,
\begin{align}
    2 \left( \frac{1}{n} \x_j^\top X \right) \frac{\sqrt{n}}{l} \hat{u}
    - \frac{2}{\sqrt{n}} \x_j^\top \varepsilon 
    + \frac{\sqrt{m^{\gamma_1}}\lambda_n}{\sqrt{n}} \frac{\sgn(\betahat_j)}{|\sqrt{m} \betatilde_j|^{\gamma_1}}  
    + \frac{\eta_n}{\sqrt{n m^{\gamma_2}}} \left| \sqrt{m} \betatilde_j \right|^{\gamma_2} \sgn( \betahat_j - \betatilde_j ) = 0,
\end{align}
and for $\betastar_j = 0$ and $\betahat_j = \betatilde_j$,
\begin{align}
    \left| 
    2 \left( \frac{1}{n} \x_j^\top X \right) \frac{\sqrt{n}}{l} \hat{u}
    - \frac{2}{\sqrt{n}} \x_j^\top \varepsilon 
    + \frac{\sqrt{m^{\gamma_1}}\lambda_n}{\sqrt{n}} \frac{\sgn(\betahat_j) }{|\sqrt{m} \betatilde_j|^{\gamma_1}} 
    \right|
    \leq
    \frac{\eta_n}{\sqrt{n m^{\gamma_2}}} \left| \sqrt{m} \betatilde_j \right|^{\gamma_2}.
\end{align}
Suppose that $\betahat$ is $l$-consistent and that
$\sqrt{m^{\gamma_1}}\lambda_n / \sqrt{n} \gg 1$, 
$\sqrt{m^{\gamma_1}}\lambda_n / \sqrt{n} \gg \sqrt{n} / l$, 
and
$\sqrt{m^{\gamma_1}}\lambda_n / \sqrt{n} \gg \eta_n / \sqrt{n m^{\gamma_2}}$.
For $l = \sqrt{m}, \sqrt{n}, n/\lambda_n$, these conditions reduce to the conditions where
$\betahat$ is $l$-consistent, 
$\sqrt{m^{\gamma_1}} \lambda_n / \sqrt{n} \to \infty$,
and
$\sqrt{m^{\gamma_1 + \gamma_2}}\lambda_n / \eta_n \to \infty$.
Then, in both $\betahat_j \neq \betatilde_j$ and $\betahat_j = \betatilde_j$ cases, the term including $\sqrt{m^{\gamma_1}/n} \lambda_n$ diverges to infinity faster compared to the rest terms.
Therefore, we have
\begin{align}
    \forall j \in S^c, ~ \lim_{n \to \infty} P(j \in \supp(\betahat)) = 0.
\end{align}
This concludes
\begin{align}
    \lim_{n \to \infty} P(\hat{S}_n = S) = 1.
\end{align}

\subsubsection{Proof of Theorem \ref{theorem:adaptrlasso-varying-selection-consistency}}
\label{proof:adaptrlasso-varying-selection-consistency}

Suppose that $\betatilde$ is a $\sqrt{m}$-consistent estimator.
Consider the event where $\betastar_j \neq 0$ and $\betahat_j \neq \betatilde_j$.
Suppose that $\betahat$ is $l$-consistent and let $\hat{u} := l (\betahat - \betastar)$ where $l = l(n, m, \lambda_n)$ is a certain function as defined later.
By the KKT conditions, for $\betastar_j \neq 0$ and $\betahat_j \neq 0$,
\begin{align}
    2 \left( \frac{1}{n} \x_j^\top X \right) \frac{\sqrt{n}}{l} \hat{u}
    - \frac{2}{\sqrt{n}} \x_j^\top \varepsilon 
    + \frac{\lambda_n}{\sqrt{n}} \frac{\sgn(\betahat_j)}{|\betatilde_j|^{\gamma_1}}  
    + \frac{\eta_n}{\sqrt{n}} \left| \betatilde_j \right|^{\gamma_2} \sgn( \betahat_j - \betatilde_j ) = 0,
\end{align}
and for $\betastar_j \neq 0$ and $\betahat_j = 0$,
\begin{align}
    \left| 
    2 \left( \frac{1}{n} \x_j^\top X \right) \frac{\sqrt{n}}{l} \hat{u}
    - \frac{2}{\sqrt{n}} \x_j^\top \varepsilon 
    + \frac{\eta_n}{\sqrt{n}} \left| \betatilde_j \right|^{\gamma_2} \sgn( \betahat_j - \betatilde_j )
    \right|
    \leq
    \frac{\lambda_n}{\sqrt{n}} \frac{1}{|\betatilde_j|^{\gamma_1}}.
\end{align}
Suppose that $\betahat$ is $l$-consistent and that
$\eta_n / \sqrt{n} \gg 1$,
$\eta_n / \sqrt{n} \gg \sqrt{n} / l$,
and
$\eta_n / \sqrt{n} \gg \lambda_n / \sqrt{n}$.
For $l = \sqrt{m}, \sqrt{n}$, these conditions reduce to the conditions where
$\betahat$ is $l$-consistent,
$\eta_n / \sqrt{n} \to \infty$,
and
$\eta_n / \lambda_n \to \infty$.
Then, in both $\betahat_j \neq 0$ and $\betahat_j = 0$ cases, the term including $\eta_n / \sqrt{n}$ diverges to infinity faster compared to the rest terms.
Therefore, we have
\begin{align}
    \forall j \in S, ~ \lim_{n \to \infty} P(\betahat_j \neq \betatilde_j) = 0.
\end{align}
This concludes
\begin{align}
    \forall j \in S, ~ \lim_{n \to \infty} P(\betahat_j = \betatilde_j) = 1.
\end{align}

\subsection{Proofs for Transfer Lasso with Deterministic Source Parameters}

\subsubsection{Proof of Theorem~\ref{trlasso-variable-selection}}
\label{proof:trlasso-variable-selection}

For $\betatilde_j = 0$, we consider the event $\betahat_j \neq 0$.
Then, we have by KKT conditions
\begin{align}
    2 \left(\frac{1}{n} \x_j^\top X \right)  \sqrt{n} (\betastar - \betahat) + 2 \frac{1}{\sqrt{n}} \x_j^\top \varepsilon
    + \frac{\lambda_n + \eta_n}{\sqrt{n}} \sgn(\betahat_j) = 0
\end{align}
The first and second terms converge to some truncated Gaussian-mixture distribution, and the third term diverges to infinity as $(\lambda_n + \eta_n) / \sqrt{n} \rightarrow \infty$, which is an contradiction.
Hence, we have for $\betatilde_j = 0$
\begin{align}
    \lim_{n\rightarrow \infty} P\left( \betahat_j = 0 \right) = 1
\end{align}

For $\betatilde_j \neq 0$, we consider the event $\betahat_j \neq 0$ and $\betahat_j \neq \betatilde_j$.
Then, we have by KKT conditions,
\begin{align}
    2 \left(\frac{1}{n} \x_j^\top X \right)  \sqrt{n} (\betastar - \betahat) + 2 \frac{1}{\sqrt{n}} \x_j^\top \varepsilon
    + \frac{\lambda_n}{\sqrt{n}} \sgn(\betahat_j) + \frac{\eta_n}{\sqrt{n}} \sgn(\betahat_j - \betatilde_j) = 0
\end{align}
The third and fourth terms diverge to infinity as $(\lambda_n + \eta_n) / \sqrt{n} \rightarrow \infty$ if $\sgn(\betahat_j) = \sgn(\betahat_j - \betatilde_j)$, which induces an contradiction.
Hence, we have for $\betatilde_j \neq 0$
\begin{align}
    \lim_{n\rightarrow \infty} P\left( \min \left\{ 0, \betatilde_j \right\} \leq \betahat_j \leq \max \left\{ 0, \betatilde_j \right\} \right) = 1
\end{align}

\subsubsection{Proof of Theorem \ref{trlasso-distribution-sign-consistency}}
\label{proof:trlasso-distribution-sign-consistency}

Asymptotic distribution:
Let $u := \sqrt{n} (\beta - \betastar)$. 
We have
\begin{align}
    Z_n^\mathcal{T}(\beta; \betatilde, \lambda_n, \eta_n) 
    :&= \frac{1}{n} \| y - X \beta \|_2^2 + \frac{\lambda_n}{n} \sum_j |\beta_j| + \frac{\eta_n}{n} |\beta_j - \betatilde_j|\\
    &= \frac{1}{n}\left\| \varepsilon - \frac{1}{\sqrt{n}} X u \right\|_2^2 + \frac{\lambda_n}{n} \sum_j \left| \betastar_j + \frac{u_j}{\sqrt{n}} \right| + \frac{\eta_n}{n} \sum_j \left| \betastar_j - \betatilde_j + \frac{u_j}{\sqrt{n}} \right|
\end{align}
and
\begin{align}
    &Z_n^\mathcal{T}(\beta; \betatilde, \lambda_n, \eta_n) - Z_n^\mathcal{T}(\betastar; \betatilde, \lambda_n, \eta_n)\\
    =& \left\| \varepsilon - \frac{1}{\sqrt{n}} X u \right\|_2^2 - \| \varepsilon \|_2^2 \\
    &+ \lambda_n \sum_j \left( \left| \betastar_j + \frac{u_j}{\sqrt{n}} \right| - \left|\betastar_j\right| \right) + \eta_n \sum_j \left( \left| \betastar_j - \betatilde_j + \frac{u_j}{\sqrt{n}} \right| - \left|\betastar_j - \betatilde_j\right| \right)\\
    = & u^\top \left( \frac{1}{n} X^\top X \right) u -2 u^\top \left(\frac{1}{\sqrt{n}}  X^\top \varepsilon \right)\\
    &+ \lambda_n \sum_j \left( \left| \betastar_j + \frac{u_j}{\sqrt{n}} \right| - \left|\betastar_j\right| \right) + \eta_n \sum_j \left( \left| \betastar_j - \betatilde_j + \frac{u_j}{\sqrt{n}} \right| - \left|\betastar_j - \betatilde_j\right| \right)
\end{align}
The first and second terms are the same as \eqref{uw} and \eqref{ucu}.
We consider the third and fourth terms.
\begin{align}
    \lambda_n \left( \left| \betastar_j + \frac{u_j}{\sqrt{n}} \right| - \left|\betastar_j\right| \right)
    = \begin{cases}
        \frac{\lambda_n}{\sqrt{n}} |u_j| & \text{if} ~ \betastar_j = 0\\
        \frac{\lambda_n}{\sqrt{n}} u_j \sgn(\betastar_j) & \text{if} ~ \betastar_j \neq 0 ~ \text{and} ~ \betastar_j (\betastar_j + \frac{u_j}{\sqrt{n}}) \geq 0\\
        -\frac{\lambda_n}{\sqrt{n}} u_j \sgn(\betastar_j) - 2 \lambda_n |\betastar_j| & \text{if} ~ \betastar_j \neq 0 ~ \text{and} ~ \betastar_j (\betastar_j + \frac{u_j}{\sqrt{n}}) < 0
    \end{cases}
    \label{V-lambda}
\end{align}
\begin{align}
    &\eta_n \left( \left| \betastar_j - \betatilde_j + \frac{u_j}{\sqrt{n}} \right| - \left|\betastar_j - \betatilde_j\right| \right)\\
    =& \begin{cases}
        \frac{\eta_n}{\sqrt{n}} |u_j| & \text{if} ~ \betastar_j = \betatilde_j\\
        \frac{\eta_n}{\sqrt{n}} u_j \sgn(\betastar_j - \betatilde_j) & \text{if} ~ \betastar_j \neq \betatilde_j  ~ \text{and} ~ (\betastar_j - \betatilde_j) (\betastar_j  - \betatilde_j + \frac{u_j}{\sqrt{n}}) \geq 0\\
        -\frac{\eta_n}{\sqrt{n}} u_j \sgn(\betastar_j - \betatilde_j) - 2 \eta_n |\betastar_j - \betatilde_j| & \text{if} ~ \betastar_j \neq \betatilde_j ~ \text{and} ~ (\betastar_j - \betatilde_j) (\betastar_j  - \betatilde_j + \frac{u_j}{\sqrt{n}}) < 0
    \end{cases}
    \label{V-gamma}
\end{align}
For large $n$ such that ($n > u_j^2 / \beta_j^{*2} $ for $\forall j \in S$) and ($n > u_j^2 / (\betastar_j - \betatilde_j)^2$ for $\forall j$ such that $\betastar_j \neq \betatilde_j$), we have
\begin{align}
    \mathcal{R}_j :=
    &\lambda_n \left( \left| \betastar_j + \frac{u_j}{\sqrt{n}} \right| - \left|\betastar_j\right| \right)
    + \eta_n \left( \left| \betastar_j - \betatilde_j + \frac{u_j}{\sqrt{n}} \right| - \left|\betastar_j - \betatilde_j\right| \right)\\
    =& \begin{cases}
        \frac{\lambda_n + \eta_n}{\sqrt{n}} |u_j| & 
        \text{if} ~ \betastar_j = 0, \betatilde_j = 0\\
        \frac{\lambda_n}{\sqrt{n}} |u_j| - \frac{\eta_n}{\sqrt{n}} u_j \sgn(\betatilde_j)  &
        \text{if} ~ \betastar_j = 0, \betatilde_j \neq 0\\
        \frac{\lambda_n}{\sqrt{n}} u_j \sgn(\betastar_j) + \frac{\eta_n}{\sqrt{n}} |u_j| & \text{if} ~ \betastar_j \neq 0, \betatilde_j = \betastar_j\\
        \frac{\lambda_n}{\sqrt{n}} u_j \sgn(\betastar_j) + \frac{\eta_n}{\sqrt{n}} u_j \sgn(\betastar_j - \betatilde_j) & \text{if} ~ \betastar_j \neq 0, \betatilde_j \neq \betastar_j
    \end{cases}
\end{align}
Assume $(\lambda_n + \eta_n) / \sqrt{n} \rightarrow \delta_1$ and $(\lambda_n - \eta_n) / \sqrt{n} \rightarrow \delta_2$.
For $\betastar_j = 0$ and $\betatilde_j = 0$,
\begin{align}
    \mathcal{R}_j \rightarrow \begin{cases}
        0 & \text{if} ~ u_j = 0\\
        \delta_1 |u_j| & \text{if} ~ u_j \neq 0
    \end{cases}
\end{align}
For $\betastar_j = 0, \betatilde_j \neq 0$,
\begin{align}
    \mathcal{R}_j \rightarrow \begin{cases}
        0 & \text{if} ~ u_j = 0\\
        \delta_2 & \text{if} ~ u_j \betatilde_j > 0\\
        \delta_1 |u_j| & \text{if} ~  u_j \betatilde_j < 0
    \end{cases}
\end{align}
For $\betastar_j \neq 0, \betatilde_j = \betastar_j$,
\begin{align}
    \mathcal{R}_j \rightarrow \begin{cases}
        0 & \text{if} ~ u_j = 0\\
        \delta_1 & \text{if} ~ u_j \betastar_j > 0\\
        -\delta_2 |u_j| & \text{if} ~  u_j \betastar_j < 0
    \end{cases}
\end{align}
For $\betastar_j \neq 0, \betatilde_j \neq \betastar_j$,
\begin{align}
    \mathcal{R}_j \rightarrow \begin{cases}
        0 & \text{if} ~ u_j = 0\\
        \delta_2 |u_j| & \text{if} ~ \sgn(\betastar_j) \neq \sgn(\betastar_j - \betatilde_j) ~\text{and}~ u_j \sgn(\betastar_j) > 0\\
        -\delta_2 |u_j| & \text{if} ~ \sgn(\betastar_j) \neq \sgn(\betastar_j - \betatilde_j) ~\text{and}~ u_j \sgn(\betastar_j) < 0\\
        \delta_1 |u_j| & \text{if} ~  \sgn(\betastar_j) = \sgn(\betastar_j - \betatilde_j) ~ \text{and} ~ u_j \sgn(\betastar_j) > 0\\
        -\delta_1 |u_j| & \text{if} ~ \sgn(\betastar_j) = \sgn(\betastar_j - \betatilde_j) ~ \text{and} ~ u_j \sgn(\betastar_j) < 0
    \end{cases}
\end{align}
Furthermore, we assume $\delta_1 = \infty$ and $\delta_2 = 0$.
Then, for $\betastar_j = 0$ and $\betatilde_j = 0$,
\begin{align}
    \mathcal{R}_j \rightarrow \begin{cases}
        0 & \text{if} ~ u_j = 0\\
        \infty & \text{if} ~ u_j \neq 0
    \end{cases}
\end{align}
For $\betastar_j = 0, \betatilde_j \neq 0$,
\begin{align}
    \mathcal{R}_j \rightarrow \begin{cases}
        0 & \text{if} ~ u_j \betatilde_j \geq 0\\
        \infty & \text{if} ~  u_j \betatilde_j < 0
    \end{cases}
\end{align}
For $\betastar_j \neq 0, \betatilde_j = \betastar_j$,
\begin{align}
    \mathcal{R}_j \rightarrow \begin{cases}
        0 & \text{if} ~ u_j \betastar_j \leq 0\\
        \infty & \text{if} ~  u_j \betastar_j > 0
    \end{cases}
\end{align}
For $\betastar_j \neq 0, \betatilde_j \neq \betastar_j$,
\begin{align}
    \mathcal{R}_j \rightarrow \begin{cases}
        0 & \text{if} ~ \sgn(\betastar_j) \neq \sgn(\betastar_j - \betatilde_j) ~ \text{or} ~ u_j = 0\\
        \infty & \text{if} ~  \sgn(\betastar_j) = \sgn(\betastar_j - \betatilde_j) ~ \text{and} ~ u_j \sgn(\betastar_j) > 0\\
        -\infty & \text{if} ~ \sgn(\betastar_j) = \sgn(\betastar_j - \betatilde_j) ~ \text{and} ~ u_j \sgn(\betastar_j) < 0
    \end{cases}
\end{align}
Overall, we have
\begin{align}
    \mathcal{R}_j \rightarrow \begin{cases}
        0 & \text{if} ~ (\betastar_j = 0 ~\text{and}~ \betatilde_j=0 ~\text{and}~ u_j=0)\\
        & \text{or} ~ (\betastar_j=0 ~\text{and}~ \betatilde_j\neq0 ~\text{and}~ u_j \betatilde_j \geq 0)\\
        & \text{or} ~ (\betastar_j\neq0 ~\text{and}~ \betatilde_j=\betastar_j ~\text{and}~ u_j \betastar_j \leq 0)\\
        & \text{or} ~ (\betastar_j\neq0 ~\text{and}~ \betatilde_j\neq\betastar_j ~\text{and}~ (\sgn(\betastar_j) \neq \sgn(\betastar_j - \betatilde_j) ~\text{or}~ u_j = 0))\\
        \infty & \text{if} ~ (\betastar_j = 0 ~\text{and}~ \betatilde_j=0 ~\text{and}~ u_j\neq0)\\
        & \text{or} ~ (\betastar_j=0 ~\text{and}~ \betatilde_j\neq0 ~\text{and}~ u_j \betatilde_j < 0)\\
        & \text{or} ~ (\betastar_j\neq0 ~\text{and}~ \betatilde_j=\betastar_j ~\text{and}~ u_j \betastar_j > 0)\\
        & \text{or} ~ (\betastar_j\neq0 ~\text{and}~ \betatilde_j\neq\betastar_j ~\text{and}~ \sgn(\betastar_j) = \sgn(\betastar_j - \betatilde_j) ~\text{and}~ u_j \sgn(\betastar_j) > 0)\\
        - \infty & \text{if} ~ (\betastar_j\neq0 ~\text{and}~ \betatilde_j\neq\betastar_j ~\text{and}~ \sgn(\betastar_j) = \sgn(\betastar_j - \betatilde_j) ~ \text{and} ~ u_j \sgn(\betastar_j) < 0)
    \end{cases}
\end{align}
If we assume $\sgn(\betastar_j) \neq \sgn(\betastar_j - \betatilde_j)$ for $\forall j$ such that $\betastar_j \neq 0$ and $\betatilde_j \neq \betastar_j$, we exclude the case of $\mathcal{R} \rightarrow - \infty$, and
\begin{align}
    V_n(u) \rightarrow
    V(u) := \begin{cases}
        -2u^\top W + u^\top C u & \text{if} ~ (u_j=0 ~\text{for}~ \forall j ~\text{s.t.}~ \betastar_j = 0 ~\text{and}~ \betatilde_j=0)\\
        & \text{and} ~ (u_j \betatilde_j \geq 0 ~\text{for} ~\forall j ~\text{s.t.}~ \betastar_j=0 ~\text{and}~ \betatilde_j\neq0)\\
        & \text{and} ~ (u_j \betastar_j \leq 0 ~\text{for}~ \forall j ~\text{s.t.}~ \betastar_j\neq0 ~\text{and}~ \betatilde_j=\betastar_j)\\
        \infty & \text{otherwise}.
    \end{cases}
\end{align}
Since $V_n(u)$ is convex and $V(u)$ has a unique minimum, in the same manner of the proof of Adaptive Lasso, we have
\begin{align}
    \sqrt{n}(\betahat - \betastar) \overset{d}{\rightarrow}
    \argmin_{u\in \mathcal{U}} & -2u^\top W + u^\top C u\\
    \mathcal{U} :=& \left\{ u\in \mathbb{R}^p \left|
    \begin{array}{l}
        u_j=0 ~\text{for}~ \forall j ~\text{s.t.}~ \betastar_j = 0 ~\text{and}~ \betatilde_j=0\\
        u_j \betatilde_j \geq 0 ~\text{for} ~\forall j ~\text{s.t.}~ \betastar_j=0 ~\text{and}~ \betatilde_j\neq0\\
        u_j \betastar_j \leq 0 ~\text{for}~ \forall j ~\text{s.t.}~ \betastar_j\neq0 ~\text{and}~ \betatilde_j=\betastar_j
    \end{array}
    \right. \right\}
\end{align}

Next, we show active/varying variable consistency.
We consider the cases of (i)~$j \in S \cap T$, (ii)~$j \in S \cap T^c$, (iii)~$j \in S^c \cap T$, and (iv)~$j \in S^c \cap T^c$.

(i)~For $j \in S \cap T$, we have $\betatilde_j \neq 0$ because of $\sgn(\betatilde_j - \betastar_j) = \sgn(\betastar_j)$.
By KKT condition, we have
\begin{align}
    2 \left(\frac{1}{n} \x_j^\top X \right)  \sqrt{n} (\betastar - \betahat) + 2 \frac{1}{\sqrt{n}} \x_j^\top \varepsilon
    + \frac{\lambda_n}{\sqrt{n}} \sgn(\betahat_j) + \frac{\eta_n}{\sqrt{n}} \sgn(\betahat_j - \betatilde_j) = 0
\end{align}
and 
\begin{align}
    \lim_{n\rightarrow \infty} P\left( \min \left\{ 0, \betatilde_j \right\} \leq \betahat_j \leq \max \left\{ 0, \betatilde_j \right\} \right) = 1.
\end{align}
On the other hand, $\sqrt{n} (\betahat_j - \betastar_j)$ converges to some Gaussian-mixture distribution for $j \in S \cap T$.
Hence, we have $\betahat_j \rightarrow \betastar_j$ and
\begin{align}
    \lim_{n\rightarrow \infty} P\left( \betahat_j = 0 ~\text{or}~ \betahat_j = \betatilde_j \right) = 0
\end{align}
This concludes
\begin{align}
    \lim_{n\rightarrow \infty} P \left(0 < \betahat_j < \betatilde_j ~ \text{or}~ \betatilde_j < \betahat_j < 0 \right) = 1 ~ \text{for} ~ \forall j \in (S \cap T)
\end{align}

(ii)~For $j \in S \cap T^c$, we have $\betastar_j \neq 0$ and $\betastar_j = \betatilde_j$.
By KKT condition, we have
\begin{align}
    2 \left(\frac{1}{n} \x_j^\top X \right)  \sqrt{n} (\betastar - \betahat) + 2 \frac{1}{\sqrt{n}} \x_j^\top \varepsilon
    + \frac{\lambda_n}{\sqrt{n}} \sgn(\betahat_j) + \frac{\eta_n}{\sqrt{n}} \sgn(\betahat_j - \betastar_j) = 0
\end{align}
and 
\begin{align}
    \lim_{n\rightarrow \infty} P\left( \min \left\{ 0, \betastar_j \right\} \leq \betahat_j \leq \max \left\{ 0, \betastar_j \right\} \right) = 1
\end{align}
On the other hand, $\sqrt{n} (\betahat_j - \betastar_j)$ converges to some mixture distribution of truncated Gaussian distribution truncated at zero and delta distribution at zero for $j \in S \cap T^c$.
Hence, we have $\betahat_j \rightarrow \betastar_j \neq 0$ and 
\begin{align}
    \lim_{n\rightarrow \infty} P\left( \betahat_j = 0 \right) = 0
\end{align}
This concludes
\begin{align}
    \lim_{n\rightarrow \infty} P \left(0 < \betahat_j \leq \betatilde_j ~\text{or}~ \betatilde_j \leq \betahat_j < 0 \right) = 1 ~ & \text{for} ~ \forall j \in (S \cap T^c)
\end{align}

(iii)~For $j \in S^c \cap T$, we have $\betastar_j = 0$, $\betastar_j \neq \betatilde_j$, and $\betatilde_j \neq 0$.
By KKT condition, we have
\begin{align}
    2 \left(\frac{1}{n} \x_j^\top X \right)  \sqrt{n} (\betastar - \betahat) + 2 \frac{1}{\sqrt{n}} \x_j^\top \varepsilon
    + \frac{\lambda_n}{\sqrt{n}} \sgn(\betahat_j) + \frac{\eta_n}{\sqrt{n}} \sgn(\betahat_j - \betatilde_j) = 0
\end{align}
and 
\begin{align}
    \lim_{n\rightarrow \infty} P\left( \min \left\{ 0, \betastar_j \right\} \leq \betahat_j \leq \max \left\{ 0, \betastar_j \right\} \right) = 1
\end{align}
On the other hand, $\sqrt{n} \betahat_j$ converges to some mixture distribution of truncated Gaussian distribution truncated at zero and delta distribution at zero for $j \in S \cap T^c$.
Hence, we have $\betahat_j \rightarrow 0 \neq \betatilde_j$ and 
\begin{align}
    \lim_{n\rightarrow \infty} P\left( \betahat_j = \betatilde_j \right) = 0
\end{align}
This concludes
\begin{align}
    \lim_{n\rightarrow \infty} P \left(0 \leq \betahat_j < \betatilde_j ~\text{or}~ \betatilde_j < \betahat_j \leq 0 \right) = 1 ~ & \text{for} ~ \forall j \in (S^c \cap T)
\end{align}

(iv)~For $j \in S^c \cap T^c$, we have $\betatilde_j = \betastar_j = 0$ and
\begin{align}
    2 \left(\frac{1}{n} \x_j^\top X \right)  \sqrt{n} (\betastar - \betahat) + 2 \frac{1}{\sqrt{n}} \x_j^\top \varepsilon
    + \frac{\lambda_n + \eta_n}{\sqrt{n}} \sgn(\betahat_j) = 0
\end{align}
This implies that
\begin{align}
    \lim_{n\rightarrow \infty} P\left( \betahat_j = 0 \right) = 1
\end{align}

\subsubsection{Proof of Theorem \ref{trlasso-oracle}}
\label{proof:trlasso-oracle}

We assume that $\betatilde_j = 0$ for $\forall j$ such that $\betastar_j = 0$, and $\betatilde_j \neq \betastar_j$ for $\forall j$ such that $\betastar_j \neq 0$.
Then, $\mathcal{U} = \{ u ~ | ~ u_{S^c}=0\}$, and we have asymptotic normality, i.e.,
\begin{align}
    \sqrt{n}(\betahat_S - \betastar_S) \overset{d}{\rightarrow} \mathcal{N}(0, \sigma^2 C_{SS}^{-1})^\top, ~
    \sqrt{n} \betahat_{S^c} \overset{d}{\rightarrow} 0.
\end{align}

On the other hand, this indicates that
\begin{align}
    \forall j \in S, ~ P(j \in \supp(\betahat)) \rightarrow 1.
\end{align}
Now, we consider the event $j \in S^c$ and $j \in \supp (\betahat)$.
By the KKT conditions,
\begin{align}
    &2 \x_j^\top (y - X \betahat) + \lambda_n \sgn(\betahat_j) + \eta_n \sgn(\betahat_j - \betatilde_j) = 0\\
    \Rightarrow & \frac{2}{\sqrt{n}} \x_j^\top (y - X\betahat)
    + \frac{1}{\sqrt{n}} \lambda_n \sgn(\betahat_j) + \frac{\eta_n}{\sqrt{n}} \sgn(\betahat_j - \betatilde_j) = 0\\
    \Rightarrow & 2 \left(\frac{1}{n} \x_j^\top X \right)  \sqrt{n} (\betastar - \betahat) + 2 \frac{1}{\sqrt{n}} \x_j^\top \varepsilon
    + \frac{\lambda_n}{\sqrt{n}} \sgn(\betahat_j) + \frac{\eta_n}{\sqrt{n}} \sgn(\betahat_j - \betatilde_j) = 0
\end{align}
Since we assume $\betatilde_j = 0$ for $j \in S^c$, we have
\begin{align}
    2 \left(\frac{1}{n} \x_j^\top X \right)  \sqrt{n} (\betastar - \betahat) + 2 \frac{1}{\sqrt{n}} \x_j^\top \varepsilon
    + \frac{\lambda_n + \eta_n}{\sqrt{n}} \sgn(\betahat_j) = 0
\end{align}
The first and second terms on the left-hand side converge to some normal distribution, but the the last term on the left-hand side diverges to infinity if $(\lambda_n + \eta_n) / \sqrt{n} \rightarrow \infty$.
Hence, we have for $\forall j \in S^c$,
\begin{align}
    \lim_{n\rightarrow \infty} P\left( j \in \supp (\betahat) \right) 
        = 0.
\end{align}
This concludes 
\begin{align}
    \lim_{n\rightarrow \infty} P(\hat{S}_n = S) 
        = 1
\end{align}

On the other hand, 
\eqref{trlasso} is symmetric in terms of $0$ and $\betatilde$.
If we reparameterize parameters as
\begin{align}
    {\beta^*}' := \betastar - \betatilde,~
    \betatilde' := - \betatilde,~
    \betahat_n' := \betahat_n - \betatilde,~
    y' := y - X \betatilde,
\end{align}
\begin{align}
    \hat{S}_n' := \{ j : \betahat_j' \neq 0 \}, ~
    S' := \{ j : {\betastar_j}' \neq 0 \},
\end{align}
\begin{align}
    \lambda_n' := \eta_n,~
    \eta_n' := \lambda_n,
\end{align}
we have
\begin{align}
    \betahat_n' &= \betahat_n - \betatilde\\
    &= \argmin_{\beta'} \frac{1}{n} \left\|y - X (\beta'+\betatilde)\right\|_2^2 + \frac{\lambda_n}{n} \left\|(\beta'+\betatilde)\right\|_1 + \frac{\eta_n}{n} \left\|(\beta'+\betatilde) - \betatilde'\right\|_1\\
    &= \argmin_{\beta'} \frac{1}{n} \left\|y' - X \beta'\right\|_2^2 + \frac{\lambda_n'}{n} \left\|\beta'\right\|_1 + \frac{\eta_n'}{n} \left\|\beta' - \betatilde'\right\|_1
\end{align}
Hence, the property for $\{\betahat'_n, {\betastar}', \hat{S}_n', S'\}$ is the same as that for $\{\betahat_n, \betastar, \hat{S}_n, S\}$ in Theorem \ref{trlasso-oracle}.
In addition, we have
\begin{align}
    \hat{S}_n' 
    =\{ j : \betahat_j' \neq 0 \} 
    = \{ j : \betahat_j \neq \betatilde_j \}
    = \hat{T}_n, ~
    S'
    =\{ j : {\beta^*_j}' \neq 0 \}
    = \{ j : \betastar_j \neq \betatilde_j \}
    = T.
\end{align}
This concludes
\begin{align}
    \lim_{n\rightarrow \infty} P(\hat{T}_n = T)
    = \lim_{n\rightarrow \infty} P(\hat{S}_n' = S') 
    = 1
\end{align}

\subsection{Proofs of Transfer Lasso with Initial Estimator in Boundary Region}

\subsubsection{Proofs of Theorem~\ref{theorem:tlasso-asymptotic-distribution-boundary}}
\label{proof:tlasso-asymptotic-distribution-boundary}

Let $u := \sqrt{n} (\beta - \betastar)$, $V_n(u) := n (Z_n^\mathcal{T}(\beta) - Z_n^\mathcal{T}(\betastar))$, and $V(u) := \lim_{n\to\infty} V_n(u)$.
Consider the case where 
$\lambda_n / \sqrt{n} \to \infty$, $\lambda_n / \eta_n \to 1$, and
$(\lambda_n - \eta_n) / \sqrt{n} \to \delta_0$.
Suppose that $n / m \to 0$.
In the same way as in Case II of Proof~\ref{proof:trlasso-consistency} (Theorem~\ref{theorem:trlasso-consistency}), we obtain $V_n(u)$ as \eqref{eq:trlasso-vnu-ii}.
Because
\begin{align}
    \left( \left| u_j + \sqrt{m} \betastar_j \right| - \left| \sqrt{m} \betastar_j \right| \right) 
    \to u_j \sgn(\betastar_j) I(\betastar_j \neq 0) + |u_j| I(\betastar_j = 0),
\end{align}
\begin{align}
    \left( \left| u_j - \sqrt{\frac{n}{m}} z_j \right| - \left| \sqrt{\frac{n}{m}} z_j \right| \right) 
    \to |u_j|,
\end{align}
and
\begin{align}
    &\frac{\lambda_n}{\sqrt{n}} \left( u_j \sgn(\betastar_j) I(\betastar_j \neq 0) + |u_j| I(\betastar_j = 0) \right) + \frac{\eta_n}{\sqrt{n}} |u_j|\\
    =& \left( \frac{\lambda_n}{\sqrt{n}} u_j \sgn(\betastar_j) + \frac{\eta_n}{\sqrt{n}} |u_j| \right) I(\betastar_j \neq 0) + \frac{\lambda_n + \eta_n}{\sqrt{n}} |u_j| I(\betastar_j = 0),
\end{align}
we have
\begin{align}
    \label{eq:trlasso-vu-iii}
    V(u)
    = \begin{dcases}
        u^\top C u - 2 u^\top W - \delta_0 \sum_j |u_j| I(\betastar_j \neq 0), \\
        ~~~~~~~\text{if} ~ \betastar_j u_j \leq 0 ~\text{for}~ \forall j \in S ~\text{and}~ u_j = 0 ~\text{for}~ \forall j \in S^c,\\
        \infty, ~ \text{otherwise}.
    \end{dcases}
\end{align}
Since $V_n(u)$ is convex and $V(u)$ has a unique minimum, we obtain
\begin{align}
    \sqrt{n} \left( \betahat_n - \betastar \right)
    \to
    \argmin_{u \in \mathcal{U}} \left\{ u^\top C u - 2 u^\top W - \delta_0 \sum_j |u_j| \right\},
\end{align}
\begin{align}
    \mathcal{U} := \left\{ u\in \mathbb{R}^p \left|
        \begin{array}{l}
            \betastar_j u_j \leq 0 ~\text{for}~ \forall j \in S,\\
            u_j = 0 ~\text{for}~ \forall j \in S^c
        \end{array}
    \right. \right\}.
\end{align}

On the other hand, the Lagrangian function of $\argmin_{u \in \mathcal{U}} V(u)$ is
\begin{align}
    u_S^\top C_{SS} u_S - 2 u_S^\top W_S - \delta_0 \sum_{j \in S} |u_j| + \sum_{j \in S} \mu_j \betastar_j u_j + \sum_{j \in S^c} \mu_j u_j,
\end{align}
where $\mu \in \mathbb{R}^p$ is the Lagrangian multiplier.
Suppose that $u^*_{S^c} = 0$.
Let 
$S_1 := \{ j : j \in S ~\text{and}~ u_j^* \neq 0 \}$ and
$S_2 :=  \{ j : j \in S ~\text{and}~ u_j^* = 0 \}$.
By the KKT conditions of $\argmin_{u \in \mathcal{U}} V(u)$, we have
\begin{align}
\label{eq:trlasso-kkt-iii}
    \begin{dcases}
        2 C_{S_1 S_1} u_{S_1}^* - 2 W_{S_1} - \delta_0 \sgn(u_{S_1}^*) + \mu_{S_1} \betastar_{S_1} = 0\\
        \left| 2 C_{S_2 S_1} u_{S_1}^* - 2 W_{S_2} + \mu_{S_2} \betastar_{S_2} \right| \leq \delta_0\\
        \betastar_S u_S^* \leq 0\\
        \mu_S \geq 0\\
        \mu_S \betastar_S u_S^* = 0\\
        \mu_{S^c} = u_{S^c}^* = 0
    \end{dcases}
\end{align}
If $S_1 \neq \emptyset$, then we have
\begin{align}
    u_{S_1}^* = C_{S_1 S_1}^{-1} \left( W_{S_1} + \delta_0 \sgn(u_{S_1}^*) \right).
\end{align}
The probability holding the third inequality in \eqref{eq:trlasso-kkt-iii} is less than $1$.
This indicates inconsistent variable selection.
If $S_1 = \emptyset$, then we have
\begin{align}
    \left| - 2 W_{S_2} + \mu_{S_2} \betastar_{S_2} \right| \leq \delta_0.
\end{align}
The probability holding this inequality is less than $1$.
This indicates inconsistent variable selection.

\section{Additional Empirical Results}
\label{sec:additional-experiments}
We describe additional empirical results.

\subsection{Inconsistent Initial Estimator}

\begin{figure*}[t]
  \centering
  \includegraphics[width=10cm]{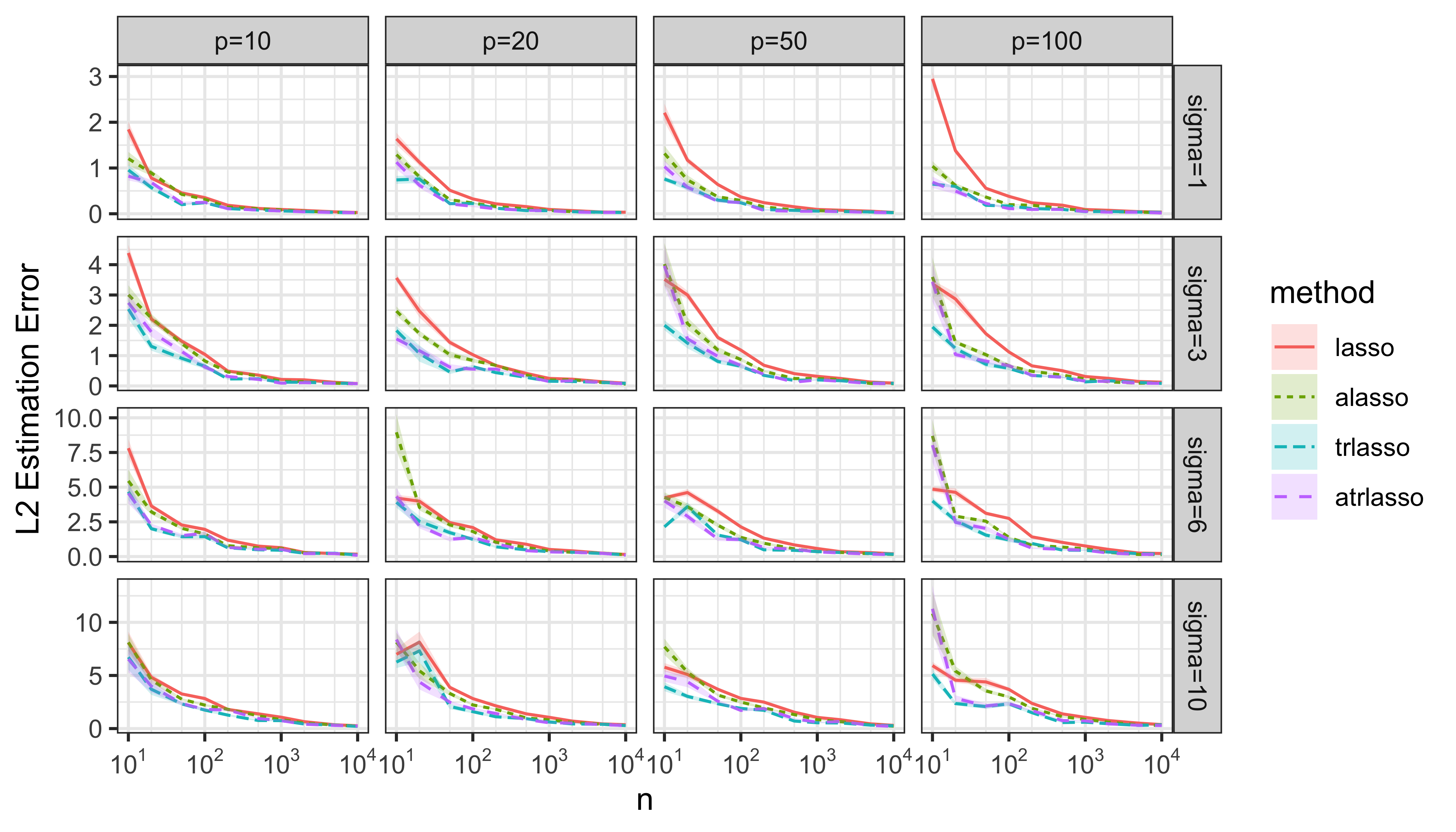}
  \caption{$\ell_2$ estimation errors for inconsistent source data (Case A).}
  \label{fig:addition-estimation}
  \vspace{0.5cm}
  \includegraphics[width=10cm]{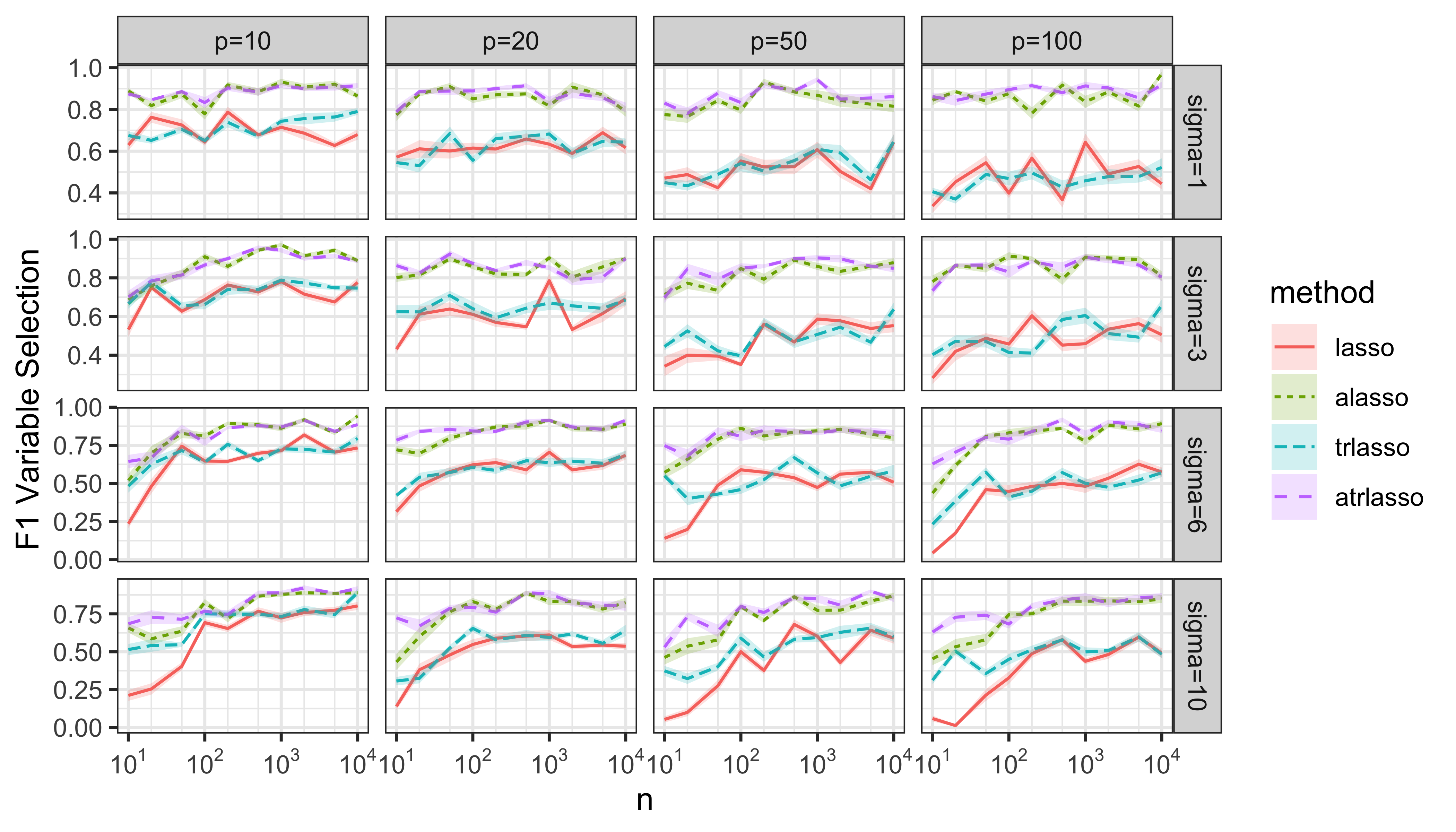}
  \caption{Variable selection F1-scores for inconsistent source data (Case A).}
  \label{fig:addition-selection}
\end{figure*}

\begin{figure*}[t]
  \centering
  \includegraphics[width=10cm]{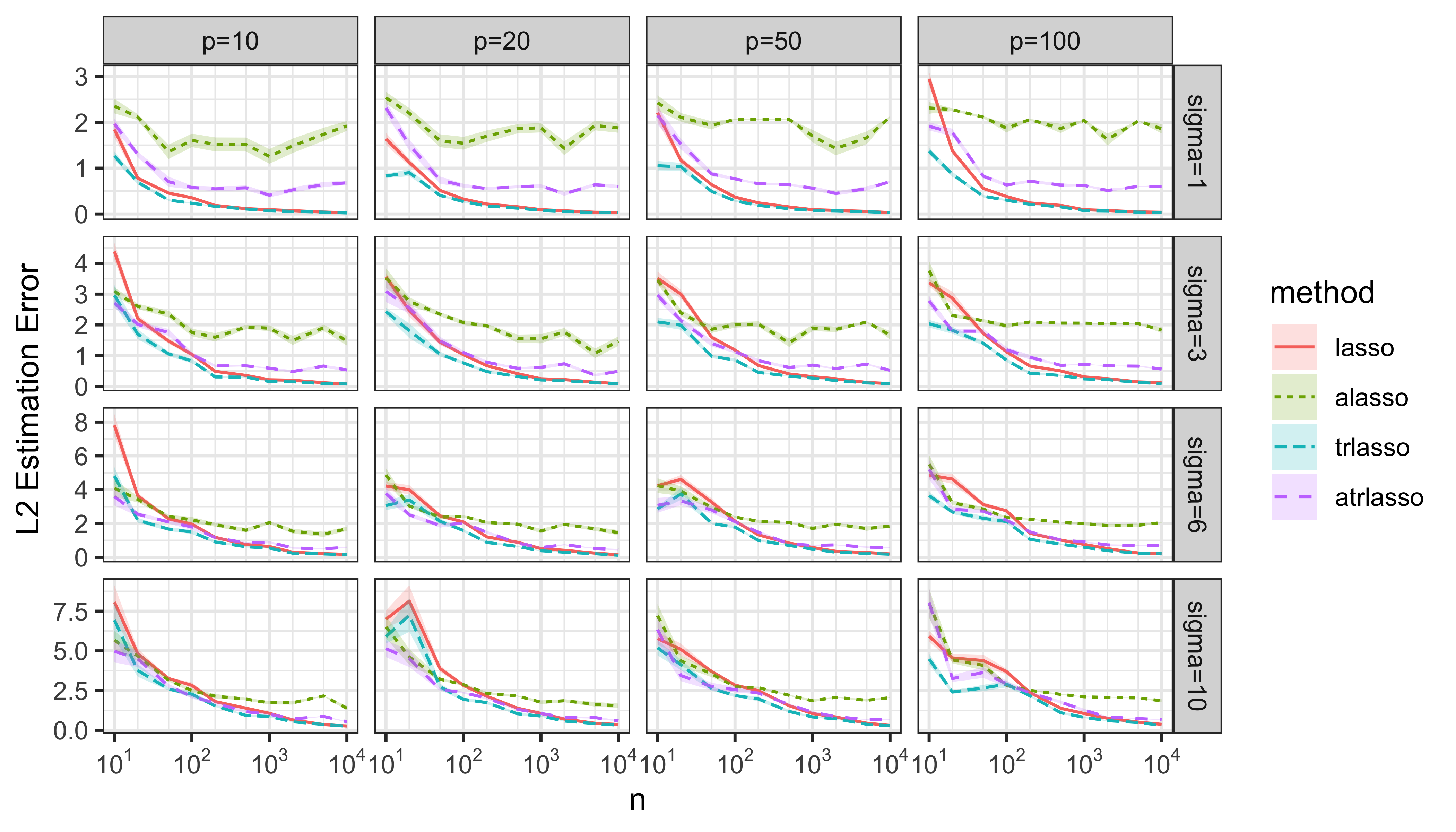}
  \caption{$\ell_2$ estimation errors for inconsistent source data (Case B).}
  \label{fig:removal-estimation}
  \vspace{0.5cm}
  \includegraphics[width=10cm]{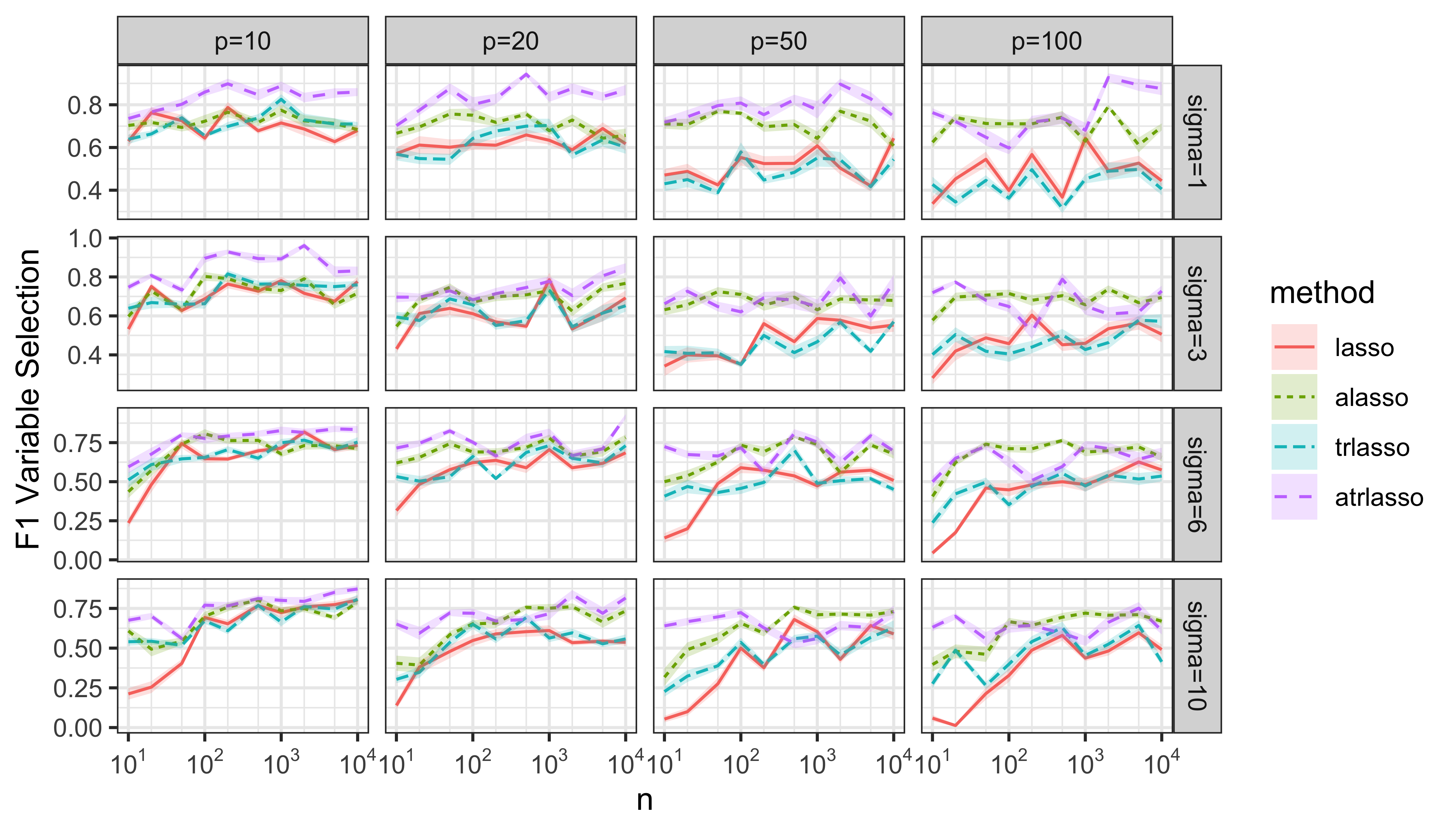}
  \caption{Variable selection F1-scores for inconsistent source data (Case B).}
  \label{fig:removal-selection}
\end{figure*}

In this simulation, we simulated inconsistent initial estimators.
We generated target data by $\betastar_{target} = [3, 1.5, 0, 0, 2, 0, 0, \dots, 0]^\top$.
Then, we generated the source data with parameters different from those of the target data.
We considered the following two cases:
\begin{description}
    \item[Case A] $\betastar_{source} = [3, 1.5, 0, 0, 2, 2, 0, \dots, 0]^\top$ (non-zero $\rightarrow$ zero change for $j=6$)
    \item[Case B] $\betastar_{source} = [3, 1.5, 0, 0, 0, 0, 0, \dots, 0]^\top$ (zero $\rightarrow$ non-zero change for $j=5$)
\end{description}
Other simulation settings were the same as those in Simulation~I.

The results are shown in Figures~\ref{fig:addition-estimation} and \ref{fig:addition-selection} for Case~A, and Figures~\ref{fig:removal-estimation} and \ref{fig:removal-selection} for Case~B.
In Case~A, the Transfer Lasso and the Adaptive Transfer Lasso were superior in estimation error, and the Adaptive Lasso was slightly inferior to the Transfer Lasso and the Adaptive Transfer Lasso.
Moreover, the Adaptive Lasso and the Adaptive Transfer Lasso were still superior in variable selection.
In Case B, the Adaptive Lasso and the Adaptive Transfer Lasso performed worse in terms of estimation error.
This may be because the initial estimator was incorrectly estimated close to zero, and regularization was strongly applied to it.
The performance degradation is particularly significant for the Adaptive Lasso, but not so significant for Adaptive Transfer Lasso.
The superiority of the Adaptive Lasso and the Adaptive Transfer Lasso also diminished in variable selection.
Overall, the Adaptive Transfer Lasso performed comparatively well in estimation error and variable selection, but the inconsistent initial estimators reduced its performance.

\subsection{Other Initial Estimators}

\label{sec:other-initial-estimators}

We compared other initial estimators for simulations of a large amount of target data.
The initial estimators included
\begin{itemize}
    \item Ridge (Figure~\ref{fig:large-l2-ridge}, \ref{fig:large-f1-ridge})
    \item Ridgeless~\cite{belkin2020two,hastie2022surprises}: minimum $\ell_2$-norm solution of least squares (Figure~\ref{fig:large-l2-ridgeless}, \ref{fig:large-f1-ridgeless})
    \item Lassoless~\cite{mitra2019understanding,li2021minimum}: minimum $\ell_1$-norm solution of least squares (Figure~\ref{fig:large-l2-lassoless}, \ref{fig:large-f1-lassoless})
\end{itemize}

The results of Ridge initial estimators were similar to those of Lasso initial estimators.
The results of Ridgeless and Lassoless initial estimators showed double descent phenomena, but they did not perform as well as Lasso and Ridge.

\begin{figure*}[t]
  \centering
  \includegraphics[width=10cm]{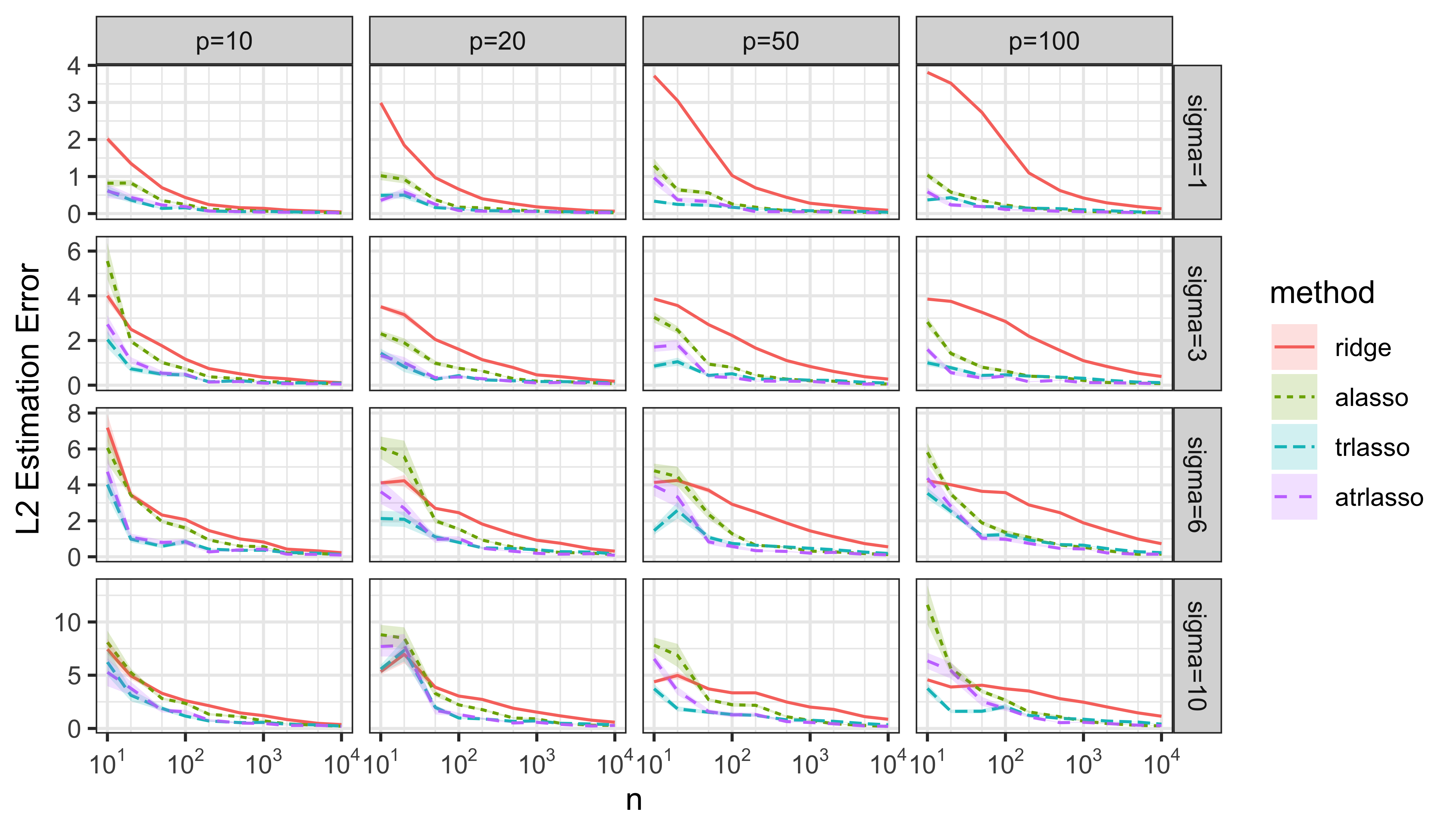}
  \caption{$\ell_2$ estimation errors for a large amount of source data and Ridge initial estimators.}
  \label{fig:large-l2-ridge}
  \vspace{0.5cm}
  \includegraphics[width=10cm]{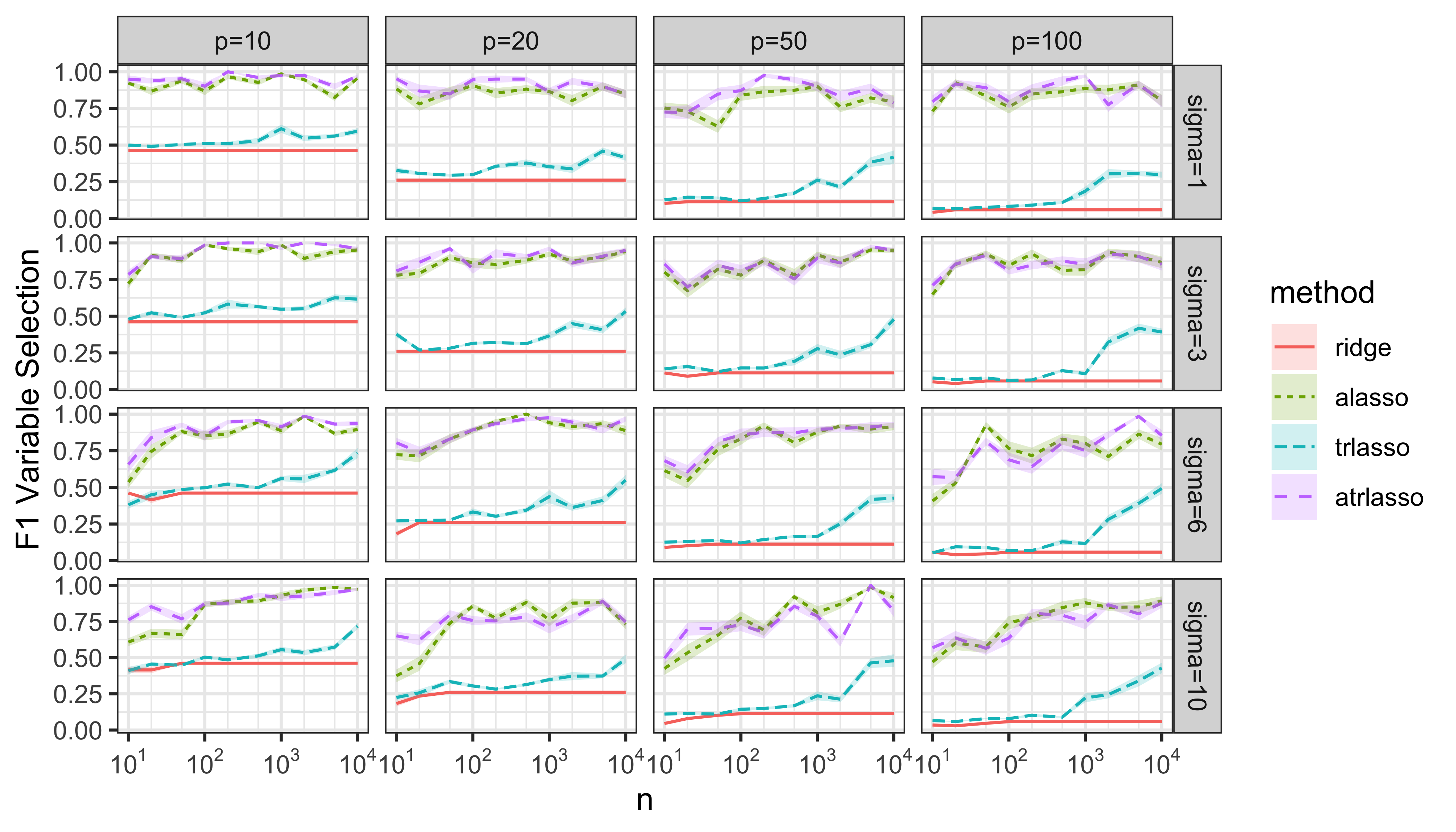}
  \caption{Variable selection F1-score for a large amount of source data and Ridge initial estimators.}
  \label{fig:large-f1-ridge}
\end{figure*}

\begin{figure*}[t]
  \centering
  \includegraphics[width=10cm]{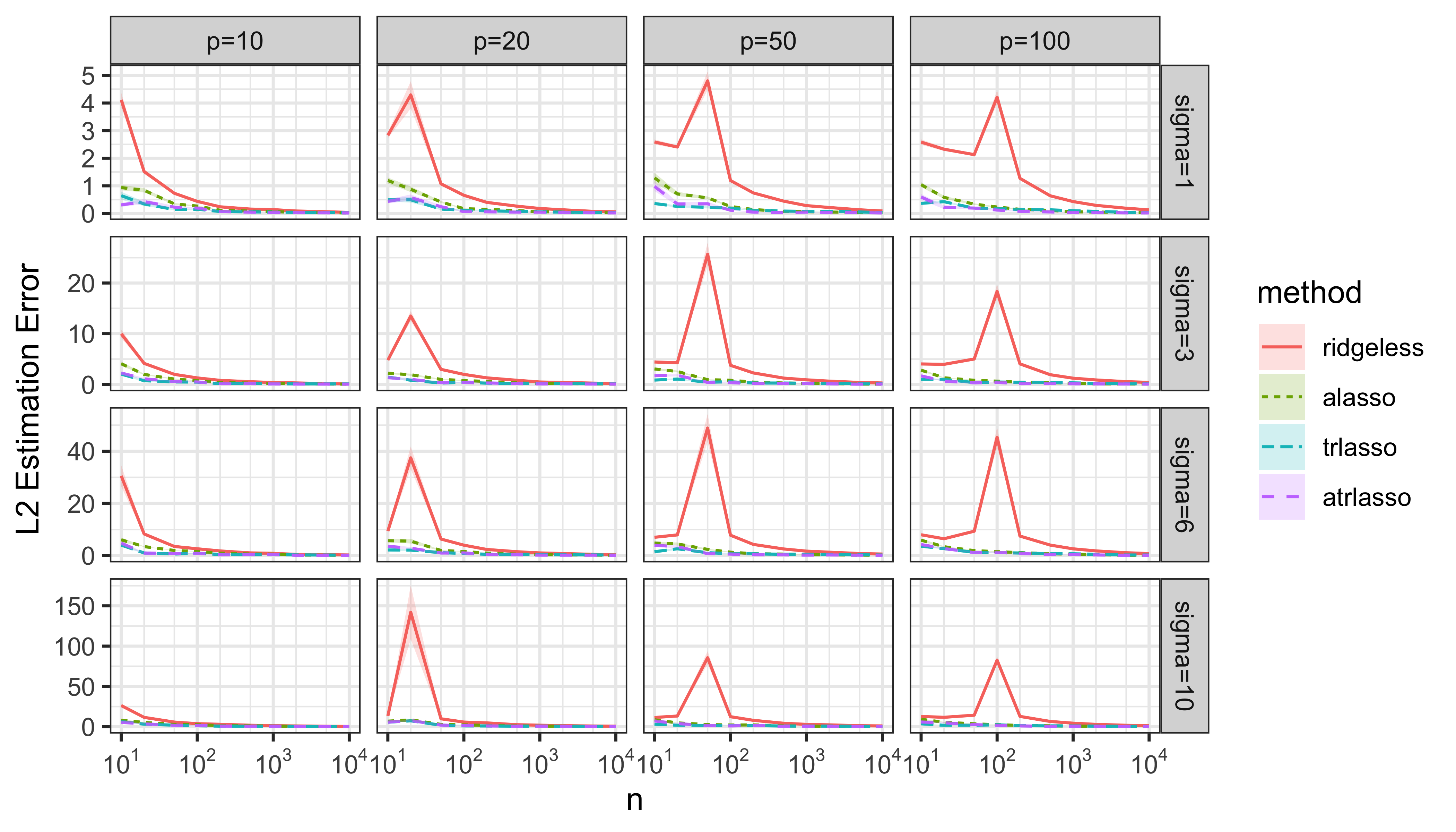}
  \caption{$\ell_2$ estimation errors for a large amount of source data and Ridgeless initial estimators.}
  \label{fig:large-l2-ridgeless}
  \vspace{0.5cm}
  \includegraphics[width=10cm]{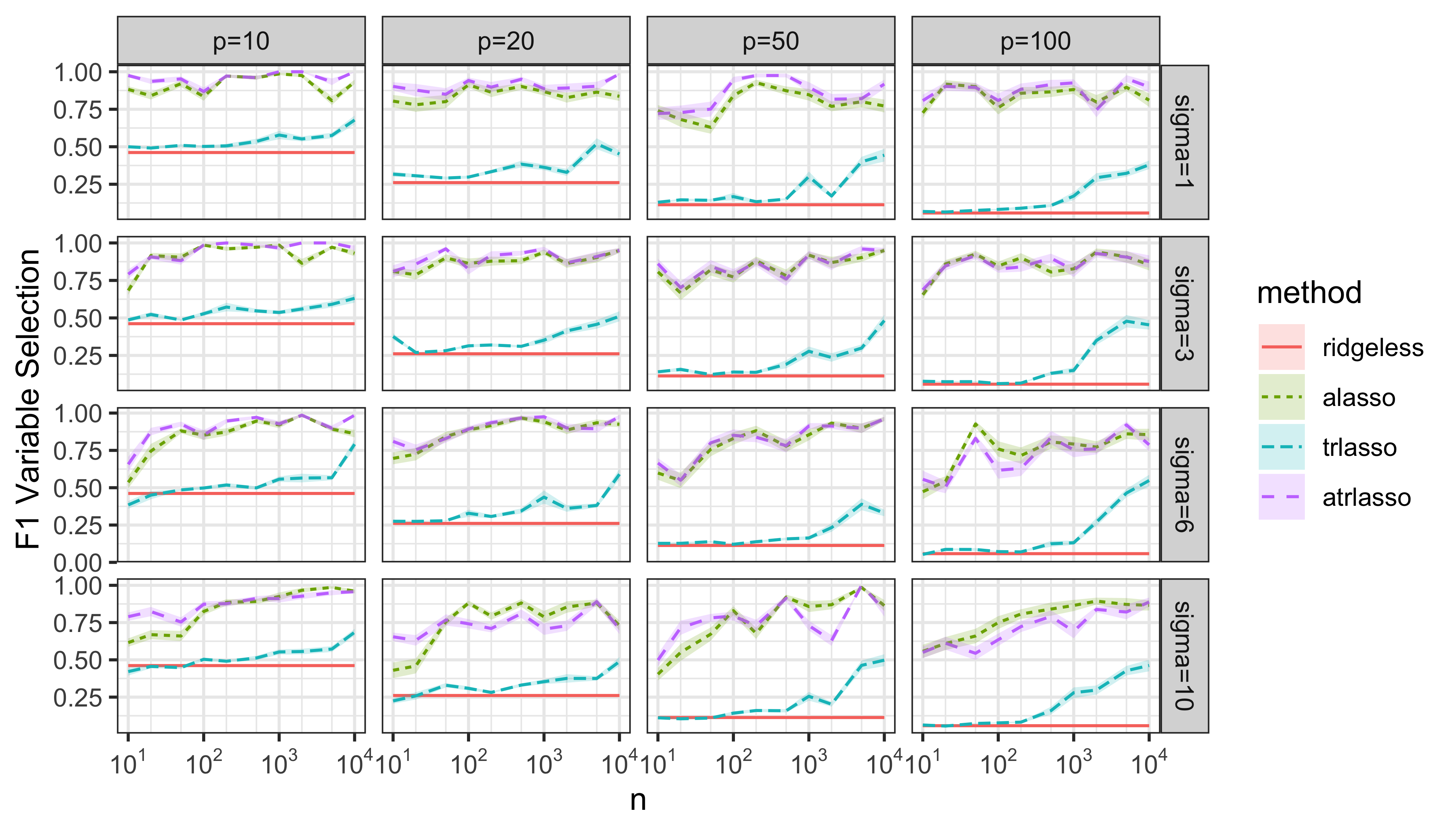}
  \caption{Variable selection F1-score for a large amount of source data and Ridgeless initial estimators.}
  \label{fig:large-f1-ridgeless}
\end{figure*}

\begin{figure*}[t]
  \centering
  \includegraphics[width=10cm]{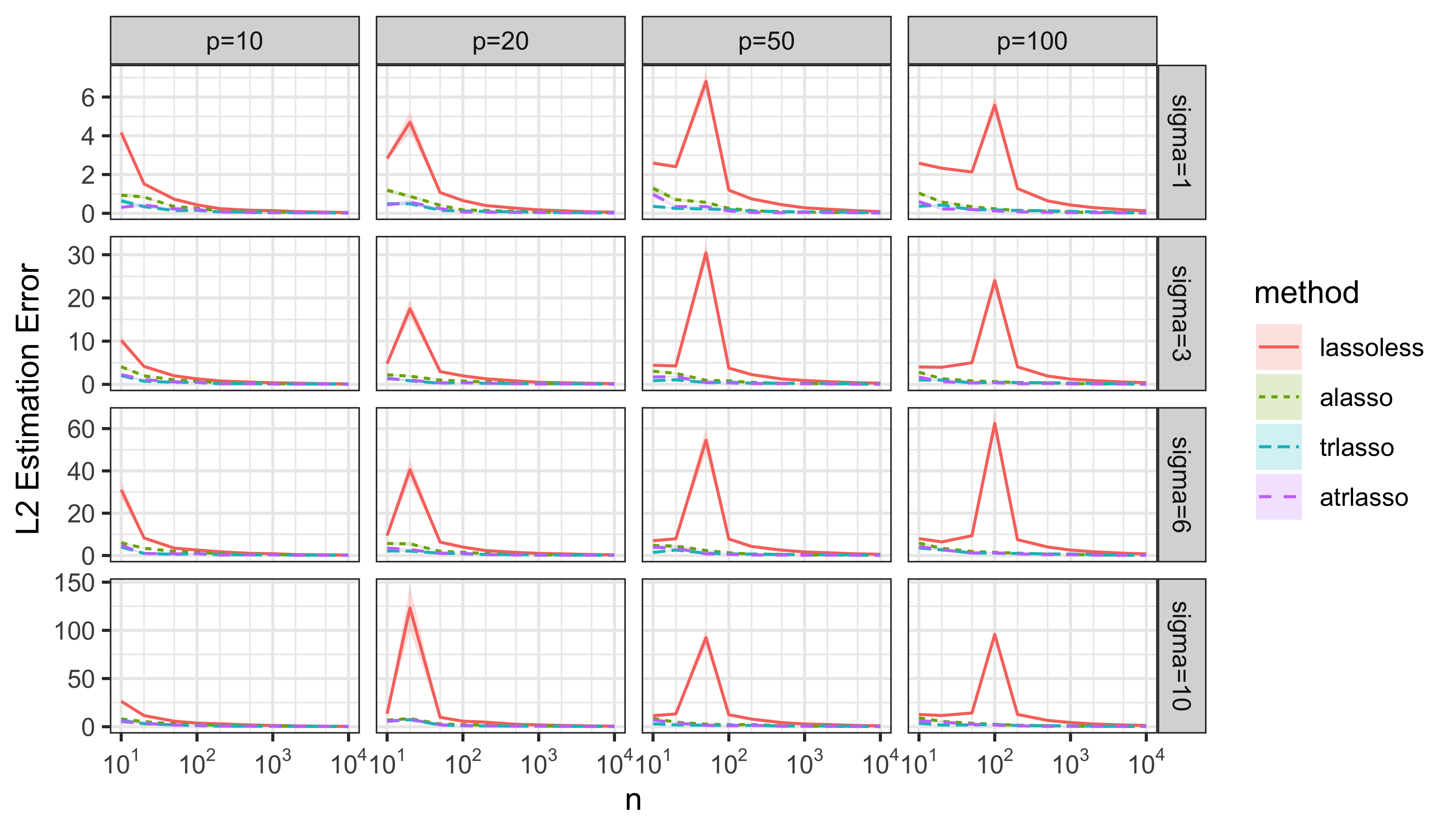}
  \caption{$\ell_2$ estimation errors for a large amount of source data and Lassoless initial estimators.}
  \label{fig:large-l2-lassoless}
  \vspace{0.5cm}
  \includegraphics[width=10cm]{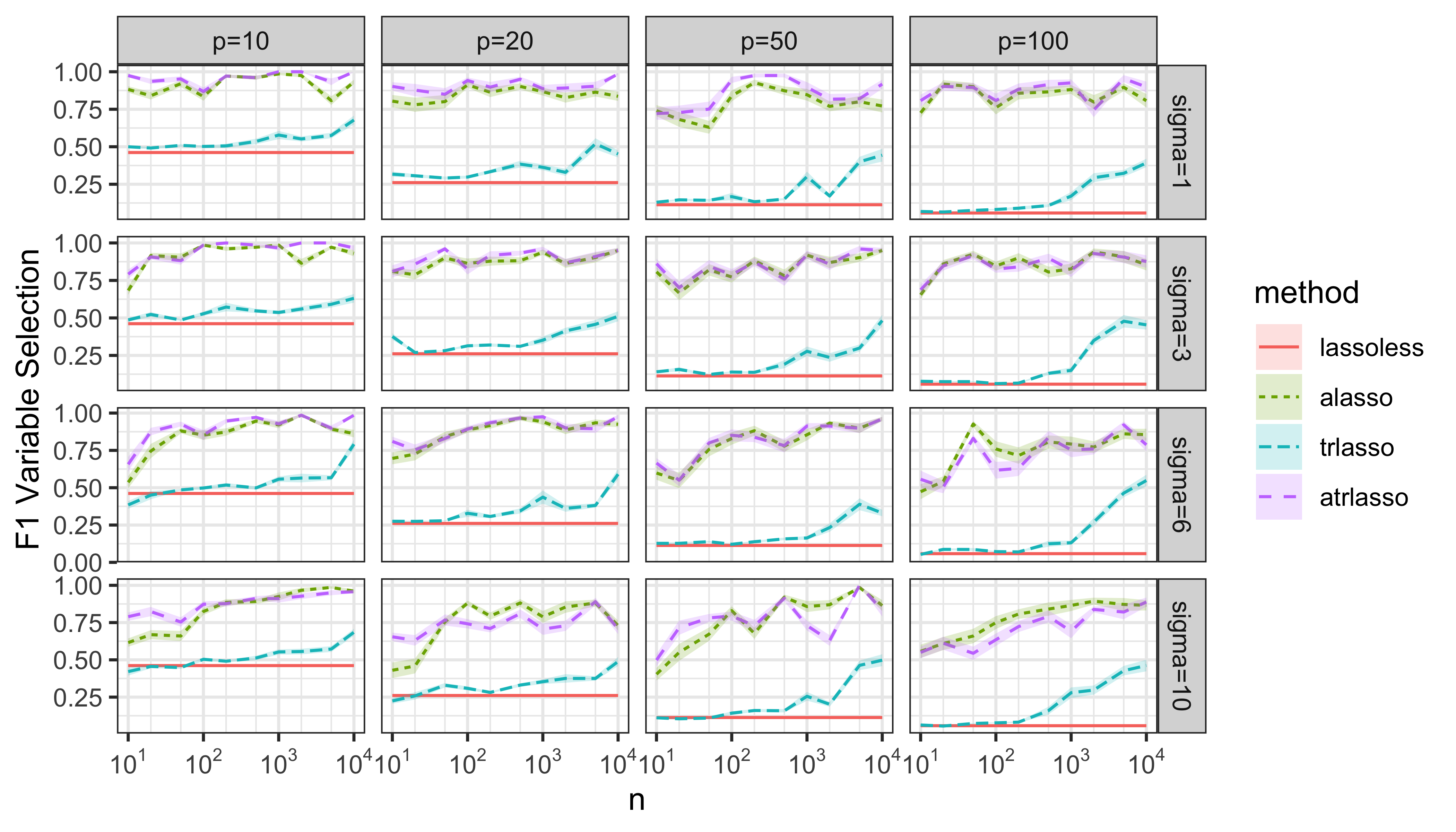}
  \caption{Variable selection F1-score for a large amount of source data and Lassoless initial estimators.}
  \label{fig:large-f1-lassoless}
\end{figure*}

\subsection{Other Metrics}

\label{sec:other-metrics}

We evaluated other metrics for Simulation I (Section~\ref{sec:sim-1}).
The metrics included
\begin{itemize}
    \item RMSE for prediction evaluation (Figure~\ref{fig:large-rmse})
    \item sensitivity (= (\# of correct selected variables) / (\# of true active variables)) for feature selection evaluation (Figure~\ref{fig:large-sens})
    \item specificity (= (\# of correctly not selected variables) / (\# of true inactive variables)) for feature selection (Figure~\ref{fig:large-spec})
    \item positive predictive value (= (\# of correctly selected variables) / (\# of selected variables)) for feature selection evaluation (Figure~\ref{fig:large-ppv})
    \item number of active variables for feature selection evaluation (Figure~\ref{fig:large-nav})
\end{itemize}

The results of RMSE were similar to those of $\ell_2$ estimation errors, but the difference among methods got smaller.
The results of 4 metrics for feature selection showed that Adaptive Lasso and Adaptive Transfer Lasso selected small number of variables and achieved superior performance especially on specificity and positive predictive value.

\begin{figure*}[t]
  \centering
  \includegraphics[width=10cm]{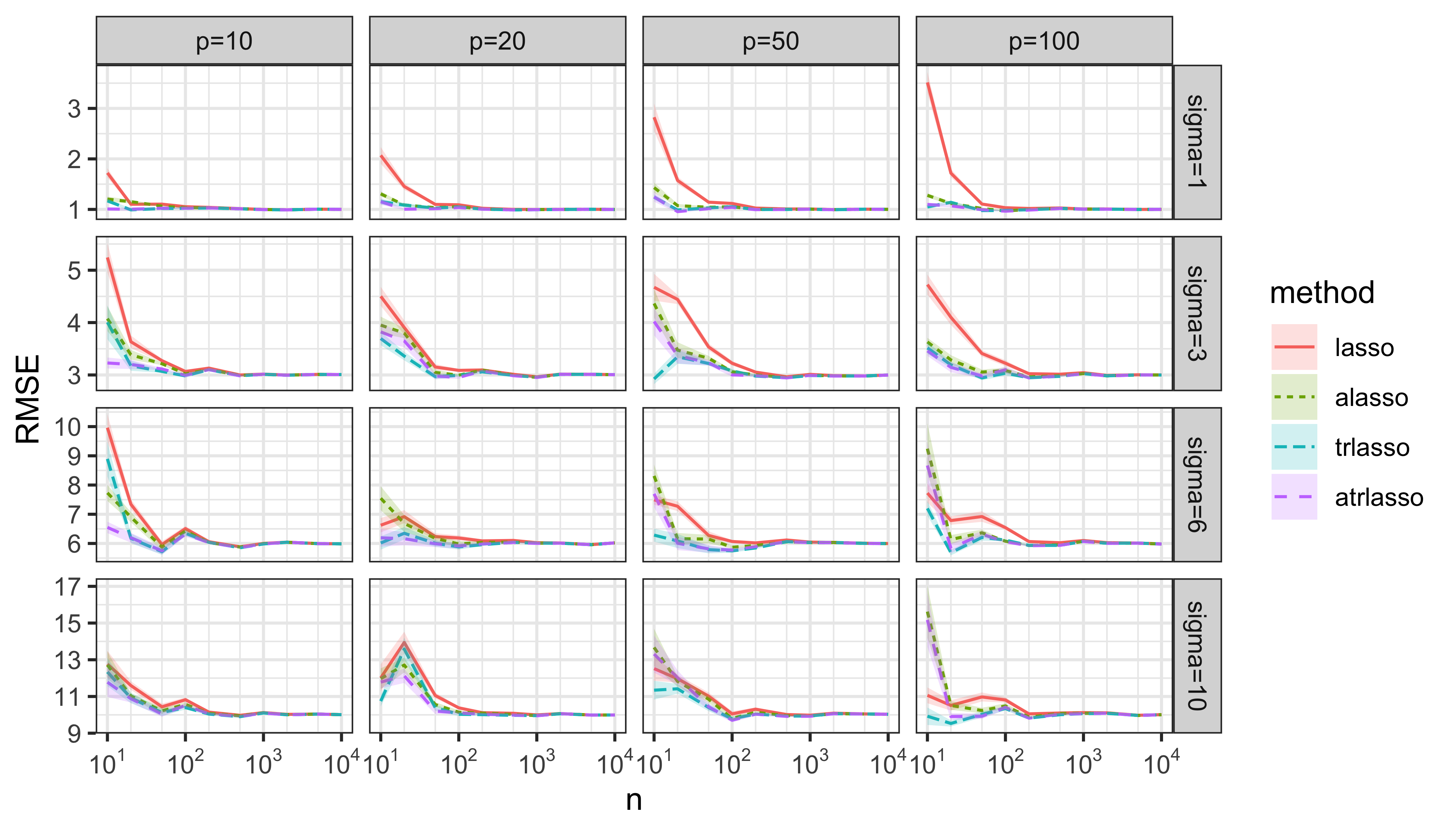}
  \caption{RMSE for a large amount of source data.}
  \label{fig:large-rmse}
\end{figure*}

\begin{figure*}[t]
  \centering
  \includegraphics[width=10cm]{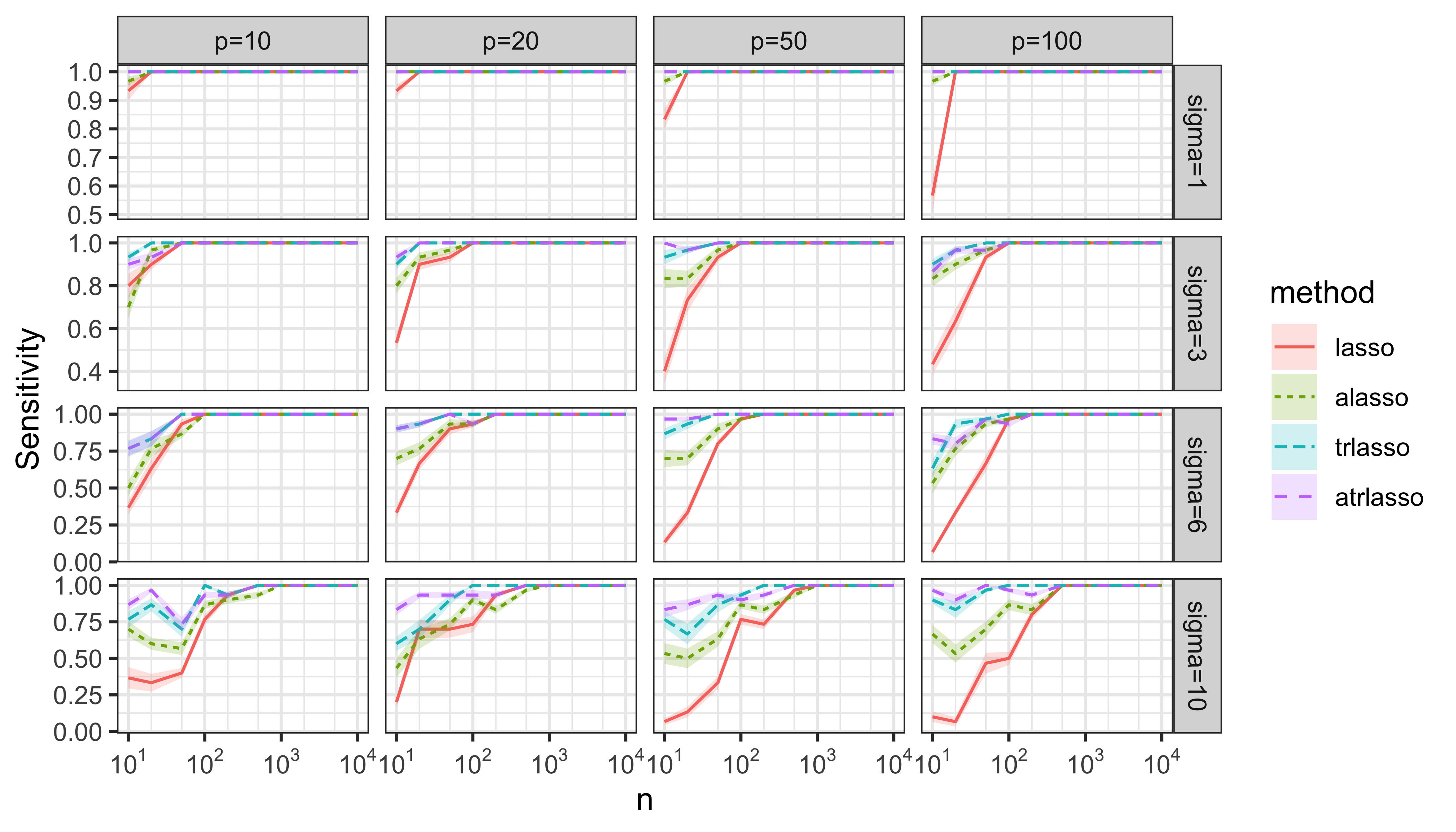}
  \caption{Sensitivity for a large amount of source data.}
  \label{fig:large-sens}
\end{figure*}

\begin{figure*}[t]
  \centering
  \includegraphics[width=10cm]{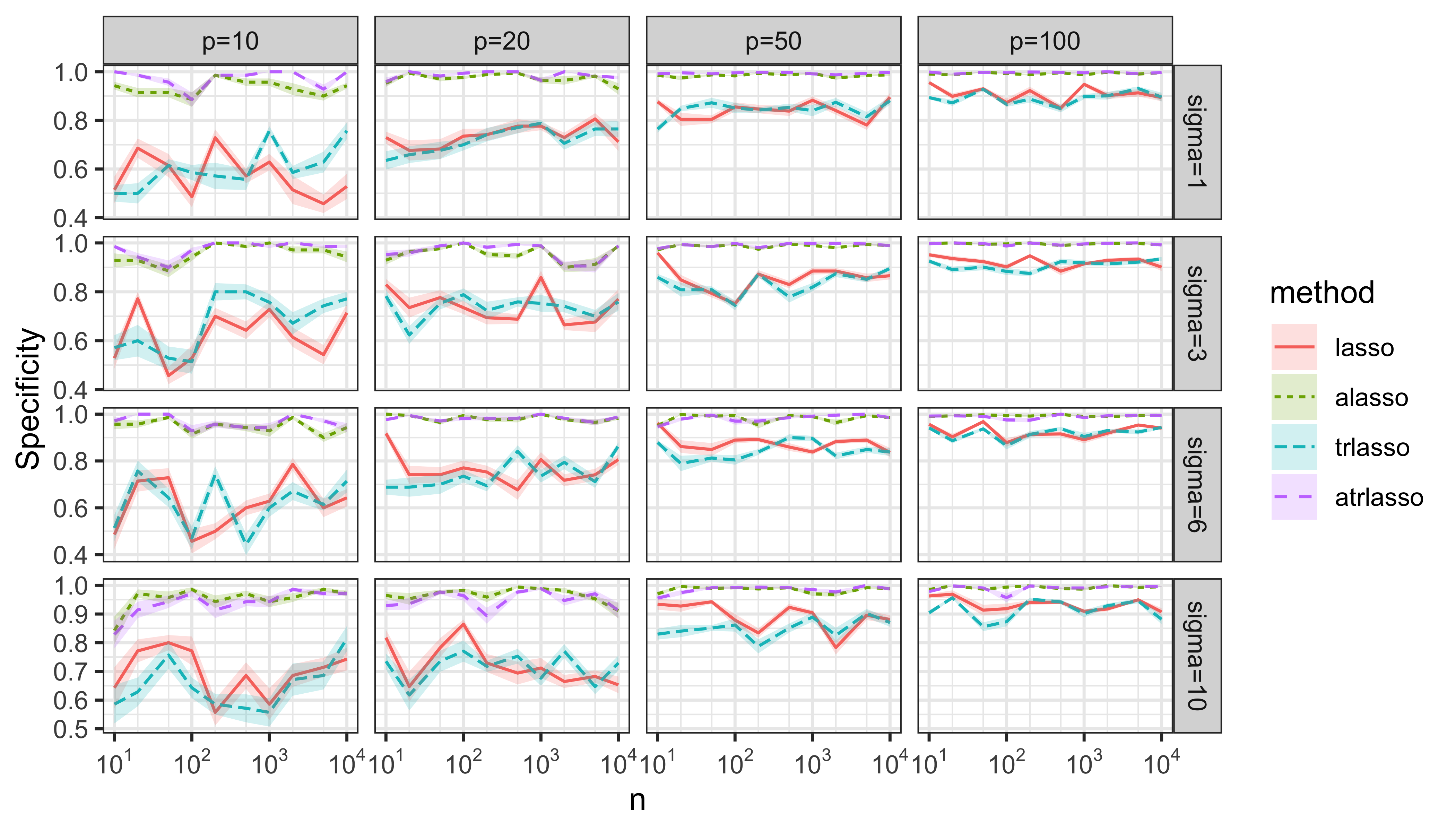}
  \caption{Specificity for a large amount of source data.}
  \label{fig:large-spec}
\end{figure*}

\begin{figure*}[t]
  \centering
  \includegraphics[width=10cm]{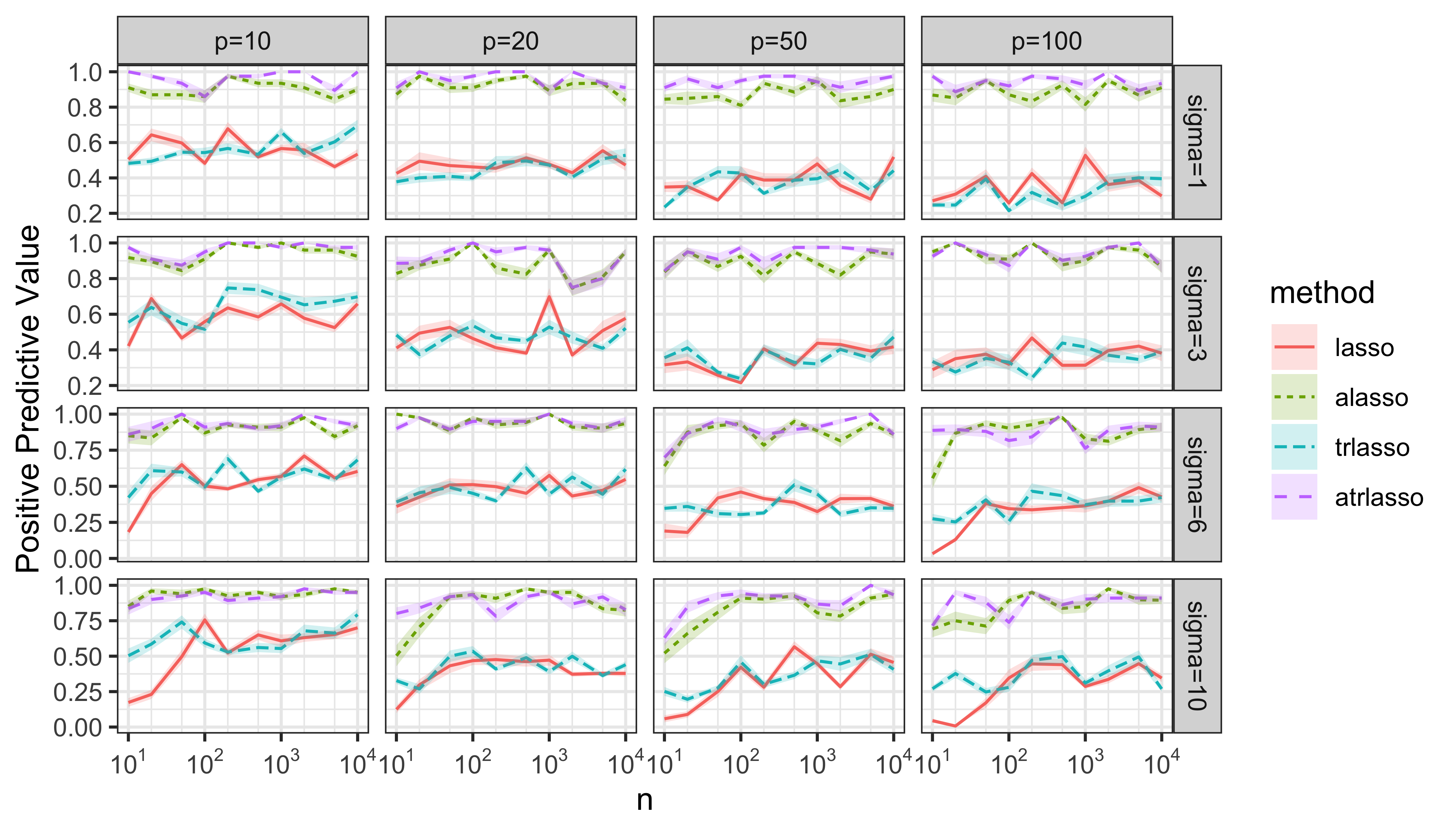}
  \caption{Positive predictive value for a large amount of source data.}
  \label{fig:large-ppv}
\end{figure*}

\begin{figure*}[t]
  \centering
  \includegraphics[width=10cm]{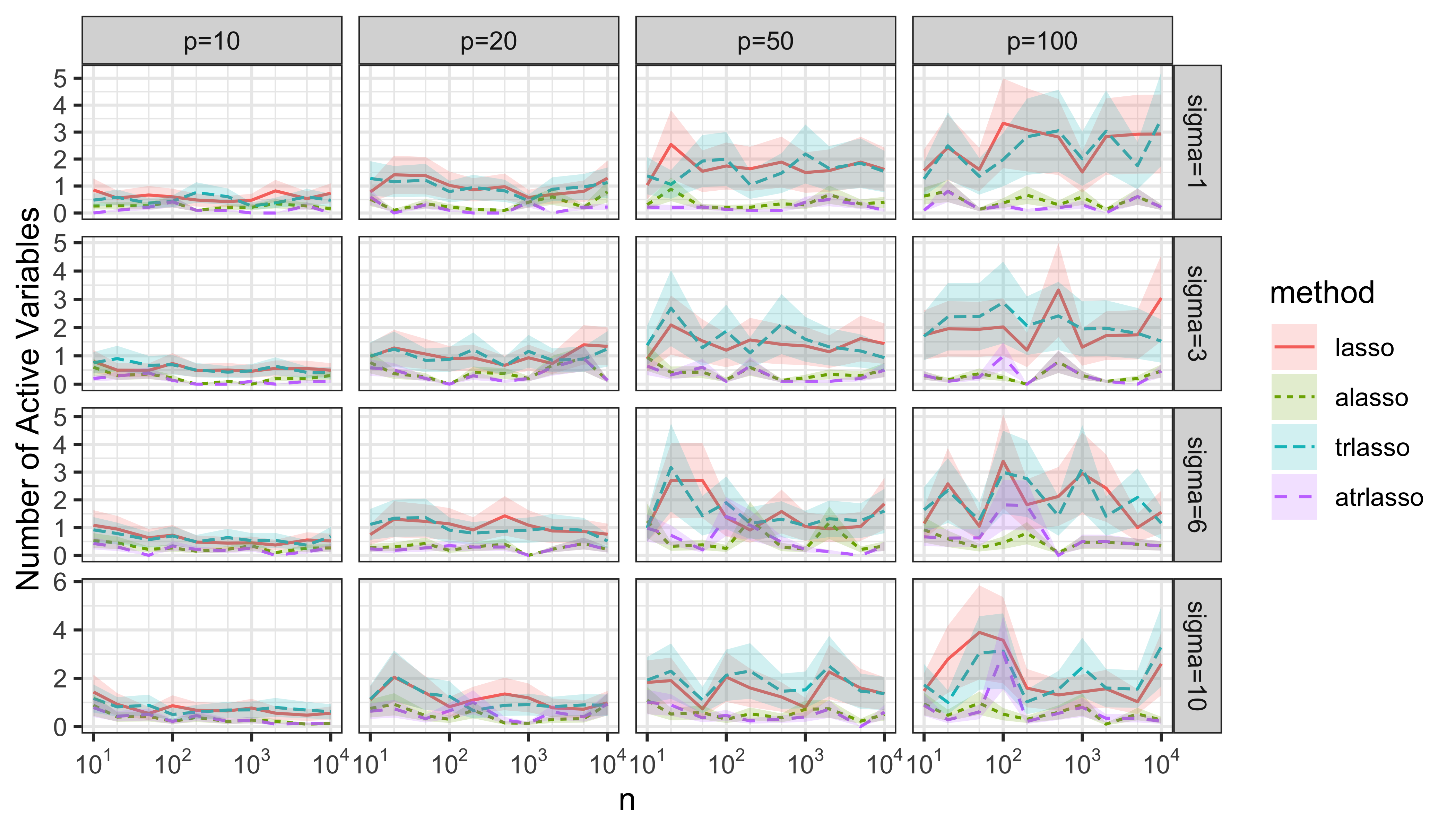}
  \caption{Number of active variables for a large amount of source data.}
  \label{fig:large-nav}
\end{figure*}

\end{appendix}

\end{document}